\newcommand*\patchAmsMathEnvironmentForLineno[1]{%
	\expandafter\let\csname old#1\expandafter\endcsname\csname #1\endcsname
	\expandafter\let\csname oldend#1\expandafter\endcsname\csname end#1\endcsname
	\renewenvironment{#1}%
	{\linenomath\csname old#1\endcsname}%
	{\csname oldend#1\endcsname\endlinenomath}}% 
\newcommand*\patchBothAmsMathEnvironmentsForLineno[1]{%
	\patchAmsMathEnvironmentForLineno{#1}%
	\patchAmsMathEnvironmentForLineno{#1*}}%
\newcommand{\tmnote}[1]{\ifthenelse{\boolean{include-notes}}%
  {\textcolor{blue}{\emph{TM says: #1}}}{}}
\newcommand{\tfnote}[1]{\ifthenelse{\boolean{include-notes}}%
  {\textcolor{red}{\emph{PT says: #1}}}{}}
\newcommand{\cmark}{YES}%
\newcommand{\xmark}{$\times$}%
\DeclareMathOperator*{\trace}{\mathrm{trace}}
\def\mathcenterto#1#2{\mathclap{\phantom{#1}\mathclap{#2}}\phantom{#1}}
\let\old@widetilde\widetilde
\def\widetildeto#1#2{\mathcenterto{#2}{\old@widetilde{\mathcenterto{#1}{#2\,}}}}
\let\old@widehat\widehat
\def\widehatto#1#2{\mathcenterto{#2}{\old@widehat{\mathcenterto{#1}{#2\,}}}}
\def\mathcenterto#1#2{\mathclap{\phantom{#1}\mathclap{#2}}\phantom{#1}}
\let\old@underline\underline
\def\underlineto#1#2{\mathcenterto{#2}{\old@underline{\mathcenterto{#1}{#2\,}}}}
\def\0{\boldsymbol{0}}
\def\1{\boldsymbol{1}}
\def\I{\mathbf{I}}
\def\d{\text{d}}
\def\ratio{\mu}
\def\mm{\mathsf{MM\!\!-\!\!PGO}}
\def\amm{\mathsf{AMM\!\!-\!\!PGO}}
\def\ammd{\mathsf{AMM\!\!-\!\!PGO}^{\#}}
\def\ammc{\mathsf{AMM\!\!-\!\!PGO}^*}
\def\dgs{\mathsf{DGS}}
\def\sesync{\mathsf{SE\!-\!Sync}}
\def\rbcd{\mathsf{RBCD}}
\def\rbcdc{\mathsf{RBCD}{\small++}^*}
\def\rbcdd{\mathsf{RBCD}{\small++}^{\#}}
\def\pcm{\mathsf{PCM}}
\def\0{\boldsymbol{0}}
\def\NN{\mathcal{N}}
\def\I{\mathbf{I}}
\def\d{\text{d}}
\def\transpose{\top} 
\def\R{\mathbb{R}}
\def\VV{\mathcal{V}}
\def\AA{\mathcal{A}}
\def\XX{\mathcal{X}}
\def\se3{\mathfrak{se}(3)}
\def\lG{H}
\def\sF{\overline{F}}
\def\sG{G}
\def\lF{\overline{F}}
\def\diag{\mathrm{diag}}
\def\nM{M}
\def\nH{\mathrm{\Omega}}
\def\nGamma{\mathrm{\Gamma}}
\def\nGammak{\mathrm{\Gamma}^{(\sk)}}
\def\nGammaak{\mathrm{\Gamma}^{\ak}}
\def\lnGamma{\mathrm{\Pi}}
\def\lnGammak{\mathrm{\Pi}^{(\sk)}}
\def\lnGammaak{\mathrm{\Pi}^{\ak}}
\def\lnGammataik{\mathrm{\Pi}_i^{\tau,\ak}}
\def\lnGammaRaik{\mathrm{\Pi}_i^{\kappa,\ak}}
\def\lnGammanaik{\mathrm{\Pi}_i^{\nu,\ak}}
\def\nR{{\widetilde{R}}}
\def\nt{\tilde{t}}
\def\aa{\alpha\alpha}
\def\ab{\alpha\beta}
\def\ba{\beta\alpha}
\def\tai{t_i^\alpha}
\def\Rai{R_i^\alpha}
\def\taik{t_i^{\ak}}
\def\Raik{R_i^{\ak}}
\def\sk{\mathsf{k}}
\def\sK{\mathsf{K}}
\def\skp{\mathsf{k}+1}
\def\skh{\mathsf{k}+\half}
\def\skm{\mathsf{k}-1}
\def\Xk{X^{(\sk)}}
\def\Xkp{X^{(\sk+1)}}
\def\Xkm{X^{(\sk-1)}}
\def\Xkh{X^{(\sk+\shalf)}}
\def\ak{\alpha(\sk)}
\def\abk{\alpha\beta(\sk)}
\def\akp{\alpha(\sk+1)}
\def\akm{\alpha(\sk-1)}
\def\akh{\alpha(\sk+\shalf)}
\def\bk{\beta(\sk)}
\def\bkm{\beta(\sk-1)}
\def\Xa{X^\alpha}
\def\XXa{\XX^\alpha}
\def\Xb{X^\beta}
\def\Xak{X^{\ak}}
\def\Xbk{X^{\bk}}
\def\Xakp{X^{\akp}}
\def\Xakm{X^{\akm}}
\def\Xakh{X^{\akh}}
\def\Yak{Y^{\ak}}
\def\Yk{Y^{(\sk)}}
\def\wabijk{\omega_{ij}^{\ab(\sk)}}
\def\wbajik{\omega_{ji}^{\ba(\sk)}}
\def\nMk{\nM^{(\sk)}}
\def\QQ{\mathcal{Q}}
\def\EE{\mathcal{E}}
\def\aEE{\overrightarrow{\EE}}
\def\aGG{\overrightarrow{\mathcal{G}}}
\def\grad{\mathrm{grad}}
\def\half{\frac{1}{2}}
\def\shalf{\frac{1}{2}}
\long\def\answer#1{}
\long\def\comment#1{}
\newcommand{\ea}{{et al.~}}
\newcommand{\innprod}[2]{\big\langle {#1},{#2}\big\rangle}
\definecolor{ao}{rgb}{1.0, 0, 0.0}
\newtheorem{prop}{Proposition}
\newtheorem{problem}{Problem}
\newtheorem{assumption}{Assumption}
\newtheorem{lemma}{Lemma}
\newtheorem{example}{Example}
\newtheorem{remark}{Remark}
\crefname{prop}{Proposition}{Propositions}
\crefname{assumption}{Assumption}{Assumptions}
\theoremstyle{remark}
\newtheorem*{remark*}{{Remark}}
\newcolumntype{P}[1]{>{\centering\arraybackslash}p{#1}}
\newcolumntype{M}[1]{>{\centering\arraybackslash}m{#1}}
\newcommand{\RNum}[1]{\uppercase\expandafter{\romannumeral #1\relax}}
\newcommand{\refgarage}{\cref{fig::outlier_garage}}
\newcommand{\begin{figure*}[t]
	\centering
	\begin{tabular}{cccc}
		\hspace{-0.5em}\subfloat[][$\sesync$]{\includegraphics[trim =0mm 45mm 0mm 40mm,width=0.21\textwidth]{}} &
		\hspace{-0.6em}\subfloat[][The trivial loss kernel]{\includegraphics[trim =0mm 45mm 0mm 40mm,width=0.21\textwidth]{}} &
		\hspace{-0.6em}\subfloat[][The Huber loss kernel]{\includegraphics[trim =0mm 45mm 0mm 40mm,width=0.21\textwidth]{}}&
		\hspace{-0.6em}\subfloat[][The Welsch loss kernel]{\includegraphics[trim =0mm 45mm 0mm 40mm,width=0.21\textwidth]{}}
	\end{tabular}
	\caption{Qualitative comparisons of distributed PGO with the trivial, Huber and Welsch loss kernels for the {\sf garage} dataset with spurious inter-node loop closures. The outlier-free result of $\sesync$ \cite{rosen2016se} is shown in \cref{fig::outlier_garage}(a) for reference. The {\highlight outlier ratio} of inter-node loop closures is 0.6 and $\pcm$ \cite{mangelson2018pairwise} is used for initial outlier rejection. }\label{fig::outlier_garage}
	\vspace{-1.25em}
\end{figure*}}{\begin{figure*}[t]
	\centering
	\begin{tabular}{cccc}
		\hspace{-0.5em}\subfloat[][$\sesync$]{\includegraphics[trim =0mm 45mm 0mm 40mm,width=0.21\textwidth]{figures/outlier/garage_gt.png}} &
		\hspace{-0.6em}\subfloat[][The trivial loss kernel]{\includegraphics[trim =0mm 45mm 0mm 40mm,width=0.21\textwidth]{figures/outlier/garage_trivial.png}} &
		\hspace{-0.6em}\subfloat[][The Huber loss kernel]{\includegraphics[trim =0mm 45mm 0mm 40mm,width=0.21\textwidth]{figures/outlier/garage_huber.png}}&
		\hspace{-0.6em}\subfloat[][The Welsch loss kernel]{\includegraphics[trim =0mm 45mm 0mm 40mm,width=0.21\textwidth]{figures/outlier/garage_welsch.png}}
	\end{tabular}
	\caption{Qualitative comparisons of distributed PGO with the trivial, Huber and Welsch loss kernels for the {\sf garage} dataset with spurious inter-node loop closures. The outlier-free result of $\sesync$ \cite{rosen2016se} is shown in \cref{fig::outlier_garage}(a) for reference. The {\highlight outlier ratio} of inter-node loop closures is 0.6 and $\pcm$ \cite{mangelson2018pairwise} is used for initial outlier rejection. }\label{fig::outlier_garage}
	\vspace{-1.25em}
\end{figure*}}
\newcommand{\refbenchmark}[2]{\cref{fig::benchmark3D,fig::benchmark2D}}
\newcommand{\begin{figure*}[p]
	\vspace{-1em}
	\centering
	\begin{tabular}{ccc}
		\subfloat[][\sf ais2klinik]{\includegraphics[trim =0mm 0mm 0mm 0mm,width=0.25\textwidth]{}} &
		\hspace{1em}\subfloat[][\sf city]{\includegraphics[trim =0mm 0mm 0mm 0mm,width=0.25\textwidth]{}} &
		\hspace{1em}\subfloat[][\sf CSAIL]{\includegraphics[trim =0mm 0mm 0mm 0mm,width=0.29\textwidth]{}}\\[1em]
		\subfloat[][\sf M3500]{\includegraphics[trim =23mm 10mm 23mm 5mm,width=0.27\textwidth]{}}&
		\hspace{1em}\subfloat[][\sf intel]{\includegraphics[trim =0mm 0mm 0mm 0mm,width=0.27\textwidth]{}}&
		\subfloat[][\sf MITb]{\includegraphics[trim =0mm 0mm 0mm 0mm,width=0.27\textwidth]{}}
	\end{tabular}
	\caption{$\ammd$ results on the 2D SLAM benchmark datasets where the different colors denote the odometries of different robots. The distributed PGO has 10 robots  and is initialized with the distributed Nesterov's accelerated chordal initialization \cite{fan2020mm}. The number of iterations is 1000.}\label{fig::benchmark2D}
	\vspace{-0.25em}
	\centering
	\begin{tabular}{ccc}
		\subfloat[][\sf sphere]{\includegraphics[trim =0mm 0mm 0mm 0mm,width=0.25\textwidth]{}} &
		\subfloat[][\sf torus]{\includegraphics[trim =35mm 20mm 35mm 0mm,width=0.23\textwidth]{}} &
		\subfloat[][\sf grid]{\includegraphics[trim =0mm 0mm 0mm 0mm,width=0.24\textwidth]{}}\\[1em]
		\subfloat[][\sf garage]{\includegraphics[trim =0mm 30mm 0mm 20mm,width=0.30\textwidth]{}}&
		\subfloat[][\sf cubicle]{\includegraphics[trim =0mm 20mm 0mm 20mm,width=0.31\textwidth]{}}&
		\subfloat[][\sf rim]{\includegraphics[trim =0mm 20mm 0mm 30mm,width=0.31\textwidth]{}}
	\end{tabular}
	\caption{$\ammd$ results on the 3D SLAM benchmark datasets where the different colors denote the odometries of different robots. The distributed PGO has 10 robots  and is initialized with the distributed Nesterov's accelerated chordal initialization \cite{fan2020mm}. The number of iterations is 1000.}\label{fig::benchmark3D}
	\vspace{-1.25em}
\end{figure*}}{\begin{figure*}[p]
	\vspace{-1em}
	\centering
	\begin{tabular}{ccc}
		\subfloat[][\sf ais2klinik]{\includegraphics[trim =0mm 0mm 0mm 0mm,width=0.25\textwidth]{figures/benchmark/ais2klinik.png}} &
		\hspace{1em}\subfloat[][\sf city]{\includegraphics[trim =0mm 0mm 0mm 0mm,width=0.25\textwidth]{figures/benchmark/city10000.png}} &
		\hspace{1em}\subfloat[][\sf CSAIL]{\includegraphics[trim =0mm 0mm 0mm 0mm,width=0.29\textwidth]{figures/benchmark/CSAIL.png}}\\[1em]
		\subfloat[][\sf M3500]{\includegraphics[trim =23mm 10mm 23mm 5mm,width=0.27\textwidth]{figures/benchmark/M3500.png}}&
		\hspace{1em}\subfloat[][\sf intel]{\includegraphics[trim =0mm 0mm 0mm 0mm,width=0.27\textwidth]{figures/benchmark/intel.png}}&
		\subfloat[][\sf MITb]{\includegraphics[trim =0mm 0mm 0mm 0mm,width=0.27\textwidth]{figures/benchmark/MITb.png}}
	\end{tabular}
	\caption{$\ammd$ results on the 2D SLAM benchmark datasets where the different colors denote the odometries of different robots. The distributed PGO has 10 robots  and is initialized with the distributed Nesterov's accelerated chordal initialization \cite{fan2020mm}. The number of iterations is 1000.}\label{fig::benchmark2D}
	\vspace{-0.25em}
	\centering
	\begin{tabular}{ccc}
		\subfloat[][\sf sphere]{\includegraphics[trim =0mm 0mm 0mm 0mm,width=0.25\textwidth]{figures/benchmark/sphere2500.png}} &
		\subfloat[][\sf torus]{\includegraphics[trim =35mm 20mm 35mm 0mm,width=0.23\textwidth]{figures/benchmark/torus3D.png}} &
		\subfloat[][\sf grid]{\includegraphics[trim =0mm 0mm 0mm 0mm,width=0.24\textwidth]{figures/benchmark/grid3D.png}}\\[1em]
		\subfloat[][\sf garage]{\includegraphics[trim =0mm 30mm 0mm 20mm,width=0.30\textwidth]{figures/benchmark/parking-garage.png}}&
		\subfloat[][\sf cubicle]{\includegraphics[trim =0mm 20mm 0mm 20mm,width=0.31\textwidth]{figures/benchmark/cubicle.png}}&
		\subfloat[][\sf rim]{\includegraphics[trim =0mm 20mm 0mm 30mm,width=0.31\textwidth]{figures/benchmark/rim.png}}
	\end{tabular}
	\caption{$\ammd$ results on the 3D SLAM benchmark datasets where the different colors denote the odometries of different robots. The distributed PGO has 10 robots  and is initialized with the distributed Nesterov's accelerated chordal initialization \cite{fan2020mm}. The number of iterations is 1000.}\label{fig::benchmark3D}
	\vspace{-1.25em}
\end{figure*}}
\newcommand{\datasetinfo}{\hyperapp{appendix::L}{L}}
\newcommand{\hyperapp}[2]{Appendix \hyperref[{#1}]{#2}}
\newcommand{\newpage
\section*{Appendix A.\;\; Notation}\label{appendix::A}
\begin{table}[H]
\centering
\renewcommand{\arraystretch}{1.1}
\begin{tabular}{P{0.09\textwidth}  p{0.27\textwidth}  P{0.06\textwidth} }
\hline
Symbol & \multicolumn{1}{c}{Description} & \multicolumn{1}{c}{Equation}\\
\hline
\multicolumn{3}{c}{Inner Products, Norms and Gradients}\\
\hline
$\;\;\;\,\innprod{\cdot}{\cdot}_M$ & $\innprod{X}{Y}_M\triangleq\trace(X M Y^\transpose)$ & \cref{eq::innerM} \\
$\innprod{\cdot}{\cdot}$ & $\innprod{X}{Y}\triangleq\trace(X Y^\transpose)$ & \cref{eq::inner}\\
$\|\cdot\|$ & The Frobenius norm of matrices and vectors \\
$\;\,\|\cdot\|_2$ & The induced 2-norm of matrices and vectors \\
$\;\;\;\|\cdot\|_M$ & $\|X\|_M\triangleq \sqrt{\trace(X M X^\transpose)}$& \cref{eq::normM}\\
$\nabla F(X)$ & The Euclidean gradient of $F(X)$ &\\
$\grad\, F(X)$ & The Riemannian gradient of $F(X)$ &\\
\hline
\multicolumn{3}{c}{Graph Theory}\\
\hline
$\aGG=(\VV,\,\aEE)$ & The directed graph whose vertices are ordered pairs &\\
$\aEE^{\ab}$ & The set of edges between node $\alpha$ and $\beta$ & \cref{eq::aEE}\\
$\NN_-^\alpha$ & The set of nodes with edges from node $\alpha$& \cref{eq::NN-}\\
$\NN_+^\alpha$ & The set of nodes with edges to node $\alpha$& \cref{eq::NN+}\\
$\NN^\alpha$ & The set of neighbors of node $\alpha$& \cref{eq::NN}\\
\hline
\multicolumn{3}{c}{Poses}\\
\hline
$X_i^\alpha$ & The $i$-th pose of node $\alpha$ & \cref{eq::Xai}\\
$t_i^\alpha$ & The translational part for  $X_i^\alpha$ \\
$R_i^\alpha$ & The rotational part for  $X_i^\alpha$ \\
$X^\alpha$ & The poses of node $\alpha$ & \cref{eq::Xa}\\
$X$ & All the poses of distributed pose graph optimization & \cref{eq::Xall}\\
$\nt_{ij}^{\aa},\;\nt_{ij}^{\ab}$ & The translational part of the noisy intra- and inter-node relative pose measurements  \\
$\nR_{ij}^{\aa},\;\nR_{ij}^{\ab}$ & The rotational part of the noisy intra- and inter-node relative pose measurements  \\
\hline
\multicolumn{3}{c}{Objective Functions}\\
\hline
$F(X)$ & The objective function for distributed pose graph optimization & \cref{eq::F} \\
$F_{ij}^{\aa}(X)$ & The sub-objective function related to intra-node measurements & \cref{eq::Fij2} \\
$F_{ij}^{\ab}(X)$ & The sub-objective function related to inter-node measurements & \cref{eq::Fab2} \\
\hline
\multicolumn{3}{c}{Surrogate Functions}\\
\hline
$\!E_{ij}^{\ab}(X|\Xk)$ & The surrogate function of $F_{ij}^{\ab}(X)$ & \cref{eq::Eab}\\
$\!G(X|\Xk)$ & The surrogate function of $F(X)$ & \cref{eq::G}\\
$\!G^\alpha(\Xa|\Xk)$ & The sub-surrogate function  in $G(X|\Xk)$ that is related to poses of node $\alpha$ & \cref{eq::GMa}\\
$\!H(X|\Xk)$ & The surrogate function of $F(X)$ & \cref{eq::lG}\\
$\!H^\alpha(\Xa|\Xk)$ & The sub-surrogate function in  $H(X|\Xk)$ that is related to poses of node $\alpha$  & \cref{eq::lGMa}\\
$\!H_i^\alpha(\Xa_i|\Xk)$ & The sub-surrogate function in $H(X|\Xk)$ and $H^\alpha(\Xa|\Xk)$ that is related to a single pose $X_i^\alpha$ of node $\alpha$ & \cref{eq::lGMai}\\
\hline
\multicolumn{3}{c}{Nesterov's Acceleration}\\
\hline
$s^{\ak}$, $\lambda^{\ak}$ & The scalars related to the extrapolation of Nesterov's acceleration & \cref{eq::sk,eq::gammak}\\
$Y^{\ak}$ & The extrapolation of $\Xak$ and $\Xakm$ & \cref{eq::Yak} \\
\hline
\multicolumn{3}{c}{Adaptive Restart}\\
\hline
$\lF^{(\sk)}$ & The exponential moving average of $F(X^{(0)}),\, F(X^{(1)}),\,\cdots, F(\Xk)$ & \cref{eq::lFk}\\
$\!\Delta G^\alpha(X|\Xk) $ & The function computing the gap between $F(X)$ and $G(X|\Xk)$ that is related to poses of node $\alpha$ and its neighbors&  \cref{eq::DGa}\\
$\!G^{\ak}$,\, $F^{\ak}$ & The intermediates for the adaptive restart scheme without a master node& \cref{eq::Gakp,eq::Fakp}\\
$\!\lF^{\ak}$& The exponential moving average of $F^{\alpha(0)},\, F^{\alpha(1)},\,\cdots,\, F^{\ak}$  & \cref{eq::lFakp}\\
\hline
\multicolumn{3}{c}{Constant Scalars}\\
\hline
$\xi$ & The  regularizer for $G(X|\Xk)$& \cref{eq::G}\\
$\zeta$ & The  regularizer for $H(X|\Xk)$& \cref{eq::lG}\\
$\eta$ & The exponential moving average parameter for $\lF^{(\sk)}$ and $\lF^{\ak}$ & \cref{eq::lFk,eq::lFakp}\\
$\psi$ and $\phi$ & The parameters for the adaptive restart\\
\hline
\end{tabular}
\end{table}

\section*{Apendix B.\;\;Proof of \cref{prop::upperM}}\label{appendix::B}
For any nodes $\alpha,\,\beta\in\AA$, it should be noted that
\begin{multline}\label{eq::FabM}
	\frac{1}{2}\|X\|_{\nM_{ij}^{\ab}}^2 = \frac{1}{2}\|X-\Xk\|_{\nM_{ij}^{\ab}}^2+\\
	\innprod{\Xk\nM_{ij}^{\ab}}{X-\Xk} + \frac{1}{2}\|\Xk\|_{\nM_{ij}^{\ab}}^2
\end{multline}
always holds. Then, we will prove \cref{prop::upperM} considering cases of $\alpha=\beta$ and $\alpha\neq\beta$, respectively. 

\begin{enumerate}[leftmargin=0.45cm]
\item If $\alpha=\beta$, \cref{eq::Faa} indicates $\nabla F_{ij}^{\ab}(\Xk)=\Xk\nM_{ij}^{\ab}$. From \cref{eq::Faa,eq::FabM,eq::wabij}, it is immediate to conclude that \cref{eq::proxFij} holds for any $X$ and $\Xk\in \R^{d\times(d+1)n}$ as long as $\alpha=\beta$.
\vspace{0.25em}
\item If $\alpha\neq\beta$, as a result of \cref{assumption::loss}\ref{assumption::loss_mono},  $\rho(s)$ is a concave function, which suggests
\begin{equation}
	\nonumber
	\rho(s')\leq \rho(s)+\nabla\rho(s)\cdot(s'-s).
\end{equation}
If we let $s=\|\Xk\|_{\nM_{ij}^{\ab}}^2$ and $s'=\|X\|_{\nM_{ij}^{\ab}}^2$, the equation above can be written as
\begin{multline}\label{eq::FabM2}
	\frac{1}{2}\rho\big(\|X\|_{\nM_{ij}^{\ab}}^2\big)\leq \frac{1}{2}\rho\big(\|\Xk\|_{\nM_{ij}^{\ab}}^2\big)+\\
	\frac{1}{2}\nabla\rho\big(\|\Xk\|_{\nM_{ij}^{\ab}}^2\big)\cdot\big(\|X\|_{\nM_{ij}^{\ab}}^2-\|\Xk\|_{\nM_{ij}^{\ab}}^2\big).
\end{multline}
From \cref{eq::FabM}, it can be shown that
\begin{multline}\label{eq::FabM3}
	\frac{1}{2}\|X\|_{\nM_{ij}^{\ab}}^2-\frac{1}{2}\|\Xk\|_{\nM_{ij}^{\ab}}^2 = \frac{1}{2}\|X-\Xk\|_{\nM_{ij}^{\ab}}^2+\\
	\innprod{\Xk\nM_{ij}^{\ab}}{X-\Xk}.
\end{multline}
Then, applying \cref{eq::FabM3} on the right-hand side of \cref{eq::FabM2} results in
\begin{equation}\label{eq::FabM4}
\begin{aligned}
&\frac{1}{2}\rho\big(\|X\|_{\nM_{ij}^{\ab}}^2\big)\\
\leq\; & \frac{1}{2}\nabla\rho\big(\|\Xk\|_{\nM_{ij}^{\ab}}^2\big)\cdot\|X-\Xk\|_{\nM_{ij}^{\ab}}^2+\\
& \nabla\rho\big(\|\Xk\|_{\nM_{ij}^{\ab}}^2\big)\cdot\innprod{\Xk\nM_{ij}^{\ab}}{X-\Xk} +\\
& \frac{1}{2}\rho\big(\|\Xk\|_{\nM_{ij}^{\ab}}\big).
\end{aligned}
\end{equation}
From \cref{eq::Fab,eq::wabij}, we obtain
\begin{equation}
\nonumber
F_{ij}^{\ab}(\Xk)=\frac{1}{2}\rho\big(\|\Xk\|_{\nM_{ij}^{\ab}}^2\big),
\end{equation}
\begin{equation}
\nonumber
\nabla F_{ij}^{\ab}(\Xk) = \nabla\rho\big(\|\Xk\|_{\nM_{ij}^{\ab}}^2\big)\cdot \Xk\nM_{ij}^{\ab},
\end{equation}
\begin{equation}
\nonumber
\wabijk = \nabla\rho\big(\|\Xk\|_{\nM_{ij}^{\ab}}^2\big),
\end{equation}
with which \cref{eq::FabM4} is simplified to
\begin{multline}
\nonumber
\frac{1}{2}\wabijk\|X-\Xk\|_{\nM_{ij}^{\ab}}^2+ \innprod{\nabla F_{ij}^{\ab}(\Xk)}{X-\Xk}+\\
F_{ij}^{\ab}(\Xk)\geq F_{ij}^{\ab}(X).
\end{multline}
Thus, \cref{eq::proxFij} for $\alpha\neq\beta$ as well.
\end{enumerate}
The proof is completed.

\section*{Apendix C.\;\;Proof of \cref{prop::upper}}\label{appendix::C}
From \cref{eq::MH,eq::Eab}, we obtain
\vspace{-0.3em}
\begin{multline}\label{eq::EijFij}
E_{ij}^{\ab}(X|\Xk)\geq\frac{1}{2}\wabijk\big\|X-\Xk\big\|_{\nM_{ij}^{\ab}}^2+\\
\innprod{\nabla F_{ij}^{\ab}(\Xk)}{X-\Xk}+F_{ij}^{\ab}(\Xk),
\end{multline}
in which the equality ``$=$'' holds as long as $X=\Xk$. From \cref{prop::upperM}, we obtain
\vspace{-0.3em} 
\begin{multline}\label{eq::gequpper}
	\frac{1}{2}\wabijk\|X-\Xk\|_{\nM_{ij}^{\ab}}^2+\innprod{\nabla F_{ij}^{\ab}(\Xk)}{X-\Xk}+\\
	F_{ij}^{\ab}(\Xk)\geq F_{ij}^{\ab}(X).
\end{multline} 
Then, as a result of \cref{eq::gequpper,eq::EijFij}, it is straightforward to show $E_{ij}^{\ab}(X|\Xk)\geq F_{ij}^{\ab}(X)$ for any $X\in \R^{d\times(d+1)n}$, where the equality ``$=$'' holds as long as $X=\Xk$. The proof is completed.

\section*{Apendix D.\;\;Proof of \cref{prop::G}}\label{appendix::D}
\noindent\textbf{Proof of \ref{prop::G1}.\;} 
According to \cref{prop::upper}, $E_{ij}^{\ab}(X|\Xk)$ majorizes $F_{ij}^{\ab}(X)$ and $E_{ij}^{\ab}(X|\Xk)=F_{ij}^{\ab}(\Xk)$ if $X=\Xk$. Then, as a result \cref{eq::F,eq::G}, it can be concluded that $G(X|\Xk)$ majorizes $F(X)$ and $G(X|\Xk)=F(X)$ if $X=\Xk$. The proof is completed.
\vspace{1em}

\noindent\textbf{Proof of \ref{prop::G2}.\;}
From \cref{eq::Hab}, we obtain
\vspace{-0.3em}
\begin{multline}\label{eq::HXab}
\frac{1}{2}\|X-\Xk\|_{\nH_{ij}^{\ab}}^2=\kappa_{ij}^{\alpha\beta}\|R_i^\alpha-R_i^{\ak}\|^2+\\
\tau_{ij}^{\alpha\beta}\|(R_i^\alpha - R_i^{\ak})\nt_{ij}^{\alpha\beta}+t_i^\alpha-t_i^{\ak}\|^2+\\
\kappa_{ij}^{\alpha\beta}\|R_j^\beta-R_j^{\beta(\sk)}\|^2+\tau_{ij}^{\alpha\beta}\|t_j^\beta-t_j^{\beta(\sk)}\|^2.
\end{multline}
From \cref{eq::Fij2}, it is by definition that $F_{ij}^{\ab}(X)$  is a function related with $X^\alpha\in \XX^\alpha$ and $X^\beta\in\XX^\beta$ only, and thus, $\nabla F_{ij}^{\ab}(X)$ is sparse, which suggests
\vspace{-0.3em}
\begin{multline}\label{eq::DFabXX}
\innprod{\nabla F_{ij}^{\ab}(\Xk)}{X-\Xk}=\\
\innprod{\nabla_{\Xa} F_{ij}^{\ab}(\Xk)}{\Xa-\Xak}+\\
\innprod{\nabla_{X^{\beta}} F_{ij}^{\ab}(\Xk)}{X^{\beta}-X^{\beta(\sk)}}.
\end{multline}
In \cref{eq::DFabXX}, $\nabla_{\Xa} F_{ij}^{\ab}(\Xk)$ is the Euclidean gradient of $F_{ij}^{\ab}(X)$ with respect to $\Xa\in\XXa$ at $\Xk\in\XX$.  Substituting \cref{eq::HXab,eq::DFabXX} into \cref{eq::Eab}, we obtain
\vspace{-0.3em}
\begin{multline}\label{eq::Eij3}
E_{ij}^{\ab}(X|\Xk)=\wabijk\cdot\left(\kappa_{ij}^{\alpha\beta}\|R_i^\alpha-R_i^{\ak}\|^2+\right.\\
\tau_{ij}^{\alpha\beta}\|(R_i^\alpha - R_i^{\ak})\nt_{ij}^{\alpha\beta}+t_i^\alpha-t_i^{\ak}\|^2+\\
\left.\kappa_{ij}^{\alpha\beta}\|R_j^\beta-R_j^{\beta(\sk)}\|^2+\tau_{ij}^{\alpha\beta}\|t_j^\beta-t_j^{\beta(\sk)}\|^2\right)+\\
\innprod{\nabla_{\Xa} F_{ij}^{\ab}(\Xk)}{\Xa-\Xak}+\\
\innprod{\nabla_{X^{\beta}} F_{ij}^{\ab}(\Xk)}{X^{\beta}-X^{\beta(\sk)}}+F_{ij}^{\ab}(\Xk).
\end{multline}
In a similar way, $F_{ij}^{\aa}(X)$ in \cref{eq::Fij2} can be rewritten as
\vspace{-0.3em}
\begin{multline}\label{eq::Fij3}
F_{ij}^{\aa}(X)=\frac{1}{2}\kappa_{ij}^{\aa}\|(R_i^\alpha-R_i^{\ak})\nR_{ij}^{\aa} -(R_j^\alpha-R_j^{\alpha(\sk)})\|^2 +\\ 
\frac{1}{2}\tau_{ij}^{\aa}\|(R_i^\alpha-R_i^{\ak}) \nt_{ij}^{\aa}+t_i^\alpha - t_i^{\ak} - (t_j^\alpha-t_j^{\alpha(\sk)})\|^2+\\
\innprod{\nabla_{\Xa} F_{ij}^{\alpha\alpha}(\Xk)}{\Xa-\Xak} + F_{ij}^{\alpha\alpha}(\Xk).
\end{multline}
Substituting \cref{eq::Eij3,eq::Fij3} into \cref{eq::G} and simplifying the resulting equation, we obtain
\begin{equation}\label{eq::Gsum}
G(X|X^{(\sk)})=\sum_{\alpha\in\AA} G^{\alpha}(X^\alpha|\Xk) + F(\Xk)
\end{equation}
where $G^\alpha(\Xa|\Xk)$ is a function that is related with $\Xa\in \XX^\alpha$ only. The proof is completed.

\vspace{1em}

\noindent\textbf{Proof of \ref{prop::G3}.\;}
A tedious but straightforward mathematical manipulation from \cref{eq::Fij3,eq::Eij3,eq::DFabXX,eq::Gsum} indicates that there exists positive-semidefinite matrices $\nGamma^{\ak}\in \R^{(d+1)n_\alpha\times (d+1)n_\alpha}$ such that $G^\alpha(X^\alpha|\Xk)$ in the equation above can be written as
\begin{multline}
	\nonumber
	G^\alpha(X^\alpha|\Xk)=
	\frac{1}{2}\|\Xa-\Xak\|_{\nGamma^{\ak}}^2+\\
	\big\langle\nabla_{\Xa} F(X^{(\sk)}),\,{\Xa-\Xak}\big\rangle,
\end{multline}
in which the formulation of $\nGamma^{\ak}$ is given in Appendix \hyperref[appendix::I]{I}. The proof is completed.

\section*{Apendix E.\;\;Proof of \cref{prop::mm}}\label{appendix::E}
\noindent\textbf{Proof of \ref{prop::mm1}.\;} From \cref{assumption::mm} and line~\ref{line::xlG} of \cref{algorithm::mm}, we obtain
\begin{equation}\label{eq::Gacmp}
G^\alpha(\Xakp|\Xk) \leq G^\alpha(\Xakh|\Xk)
\end{equation}
and
\begin{equation}\label{eq::Hacmp}
H^\alpha(\Xakh|\Xk) \leq H^\alpha(\Xak|\Xk).
\end{equation}
From \cref{eq::lG2,eq::G2,eq::Gacmp,eq::Hacmp}, it can be concluded that
\begin{equation}\label{eq::Gcmp}
G(\Xkp|\Xk) \leq G(X^{(\sk+\shalf)}|\Xk)
\end{equation}
and
\begin{equation}\label{eq::Hcmp}
H(\Xkh|\Xk)\leq H(\Xk|\Xk).
\end{equation}
Note that \cref{eq::lGF} suggests
\begin{equation}\label{eq::lGGF1}
F(X^{(\sk+1)}) \leq G(X^{(\sk+1)}|X^{(\sk)})
\end{equation}
and
\begin{equation}\label{eq::lGGF2}
G(X^{(\sk+\shalf)}|X^{(\sk)}) \leq H(X^{(\sk+\shalf)}|X^{(\sk)}).
\end{equation}
Then, \cref{eq::Gcmp,eq::Hcmp,eq::lGGF1,eq::lGGF2} result in
\begin{multline}\label{eq::order}
F(X^{(\sk+1)}) \leq G(X^{(\sk+1)}|X^{(\sk)}) \leq\\
G(X^{(\sk+\shalf)}|X^{(\sk)}) \leq H(X^{(\sk+\shalf)}|X^{(\sk)})\leq \\
H(X^{(\sk)}|X^{(\sk)}) =  F(X^{(\sk)}),
\end{multline}
which indicates that $F(X^{(\sk)})$ is nonincreasing. The proof is completed.

\vspace{0.8em}
\noindent\textbf{Proof of \ref{prop::mm2}.\;} From \cref{prop::mm}\ref{prop::mm1}, it has been proved that $F(X^{(\sk)})$ is nonincreasing. From \cref{eq::obj} and \cref{assumption::loss}, $F(X^{(\sk)})\geq 0$, i.e., $F(X^{(\sk)})$ is bounded below. As a result, there exists $F^{\infty} \in \R $ such that $F(X^{(\sk)})\rightarrow F^{\infty}$. The proof is completed.

\vspace{0.8em}
\noindent\textbf{Proof of \ref{prop::mm3}.\;}
We introduce the following lemmas about $G(X|\Xk)$ and $H(X|\Xk)$ that are used in the proof.
 
 \begin{lemma}\label{lemma::G}
Let  $\nGammak \in \R^{(d+1)n\times (d+1)n}$ be a block diagonal matrix in the form of
 \begin{equation}\label{eq::nG}
 	\nGammak\triangleq\mathrm{diag}\big\{\nGamma^{1(\sk)},\,\cdots,\,\nGamma^{|\AA|(\sk)}\big\}\in \R^{(d+1)n\times (d+1)n}
 \end{equation}
where $\nGammaak$ is given in \cref{eq::GMa}. Then, we have the following results:
\begin{enumerate}[(a)]
\item\label{lemma::Ga} $G(\cdot|\Xk): \R^{d\times(d+1)n}\rightarrow\R$ in \cref{eq::G} is equivalent to
\begin{multline}\label{eq::GM}
	G(X|X^{(\sk)})
	=\frac{1}{2}\|X-X^{(\sk)}\|_{\nGammak}^2+\\
	\innprod{\nabla F(X^{(\sk)})}{{X-X^{(\sk)}}}+ F(X^{(\sk)}),
\end{multline}
where  $\nabla F(X^{(\sk)})\in \R^{d\times(d+1)n}$ is the Euclidean gradient of $F(X)$ at $X^{(\sk)}\in\XX$.
\item\label{lemma::Gb} $\nGammak\succeq\nMk$ where $\nMk$ is given in \cref{eq::M}. 
\item\label{lemma::Gc}$\nGammak$ is bounded, i.e.,  there exists a constant positive-semidefinite matrix $\nGamma\in \R^{(d+1)n\times (d+1)n}$ such that $\nGamma\succeq \nGammak$ holds for any $\sk\geq 0$. 
\end{enumerate}
 \end{lemma}

\begin{proof} The proofs of Lemmas \ref{lemma::G}\ref{lemma::Ga} to \ref{lemma::G}\ref{lemma::Gc} are as the following.

\ref{lemma::Ga} If we substitute \cref{eq::GMa} into \cref{eq::G2}, the result is
\begin{multline}\label{eq::EXk}
	G(X|\Xk)=\sum_{\alpha\in\AA}\Big[\frac{1}{2}\|\Xa-\Xak\|_{\nGamma^{\ak}}^2+\\
	\big\langle\nabla_{\Xa} F(X^{(\sk)}),\,{\Xa-\Xak}\big\rangle\Big] + F(\Xk).
\end{multline}
Also, it is straightforward to show that
\begin{equation}
	\nonumber
	\frac{1}{2}\|X-\Xk\|_{\nGammak}^2=\sum_{\alpha\in\AA}\frac{1}{2}\|\Xa-\Xak\|_{\nGamma^{\ak}}^2,
\end{equation}
where $\nGammak\in\R^{(d+1)n\times(d+1)n}$ is defined as \cref{eq::nG}, and  
\begin{multline}
	\nonumber
	\innprod{\nabla F(\Xk)}{X-\Xk}=\\
	\sum_{\alpha\in\AA}\innprod{\nabla_{\Xa} F(\Xk)}{X^\alpha-\Xak}.
\end{multline}
Thus, \cref{eq::EXk} is equivalent to \cref{eq::GM}, i.e.,
\begin{multline}\label{eq::E3}
	G(X|\Xk) = \frac{1}{2}\|X-\Xk\|_{\nGammak}^2+\\
	\innprod{\nabla F(X^{(\sk)})}{{X-X^{(\sk)}}}+ F(X^{(\sk)}).
\end{multline}
The proof is completed
\vspace{0.2em}

\ref{lemma::Gb} From \cref{eq::G,eq::FaaFab,eq::Eab,eq::E3}, we rewrite $\nGammak\in\R^{(d+1)n\times(d+1)n}$ as
\begin{multline}\label{eq::H}
	\nGammak=\sum_{\alpha\in\AA}\sum_{(i,j)\in \aEE^{\alpha\alpha}}\nM_{ij}^{\aa}+\\
	\sum_{\substack{\alpha,\beta\in\AA,\\\alpha\neq \beta}}\sum_{(i,j)\in \aEE^{\alpha\beta}}\wabijk\cdot \nH_{ij}^{\ab}+\xi\cdot\I,
\end{multline}
in which $\nH_{ij}^{\ab}\succeq\nM_{ij}^{\ab}$ by \cref{eq::MH} and $\xi\geq 0$. Then, as a result of \cref{eq::M,eq::H,eq::MH}, it is straightforward to conclude that
\begin{equation}\label{eq::GMk}
	\nGammak\succeq\nMk+\xi \cdot\I\succeq \nMk.
\end{equation} 
The proof is completed.
\vspace{0.2em}

\ref{lemma::Gc} Let $\nGamma\in\R^{(d+1)n\times(d+1)n}$ be defined as
\begin{equation}\label{eq::HH}
	\nGamma\triangleq\sum_{\alpha\in\AA}\sum_{(i,j)\in \aEE^{\alpha\alpha}}\nM_{ij}^{\aa}+
	\sum_{\substack{\alpha,\beta\in\AA,\\\alpha\neq \beta}}\sum_{(i,j)\in \aEE^{\alpha\beta}} \nH_{ij}^{\ab}+\xi\cdot\I.
\end{equation}
From \cref{assumption::loss}\ref{assumption::loss_drho} and \cref{eq::wabij}, it can be concluded that
\begin{equation}\label{eq::wabijkbnd}
	0\leq \wabijk \leq 1
\end{equation} 
for any $\Xk\in\R^{d\times(d+1)n}$. Furthermore, it is known that $\nH_{ij}^{\ab} \succeq 0$, then \cref{eq::HH,eq::H,eq::wabijkbnd} result in $\nGamma\succeq\nGammak$ for any $\Xk\in\R^{d\times(d+1)n}$. The proof is completed.
\end{proof}

 \begin{lemma}\label{lemma::lG}
	Let  $\lnGammak \in \R^{(d+1)n\times (d+1)n}$ be a block diagonal matrix in the form of
	\begin{equation}\label{eq::lnG}
		\lnGammak\triangleq\mathrm{diag}\big\{\lnGamma^{1(\sk)},\,\cdots,\,\lnGamma^{|\AA|(\sk)}\big\}\in \R^{(d+1)n\times (d+1)n}
	\end{equation}
	where $\lnGammaak$ is given in \cref{eq::lGMa}. Then, we have the following results:
	\begin{enumerate}[(a)]
		\item\label{lemma::lGa} $\lG(\cdot|\Xk): \R^{d\times(d+1)n}\rightarrow\R$ in \cref{eq::lG}  is equivalent to
		\begin{multline}\label{eq::lGM}
			\lG(X|X^{(\sk)})
			=\frac{1}{2}\|X-X^{(\sk)}\|_{\lnGammak}^2+\\
			\innprod{\nabla F(X^{(\sk)})}{{X-X^{(\sk)}}}+ F(X^{(\sk)}),
		\end{multline}
		where  $\nabla F(X^{(\sk)})\in \R^{d\times(d+1)n}$ is the Euclidean gradient of $F(X)$ at $X^{(\sk)}\in\XX$.
		\item\label{lemma::lGb} $\lnGammak\succeq\nGammak\succeq\nMk$ where $\nMk$ and $\nGammak$ are given in \cref{eq::M} and \cref{eq::nG}, respectively. 
		\item\label{lemma::lGc}$\lnGammak$ is bounded, i.e.,  there exists a constant positive-semidefinite matrix $\lnGamma\in \R^{(d+1)n\times (d+1)n}$ such that $\lnGamma\succeq \lnGammak$ holds for any $\sk\geq 0$.
		\item\label{lemma::lGd} $\lG^\alpha(\Xa|\Xk)\geq G^\alpha(\Xa|\Xk)$ for any $\Xa\in\R^{d\times (d+1)n_\alpha}$ where $G^\alpha(\Xa|\Xk)$ is given in \cref{eq::GMa} and the equality holds if  $\Xa=\Xak$.
	\end{enumerate}
\end{lemma}
 \begin{proof}
 The proofs of Lemmas \ref{lemma::lG}\ref{lemma::lGa} to \ref{lemma::lG}\ref{lemma::lGc}  are similar to those of \cref{lemma::G} and the proof of  \cref{lemma::lG}\ref{lemma::lGd} is  immediate from \cref{eq::GMa,eq::lGMa} and \cref{lemma::lG}\ref{lemma::lGb}.
 \end{proof}
 
From \cref{eq::order}, it is known that $F(\Xk)\geq G(\Xkp|\Xk)$, which suggests
\begin{equation}\label{eq::bound}
F(\Xk)- F(\Xkp) \geq
G(\Xkp|\Xk) - F(\Xkp).
\end{equation}
From \cref{eq::proxF,eq::GM}, we obtain
\begin{multline}\label{eq::Fxkp}
F(\Xkp) \leq \frac{1}{2}\|\Xkp-\Xk\|_{\nMk}^2+\\
\innprod{\nabla F(\Xk)}{X-\Xk}+F(\Xk)
\end{multline}
and
\begin{multline}\label{eq::Gxkp}
G(\Xkp|\Xk)=\frac{1}{2}\|\Xkp-\Xk\|_{\nGammak}^2+\\
\innprod{\nabla F(\Xk)}{X-\Xk}+F(\Xk),
\end{multline}
respectively. Substituting \cref{eq::Fxkp,eq::Gxkp} into the right-hand side of \cref{eq::bound}, we obtain
\begin{multline}\label{eq::GF2}
F(X^{(\sk)}) - F(X^{(\sk+1)}) \geq\\
\frac{1}{2}\|\Xkp-\Xk\|_{\nGammak}^2-\frac{1}{2}\|\Xkp-\Xk\|_{\nMk}^2.
\end{multline}
From \cref{eq::GF2,eq::GMk}, there exists a constant scalar $\delta>0$ such that
\begin{equation}\label{eq::dF}
F(X^{(\sk)}) - F(X^{(\sk+1)}) \geq
\frac{\delta}{2}\|X^{(\sk+1)}-X^{(\sk)}\|^2
\end{equation}
as long as $\xi>0$. From \cref{prop::mm}\ref{prop::mm2}, we obtain
\begin{equation}\label{eq::dF0}
F(X^{(\sk)}) - F(X^{(\sk+1)})\rightarrow 0,
\end{equation}
and thus, it can be concluded from \cref{eq::dF,eq::dF0} that
\begin{equation}
\|X^{(\sk+1)}-X^{(\sk)}\| \rightarrow 0.
\end{equation}
The proof is completed.

\vspace{0.8em}
\noindent\textbf{Proof of \ref{prop::mm4}.\;} From \cref{eq::order}, it is known that $F(\Xk)\geq H(\Xkh|\Xk)$ and $G(\Xkh|\Xk)\geq G(\Xkp|\Xk) \geq F(\Xkp)$, which suggests
\begin{multline}
\nonumber
F(\Xk)-F(\Xkp) \geq\\ 
H(\Xkh|\Xk) - G(\Xkh|\Xk).
\end{multline}
From \cref{eq::GM,eq::lGM}, the equation above is equivalent to
\begin{multline}\label{eq::FkFkp}
F(\Xk)-F(\Xkp) \geq \\
\frac{1}{2}\|\Xkh-\Xk\|_{\lnGammak}^2 - \frac{1}{2}\|\Xkh-\Xk\|_{\nGammak}^2.
\end{multline}
A similar procedure to the derivation of \cref{eq::GMk} results in
\begin{equation}\label{eq::lGGk}
\lnGammak \succeq \nGammak + (\zeta-\xi)\cdot\I,
\end{equation}
which suggests there exists a constant scalar $\delta' > 0$ such that
\begin{equation}
\nonumber
\lnGammak \succeq \nGammak + \delta'\cdot\I
\end{equation}
if $\zeta>\xi > 0$. Then, similar to the proof of \cref{prop::mm}\ref{prop::mm3}, we obtain
\begin{equation}\label{eq::DFh}
F(\Xk)-F(\Xkp) \geq \frac{\delta'}{2}\|\Xkh-\Xk\|^2.
\end{equation}
Thus, it can be concluded that
\begin{equation}
	\|\Xkh-\Xk\| \rightarrow 0.
\end{equation}
The proof is completed.

\vspace{0.8em}
\noindent\textbf{Proof of \ref{prop::mm5}.\;} The following lemma about $\nabla F(X)$ is needed in this proof.

\begin{lemma}\label{prop::DF}
If \cref{assumption::loss}\ref{assumption::loss_L} holds, then the Euclidean gradient $\nabla F(\cdot):\R^{d\times(d+1)n}\rightarrow \R^{d\times(d+1)n}$ of $F(X)$ in \cref{eq::F} is Lipschitz continuous, i.e., there exists a constant $\mu > 0$ such that $\|\nabla F(X)-\nabla F(X')\|\leq \mu\cdot\|X-X'\|$.
\end{lemma}

\begin{proof}
From \cref{assumption::loss}\ref{assumption::loss_L}, it is known that $\rho(\|X\|^2)$ has Lipschitz continuous gradient, which suggests that $F_{ij}^{\ab}(X)=\dfrac{1}{2}\rho(\|X\|_{\nM_{ij}^{\ab}}^2)$ in \cref{eq::Fab} has Lipschitz continuous gradient. Note that $F_{ij}^{\aa}(X)=\dfrac{1}{2}\|X\|_{\nM_{ij}^{\aa}}^2$ in \cref{eq::Faa} has Lipschitz continuous gradient as well. Then, from \cref{eq::F}, it can be concluded that $F(X)$ has Lipschitz continuous gradient. The proof is completed.
\end{proof}

It is straightforward to show that the Riemannian gradient $\grad\, F(X)$ takes the form as
\begin{equation}
	\nonumber
	\grad\, F(X) =\begin{bmatrix}
		\grad_{1} F(X) & \cdots  & \grad_{|\AA|} F(X)
	\end{bmatrix}\in T_X \XX.
\end{equation}
In the equation above, $\grad_{\alpha} F(X)$ is the Riemannian gradient of $F(X)$ with respect to $\Xa\in\XX^{\alpha}$ for node $\alpha\in\AA$, and can be written as
\begin{equation}\label{eq::grad}
	\grad_\alpha F(X)=
	\begin{bmatrix}
		\grad_{t^\alpha} F(X) & \grad_{R^\alpha} F(X)
	\end{bmatrix}\in
	T_{X^\alpha}\XX^\alpha
\end{equation}
in which recall that $$T_{X^\alpha}\XX^\alpha\triangleq\R^{d\times n_\alpha}\times T_{R^\alpha} SO(d)^{n_\alpha}.$$
From \cite{absil2009optimization,rosen2016se}, it can be shown that $\grad_{t^\alpha} F(X)$ and $\grad_{R^\alpha} F(X)$ in \cref{eq::grad} are
\begin{equation}\label{eq::grad_t}
	\grad_{t^\alpha} F(X) = \nabla_{t^\alpha} F(X)
\end{equation}
and
\begin{multline}\label{eq::grad_R}
	\grad_{R^\alpha} F(X) = \nabla_{R^\alpha} F(X)-\\
	R^\alpha\, \mathrm{SymBlockDiag}_d^\alpha({R^{\alpha}}^\transpose\nabla_{R^\alpha} F(X)).
\end{multline}
In \cref{eq::grad_R}, $\mathrm{SymBlockDiag}_d^\alpha: \R^{dn_\alpha\times dn_\alpha}\rightarrow \R^{dn_\alpha\times dn_\alpha}$ is a linear operator
\begin{equation}\label{eq::sym}
	\mathrm{SymBlockDiag}_d^\alpha(Z)\triangleq\frac{1}{2}\mathrm{BlockDiag}_d^\alpha(Z+Z^\transpose),
\end{equation} 
in which $\mathrm{BlockDiag}_d^\alpha: \R^{dn_\alpha \times dn_\alpha}\rightarrow \R^{dn_\alpha \times dn_\alpha}$ extracts the $d\times d$-block diagonals of a matrix, i.e.,
\begin{equation}
	\nonumber
	\mathrm{BlockDiag}_d^\alpha(Z)\triangleq\begin{bmatrix}
		Z_{11} & & \\
		&\ddots &\\
		& & Z_{n_\alpha n_\alpha}
	\end{bmatrix}\in \R^{dn_\alpha \times dn_\alpha}.
\end{equation}
As a result of \cref{eq::grad_t,eq::grad_R,eq::sym,eq::grad}, there exists a linear operator
\begin{equation}\label{eq::QQx}
	\QQ_X:\R^{d\times d(n+1)}\rightarrow \R^{d\times d(n+1)}
\end{equation} 
that continuously depends on $X\in \XX$ such that
\begin{equation}\label{eq::gradF}
	\grad\, F(X)=\QQ_X(\nabla F(X)).
\end{equation}
From \cref{eq::lGM}, it is straightforward to show that
\begin{equation}\label{eq::gradG}
	\begin{aligned}
	&\nabla H(\Xkh|\Xk)\\
	=&\nabla F(\Xk)+(\Xkh-\Xk)\lnGammak\\
	=&\nabla F(\Xkh)+(\Xkh-\Xk)\lnGammak\, +\\
	 &\big(\nabla F(\Xk) -\nabla F(\Xkh)\big).
	\end{aligned}
\end{equation}
Note that \cref{eq::gradF} applies to any functions on $\XX$. As a result of \cref{eq::gradF,eq::gradG}, we obtain 
\begin{equation}\label{eq::gradGX}
\begin{aligned}
&\grad\,H(\Xkh|\Xk)\\
=\,&\grad\, F(\Xkh) +\\
& \QQ_{\Xkh}\big((\Xkh-\Xk)\lnGammak\big)+\\
 &\QQ_{\Xkh}\big(\nabla F(\Xk)-\nabla F(\Xkh)\big).
\end{aligned}
\end{equation}
From line~\ref{line::xlG} of \cref{algorithm::mm}, we obtain. 
$$\grad\, H^\alpha(\Xakh|\Xk)=\0.$$
In addition, it is by definition that
\begin{multline}
	\nonumber
	\grad\, H(X|X^{(\sk)})=\\
	\begin{bmatrix}
		\grad\, H^1(X^1|\Xk) & \cdots & \grad\, H^{|\AA|}(X^{|\AA|}|\Xk),
	\end{bmatrix}
\end{multline}
which suggests
\begin{equation}\label{eq::gradG0}
	\grad\,H(\Xkh|\Xk)=\0.
\end{equation}
From \cref{eq::gradG0,eq::gradGX}, we obtain
\begin{multline}
	\nonumber
	\grad\, F(\Xkh)=\QQ_{\Xkh}\big((\Xk-\Xkh)\lnGammak\big)+\\
	\QQ_{\Xkh}\big(\nabla F(\Xkh)-\nabla F(\Xk)\big).
\end{multline}
From the equation above, it can be shown that
\begin{equation}\label{eq::gradFxk_n}
	\begin{aligned}
		&\|\grad\, F(\Xkh)\|\\
		=&\big\|\QQ_{\Xkh}\big((\Xk-\Xkh)\lnGammak\big)+\\
		&\hphantom{\big\|}\QQ_{\Xkh}\big(\nabla F(\Xkh)-\nabla F(\Xk)\big)\big\| \\
		\leq&\big\|\QQ_{\Xkh}\big((\Xk-\Xkh)\lnGammak\big)\big\|+\\
		&\big\|\QQ_{\Xkh}\big(\nabla F(\Xkh)-\nabla F(\Xk)\big)\big\| \\
		\leq& \|\QQ_{\Xkh}\|_2\cdot \|\lnGammak\|_2\cdot\|\Xkh-\Xk\|+\\
		&\|\QQ_{\Xkh}\|_2\cdot\|\nabla F(\Xkh)-\nabla F(\Xk)\|,
	\end{aligned}
\end{equation}
in which $\|\cdot\|_2$ denotes the induced $2$-norm of linear operators. From Lemmas \ref{lemma::lG}\ref{lemma::lGc} and \ref{prop::DF}, there exists a constant positive-semidefinite matrix $\lnGamma\in \R^{(d+1)n\times(d+1)n}$ and constant positive scalar $\mu > 0$ such that $\lnGamma \succeq \lnGammak \succeq 0$ and $\|\nabla F(\Xkh)-\nabla F(\Xk)\|\leq \mu\cdot\|\Xkh-\Xk\|$ for any $\sk\geq 0$, making it possible to upper-bound the right-hand side of \cref{eq::gradFxk_n}: 
\begin{equation}\label{eq::gradFk}
\begin{aligned}
	&\|\grad\, F(\Xkh)\|\\
\leq&\|\QQ_{\Xkh}\|_2\cdot \|\lnGamma\|_2\cdot\|\Xkh-\Xk\|+\\
	&\|\QQ_{\Xkh}\|_2\cdot\mu\cdot\|\Xkh-\Xk\|.
\end{aligned}
\end{equation}
Moreover, \cref{eq::grad_R,eq::grad_t,eq::grad} indicate that  $\QQ_X(\cdot)$ only depends on the rotation $R^\alpha\in SO(d)^{n_\alpha}$ for $\alpha\in\AA$. Since $\QQ_{X}(\cdot)$ is continuous and $SO(d)^{n_\alpha}$ is a compact manifold, $\|\QQ_{\Xkh}\|_2$ is bounded for any $\Xkh\in\XX$.
Thus, there exists a constant scalar $\nu > 0$ such that the right-hand side of \cref{eq::gradFk} can be upper-bounded as
\begin{equation}\label{eq::gradFh}
\|\grad\, F(\Xkh)\|\leq \nu \|\Xkh-\Xk\|.
\end{equation} 
As long as $\zeta>\xi>0$, \cref{eq::DFh,eq::gradFh} result in
\begin{equation}
\nonumber
\|\grad\, F(\Xkh)\|^2 \leq  \frac{2\nu^2}{\delta'}\big(F(\Xk)-F(\Xkp)\big).
\end{equation}
Then, there exists a constant scalar $\epsilon =\frac{\delta'}{\nu^2}>0$ with which the equation above can be rewritten as
\begin{equation}\label{eq::diff_F}
F(\Xk)-F(\Xkp) \geq \frac{\epsilon}{2}\|\grad\, F(\Xkh)\|^2.
\end{equation}
As a result of \cref{eq::diff_F}, we obtain
\begin{multline}\label{eq::diff_F2}
F(X^{(0)}) - F(X^{(\sK+1)}) \geq \frac{\epsilon}{2} \sum_{\sk=0}^{\sK}\|\grad\, F(\Xkh)\|^2\\
\geq \frac{\epsilon(\sK+1)}{2}\min_{0\leq\sk\leq\sK}\|\grad\, F(\Xkh)\|^2.
\end{multline}
From Propositions \ref{prop::mm}\ref{prop::mm1} and \ref{prop::mm}\ref{prop::mm2}, it can be concluded that  $F(\Xkp) \geq F^\infty$ for any $\sk\geq0$, which and \cref{eq::diff_F2} suggest
\begin{equation}
	\nonumber
	\min\limits_{0\leq\sk< \mathsf{K}}\|\grad\, F(\Xkh)\|\leq \sqrt{\frac{2}{\epsilon}\cdot\dfrac{F(X^{(0)})-F^\infty}{{\sK+1}}}.
\end{equation}
The proof is completed.

\vspace{0.8em}
\noindent\textbf{Proof of \ref{prop::mm6}.\;} As a result of Propositions \ref{prop::mm}\ref{prop::mm3} and \ref{prop::mm}\ref{prop::mm4}, it is known that 
\begin{equation}
\nonumber
\|\Xkp-\Xk\|\rightarrow 0
\end{equation}
and
\begin{equation}
\nonumber
\|\Xkh-\Xk\|\rightarrow 0
\end{equation}
as long as $\zeta>\xi>0$. Thus, it can be concluded from \cref{eq::gradFh} that
$$\grad\,F(\Xkh)\rightarrow \0$$
if $\zeta>\xi>0$. In addition, \cref{assumption::loss}\ref{assumption::loss_cont} indicates that $\grad\,F(\Xk)$ is continuous, which suggests
$$\grad\, F(\Xk)\rightarrow \grad\, F(\Xkh).$$
Then, we obtain
$$\grad\, F(\Xk) \rightarrow \0. $$
The proof is completed.

\section*{Apendix F.\;\;Proof of \cref{prop::ammc}}\label{appendix::F}
\noindent\textbf{Proof of \ref{prop::ammc1}.\;} In this proof, we will prove $F(\Xk)\leq \sF^{(\sk)}$ and $\sF^{(\skp)}\leq \sF^{(\sk)}$ by induction.
\begin{enumerate}[leftmargin=0.45cm]%[wide, labelwidth=!, labelindent=0pt]
\item From lines~\ref{line::alg3::s0}, \ref{line::alg3::lF0}, \ref{line::alg3::lFk} of \cref{algorithm::ammc}, it can be shown that
\begin{equation}\label{eq::sF0011}
	F(X^{(-1)})=F(X^{(0)})=\lF^{(-1)}=\lF^{(0)}.
\end{equation}
\item Suppose $\sk\geq 0$ and $F(\Xk)\leq \lF^{(\sk)}$ holds at $\sk$-th iteration. In terms of $\Xkh$, if the adaptive restart scheme for $\Xkh$  is not triggered, it is immediate to show from line~\ref{line::alg4::restart_s1} of \cref{algorithm::amm_c_x} that
\begin{equation}\label{eq::Fah1}
F(\Xkh) \leq \lF^{(\sk)}.
\end{equation}
On the other hand, if the adaptive restart scheme for $\Xkh$ is triggered, line~\ref{line::alg4::xlG2} of \cref{algorithm::amm_c_x} results in
\begin{equation}\label{eq::Hah}
H^\alpha(\Xakh|\Xk) \leq H^\alpha(\Xak|\Xk)= 0,
\end{equation}
where $H^{\alpha}(\Xak|\Xk) = 0$ is from \cref{eq::lGMa}. Then, \cref{eq::lG2,eq::Hah,eq::lGF} indicate
\begin{equation}\label{eq::Fah2}
F(\Xkh)\leq H(\Xkh|\Xk) \leq F(\Xk) \leq \sF^{(\sk)}.
\end{equation}
Therefore, no matter whether the adaptive restart scheme is triggered or not, we conclude from \cref{eq::Fah1,eq::Fah2} that
\begin{equation}\label{eq::Fah}
F(\Xkh) \leq \lF^{(\sk)}
\end{equation}
always holds.

Furthermore, as a result of lines~\ref{line::alg4::restart_s3} to \ref{line::alg4::restart_e3} of \cref{algorithm::amm_c_x}, we obtain
\begin{equation}\label{eq::FkpF}
F(\Xkp)-\sF^{(\sk)} \leq \phi\cdot\Big(F(\Xkh)-\lF^{(\sk)}\Big) \leq 0.
\end{equation}
From line~\ref{line::alg3::lFk} of \cref{algorithm::ammc} and $F(\Xkp)-\sF^{(\sk)} \leq 0$ in \cref{eq::FkpF}, we obtain
\begin{equation}
\nonumber
F(\Xkp)- \sF^{(\skp)} = (1-\eta)\cdot \big(F(\Xkp) - \sF^{(\sk)} \big)\leq 0
\end{equation}
and
\begin{equation}
\nonumber
\sF^{(\skp)}-\sF^{(\sk)} = \eta\cdot \big(F(\Xkp) - \sF^{(\sk)} \big)\leq 0,
\end{equation}
which suggest $F(\Xkp)\leq \sF^{(\skp)}\leq \sF^{(\sk)}$. 
\item Therefore, it can be concluded that $F(\Xk)\leq \sF^{(\sk)}$ and $\sF^{(\skp)}\leq \sF^{(\sk)}$, which suggests that $\sF^{(\sk)}$ is nonincreasing. The proof is completed.
\end{enumerate}

\vspace{0.8em}
\noindent\textbf{Proof of \ref{prop::ammc2}.\;} From line~\ref{line::alg3::lFk} of \cref{algorithm::ammc}, we obtain
\begin{equation}\label{eq::lFkp}
	\lF^{(\skp)} = (1-\eta)\cdot \lF^{(\sk)} + \eta\cdot F(\Xkp),
\end{equation}
which and \cref{eq::sF0011} suggest that $\lF^{(\sk)}$ is a convex combination of $F(X^{(0)}),\,F(X^{(1)}),\,\cdots F(\Xk)$ as long as $\eta\in(0,\,1]$. Since $F(\Xk)\geq 0$, we obtain $\lF^{(\sk)} \geq 0$ as well, i.e., $\lF^{(\sk)}$ is bounded below. \cref{prop::ammc}\ref{prop::ammc1} indicates that $\lF^{(\sk)}$ is nonincreasing, and thus, there exists $F^{\infty}$ such that $\lF^{(\sk)}\rightarrow F^{\infty}$. Then, it can be still concluded from \cref{eq::lFkp} that $F(\Xk)\rightarrow F^{\infty}$.

\vspace{0.8em}
\noindent\textbf{Proof of \ref{prop::ammc3}.\;}
If $\sk=-1$, line~\ref{line::alg3::s0} of \cref{algorithm::ammc} and \cref{eq::lFk} suggest $X^{(-1)}=X^{(0)}$  and $\lF^{(-1)}=\lF^{(0)}=F(X^{(0)})$, respectively, from which we conclude
\begin{equation}\label{eq::lF01}
	\lF^{(-1)}-\lF^{(0)}=\|X^{(-1)}-X^{(0)}\|^2.
\end{equation}

If $\sk\geq 0$, there exists three possible cases for $\Xkp$:
\begin{enumerate}[leftmargin=0.45cm]
\item\label{cases::prop6::1} If $\Xkp$ is from line~\ref{line::alg4::xG1} of \cref{algorithm::amm_c_x}, then the adaptive restart scheme is not triggered and line~\ref{line::alg4::restart_s1} of \cref{algorithm::amm_c_x} results in
\begin{equation}\label{eq::FlF1}
	 \sF^{(\sk)}-F(\Xkp) \geq \psi\cdot\big\|\Xkp-\Xk\big\|^2.
\end{equation}
\item\label{cases::prop6::2} If $\Xkp$ is from line~\ref{line::alg4::xG2} of \cref{algorithm::amm_c_x}, then the adaptive restart scheme is triggered and \cref{eq::FkFkp} holds as well, from which and \cref{eq::GMk} we obtain
\begin{equation}
\nonumber
F(\Xk)-F(\Xkp)\geq \frac{\xi}{2}\big\|\Xkp-\Xk\big\|^2.
\end{equation}
In the proof of \cref{prop::ammc}\ref{prop::ammc1}, it is known that $\sF^{(\sk)}\geq F(\Xk)$, then the equation above results in
\begin{equation}\label{eq::FlF2}
\sF^{(\sk)}-F(\Xkp)\geq \frac{\xi}{2}\big\|\Xkp-\Xk\big\|^2.
\end{equation}
\item\label{cases::prop6::3} If $\Xkp$ is from line~\ref{line::alg4::xG3} of \cref{algorithm::amm_c_x}, then we obtain $\Xkp=\Xkh$ and $F(\Xkp)=F(\Xkh)$. Then, similar to the derivations of \cref{eq::FlF1,eq::FlF2}, lines~\ref{line::alg4::xlG1} and \ref{line::alg4::xlG2} of \cref{algorithm::amm_c_x} result in
\begin{equation}\label{eq::FlF3}
\sF^{(\sk)}-F(\Xkh) \geq \psi\cdot\big\|\Xkh-\Xk\big\|^2
\end{equation}
and
\begin{equation}\label{eq::FlF4}
\sF^{(\sk)}-F(\Xkh)\geq \frac{\zeta}{2}\big\|\Xkh-\Xk\big\|^2,
\end{equation}
respectively, from which and $\Xkp=\Xkh$ and $F(\Xkp)=F(\Xkh)$ we obtain either
\begin{equation}
\nonumber
\sF^{(\sk)}-F(\Xkp) \geq \psi\cdot\big\|\Xkp-\Xk\big\|^2
\end{equation}
or
\begin{equation}
\nonumber
\sF^{(\sk)}-F(\Xkp)\geq \frac{\zeta}{2}\big\|\Xkp-\Xk\big\|^2.
\end{equation}
\end{enumerate}
Therefore, if $0<\eta\leq 1$, $0<\psi\leq 1$, $\xi>0$, $\zeta>0$, it can be shown from cases \ref{cases::prop6::1}) to \ref{cases::prop6::3}) above that there exists a constant scalar $\sigma>0$ such that
\begin{equation}\label{eq::sFF0}
	\sF^{(\sk)}-F(\Xkp) \geq  \frac{\sigma}{2}\|\Xkp-\Xk\|^2
\end{equation}
holds for any $\sk\geq 0$. In addition, note that \cref{eq::lFkp} is equivalent to
\begin{equation}\label{eq::lFklFkp}
	\lF^{(\sk)}-\lF^{(\skp)}= \eta\cdot\Big(\sF^{(\sk)}-F(\Xkp)\Big).
\end{equation} 
From \cref{eq::sFF0,eq::lFklFkp}, we conclude that there exists $\delta > 0$ such that
\begin{multline}\label{eq::sFF01}
\sF^{(\sk)}-\sF^{(\skp)}=\eta\cdot\Big(\sF^{(\sk)}-F(\Xkp)\Big) \geq \\
\frac{\eta\sigma}{2}\|\Xkp-\Xk\|^2 \geq \frac{\delta}{2}\|\Xkp-\Xk\|^2
\end{multline}
for any $\sk\geq 0$. 

As a result of \cref{eq::lF01,eq::sFF01}, we further obtain
\begin{equation}\label{eq::sFF1}
\sF^{(\sk)}-\sF^{(\skp)}\geq \frac{\delta}{2}\|\Xkp-\Xk\|^2
\end{equation}
for any $\sk\geq -1$. Recall $\lF^{(\sk)}\rightarrow F^\infty$ from \cref{prop::ammc}\ref{prop::ammc2}, which suggests
\begin{equation}\label{eq::sFF2}
	\sF^{(\sk)} - \sF^{(\skp)}\rightarrow 0.
\end{equation}
Therefore, \cref{eq::sFF1,eq::sFF2} indicate that
\begin{equation}
	\nonumber
	\|X^{(\sk+1)}-X^{(\sk)}\| \rightarrow 0.
\end{equation}
The proof is completed.

\vspace{0.8em}
\noindent\textbf{Proof of \ref{prop::ammc4}.\;} Note that \cref{eq::FlF3,eq::FlF4} suggest that there exists $\sigma'>0$ such that
\begin{equation}\label{eq::FlF6}
	\sF^{(\sk)}-F(\Xkh)\geq \frac{\sigma'}{2}\big\|\Xkh-\Xk\big\|^2
\end{equation}
always holds. From lines~\ref{line::alg4::restart_s3} to \ref{line::alg4::restart_e3} of \cref{algorithm::amm_c_x}, we obtain 
\begin{multline}\label{eq::lFkFkp0}
\lF^{(\sk)}-F(\Xkp) \geq \phi\cdot\Big(\lF^{(\sk)} - F(\Xkh)\Big)\geq\\
\frac{\phi\sigma'}{2}\big\|\Xkh-\Xk\big\|^2,
\end{multline}
where the last inequality is due to \cref{eq::FlF6}.
From \cref{eq::lFklFkp,eq::lFkFkp0}, it can be shown that
\begin{multline}\label{eq::FlF5}
\lF^{(\sk)}-\lF^{(\skp)}= \eta\cdot\Big(\sF^{(\sk)}-F(\Xkp)\Big)\geq\\
\frac{\eta\phi\sigma'}{2}\big\|\Xkh-\Xk\big\|^2.
\end{multline} 
The equation above suggests that there exists a constant scalar $\delta'>0$ such that
\begin{equation}\label{eq::sFF3}
\lF^{(\sk)}-\lF^{(\skp)}\geq \frac{\delta'}{2}\|\Xkh-\Xk\|^2.
\end{equation}
Note that \cref{prop::ammc}\ref{prop::ammc2} results in $\sF^{(\sk)} - \sF^{(\skp)}\rightarrow 0$. Thus, similar to the proof of \cref{prop::ammc}\ref{prop::ammc3}, it can be concluded from \cref{eq::sFF3} that
\begin{equation}
	\nonumber
	\|\Xkh-\Xk\| \rightarrow 0
\end{equation}
if $\zeta>\xi>0$. The proof is completed.

\vspace{0.8em}
\noindent\textbf{Proof of \ref{prop::ammc5}.\;} For any $\sk\geq 0$,  there are two possible cases about $\Xakh\in\XXa$:
\begin{enumerate}[leftmargin=0.45cm]%[wide, labelwidth=!, labelindent=0pt]
\item If $\Xakh\in\XXa$ is from line~\ref{line::alg4::xlG1} of \cref{algorithm::amm_c_x}, we obtain
\begin{equation}\label{eq::minH1}
	\Xakh\gets\arg\min\limits_{\Xa\in\XX^\alpha }\lG^\alpha(\Xa|\Yk).
\end{equation}
From \cref{eq::lGMa}, it can be shown that
\begin{equation}
	\nonumber
	\begin{aligned}
		&\nabla \lG^\alpha(\Xakh|\Yk)\\
		=&\nabla_{\Xa} F(\Yk)+(\Xakh-\Yak)\lnGamma^{\ak}\\
		=&\nabla_{\Xa} F(\Xkh)+(\Xakh-\Yak)\lnGamma^{\ak}\, +\\
		&\big(\nabla_{\Xa} F(\Yk) -\nabla_{\Xa} F(\Xkh)\big).
	\end{aligned}
\end{equation}
The equation above suggests
\begin{equation}\label{eq::gradHa}
\begin{aligned}
&\grad\, \lG^\alpha(\Xakh|\Yk)\\
=&\grad_{\alpha} F(\Xkh)+\\
&\QQ_{\Xakh}^\alpha\big((\Xakh-\Yak)\lnGamma^{\ak}\big) +\\
&\QQ_{\Xakh}^\alpha\big(\nabla_{\Xa} F(\Yk) -\nabla_{\Xa} F(\Xkh)\big),
\end{aligned}
\end{equation}
where $\grad_\alpha F(X)$ is the Riemannian gradient of $F(X)$ with respect to $\Xa \in\XX^\alpha$, and $\QQ_{\Xa}^\alpha: \R^{d\times dn_\alpha}\rightarrow \R^{d\times dn_\alpha}$ is a linear operator that extracts the $\alpha$-th block of $\QQ_{X}(\cdot)$ in \cref{eq::QQx}. Since $\Xakh$ is an optimal solution to \cref{eq::minH1}, we obtain
\begin{equation}\label{eq::gradHa0}
\grad\, H^\alpha(\Xakh|\Yk)=\0.
\end{equation}
From \cref{eq::gradHa,eq::gradHa0}, a straightforward mathematical manipulation indicates
\begin{equation}\label{eq::gradnFh1}
	\begin{aligned}
		&\grad_{\alpha} F(\Xkh) \\
		=&\QQ_{\Xakh}^\alpha\big((\Yak-\Xakh)\lnGamma^{\ak}\big)+\\
		&\QQ_{\Xakh}^\alpha\big(\nabla_{\Xa} F(\Xkh)-\nabla_{\Xa} F(\Yk)\big).
	\end{aligned}
\end{equation}
From \cref{eq::gradnFh1}, we further obtain
\begin{equation}
\nonumber
\begin{aligned}
&\|\grad_{\alpha} F(\Xkh)\|\\
\leq& \|\QQ_{\Xakh}^\alpha\|_2\cdot\|(\Yak-\Xakh)\lnGamma^{\ak}\|+\\
& \|\QQ_{\Xakh}^\alpha\|_2\cdot\|\nabla_{\Xa} F(\Xkh)-\nabla_{\Xa} F(\Yk)\|\\
\leq&\|\QQ_{\Xakh}^\alpha\|_2\cdot\|(\Yk-\Xkh)\lnGammak\|+\\
&\|\QQ_{\Xakh}^\alpha\|_2\cdot\|\nabla F(\Xkh)-\nabla F(\Yk)\|\\
\leq&\|\QQ_{\Xakh}^\alpha\|_2\cdot\|\lnGammak\|_2\cdot\|\Xkh-\Yk\|+\\
&\|\QQ_{\Xakh}^\alpha\|_2\cdot\|\nabla F(\Xkh)-\nabla F(\Yk)\|,
\end{aligned}
\end{equation}
where $\|\cdot\|_2$ denotes the induced 2-norm of linear operators. Recall from Lemmas \ref{lemma::lG}\ref{lemma::lGc} and \ref{prop::DF} that $\lnGamma\succeq\lnGammak$ and $\|\nabla F(\Xkh)-\nabla F(\Yk)\|\leq \mu\cdot\|\Xkh-\Yk\|$. Therefore, the right-hand side of the equation above can be further upper-bounded as
\begin{equation}\label{eq::gradak}
\begin{aligned}
	&\|\grad_\alpha F(\Xkh)\|\\
\leq&\|\QQ_{\Xakh}^\alpha\|_2\cdot \|\lnGamma\|_2\cdot\|\Xkh-\Yk\|+\\
	&\|\QQ_{\Xakh}^\alpha\|_2\cdot\mu\cdot\|\Xkh-\Yk\|.
\end{aligned}
\end{equation}
Similar to \cref{eq::gradFk,eq::gradFh}, $\|\QQ_{\Xakh}^\alpha\|_2$ in \cref{eq::gradak} is bounded as well. Therefore, there exists a constant scalar $\nu^\alpha >0$ such that the right-hand side of \cref{eq::gradak} can be upper-bounded:
\begin{equation}\label{eq::gradFa1}
	\|\grad_{\alpha} F(\Xkh)\|\leq \nu^\alpha \|\Xkh-\Yk\|.
\end{equation}
Recall that $\Yk\in\R^{d\times(d+1)n}$ results from line~\ref{line::alg3::Yk} of \cref{algorithm::ammc}:
\begin{equation}\label{eq::Y}
	\Yk = X^{(\sk)}+\big(X^{(\sk)}-X^{(\sk-1)}\big) \lambda^{(\sk)}.
\end{equation}
In \cref{eq::Y}, $\lambda^{(\sk)}\in\R^{(d+1)n\times(d+1)n}$ is a diagonal matrix
\begin{equation} 
	\nonumber
	\lambda^{(\sk)}\triangleq\diag\{\lambda^{1(\sk)}\cdot\I^1,\,\cdots,\,\lambda^{|\AA|(\sk)}\cdot\I^{|\AA|}\}\in \R^{(d+1)n\times(d+1)n},
\end{equation}
where $\lambda^{\ak}\in\R$ is given by line~\ref{line::alg3::sk} of \cref{algorithm::ammc} and $\I^\alpha\in \R^{(d+1)n_\alpha\times (d+1)n_\alpha}$ is the identity matrix. From \cref{eq::Y,eq::gradFa1}, it can be shown that
\begin{equation}\label{eq::gradnFbnd}
\begin{aligned}
	&\|\grad_\alpha F(\Xkh)\|\\
\leq&\nu^\alpha \|\Xkh-\Xk-\big(\Xk-\Xkm\big) \lambda^{(\sk)}\|\\
\leq&\nu^\alpha\|\big(\Xk\!-\Xkm\big) \lambda^{(\sk)}\|+\nu^\alpha \|\Xkh\!-\!\Xk\|\\
\leq&\nu^\alpha\|\lambda^{(\sk)}\|_2\cdot\|\Xk-\Xkm\|+\\
	&\nu^\alpha \|\Xkh-\Xk\|.
\end{aligned}
\end{equation} 
From line~\ref{line::alg3::sk} of \cref{algorithm::ammc}, we obtain $s^{\alpha(\sk)}\geq1$, and thus,
\begin{equation}
	\nonumber
	\lambda^{\ak}=\frac{\sqrt{4{s^{\alpha(\sk)}}^2+1}-1}{2s^{\alpha(\sk)}}=\frac{2s^{\alpha(\sk)}}{\sqrt{4{s^{\alpha(\sk)}}^2+1}+1}\in(0,\,1),
\end{equation}
which suggests $\|\lambda^{(\sk)}\|_2\in(0,\,1)$. Then, we upper-bound the right-hand side of \cref{eq::gradnFbnd} using $\|\lambda^{(\sk)}\|_2\in(0,\,1)$:
\begin{multline}\label{eq::gradanFF1}
	\|\grad_{\alpha} F(\Xkh)\|\leq \nu^\alpha\|\Xk-\Xkm\|+\\\nu^\alpha\|\Xkh-\Xk\|.
\end{multline}
\item If $\Xakh\in\XXa$ is from line~\ref{line::alg4::xlG2} of \cref{algorithm::amm_c_x}, we obtain
\begin{equation}\label{eq::minH2}
	\Xakh\gets\arg\min\limits_{\Xa\in\XX^\alpha }\lG^\alpha(\Xa|\Xk).
\end{equation}
A similar procedure to the derivation of \cref{eq::gradFa1} yields
\begin{equation}
	\nonumber
	\|\grad_{\alpha} F(\Xkh)\|\leq \nu^\alpha \|\Xkh-\Xk\|,
\end{equation}
where $\nu^\alpha>0$ is the same as that in \cref{eq::gradFa1}. Thus, we obtain
\begin{multline}\label{eq::gradanFF2}
	\|\grad_{\alpha} F(\Xkh)\leq \nu^\alpha\|\Xkh-\Xk\|\leq
	\\  \nu^\alpha\|\Xk-\Xkm\|+\nu^\alpha\|\Xkh-\Xk\|.
\end{multline}
\end{enumerate}
Therefore, as long as $\sk\geq 0$, it can be concluded from \cref{eq::gradanFF1,eq::gradanFF2} that
\begin{multline}\label{eq::gradanF2}
\|\grad_{\alpha} F(\Xkh)\|\leq \nu^\alpha\|\Xk-\Xkm\|+\\ \nu^\alpha\|\Xkh-\Xk\|
\end{multline}
holds for any node $\alpha\in\AA$. 

If $\sk\geq 0$, as a result of \cref{eq::gradanF2}, there exists a constant scalar $\nu\triangleq\sum_{\alpha\in\AA}\nu^\alpha>0$ such that
\begin{equation}
\nonumber
\begin{aligned}
	&\|\grad\, F(\Xkh)\|\\
\leq&\,\sum_{\alpha\in\AA}\|\grad_\alpha F(\Xkh)\|\\
\leq&\,\sum_{\alpha\in\AA}\nu^\alpha\cdot\big(\|\Xk-\Xkm\| + \|\Xkh-\Xk\|\big)\\
=&\, \nu\|\Xk-\Xkm\| + \nu\|\Xkh-\Xk\|\\
\leq&\sqrt{2}\nu\sqrt{ \|\Xk-\Xkm\|^2 + \|\Xkh-\Xk\|^2},
\end{aligned}
\end{equation}
which is equivalent to
\begin{multline}\label{eq::gradF2}
\|\grad\,F(\Xkh)\|^2\leq 2\nu^2\cdot\|\Xk-\Xkm\|^2 + \\
2\nu^2\cdot\|\Xkh-\Xk\|^2.
\end{multline}
Note that \cref{eq::sFF1,eq::sFF3} hold as long as $\zeta>\xi>0$, from which we might upper-bound $\|\Xk-\Xkm\|^2$ and $\|\Xkh-\Xk\|^2$ in \cref{eq::gradF2} and obtain
\begin{multline}\label{eq::gradFF2}
	\|\grad\,F(\Xkh)\|^2\leq\frac{4\nu^2}{\delta}\big(\lF^{(\skm)}-\lF^{(\sk)}\big) +\\
	 \frac{4\nu^2}{\delta'}\big(\lF^{(\sk)}-\lF^{(\skp)}\big).
\end{multline}
Recall from \cref{prop::ammc}\ref{prop::amm1} that 
\begin{equation}\label{eq::sFkmk}
\sF^{(\skm)}-\sF^{(\sk)}\geq 0
\end{equation}
and 
\begin{equation}\label{eq::sFkkp}
\sF^{(\sk)}-\sF^{(\skp)}\geq 0.
\end{equation}
Then, if we let $\epsilon \triangleq \min\{\frac{\delta}{2\nu^2},\,\frac{\delta'}{2\nu^2}\}>0$, \cref{eq::gradFF2,eq::sFkmk,eq::sFkkp} lead to
\begin{equation}\label{eq::gradF3}
	\lF^{(\skm)}-\lF^{(\skp)} \geq \frac{\epsilon}{2}\|\grad\,F(\Xkh)\|^2.
\end{equation}
A telescoping sum of \cref{eq::gradF3} over $\sk$ from $0$ to $\sK$ yields
\begin{equation}
	\nonumber
	\lF^{(-1)} + \lF^{(0)} - \lF^{(\sk)}-\lF^{(\skp)}\geq \\
	\frac{\epsilon}{2} \sum_{\sk=0}^{\sK}\|\grad\,F(\Xkh)\|^2,
\end{equation}
and thus,
\begin{multline}\label{eq::lFbnd2}
	\lF^{(-1)} + \lF^{(0)} - \lF^{(\sk)}-\lF^{(\skp)}\geq \\
	\frac{\epsilon(\sK+1)}{2}\min_{0\leq\sk\leq\sK}\|\grad\, F(\Xkh)\|^2.
\end{multline}
From lines~\ref{line::alg3::lF0}, \ref{line::alg3::lFk} of \cref{algorithm::ammc}, we obtain
\begin{equation}\label{eq::lF0}
	\lF^{(-1)}=\lF^{(0)}=F(X^{(0)}),
\end{equation} 
and Propositions \ref{prop::ammc}\ref{prop::ammc1} and \ref{prop::ammc}\ref{prop::ammc2} indicate
\begin{equation}\label{eq::Finf}
	\lF^{(\sk)}\geq \lF^{(\skp)}\geq F^{\infty}.
\end{equation}
As a result of \cref{eq::lF0,eq::Finf,eq::lFbnd2}, it can be concluded that
\begin{multline}
	\nonumber
	F(X^{(0)})-F^\infty \geq \frac{\epsilon(\sK+1)}{4}\min_{0\leq\sk\leq\sK}\|\grad\, F(\Xkh)\|^2,
\end{multline}
which is equivalent to
\begin{equation}
	\min\limits_{0\leq\sk< \mathsf{K}}\|\grad\, F(\Xkh)\|\leq 2\sqrt{\frac{1}{\epsilon}\cdot\dfrac{F(X^{(0)})-F^\infty}{{\sK+1}}}.
\end{equation}
The proof is completed.

\vspace{0.5em}
\noindent\textbf{Proof of \ref{prop::ammc6}.\;} From Propositions \ref{prop::ammc}\ref{prop::ammc3} and \ref{prop::ammc}\ref{prop::ammc4}, we obtain
\begin{equation}
	\nonumber
	\|\Xkp-\Xk\|\rightarrow 0
\end{equation}
and
\begin{equation}
	\nonumber
	\|\Xkh-\Xk\|\rightarrow 0
\end{equation}
as long as $\zeta>\xi>0$, from which and \cref{eq::gradF2}, it is trivial to show that
\begin{equation}\label{eq::gradFkh}
	\grad\, F(\Xkh)\rightarrow \0.
\end{equation}
In addition, note that $\grad\,F(X)$ is continuous by \cref{assumption::loss}\ref{assumption::loss_cont}, which suggests
\begin{equation}\label{eq::gradFkkh}
	\grad\,F(\Xk)\rightarrow\grad\,F(\Xkh).
\end{equation}
From \cref{eq::gradFkh,eq::gradFkkh}, it can be concluded that
\begin{equation}
	\nonumber
	\grad\, F(\Xk) \rightarrow \0.
\end{equation}
The proof is completed.

%If the adaptive restart scheme is not triggered for node $\alpha\in\AA$, then line~\ref{line::alg4::xlG1} of \cref{algorithm::amm_c_x}
%
%for all $\alpha\in\AA$, which is equivalent to
%\begin{equation}\label{eq::minH}
%	\Xkh\gets\arg\min\limits_{X\in\XX}\lG(X|Y^{(\sk)}).
%\end{equation}
%From \cref{eq::lGM}, we obtain
%\begin{equation}
%	\nonumber
%	\begin{aligned}
%		&\nabla \lG(\Xkh|\Yk)\\
%		=&\nabla_{X} F(\Yk)+(\Xkh-\Yk)\lnGammak\\
%		=&\nabla_{X} F(\Xkh)+(\Xkh-\Yk)\lnGammak\, +\\
%		&\big(\nabla_{\Xa} F(\Yk) -\nabla_{\Xa} F(\Xkh)\big),
%	\end{aligned}
%\end{equation}
%which suggests
%\begin{equation}\label{eq::gradHY1}
%\begin{aligned}
%&\grad\,H(\Xkh|\Yk)\\
%=&\grad\, F(\Xkh) + \QQ_{\Xkh}\big((\Xkh-\Yk)\lnGammak\big)+\\
%&\QQ_{\Xkh}\big(\nabla F(\Yk)-\nabla F(\Xkh)\big).
%\end{aligned}
%\end{equation}
%In addition, \cref{eq::minHY} results in
%\begin{equation}\label{eq::gradHY2}
%\grad\,H(\Xkh|\Yk)=\0.
%\end{equation} 
%As a result of \cref{eq::gradHY1,eq::gradHY2}, it can be concluded that
%\begin{multline}
%	\nonumber
%	\grad\, F(\Xkh)=\QQ_{\Xkh}\big((\Yk-\Xkh)\lnGammak\big)+\\
%	\QQ_{\Xkh}\big(\nabla F(\Xkh)-\nabla F(\Yk)\big).
%\end{multline}
%From the equation above, there exists a constant scalar $\nu > 0$ such that
%\begin{equation}\label{eq::gradnFkh}
%	\|\grad\, F(\Xkh)\|\leq \nu \|\Xkh-\Yk\|,
%\end{equation} 
%whose derivation is almost the same as that of \cref{eq::gradFh}. 

\section*{Apendix G.\;\;Proof of \cref{prop::FsF}}\label{appendix::G}
\noindent\textbf{Proof of \ref{prop::FsF1}.\;} We will prove $F(\Xk)=\sum_{\alpha\in\AA} F^{\ak}$ by induction.
\begin{enumerate}[leftmargin=0.45cm]
\item From \cref{eq::F}, it can be shown that
\begin{align}
&F(X^{(0)})\nonumber\\
=&\sum_{\alpha\in\AA}\sum_{(i,j)\in \aEE^{\alpha\alpha}}F_{ij}^{\aa}(X^{(0)})+
\sum_{\substack{\alpha,\beta\in\AA,\\\alpha\neq \beta}}\sum_{(i,j)\in \aEE^{\alpha\beta}} F_{ij}^{\ab}(X^{(0)})\nonumber\\
=&\sum_{\alpha\in\AA}\bigg(\sum_{(i,j)\in \aEE^{\alpha\alpha}}F_{ij}^{\aa}(X^{(0)}) +\nonumber\\
 &\quad\quad\quad\frac{1}{2}\sum_{\beta\in\NN_-^{\alpha}}\sum_{(i,j)\in \aEE^{\alpha\beta}} F_{ij}^{\ab}(X^{(0)})+\label{eq::FF0}\\
 &\quad\quad\quad\frac{1}{2}\sum_{\beta\in\NN_+^{\alpha}}\sum_{(i,j)\in \aEE^{\beta\alpha}} F_{ij}^{\ba}(X^{(0)})\bigg)\nonumber\\
=&\sum_{\alpha\in\AA} F^{\alpha(0)},\nonumber
\end{align}
where the last equality results from \cref{eq::Fa0}.
\vspace{0.25em}
\item Suppose $F(\Xk)=\sum_{\alpha\in\AA} F^{\ak}$ holds at $\sk$-th iteration. As a result of \cref{eq::Gakp}, we obtain
\begin{equation}\label{eq::GGa}
\begin{aligned}
&\sum_{\alpha\in\AA}G^{\akp}\\
=&\sum_{\alpha\in\AA} G^\alpha(\Xakp|\Xk) + \sum_{\alpha\in\AA}F^{\ak}\\
=&\sum_{\alpha\in\AA} G^\alpha(\Xakp|\Xk) + F(\Xk)\\
=& G(\Xkp|\Xk),
\end{aligned}
\end{equation}
where the second and third equality are due to $F(\Xk)=\sum_{\alpha\in\AA} F^{\ak}$ and \cref{eq::G2}, respectively. From \cref{eq::DGa} and
\begin{equation}
\nonumber
\|\Xkp-\Xk\|^2=\sum_{\alpha\in\AA}\|\Xakp-\Xak\|^2,
\end{equation}
it is straightforward to show
\begin{multline}\label{eq::DG}
\sum_{\alpha\in\AA} \Delta G^{\alpha}(\Xakp|\Xk)=\\
\sum_{\substack{\alpha,\beta\in\AA,\\\alpha\neq \beta}}\sum_{(i,j)\in \aEE^{\ab}} \left(F_{ij}^{\ab}(X)-E_{ij}^{\ab}(X|\Xk)\right)-\\
\frac{\xi}{2} \big\|\Xkp-\Xk\big\|^2.
\end{multline}
In addition, \cref{eq::Fakp,eq::GGa} suggest
\begin{equation}\label{eq::FGG}
\begin{aligned}
&\sum_{\alpha\in\AA}F^{\akp}\\
=&\sum_{\alpha\in\AA}G^{\akp}+\sum_{\alpha\in\AA} \Delta G^{\alpha}(\Xakp|\Xk)\\
=&G(\Xkp|\Xk) + \sum_{\alpha\in\AA} \Delta G^{\alpha}(\Xakp|\Xk).
\end{aligned}
\end{equation}
Substituting \cref{eq::G,eq::DG} into \cref{eq::FGG} yields.
\begin{multline}
\nonumber
\sum_{\alpha\in\AA}F^{\akp}= \sum_{\alpha\in\AA}\sum_{(i,j)\in \aEE^{\alpha\alpha}}F_{ij}^{\aa}(\Xkp)+\\
\sum_{\substack{\alpha,\beta\in\AA,\\\alpha\neq \beta}}\sum_{(i,j)\in \aEE^{\alpha\beta}} F_{ij}^{\ab}(\Xkp).
\end{multline}
We simplify the equation above with \cref{eq::F} and obtain
\begin{equation}
F(\Xkp)=\sum_{\alpha\in\AA}F^{\akp}.
\end{equation}
\item Therefore, it can be concluded that $F(\Xk)=\sum_{\alpha\in\AA} F^{\ak}$ holds for any $\sk\geq 0$.
\end{enumerate}
\vspace{0.5em}

\noindent\textbf{Proof of \ref{prop::FsF2}.\;} We will prove $\sF^{(\sk)} = \sum_{\alpha\in\AA}\sF^{\ak}$ by induction.
\begin{enumerate}[leftmargin=0.45cm]
\item Recall from \cref{eq::sFa0,eq::FF0,eq::lFk} that $\sF^{(0)}=F(X^{(0)})$, $\sF^{\alpha(0)}=F^{\alpha(0)}$ and $F(X^{(0)})=\sum_{\alpha\in\AA} F^{\alpha(0)}$, which immediately yields
\begin{equation}
\sF^{(0)}=F(X^{(0)}).
\end{equation}
\item Suppose $\sF^{(\sk)}=\sum_{\alpha\in\AA} \sF^{\ak}$ holds at $\sk$-th iteration. As a result of \cref{eq::lFk}, we obtain
\begin{equation}
\nonumber
\sF^{(\skp)} = (1-\eta)\cdot\sF^{(\sk)}+\eta\cdot F(\Xkp).
\end{equation}
Note that \cref{prop::FsF}\ref{prop::FsF1} suggests $F(\Xkp)=\sum_{\alpha\in\AA}F^{\akp}$. Apply $\sF^{(\sk)}=\sum_{\alpha\in\AA} \sF^{\ak}$ and $F(\Xkp)=\sum_{\alpha\in\AA}F^{\akp}$ on the right-hand side of the equation above results in
\begin{multline}
 \sF^{(\skp)}=(1-\eta)\cdot\sum_{\alpha\in\AA}\sF^{\ak} +\\
  \eta\cdot\sum_{\alpha\in\AA}F^{\akp}=\sum_{\alpha\in\AA}\sF^{\akp},
\end{multline}
where the last equality is due to \cref{eq::lFakp}.
\item Therefore, it can be concluded that $\sF^{(\sk)}=\sum_{\alpha\in\AA} \sF^{\ak}$ holds for any $\sk\geq 0$.  The proof is completed.
\end{enumerate}

\vspace{0.5em}

\noindent\textbf{Proof of \ref{prop::FsF4}.\;} \cref{prop::upper} indicates $E_{ij}^{\ab}(X|\Xkm)\geq F_{ij}^{\ab}(X)$ and $E_{ij}^{\ba}(X|\Xkm)\geq F_{ij}^{\ba}(X)$, from which and \cref{eq::DGa} we obtain 
\begin{equation}\label{eq::DGal}
	\Delta G^\alpha(\Xa|\Xkm)\leq 0
\end{equation}
as long as $\xi\geq 0$. From \cref{eq::DGal,eq::Fakp}, it is immediate to conclude 
\begin{equation}
F^{\akp}\leq G^{\akp}
\end{equation}
for any $\sk\geq 0$. If $G^{\akp}\leq \sF^{\ak}$, the equation above further suggests
\begin{equation}\label{eq::FGlF}
F^{\akp}\leq G^{\akp}\leq \sF^{\ak}.
\end{equation}
From \cref{eq::lFakp}, we obtain
\begin{equation}\label{eq::sFakp}
\sF^{\akp} = (1-\eta)\cdot \sF^{\ak} + \eta\cdot F^{\akp},
\end{equation}
where note that $\eta\in(0,1]$. Thus, we conclude that $\sF^{\akp}$ is a convex combination of $\sF^{\ak}$ and $F^{\akp}$, which and \cref{eq::FGlF} lead to
\begin{equation}
F^{\akp}\leq \sF^{\akp}\leq \sF^{\ak}.
\end{equation}
The proof is completed.

\section*{Apendix H.\;\;Proof of \cref{prop::amm}}\label{appendix::H}
\noindent\textbf{Proof of \ref{prop::amm1}.\;}  We will prove $F^{\ak}\leq \lF^{\ak}$ and $\lF^{\akp}\leq \lF^{\ak}$ by induction, from which it can be shown that $\sF^{(\skp)}\leq \sF^{(\sk)}$.
\begin{enumerate}[leftmargin=0.5cm]
	\item From lines~\ref{line::alg5::Fakp2}, \ref{line::alg5::lFa0}, \ref{line::alg5::Fakp1}, \ref{line::alg5::lFakp} of \cref{algorithm::amm}, it can be shown that 
	\begin{equation}
		\nonumber
		F^{\alpha(-1)}=F^{\alpha(0)}=\lF^{\alpha(-1)}=\lF^{\alpha(0)}.
	\end{equation}
	\item Suppose $\sk\geq 0$ and $F^{\ak}\leq \lF^{\ak}$ holds at $\sk$-th iteration. In terms of $\Xakh$, if the adaptive restart scheme  for $\Xakh$ is not triggered, it is immediate to show from line~\ref{line::alg6::restart1} of \cref{algorithm::amm_x} that
	\begin{equation}\label{eq::GGHFk2}
		\sG^{\akh}  \leq \lF^{\ak}.
	\end{equation}
	On the other hand, if the adaptive restart scheme for $\Xakh$ is triggered, line~\ref{line::alg6::Xakh2} of \cref{algorithm::amm_x} results in
	\begin{multline}\label{eq::GHHk}
	 G^{\alpha}(\Xakh|\Xk) \leq	H^{\alpha}(\Xakh|\Xk)\leq\\  H^{\alpha}(\Xak|\Xk) = 0,
	\end{multline}
	where the first inequality and the last equality are due to \cref{lemma::lG}\ref{lemma::lGd} and \cref{eq::lGMa}, respectively. From \cref{eq::Gakp,eq::GHHk}, we obtain
	\begin{multline}\label{eq::GGHFk}
	\!\!\!G^{\akh} = G^\alpha(\Xakh|\Xk) + F^{\ak}\leq\\
	 H^\alpha(\Xakh|\Xk) + F^{\ak}\leq F^{\ak}\leq \sF^{\ak}.
	\end{multline}
	Therefore, no matter whether the adaptive restart scheme is triggered or not, we conclude from \cref{eq::GGHFk,eq::GGHFk2} that
	\begin{equation}\label{eq::GHlF}
		G^{\akh}\leq  \lF^{\ak}
	\end{equation}
	always holds. Furthermore, if $\Xakp$ and $G^{\akp}$ result from line~\ref{line::alg6::Xakp3} of \cref{algorithm::amm_x}, we obtain $\Xakp=\Xakh$ and $G^{\akp}=G^{\akh}$, which and \cref{eq::GHlF} yield
	\begin{equation}\label{eq::GlFbnd2}
		G^{\akp}=G^{\akh}\leq \lF^{\ak}.
	\end{equation}
	Otherwise, line~\ref{line::alg6::check1} of \cref{algorithm::amm_x} and \cref{eq::GHlF} suggest
	\begin{equation}\label{eq::GlFbnd}
		G^{\akp} - \lF^{\ak} \leq \phi\cdot \big(\sG^{\akh}-\lF^{\ak}\big)\leq 0.
	\end{equation}
	Then, \cref{eq::GlFbnd,eq::GlFbnd2} suggest
	\begin{equation}\label{eq::GkpFk}
		G^{\akp} \leq \lF^{\ak}
	\end{equation}
	always holds, from which and \cref{prop::FsF}\ref{prop::FsF4} we conclude
	\begin{equation}\label{eq::FlFak}
		F^{\akp}\leq \lF^{\akp}\leq \lF^{\ak}.
	\end{equation}
	\item Therefore, it can be concluded that $F^{\ak}\leq \lF^{\ak}$ and  $\lF^{\akp}\leq \lF^{\ak}$.
\end{enumerate}
Summing both sides of  $\sF^{\akp}\leq \sF^{\ak}$ over all the nodes $\alpha$ and implementing Propositions \ref{prop::FsF}\ref{prop::FsF1} and \ref{prop::FsF}\ref{prop::FsF2} yields
\begin{equation}
	\nonumber
	\lF^{(\skp)}=\sum_{\alpha\in\AA}\lF^{\akp} \leq \sum_{\alpha\in\AA}\lF^{\ak}=\lF^{(\sk)},
\end{equation}
which suggests that $\lF^{(\sk)}$ is nonincreasing. The proof is completed.

\vspace{0.8em}
\noindent\textbf{Proof of \ref{prop::amm2}.\;} Recalling that $F(\Xk)\geq 0$ holds  by definition for any $\sk\geq 0$ and $\lF^{(\sk)}$ is the exponential moving average of $F(X^{(0)}),\,F(X^{(1)}),\,\cdots,\, F(\Xk)$, we obtain $\sF^{(\sk)}\geq 0$, i.e., $\sF^{(\sk)}$ is bounded below. In addition, \cref{prop::amm}\ref{prop::amm1} indicates that $\lF^{(\sk)}$ is nonincreasing, and thus, there exists $F^{\infty}$ such that $\lF^{(\sk)}\rightarrow F^{\infty}$, from which and \cref{eq::lFk} it can be concluded that $F(\Xk)\rightarrow F^{\infty}$ as well. The proof is completed.

\vspace{0.8em} 
\noindent\textbf{Proof of \ref{prop::amm3}.\;} From \cref{eq::G2}, it can be shown that $G(\Xkp|\Xk)$ takes the form as
\begin{equation}
	\nonumber
	\begin{aligned}
	G(\Xkp|\Xk) =& \sum_{\alpha\in\AA}G^\alpha(\Xak|\Xk) + F(\Xk)\\
	=&\sum_{\alpha\in\AA}G^\alpha(\Xak|\Xk) +\!\sum_{\alpha\in\AA} F^{\ak},
	\end{aligned}
\end{equation}
where the last equality is due to \cref{prop::FsF}\ref{prop::FsF1}. Applying \cref{eq::Gakp} on the equation above results in
\begin{equation}\label{eq::Gk}
G(\Xkp|\Xk)=\sum_{\alpha\in\AA} G^{\akp}.
\end{equation}
Recalling $G^{\akp}\leq \lF^{\ak}$ from \cref{eq::GkpFk} and $\sum_{\alpha\in\AA}\lF^{\ak} = \lF^{(\sk)}$ from \cref{prop::FsF}\ref{prop::FsF2}, we obtain
\begin{equation}\label{eq::GlFk}
	G(\Xkp|\Xk) = \sum_{\alpha\in\AA} G^{\akp} \leq \sum_{\alpha\in\AA} \lF^{\ak} = \lF^{(\sk)}.
\end{equation}
From \cref{eq::GlFk}, it can be shown that
\begin{equation}\label{eq::lFFG}
	\lF^{(\sk)} - F(\Xkp)\geq G(\Xkp|\Xk) - F(\Xkp).
\end{equation}
Substitute \cref{eq::Gxkp,eq::Fxkp} into the right-hand side of \cref{eq::lFFG} and simplify the resulting equation:
\begin{multline}\label{eq::lFkFkp2}
	\lF^{(\sk)} - F(\Xkp)\geq \frac{1}{2}\|\Xkp-\Xk\|_{\nGammak}^2-\\
	\frac{1}{2}\|\Xkp-\Xk\|_{\nMk}^2.
\end{multline}
From \cref{eq::lFkFkp2,eq::lFklFkp}, we obtain
\begin{multline}
	\nonumber
	\lF^{(\sk)}-\lF^{(\skp)}= \eta\cdot\big(\lF^{(\sk)}-F^{(\skp)}\big)\geq\\
	\frac{\eta}{2}\|\Xkp-\Xk\|_{\nGammak}^2-\frac{\eta}{2}\|\Xkp-\Xk\|_{\nMk}^2.
\end{multline}
From \cref{eq::GMk}, note that $\nGammak-\nMk\geq \xi\cdot\I$, and thus, there exists $\delta > 0$ such that the equation above is reduced to
\begin{equation}\label{eq::lFkFkp}
	\lF^{(\sk)} - \lF^{(\skp)} \geq \frac{\delta}{2}\|\Xkp-\Xk\|^2
\end{equation}
as long as $\xi>0$. Furthermore, \cref{prop::amm}\ref{prop::amm2} yields $$\lF^{(\sk)}-\lF^{(\skp)}\rightarrow 0,$$ from which and \cref{eq::lFkFkp}, we obtain
\begin{equation}\label{eq::XkpXk}
	\|\Xkp-\Xk\|\rightarrow 0.
\end{equation}
The proof is completed.

\vspace{0.8em} 
\noindent\textbf{Proof of \ref{prop::amm4}.\;} In terms of $\Xakh$, there are two possible cases:
\begin{enumerate}[leftmargin=0.45cm]
\item If $\Xakh$ is from line~\ref{line::alg6::Xakh1} of \cref{algorithm::amm_x}, then line~\ref{line::alg6::restart1} of \cref{algorithm::amm_x} indicates
\begin{equation}\label{eq::lFG1}
\sF^{\ak}-G^{\akh}\geq  \psi\cdot\|\Xakh-\Xak\|^2.
\end{equation}
\item If $\Xakh$ is from line~\ref{line::alg6::Xakh2} of \cref{algorithm::amm_x}, then note that \cref{eq::GGHFk} holds for any $\sk\geq 0$, which suggests
\begin{multline}
\nonumber
\sF^{\ak}-G^{\akh}\geq H^\alpha(\Xakh|\Xk)-\\
G^\alpha(\Xakh|\Xk).
\end{multline}
Recalling the definitions of $G^\alpha(\Xakh|\Xk)$ and $H^\alpha(\Xakh|\Xk)$ in \cref{eq::GMa,eq::lGMa}, we rewrite the equation above as
\begin{multline}\label{eq::lFG}
\sF^{\ak}-G^{\akh}\geq \frac{1}{2}\|\Xakh-\Xak\|_{\lnGammaak}^2-\\
\frac{1}{2}\|\Xakh-\Xak\|_{\nGammaak}^2.
\end{multline}
Similar to $\nGammak$ and $\lnGammak$ in \cref{eq::lGGk}, we obtain
\begin{equation}\label{eq::lGGak}
\lnGammaak \succeq \nGammaak + (\zeta-\xi)\cdot\I.
\end{equation}
Applying \cref{eq::lGGak} on the right-hand side of \cref{eq::lFG} indicates
\begin{multline}\label{eq::lFG2}
	\sF^{\ak}-G^{\akh}\geq \frac{\zeta-\xi}{2}\|\Xakh-\Xak\|^2.
\end{multline}
\end{enumerate}
Then, as a result of \cref{eq::lFG1,eq::lFG2}, there exists a constant scalar $\sigma'>0$ such that
\begin{equation}\label{eq::lFG3}
	\sF^{\ak}-G^{\akh}\geq \frac{\sigma'}{2}\|\Xakh-\Xak\|^2
\end{equation}
if $\zeta>\xi>0$. In addition, lines~\ref{line::alg6::check1} to \ref{line::alg6::check2} of \cref{algorithm::amm_x} results in
\begin{multline}\label{eq::lFGak2}
\sF^{\ak} - \sG^{\akp} \geq \phi\cdot \Big(\sF^{\ak} - \sG^{\akh}\Big)\geq\\
\frac{\phi\sigma'}{2}\|\Xakh-\Xak\|^2,
\end{multline}
where the last inequality is from \cref{eq::lFG3}. Summing both sides of \cref{eq::lFGak2} over all the nodes $\alpha\in\AA$ and simplifying the resulting equation with \cref{prop::FsF}\ref{prop::FsF2} and \cref{eq::Gk}, we obtain
\begin{equation}
\nonumber
\sF^{(\sk)} - G(\Xkp|\Xk) \geq \frac{\phi\sigma'}{2}\|\Xkh-\Xk\|^2.
\end{equation}
Recalling $G(\Xkp|\Xk)\geq F(\Xkp)$  from \cref{prop::G}\ref{prop::G2}, the equation above indicates
\begin{multline}\label{eq::lFF}
\sF^{(\sk)} - F(\Xkp) \geq \sF^{(\sk)} - G(\Xkp|\Xk) \geq\\ \frac{\phi\sigma'}{2}\|\Xkh-\Xk\|^2.
\end{multline}
From \cref{eq::lFF,eq::lFklFkp}, it is immediate to show
\begin{multline}
\nonumber
\sF^{(\sk)} - \sF^{(\skp)} \geq \eta\cdot\Big(\sF^{(\sk)}-F(\Xkp)\Big)\geq \\
\frac{\eta\phi\sigma'}{2}\|\Xkh-\Xk\|^2.
\end{multline}
Therefore, there exists a constant scalar $\delta'>0$ such that
\begin{equation}
\sF^{(\sk)} - \sF^{(\skp)} \geq \frac{\delta'}{2}\|\Xkh-\Xk\|^2.
\end{equation}
Since $\sF^{(\sk)} \rightarrow F^{\infty}$, it can be concluded that
\begin{equation}
	\|\Xkh-\Xk\|\rightarrow 0
\end{equation}
if $\zeta>\xi>0$. The proof is completed.
\vspace{0.8em} 

\noindent\textbf{Proof of \ref{prop::amm5} and \ref{prop::amm6}.\;} The proofs of Propositions \ref{prop::amm}\ref{prop::amm5} and \ref{prop::amm}\ref{prop::amm6} are almost the same as these of Propositions \ref{prop::ammc}\ref{prop::ammc5} and \ref{prop::ammc}\ref{prop::ammc6}, which are thus omitted due to space limitation.

\section*{Apendix I.\;\; The Formulation of $\nGamma^{\ak}$ in \cref{eq::GMa}}\label{appendix::I}
From \cref{eq::GMa,eq::Eij3,eq::Fij3,eq::G}, it can be concluded that
\begingroup
\allowdisplaybreaks
\begin{equation}\label{eq::FaH}
	\begin{aligned}
	&\frac{1}{2}\|X^\alpha-X^{\ak}\|_{\nGamma^{\ak}}^2\\
=&\sum_{i=1}^{n_\alpha}\Big[\frac{1}{2}\xi\|R_i^\alpha-R_i^{\ak}\|^2 + \frac{1}{2}\xi\|t_i^\alpha-t_i^{\ak}\|^2\Big]+\hspace{5em}\\
&\sum_{(i,j)\in \aEE^{\aa}}\Big[\frac{1}{2}\kappa_{ij}^{\aa}\|(R_i^\alpha-R_i^{\ak})\nR_{ij}^{\aa} -(R_j^\alpha-R_j^{\alpha(\sk)})\|^2 +\hspace{17em}\\ 
&\;\,\frac{1}{2}\tau_{ij}^{\aa}\|(R_i^\alpha-R_i^{\ak}) \nt_{ij}^{\aa}+t_i^\alpha - t_i^{\ak} - (t_j^\alpha-t_j^{\alpha(\sk)})\|^2\Big]+\\
&\sum_{\substack{\beta\in \NN_-^\alpha}}\sum_{(i,j)\in \aEE^{\ab}}\wabijk\Big[\kappa_{ij}^{\ab}\|R_i^\alpha-R_i^{\ak}\|^2 +\\[-0.75em]
&\quad\quad\quad\quad\quad\quad\quad\;\;\tau_{ij}^{\ab}\|\big(R_i^\alpha-R_i^{\ak}\big) \nt_{ij}^{\ab}+t_i^\alpha-t_i^{\ak}\|^2\Big]+\\[0.3em]
&\sum_{\substack{\beta\in \NN_+^\alpha}}\sum_{(j,i)\in \aEE^{\beta\alpha}}\wbajik\Big[\kappa_{ji}^{\beta\alpha}\|R_i^\alpha -R_{i}^{\ak}\|^2+\\
&\quad\quad\quad\quad\hspace{10em}\tau_{ji}^{\beta\alpha}\|t_i^\alpha - t_{i}^{\ak}\|^2\Big].
	\end{aligned} 
\end{equation}
\endgroup
For notational clarity, we introduce
$$\aEE_{i-}^{\ab}\triangleq \{(i,\,j)|(i,\,j)\in \aEE^{\ab}\},$$
$$\aEE_{i+}^{\ab}\triangleq \{(i,\,j)|(j,\,i)\in \aEE^{\beta\alpha}\},$$
$$\NN_{i-}^{\alpha}\triangleq \{\beta\in \AA|\exists(i,\,j)\in \aEE^{\ab}\text{ and }\beta\neq\alpha\},$$
$$\NN_{i+}^{\alpha}\triangleq \{\beta\in \AA|\exists(j,\,i)\in \aEE^{\beta\alpha}\text{ and }\beta\neq\alpha\},$$ $$\EE_i^{\ab}\triangleq\aEE_{i-}^{\ab}\cup\aEE_{i+}^{\ab},$$
$$\NN_i^\alpha\triangleq\NN_{i-}^\alpha\cup\NN_{i+}^\alpha.$$ 
In addition, we define $\kappa_{ji}^{\beta\alpha}\triangleq\kappa_{ij}^{\ab}$, $\tau_{ji}^{\beta\alpha}\triangleq\tau_{ij}^{\ab}$, $\wabijk\triangleq\wbajik$, and
\begin{equation}\label{eq::kappak}
\kappa_{ij}^{\ab(k)}\triangleq \wabijk\cdot\kappa_{ij}^{\ab},
\end{equation}
\begin{equation}\label{eq::tauk}
\tau_{ij}^{\ab(k)}\triangleq \wabijk\cdot\tau_{ij}^{\ab}
\end{equation}
for any $(i,\,j)\in\aEE^{\ab}$. 

Then, \cref{eq::FaH} indicates that $\nGamma^{\ak}\in \R^{(d+1)n_\alpha\times (d+1)n_\alpha}$ in \cref{eq::GMa} takes the form as
\begin{equation}\label{eq::nGamma}
\nGamma^{\ak}=\begin{bmatrix}
\nGamma^{\tau,\ak}\hphantom{{}^\transpose} & \nGamma^{\nu,\ak}\\
\nGamma{\vphantom{H}^{\nu,\ak}}^\transpose & \nGamma^{\kappa,\ak}
\end{bmatrix},
\end{equation}
in which $\nGamma^{\tau,\ak}\in \R^{n_\alpha\times n_\alpha}$, $\nGamma^{\nu,\ak}\in\R^{n_\alpha\times dn_\alpha}$ and $\nGamma^{\kappa,\ak}\in\R^{dn_\alpha\times dn_\alpha}$ are sparse matrices that are defined as
\begingroup
\allowdisplaybreaks
\begin{align*}
&\Big[\nGamma^{\tau,\ak}\Big]_{ij}=\begin{cases}
\xi + \sum\limits_{e\in \EE_i^{\aa}}\tau_{e}^{\aa}+\\
\quad\quad\;\;\sum\limits_{\substack{\beta\in \NN_i^\alpha}}\sum\limits_{e\in \EE_i^{\ab}}2\tau_e^{\abk}, & i=j,\\
-\tau_{ij}^{\aa}, & (i,j)\in \aEE^{\aa},\\
-\tau_{ji}^{\aa}, & (j,i)\in \aEE^{\aa},\\
0,& \text{otherwise,}
\end{cases}\\[0.8em]
&\Big[\nGamma^{\nu,\ak}\Big]_{ij}=\begin{cases}
\sum\limits_{e\in \EE_{i-}^{\aa}}\tau_{e}^{\aa}\nt{_{e}^{\aa}}^\transpose+\\
\;\sum\limits_{\substack{\beta\in \NN_{i-}^\alpha}}\sum\limits_{e\in \EE_{i-}^{\ab}}2\tau_e^{\abk}\nt{\vphantom{t}_{e}^{\ab}}^\transpose, & i=j,\\
-\tau_{ji}^{\aa}\nt{_{ji}^{\aa}}^\transpose, & (j,i)\in \aEE^{\aa},\\
0,& \text{otherwise,}
\end{cases}\\[0.8em]
&\Big[\nGamma^{\kappa,\ak}\Big]_{ij}=\begin{cases}
\xi\cdot\I+\sum\limits_{e\in \EE_i^{\aa}}\kappa_{e}^{\aa}\cdot\I+\\
\quad\sum\limits_{e\in \EE_{i-}^{\aa}}\tau_{e}^{\aa}\cdot\nt_{e}^{\aa}\nt{\vphantom{t}_{e}^{\aa}}^\transpose+\\
\sum\limits_{\substack{\beta\in \NN_i^\alpha}}\big(\sum\limits_{e\in \EE_i^{\ab}}2\kappa_e^{\abk}\cdot\I+\\
\quad\sum\limits_{e\in \EE_{i-}^{\ab}}\;2\tau_{e}^{\abk}\cdot\nt_{e}^{\ab}\nt{\vphantom{t}_{e}^{\ab}}^\transpose\big), & i=j,\\
-\kappa_{ij}^{\aa}\cdot\nR_{ij}^{\aa}, & (i,j)\in \aEE^{\aa},\\
-\kappa_{ji}^{\aa}\cdot\nR{\vphantom{R}_{ij}^{\aa}}^\transpose, & (j,i)\in \aEE^{\aa},\\
0,& \text{otherwise.}
\end{cases}
\end{align*}
\endgroup

\section*{Apendix J.\;\; The Formulation of $\lnGamma^{\ak}$ and $\lnGamma_i^{\ak}$ in \cref{eq::lGMa,eq::lGMai}}\label{appendix::J}
From \cref{eq::Eij3,eq::lG}, it can be concluded that
\begingroup
\allowdisplaybreaks
\begin{align}\label{eq::lnGammai}
&\frac{1}{2}\|X-\Xk\|_{\lnGamma_i^\ak}^2\nonumber\\
=&\frac{1}{2}\zeta\|R_i^\alpha-R_i^{\ak}\|^2 + \frac{1}{2}\zeta\|t_i^\alpha-t_i^{\ak}\|^2+\nonumber\\
&\sum_{(i,j)\in\aEE_{i-}^{\aa}}\Big[\kappa_{ij}^{\aa}\|R_i^\alpha-R_i^{\ak}\|^2+\nonumber\\[-0.75em]
&\quad\quad\quad\quad\;\;\tau_{ij}^{\aa}\|\big(R_i^\alpha-R_i^{\ak}\big) \nt_{ij}^{\aa}+t_i^\alpha-t_i^{\ak}\|^2\Big]+\nonumber\\
&\sum_{(i,j)\in \aEE_{i+}^{\aa}}\Big[\kappa_{ji}^{\aa}\|R_i^\alpha -R_{i}^{\ak}\|^2+\tau_{ji}^{\aa}\|t_i^\alpha - t_{i}^{\ak}\|^2\Big]+\nonumber\\
&\sum_{\substack{\beta\in \NN_{i-}^\alpha}}\sum_{(i,j)\in \aEE_{i-}^{\ab}}\wabijk\Big[\kappa_{ij}^{\ab}\|R_i^\alpha-R_i^{\ak}\|^2 +\nonumber\\[-0.75em]
&\quad\quad\quad\quad\quad\quad\quad\;\;\tau_{ij}^{\ab}\|\big(R_i^\alpha-R_i^{\ak}\big) \nt_{ij}^{\ab}+t_i^\alpha-t_i^{\ak}\|^2\Big]+\nonumber\\
&\sum_{\substack{\beta\in \NN_{i+}^\alpha}}\sum_{(j,i)\in \aEE_{i+}^{\beta\alpha}}\wbajik\Big[\kappa_{ji}^{\beta\alpha}\|R_i^\alpha -R_{i}^{\ak}\|^2+\nonumber\\
&\quad\quad\quad\quad\hspace{10em}\tau_{ji}^{\beta\alpha}\|t_i^\alpha - t_{i}^{\ak}\|^2\Big].
\end{align}
\endgroup

From \cref{eq::lnGammai}, it can be shown that $\lnGamma_i^{\ak}\in\R^{(d+1)\times(d+1)}$ in \cref{eq::lGMai} can be written as
\begin{equation}\label{eq::lnGammaki}
\lnGamma_i^{\ak} = \begin{bmatrix}
\lnGamma_i^{\tau,\ak}\hphantom{{}^\transpose} & \lnGamma_i^{\nu,\ak}\\
\lnGamma{\vphantom{H}_i^{\nu,\ak}}^\transpose & \lnGamma_i^{\kappa,\ak}
\end{bmatrix},
\end{equation}
in which $\lnGamma_i^{\tau,\ak}\in\R$, $\lnGamma_i^{\nu,\ak}\in\R^{1\times d}$ and $\lnGamma_i^{\kappa,\ak}\in\R^{d\times d}$ are defined as
\begin{equation}\label{eq::lnGamma_ti}
\lnGamma_i^{\tau,\ak}=\zeta + \sum\limits_{e\in \EE_i^{\aa}}2\tau_{e}^{\aa}+\sum\limits_{\substack{\beta\in \NN_i^\alpha}}\sum\limits_{e\in \EE_i^{\ab}}2\tau_e^{\abk},
\end{equation}
\begin{equation}\label{eq::lnGamma_vi}
\lnGamma_i^{\nu,\ak}=\sum\limits_{e\in \EE_{i-}^{\aa}}2\tau_{e}^{\aa}\nt{_{e}^{\aa}}^\transpose+\sum\limits_{\substack{\beta\in \NN_{i-}^\alpha}}\sum\limits_{e\in \EE_{i-}^{\ab}}2\tau_e^{\abk}\nt{\vphantom{t}_{e}^{\ab}}^\transpose,
\end{equation}
\begin{multline}\label{eq::lnGamma_ri}
\lnGamma_i^{\kappa,\ak}=\zeta\cdot\I+\sum\limits_{e\in \EE_i^{\aa}}2\kappa_{e}^{\aa}\cdot\I+\sum\limits_{e\in \EE_{i-}^{\aa}}2\tau_{e}^{\aa}\cdot\nt_{e}^{\aa}\nt{\vphantom{t}_{e}^{\aa}}^\transpose+\\
\sum\limits_{\substack{\beta\in \NN_i^\alpha}}\big(\sum\limits_{e\in \EE_i^{\ab}}2\kappa_e^{\abk}\cdot\I+
\sum\limits_{e\in \EE_{i-}^{\ab}}\;2\tau_{e}^{\abk}\cdot\nt_{e}^{\ab}\nt{\vphantom{t}_{e}^{\ab}}^\transpose\big),
\end{multline}
in which $\kappa_{ij}^{\abk}$ and $\tau_{ij}^{\abk}$ are given in \cref{eq::kappak,eq::tauk}, respectively. 

Similar to $\nGamma^{\ak}\in \R^{(d+1)n_\alpha\times (d+1)n_\alpha}$ in \cref{eq::nGamma},  $\lnGamma^{\ak}\in \R^{(d+1)n_\alpha\times (d+1)n_\alpha}$ in \cref{eq::lGMa} also takes the form as
\begin{equation}\label{eq::lnGamma}
\lnGamma^{\ak}=\begin{bmatrix}
	\lnGamma^{\tau,\ak}\hphantom{{}^\transpose} & \lnGamma^{\nu,\ak}\\
	\lnGamma{\vphantom{H}^{\nu,\ak}}^\transpose & \lnGamma^{\kappa,\ak}
\end{bmatrix},
\end{equation}
in which $\lnGamma^{\tau,\ak}\in \R^{n_\alpha\times n_\alpha}$, $\lnGamma^{\nu,\ak}\in\R^{n_\alpha\times dn_\alpha}$ and $\lnGamma^{\kappa,\ak}\in\R^{dn_\alpha\times dn_\alpha}$ are sparse matrices.
Following \cref{eq::lGMa,eq::lGMsum,eq::lGMai}, it is straightforward to show that
\begin{equation}
	\nonumber
	\frac{1}{2}\|\Xa-\Xak\|_{\lnGamma^{\ak}}^2 = \sum_{i=1}^{n_\alpha} \frac{1}{2}\|\Xa_i-\Xak_i\|_{\lnGamma_i^{\ak}}^2.
\end{equation}
From the equation above, $\lnGamma^{\tau,\ak}\in \R^{n_\alpha\times n_\alpha}$, $\lnGamma^{\nu,\ak}\in\R^{n_\alpha\times dn_\alpha}$ and $\lnGamma^{\kappa,\ak}\in\R^{dn_\alpha\times dn_\alpha}$ in \cref{eq::lnGamma} are defined as
\begin{equation}
\nonumber
\Big[\lnGamma^{\tau,\ak}\Big]_{ij}=\begin{cases}
	\lnGamma_i^{\tau,\ak}, & i=j,\\
	0,& \text{otherwise,}
\end{cases}
\end{equation}
\begin{equation}
\nonumber
\Big[\nGamma^{\nu,\ak}\Big]_{ij}=\begin{cases}
	\lnGamma_i^{\nu,\ak}, & i=j,\\
	0,& \text{otherwise,}
\end{cases}
\end{equation}
\begin{equation}
\nonumber
\Big[\nGamma^{\kappa,\ak}\Big]_{ij}=\begin{cases}
	\lnGamma_i^{\kappa,\ak}, & i=j,\\
	0,& \text{otherwise},
\end{cases}
\end{equation}
in which $\lnGamma_i^{\tau,\ak}\in\R$, $\lnGamma_i^{\nu,\ak}\in\R^{1\times d}$ and $\lnGamma_i^{\kappa,\ak}\in\R^{d\times d}$ are given in \cref{eq::lnGamma_ti,eq::lnGamma_vi,eq::lnGamma_ri}.
%\begingroup
%\allowdisplaybreaks
%\begin{align*}
%	&\big[\lnGamma^{\tau,\alpha}\big]_{ij}=\begin{cases}
%		\lnGamma_i^{\tau,\alpha}, & i=j,\\
%		0,& \text{otherwise,}
%	\end{cases}\\[0.8em]
%	&\big[\nGamma^{\nu,\alpha}\big]_{ij}=\begin{cases}
%		\lnGamma_i^{\nu,\alpha}, & i=j,\\
%		0,& \text{otherwise,}
%	\end{cases}\\[0.8em]
%	&\big[\nGamma^{\kappa,\alpha}\big]_{ij}=\begin{cases}
%		\lnGamma_i^{\kappa,\alpha}, & i=j,\\
%		0,& \text{otherwise.}
%	\end{cases}
%\end{align*}
%\endgroup

\section*{Appendix K.\;\; The Closed-Form Solution to \cref{eq::xlGi}}\label{appendix::K}
Substituting \cref{eq::lnGammaki} into \cref{eq::xlGi} and rewriting the resulting equation in terms of $\Xa_i=\begin{bmatrix}
t_i^\alpha & R_i^\alpha
\end{bmatrix}\in\R^{d\times(d+1)}$ leads to
\begin{multline}
\nonumber
\big(t_i^{\akh},\,R_i^{\akh}\big)=\\
\arg\min_{\substack{t_i^\alpha\in\R^d,\, R_i^\alpha\in SO(d)}}\; \frac{1}{2} \|\tai-\taik\|_{\lnGammataik}^2 +\\
\lnGammanaik(\Rai-\Raik)^\transpose(\tai-\taik) +\\
\frac{1}{2}\|\Rai-\Raik\|_{\lnGammaRaik}^2+\\
\innprod{\nabla_{\tai}F(\Xk)}{\tai-\taik} +\\ \innprod{\nabla_{\Rai}F(\Xk)}{\Rai-\Raik}.
\end{multline}
For notational simplicity, the equation above is simplified to
\begin{multline}\label{eq::opti}
\min_{\substack{t_i^\alpha\in\R^d,\, R_i^\alpha\in SO(d)}}\; \frac{1}{2} \|\tai\|_{\lnGammataik}^2 +
\lnGammanaik{\Rai}^\transpose\tai +\\
\frac{1}{2}\|\Rai\|_{\lnGammaRaik}^2+  \innprod{\gamma_{i}^{\tau,\ak}}{\tai} + \innprod{\gamma_{i}^{\kappa,\ak}}{\Rai},
\end{multline}
in which
\begin{equation}
\gamma_{i}^{\tau,\ak}=\nabla_{\tai}F(\Xk)-\taik\lnGammataik\in\R^d
\end{equation}
and
\begin{equation}
\gamma_{i}^{\kappa,\ak}=\nabla_{\Rai}F(\Xk)-\Raik\lnGammaRaik\in\R^{\d\times d}.
\end{equation}
Recalling from \cref{eq::lnGammaki} that $\lnGammataik\in\R$ and $\lnGammataik>0$,  we obtain that
\begin{equation}\label{eq::tsol}
\tai = -\Rai{\lnGammanaik}^\transpose{\lnGammataik}^{-1}-\gamma_{i}^{\tau,\ak}{\lnGammataik}^{-1}
\end{equation}
minimizes \cref{eq::opti} if $\Rai\in SO(d)$ is given. Then, substituting \cref{eq::tsol} into \cref{eq::opti} yields
\begin{equation}\label{eq::opti2}
R_i^{\akh}=\arg\min_{\Rai\in SO(d)} \frac{1}{2}\|\Rai\|_{\Xi_i^{\ak}}^2 - \innprod{\upsilon_i^{\ak}}{\Rai},
\end{equation}
in which
\begin{equation}
\Xi_i^{\ak}=\lnGammaRaik-{\lnGammanaik}^\transpose{\lnGammataik}^{-1}\lnGammanaik\in\R^{d\times d}
\end{equation}
and
\begin{equation}
\upsilon_i^{\ak} = \gamma_{i}^{\tau,\ak}{\lnGammataik}^{-1}\lnGammanaik - \gamma_{i}^{\kappa,\ak} \in\R^{d\times d}.
\end{equation}
If we apply ${\Rai}^\transpose\Rai=\I$ on \cref{eq::opti2}, then \cref{eq::xlGi} is equivalent to
\begin{equation}\label{eq::opti4}
R_i^{\akh} = \arg\max_{\Rai\in SO(d)} \innprod{\upsilon_i^{\ak}}{\Rai}.
\end{equation} 
Thus, \cref{eq::xlGi} is reduced to an optimization problem on $SO(d)$, which has a closed-form solution as follows.

Following \cite{umeyama1991least}, if $\upsilon_i^{\ak}\in \R^{d\times d}$ admits a singular value decomposition $\upsilon_i^{\ak}=U_i^{\ak}\mathrm{\Sigma}_i^{\ak} {V_i^{\ak}}^\transpose$ in which $U_i^{\ak}$ and $V_i^{\ak}\in O(d)$ are orthogonal (not necessarily special orthogonal) matrices, and $\mathrm{\Sigma}_i^{\ak}=\diag\{\sigma_1^{\ak},\,\sigma_2^{\ak},\,\cdots,\,\sigma_d^{\ak}\}\in \R^{d\times d}$ is a diagonal matrix, and $\sigma_1^{\ak}\geq\sigma_2^{\ak}\geq\cdots\geq \sigma_d^{\ak}\geq 0$ are singular values of $\upsilon_i^{\ak}$, then the optimal solution to \cref{eq::opti4} is
\begin{equation}\label{eq::Rsol}
	R_i^{\akh}=\begin{cases}
		U_i^{\ak}\mathrm{\Lambda}^+ {V_i^{\ak}}^\transpose, & \det\big(U_i^{\ak}{V_i^{\ak}}^\transpose\big) > 0,\\[0.3em]
		U_i^{\ak}\mathrm{\Lambda}^- {V_i^{\ak}}^\transpose, & \text{otherwise},
	\end{cases}
	\vspace{-0.25em}
\end{equation}
in which $\mathrm{\Lambda}^+=\diag\{1,\,1,\,\cdots,\,1\}\in\R^{d\times d}$ and $\mathrm{\Lambda}^-=\diag\{1,\,1,\,\cdots,\,-1\}\in\R^{d\times d}$. If $d = 2$, the equation above is equivalent to the polar decomposition of $2\times 2$ matrices, and if $d=3$, there are fast algorithms for singular value decomposition of $3\times 3$ matrices \cite{mcadams2011computing}. As a result, \cref{eq::opti4} can be efficiently solved in the case of $SO(2)$ and $SO(3)$, both of which are commonly used in SLAM.

As long as $R_i^{\akh}\in SO(d)$ is known, $t_i^{\akh}\in\R^d$ can be exactly recovered using \cref{eq::tsol}:
\begin{multline}
t_i^{\akh} = -R_i^{\akh}{\lnGammanaik}^\transpose{\lnGammataik}^{-1}-\\ \gamma_{i}^{\tau,\ak}{\lnGammataik}^{-1}.
\end{multline}
Therefore, $X_i^{\akh}=\begin{bmatrix}
t_i^{\akh} & R_i^{\akh}
\end{bmatrix}\in \R^{d\times(d+1)}$ is exactly solved, whose computation only involves matrix multiplication and singular value decomposition.

\section*{Appendix L.\;\; Pose Graph Optimization  Datasets}\label{appendix::L}
A variety of 2D and 3D SLAM benchmark datasets \cite{rosen2016se} are used to evaluate distributed PGO methods. As is shown in \cref{table::dataset}, there are seven real-world datasets ({\sf ais2klinik}, 	{\sf CSAIL}, {\sf intel}, {\sf MITb}, {\sf garage}, {\sf cubicle} and {\sf rim}) and five synthetic datasets ({\sf city}, {\sf M3500}, {\sf sphere}, {\sf torus}, and {\sf grid}).
\begin{table}[H]
	\centering
	\setlength{\tabcolsep}{0.3em}
	\renewcommand{\arraystretch}{1.2}
	\caption{2D and 3D SLAM benchmark datasets.}\label{table::dataset}
	\begin{tabular}{|P{0.07\textwidth}|P{0.055\textwidth}|P{0.065\textwidth}|P{0.11\textwidth}|P{0.1\textwidth}|}
		\hline			
		\multirow{2}{*}{Dataset} & \multirow{2}{*}{2D/3D} & \multirow{2}{*}{\# Poses} & \multirow{2}{*}{\# Measurements}  & Real-World Dataset \\
		\hline
		\hline
		{\sf ais2klinik} & 2D &15115 & 16727 & \cmark\\
		\hline
		{\sf city} & 2D & 10000 & 20687 & \xmark\\
		\hline
		{\sf CSAIL} & 2D & 1045 & 1172 & \cmark \\
		\hline
		{\sf M3500} & 2D & 3500 & 5453 & \xmark \\
		\hline
		{\sf intel} & 2D & 1728 & 2512 & \cmark\\
		\hline
		{\sf MITb} & 2D & 808 & 827 & \cmark \\
		\hline
		{\sf sphere} & 3D & 2500 & 4949 & \xmark\\
		\hline
		{\sf torus} & 3D & 5000 & 9048 & \xmark\\
		\hline
		{\sf grid} & 3D & 8000 & 22236 & \xmark\\
		\hline
		{\sf garage} & 3D & 1661 & 6275 & \cmark\\
		\hline
		{\sf cubicle} & 3D & 5750 & 16869 & \cmark\\
		\hline
		{\sf rim} & 3D & 10195 & 29743 & \cmark\\
		\hline
	\end{tabular}
\vspace{-1em}
\end{table}
}{\newpage
\section*{Appendix A.\;\; Notation}\label{appendix::A}
\begin{table}[H]
\centering
\renewcommand{\arraystretch}{1.1}
\begin{tabular}{P{0.09\textwidth}  p{0.27\textwidth}  P{0.06\textwidth} }
\hline
Symbol & \multicolumn{1}{c}{Description} & \multicolumn{1}{c}{Equation}\\
\hline
\multicolumn{3}{c}{Inner Products, Norms and Gradients}\\
\hline
$\;\;\;\,\innprod{\cdot}{\cdot}_M$ & $\innprod{X}{Y}_M\triangleq\trace(X M Y^\transpose)$ & \cref{eq::innerM} \\
$\innprod{\cdot}{\cdot}$ & $\innprod{X}{Y}\triangleq\trace(X Y^\transpose)$ & \cref{eq::inner}\\
$\|\cdot\|$ & The Frobenius norm of matrices and vectors \\
$\;\,\|\cdot\|_2$ & The induced 2-norm of matrices and vectors \\
$\;\;\;\|\cdot\|_M$ & $\|X\|_M\triangleq \sqrt{\trace(X M X^\transpose)}$& \cref{eq::normM}\\
$\nabla F(X)$ & The Euclidean gradient of $F(X)$ &\\
$\grad\, F(X)$ & The Riemannian gradient of $F(X)$ &\\
\hline
\multicolumn{3}{c}{Graph Theory}\\
\hline
$\aGG=(\VV,\,\aEE)$ & The directed graph whose vertices are ordered pairs &\\
$\aEE^{\ab}$ & The set of edges between node $\alpha$ and $\beta$ & \cref{eq::aEE}\\
$\NN_-^\alpha$ & The set of nodes with edges from node $\alpha$& \cref{eq::NN-}\\
$\NN_+^\alpha$ & The set of nodes with edges to node $\alpha$& \cref{eq::NN+}\\
$\NN^\alpha$ & The set of neighbors of node $\alpha$& \cref{eq::NN}\\
\hline
\multicolumn{3}{c}{Poses}\\
\hline
$X_i^\alpha$ & The $i$-th pose of node $\alpha$ & \cref{eq::Xai}\\
$t_i^\alpha$ & The translational part for  $X_i^\alpha$ \\
$R_i^\alpha$ & The rotational part for  $X_i^\alpha$ \\
$X^\alpha$ & The poses of node $\alpha$ & \cref{eq::Xa}\\
$X$ & All the poses of distributed pose graph optimization & \cref{eq::Xall}\\
$\nt_{ij}^{\aa},\;\nt_{ij}^{\ab}$ & The translational part of the noisy intra- and inter-node relative pose measurements  \\
$\nR_{ij}^{\aa},\;\nR_{ij}^{\ab}$ & The rotational part of the noisy intra- and inter-node relative pose measurements  \\
\hline
\multicolumn{3}{c}{Objective Functions}\\
\hline
$F(X)$ & The objective function for distributed pose graph optimization & \cref{eq::F} \\
$F_{ij}^{\aa}(X)$ & The sub-objective function related to intra-node measurements & \cref{eq::Fij2} \\
$F_{ij}^{\ab}(X)$ & The sub-objective function related to inter-node measurements & \cref{eq::Fab2} \\
\hline
\multicolumn{3}{c}{Surrogate Functions}\\
\hline
$\!E_{ij}^{\ab}(X|\Xk)$ & The surrogate function of $F_{ij}^{\ab}(X)$ & \cref{eq::Eab}\\
$\!G(X|\Xk)$ & The surrogate function of $F(X)$ & \cref{eq::G}\\
$\!G^\alpha(\Xa|\Xk)$ & The sub-surrogate function  in $G(X|\Xk)$ that is related to poses of node $\alpha$ & \cref{eq::GMa}\\
$\!H(X|\Xk)$ & The surrogate function of $F(X)$ & \cref{eq::lG}\\
$\!H^\alpha(\Xa|\Xk)$ & The sub-surrogate function in  $H(X|\Xk)$ that is related to poses of node $\alpha$  & \cref{eq::lGMa}\\
$\!H_i^\alpha(\Xa_i|\Xk)$ & The sub-surrogate function in $H(X|\Xk)$ and $H^\alpha(\Xa|\Xk)$ that is related to a single pose $X_i^\alpha$ of node $\alpha$ & \cref{eq::lGMai}\\
\hline
\multicolumn{3}{c}{Nesterov's Acceleration}\\
\hline
$s^{\ak}$, $\lambda^{\ak}$ & The scalars related to the extrapolation of Nesterov's acceleration & \cref{eq::sk,eq::gammak}\\
$Y^{\ak}$ & The extrapolation of $\Xak$ and $\Xakm$ & \cref{eq::Yak} \\
\hline
\multicolumn{3}{c}{Adaptive Restart}\\
\hline
$\lF^{(\sk)}$ & The exponential moving average of $F(X^{(0)}),\, F(X^{(1)}),\,\cdots, F(\Xk)$ & \cref{eq::lFk}\\
$\!\Delta G^\alpha(X|\Xk) $ & The function computing the gap between $F(X)$ and $G(X|\Xk)$ that is related to poses of node $\alpha$ and its neighbors&  \cref{eq::DGa}\\
$\!G^{\ak}$,\, $F^{\ak}$ & The intermediates for the adaptive restart scheme without a master node& \cref{eq::Gakp,eq::Fakp}\\
$\!\lF^{\ak}$& The exponential moving average of $F^{\alpha(0)},\, F^{\alpha(1)},\,\cdots,\, F^{\ak}$  & \cref{eq::lFakp}\\
\hline
\multicolumn{3}{c}{Constant Scalars}\\
\hline
$\xi$ & The  regularizer for $G(X|\Xk)$& \cref{eq::G}\\
$\zeta$ & The  regularizer for $H(X|\Xk)$& \cref{eq::lG}\\
$\eta$ & The exponential moving average parameter for $\lF^{(\sk)}$ and $\lF^{\ak}$ & \cref{eq::lFk,eq::lFakp}\\
$\psi$ and $\phi$ & The parameters for the adaptive restart\\
\hline
\end{tabular}
\end{table}

\section*{Apendix B.\;\;Proof of \cref{prop::upperM}}\label{appendix::B}
For any nodes $\alpha,\,\beta\in\AA$, it should be noted that
\begin{multline}\label{eq::FabM}
	\frac{1}{2}\|X\|_{\nM_{ij}^{\ab}}^2 = \frac{1}{2}\|X-\Xk\|_{\nM_{ij}^{\ab}}^2+\\
	\innprod{\Xk\nM_{ij}^{\ab}}{X-\Xk} + \frac{1}{2}\|\Xk\|_{\nM_{ij}^{\ab}}^2
\end{multline}
always holds. Then, we will prove \cref{prop::upperM} considering cases of $\alpha=\beta$ and $\alpha\neq\beta$, respectively. 

\begin{enumerate}[leftmargin=0.45cm]
\item If $\alpha=\beta$, \cref{eq::Faa} indicates $\nabla F_{ij}^{\ab}(\Xk)=\Xk\nM_{ij}^{\ab}$. From \cref{eq::Faa,eq::FabM,eq::wabij}, it is immediate to conclude that \cref{eq::proxFij} holds for any $X$ and $\Xk\in \R^{d\times(d+1)n}$ as long as $\alpha=\beta$.
\vspace{0.25em}
\item If $\alpha\neq\beta$, as a result of \cref{assumption::loss}\ref{assumption::loss_mono},  $\rho(s)$ is a concave function, which suggests
\begin{equation}
	\nonumber
	\rho(s')\leq \rho(s)+\nabla\rho(s)\cdot(s'-s).
\end{equation}
If we let $s=\|\Xk\|_{\nM_{ij}^{\ab}}^2$ and $s'=\|X\|_{\nM_{ij}^{\ab}}^2$, the equation above can be written as
\begin{multline}\label{eq::FabM2}
	\frac{1}{2}\rho\big(\|X\|_{\nM_{ij}^{\ab}}^2\big)\leq \frac{1}{2}\rho\big(\|\Xk\|_{\nM_{ij}^{\ab}}^2\big)+\\
	\frac{1}{2}\nabla\rho\big(\|\Xk\|_{\nM_{ij}^{\ab}}^2\big)\cdot\big(\|X\|_{\nM_{ij}^{\ab}}^2-\|\Xk\|_{\nM_{ij}^{\ab}}^2\big).
\end{multline}
From \cref{eq::FabM}, it can be shown that
\begin{multline}\label{eq::FabM3}
	\frac{1}{2}\|X\|_{\nM_{ij}^{\ab}}^2-\frac{1}{2}\|\Xk\|_{\nM_{ij}^{\ab}}^2 = \frac{1}{2}\|X-\Xk\|_{\nM_{ij}^{\ab}}^2+\\
	\innprod{\Xk\nM_{ij}^{\ab}}{X-\Xk}.
\end{multline}
Then, applying \cref{eq::FabM3} on the right-hand side of \cref{eq::FabM2} results in
\begin{equation}\label{eq::FabM4}
\begin{aligned}
&\frac{1}{2}\rho\big(\|X\|_{\nM_{ij}^{\ab}}^2\big)\\
\leq\; & \frac{1}{2}\nabla\rho\big(\|\Xk\|_{\nM_{ij}^{\ab}}^2\big)\cdot\|X-\Xk\|_{\nM_{ij}^{\ab}}^2+\\
& \nabla\rho\big(\|\Xk\|_{\nM_{ij}^{\ab}}^2\big)\cdot\innprod{\Xk\nM_{ij}^{\ab}}{X-\Xk} +\\
& \frac{1}{2}\rho\big(\|\Xk\|_{\nM_{ij}^{\ab}}\big).
\end{aligned}
\end{equation}
From \cref{eq::Fab,eq::wabij}, we obtain
\begin{equation}
\nonumber
F_{ij}^{\ab}(\Xk)=\frac{1}{2}\rho\big(\|\Xk\|_{\nM_{ij}^{\ab}}^2\big),
\end{equation}
\begin{equation}
\nonumber
\nabla F_{ij}^{\ab}(\Xk) = \nabla\rho\big(\|\Xk\|_{\nM_{ij}^{\ab}}^2\big)\cdot \Xk\nM_{ij}^{\ab},
\end{equation}
\begin{equation}
\nonumber
\wabijk = \nabla\rho\big(\|\Xk\|_{\nM_{ij}^{\ab}}^2\big),
\end{equation}
with which \cref{eq::FabM4} is simplified to
\begin{multline}
\nonumber
\frac{1}{2}\wabijk\|X-\Xk\|_{\nM_{ij}^{\ab}}^2+ \innprod{\nabla F_{ij}^{\ab}(\Xk)}{X-\Xk}+\\
F_{ij}^{\ab}(\Xk)\geq F_{ij}^{\ab}(X).
\end{multline}
Thus, \cref{eq::proxFij} for $\alpha\neq\beta$ as well.
\end{enumerate}
The proof is completed.

\section*{Apendix C.\;\;Proof of \cref{prop::upper}}\label{appendix::C}
From \cref{eq::MH,eq::Eab}, we obtain
\vspace{-0.3em}
\begin{multline}\label{eq::EijFij}
E_{ij}^{\ab}(X|\Xk)\geq\frac{1}{2}\wabijk\big\|X-\Xk\big\|_{\nM_{ij}^{\ab}}^2+\\
\innprod{\nabla F_{ij}^{\ab}(\Xk)}{X-\Xk}+F_{ij}^{\ab}(\Xk),
\end{multline}
in which the equality ``$=$'' holds as long as $X=\Xk$. From \cref{prop::upperM}, we obtain
\vspace{-0.3em} 
\begin{multline}\label{eq::gequpper}
	\frac{1}{2}\wabijk\|X-\Xk\|_{\nM_{ij}^{\ab}}^2+\innprod{\nabla F_{ij}^{\ab}(\Xk)}{X-\Xk}+\\
	F_{ij}^{\ab}(\Xk)\geq F_{ij}^{\ab}(X).
\end{multline} 
Then, as a result of \cref{eq::gequpper,eq::EijFij}, it is straightforward to show $E_{ij}^{\ab}(X|\Xk)\geq F_{ij}^{\ab}(X)$ for any $X\in \R^{d\times(d+1)n}$, where the equality ``$=$'' holds as long as $X=\Xk$. The proof is completed.

\section*{Apendix D.\;\;Proof of \cref{prop::G}}\label{appendix::D}
\noindent\textbf{Proof of \ref{prop::G1}.\;} 
According to \cref{prop::upper}, $E_{ij}^{\ab}(X|\Xk)$ majorizes $F_{ij}^{\ab}(X)$ and $E_{ij}^{\ab}(X|\Xk)=F_{ij}^{\ab}(\Xk)$ if $X=\Xk$. Then, as a result \cref{eq::F,eq::G}, it can be concluded that $G(X|\Xk)$ majorizes $F(X)$ and $G(X|\Xk)=F(X)$ if $X=\Xk$. The proof is completed.
\vspace{1em}

\noindent\textbf{Proof of \ref{prop::G2}.\;}
From \cref{eq::Hab}, we obtain
\vspace{-0.3em}
\begin{multline}\label{eq::HXab}
\frac{1}{2}\|X-\Xk\|_{\nH_{ij}^{\ab}}^2=\kappa_{ij}^{\alpha\beta}\|R_i^\alpha-R_i^{\ak}\|^2+\\
\tau_{ij}^{\alpha\beta}\|(R_i^\alpha - R_i^{\ak})\nt_{ij}^{\alpha\beta}+t_i^\alpha-t_i^{\ak}\|^2+\\
\kappa_{ij}^{\alpha\beta}\|R_j^\beta-R_j^{\beta(\sk)}\|^2+\tau_{ij}^{\alpha\beta}\|t_j^\beta-t_j^{\beta(\sk)}\|^2.
\end{multline}
From \cref{eq::Fij2}, it is by definition that $F_{ij}^{\ab}(X)$  is a function related with $X^\alpha\in \XX^\alpha$ and $X^\beta\in\XX^\beta$ only, and thus, $\nabla F_{ij}^{\ab}(X)$ is sparse, which suggests
\vspace{-0.3em}
\begin{multline}\label{eq::DFabXX}
\innprod{\nabla F_{ij}^{\ab}(\Xk)}{X-\Xk}=\\
\innprod{\nabla_{\Xa} F_{ij}^{\ab}(\Xk)}{\Xa-\Xak}+\\
\innprod{\nabla_{X^{\beta}} F_{ij}^{\ab}(\Xk)}{X^{\beta}-X^{\beta(\sk)}}.
\end{multline}
In \cref{eq::DFabXX}, $\nabla_{\Xa} F_{ij}^{\ab}(\Xk)$ is the Euclidean gradient of $F_{ij}^{\ab}(X)$ with respect to $\Xa\in\XXa$ at $\Xk\in\XX$.  Substituting \cref{eq::HXab,eq::DFabXX} into \cref{eq::Eab}, we obtain
\vspace{-0.3em}
\begin{multline}\label{eq::Eij3}
E_{ij}^{\ab}(X|\Xk)=\wabijk\cdot\left(\kappa_{ij}^{\alpha\beta}\|R_i^\alpha-R_i^{\ak}\|^2+\right.\\
\tau_{ij}^{\alpha\beta}\|(R_i^\alpha - R_i^{\ak})\nt_{ij}^{\alpha\beta}+t_i^\alpha-t_i^{\ak}\|^2+\\
\left.\kappa_{ij}^{\alpha\beta}\|R_j^\beta-R_j^{\beta(\sk)}\|^2+\tau_{ij}^{\alpha\beta}\|t_j^\beta-t_j^{\beta(\sk)}\|^2\right)+\\
\innprod{\nabla_{\Xa} F_{ij}^{\ab}(\Xk)}{\Xa-\Xak}+\\
\innprod{\nabla_{X^{\beta}} F_{ij}^{\ab}(\Xk)}{X^{\beta}-X^{\beta(\sk)}}+F_{ij}^{\ab}(\Xk).
\end{multline}
In a similar way, $F_{ij}^{\aa}(X)$ in \cref{eq::Fij2} can be rewritten as
\vspace{-0.3em}
\begin{multline}\label{eq::Fij3}
F_{ij}^{\aa}(X)=\frac{1}{2}\kappa_{ij}^{\aa}\|(R_i^\alpha-R_i^{\ak})\nR_{ij}^{\aa} -(R_j^\alpha-R_j^{\alpha(\sk)})\|^2 +\\ 
\frac{1}{2}\tau_{ij}^{\aa}\|(R_i^\alpha-R_i^{\ak}) \nt_{ij}^{\aa}+t_i^\alpha - t_i^{\ak} - (t_j^\alpha-t_j^{\alpha(\sk)})\|^2+\\
\innprod{\nabla_{\Xa} F_{ij}^{\alpha\alpha}(\Xk)}{\Xa-\Xak} + F_{ij}^{\alpha\alpha}(\Xk).
\end{multline}
Substituting \cref{eq::Eij3,eq::Fij3} into \cref{eq::G} and simplifying the resulting equation, we obtain
\begin{equation}\label{eq::Gsum}
G(X|X^{(\sk)})=\sum_{\alpha\in\AA} G^{\alpha}(X^\alpha|\Xk) + F(\Xk)
\end{equation}
where $G^\alpha(\Xa|\Xk)$ is a function that is related with $\Xa\in \XX^\alpha$ only. The proof is completed.

\vspace{1em}

\noindent\textbf{Proof of \ref{prop::G3}.\;}
A tedious but straightforward mathematical manipulation from \cref{eq::Fij3,eq::Eij3,eq::DFabXX,eq::Gsum} indicates that there exists positive-semidefinite matrices $\nGamma^{\ak}\in \R^{(d+1)n_\alpha\times (d+1)n_\alpha}$ such that $G^\alpha(X^\alpha|\Xk)$ in the equation above can be written as
\begin{multline}
	\nonumber
	G^\alpha(X^\alpha|\Xk)=
	\frac{1}{2}\|\Xa-\Xak\|_{\nGamma^{\ak}}^2+\\
	\big\langle\nabla_{\Xa} F(X^{(\sk)}),\,{\Xa-\Xak}\big\rangle,
\end{multline}
in which the formulation of $\nGamma^{\ak}$ is given in Appendix \hyperref[appendix::I]{I}. The proof is completed.

\section*{Apendix E.\;\;Proof of \cref{prop::mm}}\label{appendix::E}
\noindent\textbf{Proof of \ref{prop::mm1}.\;} From \cref{assumption::mm} and line~\ref{line::xlG} of \cref{algorithm::mm}, we obtain
\begin{equation}\label{eq::Gacmp}
G^\alpha(\Xakp|\Xk) \leq G^\alpha(\Xakh|\Xk)
\end{equation}
and
\begin{equation}\label{eq::Hacmp}
H^\alpha(\Xakh|\Xk) \leq H^\alpha(\Xak|\Xk).
\end{equation}
From \cref{eq::lG2,eq::G2,eq::Gacmp,eq::Hacmp}, it can be concluded that
\begin{equation}\label{eq::Gcmp}
G(\Xkp|\Xk) \leq G(X^{(\sk+\shalf)}|\Xk)
\end{equation}
and
\begin{equation}\label{eq::Hcmp}
H(\Xkh|\Xk)\leq H(\Xk|\Xk).
\end{equation}
Note that \cref{eq::lGF} suggests
\begin{equation}\label{eq::lGGF1}
F(X^{(\sk+1)}) \leq G(X^{(\sk+1)}|X^{(\sk)})
\end{equation}
and
\begin{equation}\label{eq::lGGF2}
G(X^{(\sk+\shalf)}|X^{(\sk)}) \leq H(X^{(\sk+\shalf)}|X^{(\sk)}).
\end{equation}
Then, \cref{eq::Gcmp,eq::Hcmp,eq::lGGF1,eq::lGGF2} result in
\begin{multline}\label{eq::order}
F(X^{(\sk+1)}) \leq G(X^{(\sk+1)}|X^{(\sk)}) \leq\\
G(X^{(\sk+\shalf)}|X^{(\sk)}) \leq H(X^{(\sk+\shalf)}|X^{(\sk)})\leq \\
H(X^{(\sk)}|X^{(\sk)}) =  F(X^{(\sk)}),
\end{multline}
which indicates that $F(X^{(\sk)})$ is nonincreasing. The proof is completed.

\vspace{0.8em}
\noindent\textbf{Proof of \ref{prop::mm2}.\;} From \cref{prop::mm}\ref{prop::mm1}, it has been proved that $F(X^{(\sk)})$ is nonincreasing. From \cref{eq::obj} and \cref{assumption::loss}, $F(X^{(\sk)})\geq 0$, i.e., $F(X^{(\sk)})$ is bounded below. As a result, there exists $F^{\infty} \in \R $ such that $F(X^{(\sk)})\rightarrow F^{\infty}$. The proof is completed.

\vspace{0.8em}
\noindent\textbf{Proof of \ref{prop::mm3}.\;}
We introduce the following lemmas about $G(X|\Xk)$ and $H(X|\Xk)$ that are used in the proof.
 
 \begin{lemma}\label{lemma::G}
Let  $\nGammak \in \R^{(d+1)n\times (d+1)n}$ be a block diagonal matrix in the form of
 \begin{equation}\label{eq::nG}
 	\nGammak\triangleq\mathrm{diag}\big\{\nGamma^{1(\sk)},\,\cdots,\,\nGamma^{|\AA|(\sk)}\big\}\in \R^{(d+1)n\times (d+1)n}
 \end{equation}
where $\nGammaak$ is given in \cref{eq::GMa}. Then, we have the following results:
\begin{enumerate}[(a)]
\item\label{lemma::Ga} $G(\cdot|\Xk): \R^{d\times(d+1)n}\rightarrow\R$ in \cref{eq::G} is equivalent to
\begin{multline}\label{eq::GM}
	G(X|X^{(\sk)})
	=\frac{1}{2}\|X-X^{(\sk)}\|_{\nGammak}^2+\\
	\innprod{\nabla F(X^{(\sk)})}{{X-X^{(\sk)}}}+ F(X^{(\sk)}),
\end{multline}
where  $\nabla F(X^{(\sk)})\in \R^{d\times(d+1)n}$ is the Euclidean gradient of $F(X)$ at $X^{(\sk)}\in\XX$.
\item\label{lemma::Gb} $\nGammak\succeq\nMk$ where $\nMk$ is given in \cref{eq::M}. 
\item\label{lemma::Gc}$\nGammak$ is bounded, i.e.,  there exists a constant positive-semidefinite matrix $\nGamma\in \R^{(d+1)n\times (d+1)n}$ such that $\nGamma\succeq \nGammak$ holds for any $\sk\geq 0$. 
\end{enumerate}
 \end{lemma}

\begin{proof} The proofs of Lemmas \ref{lemma::G}\ref{lemma::Ga} to \ref{lemma::G}\ref{lemma::Gc} are as the following.

\ref{lemma::Ga} If we substitute \cref{eq::GMa} into \cref{eq::G2}, the result is
\begin{multline}\label{eq::EXk}
	G(X|\Xk)=\sum_{\alpha\in\AA}\Big[\frac{1}{2}\|\Xa-\Xak\|_{\nGamma^{\ak}}^2+\\
	\big\langle\nabla_{\Xa} F(X^{(\sk)}),\,{\Xa-\Xak}\big\rangle\Big] + F(\Xk).
\end{multline}
Also, it is straightforward to show that
\begin{equation}
	\nonumber
	\frac{1}{2}\|X-\Xk\|_{\nGammak}^2=\sum_{\alpha\in\AA}\frac{1}{2}\|\Xa-\Xak\|_{\nGamma^{\ak}}^2,
\end{equation}
where $\nGammak\in\R^{(d+1)n\times(d+1)n}$ is defined as \cref{eq::nG}, and  
\begin{multline}
	\nonumber
	\innprod{\nabla F(\Xk)}{X-\Xk}=\\
	\sum_{\alpha\in\AA}\innprod{\nabla_{\Xa} F(\Xk)}{X^\alpha-\Xak}.
\end{multline}
Thus, \cref{eq::EXk} is equivalent to \cref{eq::GM}, i.e.,
\begin{multline}\label{eq::E3}
	G(X|\Xk) = \frac{1}{2}\|X-\Xk\|_{\nGammak}^2+\\
	\innprod{\nabla F(X^{(\sk)})}{{X-X^{(\sk)}}}+ F(X^{(\sk)}).
\end{multline}
The proof is completed
\vspace{0.2em}

\ref{lemma::Gb} From \cref{eq::G,eq::FaaFab,eq::Eab,eq::E3}, we rewrite $\nGammak\in\R^{(d+1)n\times(d+1)n}$ as
\begin{multline}\label{eq::H}
	\nGammak=\sum_{\alpha\in\AA}\sum_{(i,j)\in \aEE^{\alpha\alpha}}\nM_{ij}^{\aa}+\\
	\sum_{\substack{\alpha,\beta\in\AA,\\\alpha\neq \beta}}\sum_{(i,j)\in \aEE^{\alpha\beta}}\wabijk\cdot \nH_{ij}^{\ab}+\xi\cdot\I,
\end{multline}
in which $\nH_{ij}^{\ab}\succeq\nM_{ij}^{\ab}$ by \cref{eq::MH} and $\xi\geq 0$. Then, as a result of \cref{eq::M,eq::H,eq::MH}, it is straightforward to conclude that
\begin{equation}\label{eq::GMk}
	\nGammak\succeq\nMk+\xi \cdot\I\succeq \nMk.
\end{equation} 
The proof is completed.
\vspace{0.2em}

\ref{lemma::Gc} Let $\nGamma\in\R^{(d+1)n\times(d+1)n}$ be defined as
\begin{equation}\label{eq::HH}
	\nGamma\triangleq\sum_{\alpha\in\AA}\sum_{(i,j)\in \aEE^{\alpha\alpha}}\nM_{ij}^{\aa}+
	\sum_{\substack{\alpha,\beta\in\AA,\\\alpha\neq \beta}}\sum_{(i,j)\in \aEE^{\alpha\beta}} \nH_{ij}^{\ab}+\xi\cdot\I.
\end{equation}
From \cref{assumption::loss}\ref{assumption::loss_drho} and \cref{eq::wabij}, it can be concluded that
\begin{equation}\label{eq::wabijkbnd}
	0\leq \wabijk \leq 1
\end{equation} 
for any $\Xk\in\R^{d\times(d+1)n}$. Furthermore, it is known that $\nH_{ij}^{\ab} \succeq 0$, then \cref{eq::HH,eq::H,eq::wabijkbnd} result in $\nGamma\succeq\nGammak$ for any $\Xk\in\R^{d\times(d+1)n}$. The proof is completed.
\end{proof}

 \begin{lemma}\label{lemma::lG}
	Let  $\lnGammak \in \R^{(d+1)n\times (d+1)n}$ be a block diagonal matrix in the form of
	\begin{equation}\label{eq::lnG}
		\lnGammak\triangleq\mathrm{diag}\big\{\lnGamma^{1(\sk)},\,\cdots,\,\lnGamma^{|\AA|(\sk)}\big\}\in \R^{(d+1)n\times (d+1)n}
	\end{equation}
	where $\lnGammaak$ is given in \cref{eq::lGMa}. Then, we have the following results:
	\begin{enumerate}[(a)]
		\item\label{lemma::lGa} $\lG(\cdot|\Xk): \R^{d\times(d+1)n}\rightarrow\R$ in \cref{eq::lG}  is equivalent to
		\begin{multline}\label{eq::lGM}
			\lG(X|X^{(\sk)})
			=\frac{1}{2}\|X-X^{(\sk)}\|_{\lnGammak}^2+\\
			\innprod{\nabla F(X^{(\sk)})}{{X-X^{(\sk)}}}+ F(X^{(\sk)}),
		\end{multline}
		where  $\nabla F(X^{(\sk)})\in \R^{d\times(d+1)n}$ is the Euclidean gradient of $F(X)$ at $X^{(\sk)}\in\XX$.
		\item\label{lemma::lGb} $\lnGammak\succeq\nGammak\succeq\nMk$ where $\nMk$ and $\nGammak$ are given in \cref{eq::M} and \cref{eq::nG}, respectively. 
		\item\label{lemma::lGc}$\lnGammak$ is bounded, i.e.,  there exists a constant positive-semidefinite matrix $\lnGamma\in \R^{(d+1)n\times (d+1)n}$ such that $\lnGamma\succeq \lnGammak$ holds for any $\sk\geq 0$.
		\item\label{lemma::lGd} $\lG^\alpha(\Xa|\Xk)\geq G^\alpha(\Xa|\Xk)$ for any $\Xa\in\R^{d\times (d+1)n_\alpha}$ where $G^\alpha(\Xa|\Xk)$ is given in \cref{eq::GMa} and the equality holds if  $\Xa=\Xak$.
	\end{enumerate}
\end{lemma}
 \begin{proof}
 The proofs of Lemmas \ref{lemma::lG}\ref{lemma::lGa} to \ref{lemma::lG}\ref{lemma::lGc}  are similar to those of \cref{lemma::G} and the proof of  \cref{lemma::lG}\ref{lemma::lGd} is  immediate from \cref{eq::GMa,eq::lGMa} and \cref{lemma::lG}\ref{lemma::lGb}.
 \end{proof}
 
From \cref{eq::order}, it is known that $F(\Xk)\geq G(\Xkp|\Xk)$, which suggests
\begin{equation}\label{eq::bound}
F(\Xk)- F(\Xkp) \geq
G(\Xkp|\Xk) - F(\Xkp).
\end{equation}
From \cref{eq::proxF,eq::GM}, we obtain
\begin{multline}\label{eq::Fxkp}
F(\Xkp) \leq \frac{1}{2}\|\Xkp-\Xk\|_{\nMk}^2+\\
\innprod{\nabla F(\Xk)}{X-\Xk}+F(\Xk)
\end{multline}
and
\begin{multline}\label{eq::Gxkp}
G(\Xkp|\Xk)=\frac{1}{2}\|\Xkp-\Xk\|_{\nGammak}^2+\\
\innprod{\nabla F(\Xk)}{X-\Xk}+F(\Xk),
\end{multline}
respectively. Substituting \cref{eq::Fxkp,eq::Gxkp} into the right-hand side of \cref{eq::bound}, we obtain
\begin{multline}\label{eq::GF2}
F(X^{(\sk)}) - F(X^{(\sk+1)}) \geq\\
\frac{1}{2}\|\Xkp-\Xk\|_{\nGammak}^2-\frac{1}{2}\|\Xkp-\Xk\|_{\nMk}^2.
\end{multline}
From \cref{eq::GF2,eq::GMk}, there exists a constant scalar $\delta>0$ such that
\begin{equation}\label{eq::dF}
F(X^{(\sk)}) - F(X^{(\sk+1)}) \geq
\frac{\delta}{2}\|X^{(\sk+1)}-X^{(\sk)}\|^2
\end{equation}
as long as $\xi>0$. From \cref{prop::mm}\ref{prop::mm2}, we obtain
\begin{equation}\label{eq::dF0}
F(X^{(\sk)}) - F(X^{(\sk+1)})\rightarrow 0,
\end{equation}
and thus, it can be concluded from \cref{eq::dF,eq::dF0} that
\begin{equation}
\|X^{(\sk+1)}-X^{(\sk)}\| \rightarrow 0.
\end{equation}
The proof is completed.

\vspace{0.8em}
\noindent\textbf{Proof of \ref{prop::mm4}.\;} From \cref{eq::order}, it is known that $F(\Xk)\geq H(\Xkh|\Xk)$ and $G(\Xkh|\Xk)\geq G(\Xkp|\Xk) \geq F(\Xkp)$, which suggests
\begin{multline}
\nonumber
F(\Xk)-F(\Xkp) \geq\\ 
H(\Xkh|\Xk) - G(\Xkh|\Xk).
\end{multline}
From \cref{eq::GM,eq::lGM}, the equation above is equivalent to
\begin{multline}\label{eq::FkFkp}
F(\Xk)-F(\Xkp) \geq \\
\frac{1}{2}\|\Xkh-\Xk\|_{\lnGammak}^2 - \frac{1}{2}\|\Xkh-\Xk\|_{\nGammak}^2.
\end{multline}
A similar procedure to the derivation of \cref{eq::GMk} results in
\begin{equation}\label{eq::lGGk}
\lnGammak \succeq \nGammak + (\zeta-\xi)\cdot\I,
\end{equation}
which suggests there exists a constant scalar $\delta' > 0$ such that
\begin{equation}
\nonumber
\lnGammak \succeq \nGammak + \delta'\cdot\I
\end{equation}
if $\zeta>\xi > 0$. Then, similar to the proof of \cref{prop::mm}\ref{prop::mm3}, we obtain
\begin{equation}\label{eq::DFh}
F(\Xk)-F(\Xkp) \geq \frac{\delta'}{2}\|\Xkh-\Xk\|^2.
\end{equation}
Thus, it can be concluded that
\begin{equation}
	\|\Xkh-\Xk\| \rightarrow 0.
\end{equation}
The proof is completed.

\vspace{0.8em}
\noindent\textbf{Proof of \ref{prop::mm5}.\;} The following lemma about $\nabla F(X)$ is needed in this proof.

\begin{lemma}\label{prop::DF}
If \cref{assumption::loss}\ref{assumption::loss_L} holds, then the Euclidean gradient $\nabla F(\cdot):\R^{d\times(d+1)n}\rightarrow \R^{d\times(d+1)n}$ of $F(X)$ in \cref{eq::F} is Lipschitz continuous, i.e., there exists a constant $\mu > 0$ such that $\|\nabla F(X)-\nabla F(X')\|\leq \mu\cdot\|X-X'\|$.
\end{lemma}

\begin{proof}
From \cref{assumption::loss}\ref{assumption::loss_L}, it is known that $\rho(\|X\|^2)$ has Lipschitz continuous gradient, which suggests that $F_{ij}^{\ab}(X)=\dfrac{1}{2}\rho(\|X\|_{\nM_{ij}^{\ab}}^2)$ in \cref{eq::Fab} has Lipschitz continuous gradient. Note that $F_{ij}^{\aa}(X)=\dfrac{1}{2}\|X\|_{\nM_{ij}^{\aa}}^2$ in \cref{eq::Faa} has Lipschitz continuous gradient as well. Then, from \cref{eq::F}, it can be concluded that $F(X)$ has Lipschitz continuous gradient. The proof is completed.
\end{proof}

It is straightforward to show that the Riemannian gradient $\grad\, F(X)$ takes the form as
\begin{equation}
	\nonumber
	\grad\, F(X) =\begin{bmatrix}
		\grad_{1} F(X) & \cdots  & \grad_{|\AA|} F(X)
	\end{bmatrix}\in T_X \XX.
\end{equation}
In the equation above, $\grad_{\alpha} F(X)$ is the Riemannian gradient of $F(X)$ with respect to $\Xa\in\XX^{\alpha}$ for node $\alpha\in\AA$, and can be written as
\begin{equation}\label{eq::grad}
	\grad_\alpha F(X)=
	\begin{bmatrix}
		\grad_{t^\alpha} F(X) & \grad_{R^\alpha} F(X)
	\end{bmatrix}\in
	T_{X^\alpha}\XX^\alpha
\end{equation}
in which recall that $$T_{X^\alpha}\XX^\alpha\triangleq\R^{d\times n_\alpha}\times T_{R^\alpha} SO(d)^{n_\alpha}.$$
From \cite{absil2009optimization,rosen2016se}, it can be shown that $\grad_{t^\alpha} F(X)$ and $\grad_{R^\alpha} F(X)$ in \cref{eq::grad} are
\begin{equation}\label{eq::grad_t}
	\grad_{t^\alpha} F(X) = \nabla_{t^\alpha} F(X)
\end{equation}
and
\begin{multline}\label{eq::grad_R}
	\grad_{R^\alpha} F(X) = \nabla_{R^\alpha} F(X)-\\
	R^\alpha\, \mathrm{SymBlockDiag}_d^\alpha({R^{\alpha}}^\transpose\nabla_{R^\alpha} F(X)).
\end{multline}
In \cref{eq::grad_R}, $\mathrm{SymBlockDiag}_d^\alpha: \R^{dn_\alpha\times dn_\alpha}\rightarrow \R^{dn_\alpha\times dn_\alpha}$ is a linear operator
\begin{equation}\label{eq::sym}
	\mathrm{SymBlockDiag}_d^\alpha(Z)\triangleq\frac{1}{2}\mathrm{BlockDiag}_d^\alpha(Z+Z^\transpose),
\end{equation} 
in which $\mathrm{BlockDiag}_d^\alpha: \R^{dn_\alpha \times dn_\alpha}\rightarrow \R^{dn_\alpha \times dn_\alpha}$ extracts the $d\times d$-block diagonals of a matrix, i.e.,
\begin{equation}
	\nonumber
	\mathrm{BlockDiag}_d^\alpha(Z)\triangleq\begin{bmatrix}
		Z_{11} & & \\
		&\ddots &\\
		& & Z_{n_\alpha n_\alpha}
	\end{bmatrix}\in \R^{dn_\alpha \times dn_\alpha}.
\end{equation}
As a result of \cref{eq::grad_t,eq::grad_R,eq::sym,eq::grad}, there exists a linear operator
\begin{equation}\label{eq::QQx}
	\QQ_X:\R^{d\times d(n+1)}\rightarrow \R^{d\times d(n+1)}
\end{equation} 
that continuously depends on $X\in \XX$ such that
\begin{equation}\label{eq::gradF}
	\grad\, F(X)=\QQ_X(\nabla F(X)).
\end{equation}
From \cref{eq::lGM}, it is straightforward to show that
\begin{equation}\label{eq::gradG}
	\begin{aligned}
	&\nabla H(\Xkh|\Xk)\\
	=&\nabla F(\Xk)+(\Xkh-\Xk)\lnGammak\\
	=&\nabla F(\Xkh)+(\Xkh-\Xk)\lnGammak\, +\\
	 &\big(\nabla F(\Xk) -\nabla F(\Xkh)\big).
	\end{aligned}
\end{equation}
Note that \cref{eq::gradF} applies to any functions on $\XX$. As a result of \cref{eq::gradF,eq::gradG}, we obtain 
\begin{equation}\label{eq::gradGX}
\begin{aligned}
&\grad\,H(\Xkh|\Xk)\\
=\,&\grad\, F(\Xkh) +\\
& \QQ_{\Xkh}\big((\Xkh-\Xk)\lnGammak\big)+\\
 &\QQ_{\Xkh}\big(\nabla F(\Xk)-\nabla F(\Xkh)\big).
\end{aligned}
\end{equation}
From line~\ref{line::xlG} of \cref{algorithm::mm}, we obtain. 
$$\grad\, H^\alpha(\Xakh|\Xk)=\0.$$
In addition, it is by definition that
\begin{multline}
	\nonumber
	\grad\, H(X|X^{(\sk)})=\\
	\begin{bmatrix}
		\grad\, H^1(X^1|\Xk) & \cdots & \grad\, H^{|\AA|}(X^{|\AA|}|\Xk),
	\end{bmatrix}
\end{multline}
which suggests
\begin{equation}\label{eq::gradG0}
	\grad\,H(\Xkh|\Xk)=\0.
\end{equation}
From \cref{eq::gradG0,eq::gradGX}, we obtain
\begin{multline}
	\nonumber
	\grad\, F(\Xkh)=\QQ_{\Xkh}\big((\Xk-\Xkh)\lnGammak\big)+\\
	\QQ_{\Xkh}\big(\nabla F(\Xkh)-\nabla F(\Xk)\big).
\end{multline}
From the equation above, it can be shown that
\begin{equation}\label{eq::gradFxk_n}
	\begin{aligned}
		&\|\grad\, F(\Xkh)\|\\
		=&\big\|\QQ_{\Xkh}\big((\Xk-\Xkh)\lnGammak\big)+\\
		&\hphantom{\big\|}\QQ_{\Xkh}\big(\nabla F(\Xkh)-\nabla F(\Xk)\big)\big\| \\
		\leq&\big\|\QQ_{\Xkh}\big((\Xk-\Xkh)\lnGammak\big)\big\|+\\
		&\big\|\QQ_{\Xkh}\big(\nabla F(\Xkh)-\nabla F(\Xk)\big)\big\| \\
		\leq& \|\QQ_{\Xkh}\|_2\cdot \|\lnGammak\|_2\cdot\|\Xkh-\Xk\|+\\
		&\|\QQ_{\Xkh}\|_2\cdot\|\nabla F(\Xkh)-\nabla F(\Xk)\|,
	\end{aligned}
\end{equation}
in which $\|\cdot\|_2$ denotes the induced $2$-norm of linear operators. From Lemmas \ref{lemma::lG}\ref{lemma::lGc} and \ref{prop::DF}, there exists a constant positive-semidefinite matrix $\lnGamma\in \R^{(d+1)n\times(d+1)n}$ and constant positive scalar $\mu > 0$ such that $\lnGamma \succeq \lnGammak \succeq 0$ and $\|\nabla F(\Xkh)-\nabla F(\Xk)\|\leq \mu\cdot\|\Xkh-\Xk\|$ for any $\sk\geq 0$, making it possible to upper-bound the right-hand side of \cref{eq::gradFxk_n}: 
\begin{equation}\label{eq::gradFk}
\begin{aligned}
	&\|\grad\, F(\Xkh)\|\\
\leq&\|\QQ_{\Xkh}\|_2\cdot \|\lnGamma\|_2\cdot\|\Xkh-\Xk\|+\\
	&\|\QQ_{\Xkh}\|_2\cdot\mu\cdot\|\Xkh-\Xk\|.
\end{aligned}
\end{equation}
Moreover, \cref{eq::grad_R,eq::grad_t,eq::grad} indicate that  $\QQ_X(\cdot)$ only depends on the rotation $R^\alpha\in SO(d)^{n_\alpha}$ for $\alpha\in\AA$. Since $\QQ_{X}(\cdot)$ is continuous and $SO(d)^{n_\alpha}$ is a compact manifold, $\|\QQ_{\Xkh}\|_2$ is bounded for any $\Xkh\in\XX$.
Thus, there exists a constant scalar $\nu > 0$ such that the right-hand side of \cref{eq::gradFk} can be upper-bounded as
\begin{equation}\label{eq::gradFh}
\|\grad\, F(\Xkh)\|\leq \nu \|\Xkh-\Xk\|.
\end{equation} 
As long as $\zeta>\xi>0$, \cref{eq::DFh,eq::gradFh} result in
\begin{equation}
\nonumber
\|\grad\, F(\Xkh)\|^2 \leq  \frac{2\nu^2}{\delta'}\big(F(\Xk)-F(\Xkp)\big).
\end{equation}
Then, there exists a constant scalar $\epsilon =\frac{\delta'}{\nu^2}>0$ with which the equation above can be rewritten as
\begin{equation}\label{eq::diff_F}
F(\Xk)-F(\Xkp) \geq \frac{\epsilon}{2}\|\grad\, F(\Xkh)\|^2.
\end{equation}
As a result of \cref{eq::diff_F}, we obtain
\begin{multline}\label{eq::diff_F2}
F(X^{(0)}) - F(X^{(\sK+1)}) \geq \frac{\epsilon}{2} \sum_{\sk=0}^{\sK}\|\grad\, F(\Xkh)\|^2\\
\geq \frac{\epsilon(\sK+1)}{2}\min_{0\leq\sk\leq\sK}\|\grad\, F(\Xkh)\|^2.
\end{multline}
From Propositions \ref{prop::mm}\ref{prop::mm1} and \ref{prop::mm}\ref{prop::mm2}, it can be concluded that  $F(\Xkp) \geq F^\infty$ for any $\sk\geq0$, which and \cref{eq::diff_F2} suggest
\begin{equation}
	\nonumber
	\min\limits_{0\leq\sk< \mathsf{K}}\|\grad\, F(\Xkh)\|\leq \sqrt{\frac{2}{\epsilon}\cdot\dfrac{F(X^{(0)})-F^\infty}{{\sK+1}}}.
\end{equation}
The proof is completed.

\vspace{0.8em}
\noindent\textbf{Proof of \ref{prop::mm6}.\;} As a result of Propositions \ref{prop::mm}\ref{prop::mm3} and \ref{prop::mm}\ref{prop::mm4}, it is known that 
\begin{equation}
\nonumber
\|\Xkp-\Xk\|\rightarrow 0
\end{equation}
and
\begin{equation}
\nonumber
\|\Xkh-\Xk\|\rightarrow 0
\end{equation}
as long as $\zeta>\xi>0$. Thus, it can be concluded from \cref{eq::gradFh} that
$$\grad\,F(\Xkh)\rightarrow \0$$
if $\zeta>\xi>0$. In addition, \cref{assumption::loss}\ref{assumption::loss_cont} indicates that $\grad\,F(\Xk)$ is continuous, which suggests
$$\grad\, F(\Xk)\rightarrow \grad\, F(\Xkh).$$
Then, we obtain
$$\grad\, F(\Xk) \rightarrow \0. $$
The proof is completed.

\section*{Apendix F.\;\;Proof of \cref{prop::ammc}}\label{appendix::F}
\noindent\textbf{Proof of \ref{prop::ammc1}.\;} In this proof, we will prove $F(\Xk)\leq \sF^{(\sk)}$ and $\sF^{(\skp)}\leq \sF^{(\sk)}$ by induction.
\begin{enumerate}[leftmargin=0.45cm]%[wide, labelwidth=!, labelindent=0pt]
\item From lines~\ref{line::alg3::s0}, \ref{line::alg3::lF0}, \ref{line::alg3::lFk} of \cref{algorithm::ammc}, it can be shown that
\begin{equation}\label{eq::sF0011}
	F(X^{(-1)})=F(X^{(0)})=\lF^{(-1)}=\lF^{(0)}.
\end{equation}
\item Suppose $\sk\geq 0$ and $F(\Xk)\leq \lF^{(\sk)}$ holds at $\sk$-th iteration. In terms of $\Xkh$, if the adaptive restart scheme for $\Xkh$  is not triggered, it is immediate to show from line~\ref{line::alg4::restart_s1} of \cref{algorithm::amm_c_x} that
\begin{equation}\label{eq::Fah1}
F(\Xkh) \leq \lF^{(\sk)}.
\end{equation}
On the other hand, if the adaptive restart scheme for $\Xkh$ is triggered, line~\ref{line::alg4::xlG2} of \cref{algorithm::amm_c_x} results in
\begin{equation}\label{eq::Hah}
H^\alpha(\Xakh|\Xk) \leq H^\alpha(\Xak|\Xk)= 0,
\end{equation}
where $H^{\alpha}(\Xak|\Xk) = 0$ is from \cref{eq::lGMa}. Then, \cref{eq::lG2,eq::Hah,eq::lGF} indicate
\begin{equation}\label{eq::Fah2}
F(\Xkh)\leq H(\Xkh|\Xk) \leq F(\Xk) \leq \sF^{(\sk)}.
\end{equation}
Therefore, no matter whether the adaptive restart scheme is triggered or not, we conclude from \cref{eq::Fah1,eq::Fah2} that
\begin{equation}\label{eq::Fah}
F(\Xkh) \leq \lF^{(\sk)}
\end{equation}
always holds.

Furthermore, as a result of lines~\ref{line::alg4::restart_s3} to \ref{line::alg4::restart_e3} of \cref{algorithm::amm_c_x}, we obtain
\begin{equation}\label{eq::FkpF}
F(\Xkp)-\sF^{(\sk)} \leq \phi\cdot\Big(F(\Xkh)-\lF^{(\sk)}\Big) \leq 0.
\end{equation}
From line~\ref{line::alg3::lFk} of \cref{algorithm::ammc} and $F(\Xkp)-\sF^{(\sk)} \leq 0$ in \cref{eq::FkpF}, we obtain
\begin{equation}
\nonumber
F(\Xkp)- \sF^{(\skp)} = (1-\eta)\cdot \big(F(\Xkp) - \sF^{(\sk)} \big)\leq 0
\end{equation}
and
\begin{equation}
\nonumber
\sF^{(\skp)}-\sF^{(\sk)} = \eta\cdot \big(F(\Xkp) - \sF^{(\sk)} \big)\leq 0,
\end{equation}
which suggest $F(\Xkp)\leq \sF^{(\skp)}\leq \sF^{(\sk)}$. 
\item Therefore, it can be concluded that $F(\Xk)\leq \sF^{(\sk)}$ and $\sF^{(\skp)}\leq \sF^{(\sk)}$, which suggests that $\sF^{(\sk)}$ is nonincreasing. The proof is completed.
\end{enumerate}

\vspace{0.8em}
\noindent\textbf{Proof of \ref{prop::ammc2}.\;} From line~\ref{line::alg3::lFk} of \cref{algorithm::ammc}, we obtain
\begin{equation}\label{eq::lFkp}
	\lF^{(\skp)} = (1-\eta)\cdot \lF^{(\sk)} + \eta\cdot F(\Xkp),
\end{equation}
which and \cref{eq::sF0011} suggest that $\lF^{(\sk)}$ is a convex combination of $F(X^{(0)}),\,F(X^{(1)}),\,\cdots F(\Xk)$ as long as $\eta\in(0,\,1]$. Since $F(\Xk)\geq 0$, we obtain $\lF^{(\sk)} \geq 0$ as well, i.e., $\lF^{(\sk)}$ is bounded below. \cref{prop::ammc}\ref{prop::ammc1} indicates that $\lF^{(\sk)}$ is nonincreasing, and thus, there exists $F^{\infty}$ such that $\lF^{(\sk)}\rightarrow F^{\infty}$. Then, it can be still concluded from \cref{eq::lFkp} that $F(\Xk)\rightarrow F^{\infty}$.

\vspace{0.8em}
\noindent\textbf{Proof of \ref{prop::ammc3}.\;}
If $\sk=-1$, line~\ref{line::alg3::s0} of \cref{algorithm::ammc} and \cref{eq::lFk} suggest $X^{(-1)}=X^{(0)}$  and $\lF^{(-1)}=\lF^{(0)}=F(X^{(0)})$, respectively, from which we conclude
\begin{equation}\label{eq::lF01}
	\lF^{(-1)}-\lF^{(0)}=\|X^{(-1)}-X^{(0)}\|^2.
\end{equation}

If $\sk\geq 0$, there exists three possible cases for $\Xkp$:
\begin{enumerate}[leftmargin=0.45cm]
\item\label{cases::prop6::1} If $\Xkp$ is from line~\ref{line::alg4::xG1} of \cref{algorithm::amm_c_x}, then the adaptive restart scheme is not triggered and line~\ref{line::alg4::restart_s1} of \cref{algorithm::amm_c_x} results in
\begin{equation}\label{eq::FlF1}
	 \sF^{(\sk)}-F(\Xkp) \geq \psi\cdot\big\|\Xkp-\Xk\big\|^2.
\end{equation}
\item\label{cases::prop6::2} If $\Xkp$ is from line~\ref{line::alg4::xG2} of \cref{algorithm::amm_c_x}, then the adaptive restart scheme is triggered and \cref{eq::FkFkp} holds as well, from which and \cref{eq::GMk} we obtain
\begin{equation}
\nonumber
F(\Xk)-F(\Xkp)\geq \frac{\xi}{2}\big\|\Xkp-\Xk\big\|^2.
\end{equation}
In the proof of \cref{prop::ammc}\ref{prop::ammc1}, it is known that $\sF^{(\sk)}\geq F(\Xk)$, then the equation above results in
\begin{equation}\label{eq::FlF2}
\sF^{(\sk)}-F(\Xkp)\geq \frac{\xi}{2}\big\|\Xkp-\Xk\big\|^2.
\end{equation}
\item\label{cases::prop6::3} If $\Xkp$ is from line~\ref{line::alg4::xG3} of \cref{algorithm::amm_c_x}, then we obtain $\Xkp=\Xkh$ and $F(\Xkp)=F(\Xkh)$. Then, similar to the derivations of \cref{eq::FlF1,eq::FlF2}, lines~\ref{line::alg4::xlG1} and \ref{line::alg4::xlG2} of \cref{algorithm::amm_c_x} result in
\begin{equation}\label{eq::FlF3}
\sF^{(\sk)}-F(\Xkh) \geq \psi\cdot\big\|\Xkh-\Xk\big\|^2
\end{equation}
and
\begin{equation}\label{eq::FlF4}
\sF^{(\sk)}-F(\Xkh)\geq \frac{\zeta}{2}\big\|\Xkh-\Xk\big\|^2,
\end{equation}
respectively, from which and $\Xkp=\Xkh$ and $F(\Xkp)=F(\Xkh)$ we obtain either
\begin{equation}
\nonumber
\sF^{(\sk)}-F(\Xkp) \geq \psi\cdot\big\|\Xkp-\Xk\big\|^2
\end{equation}
or
\begin{equation}
\nonumber
\sF^{(\sk)}-F(\Xkp)\geq \frac{\zeta}{2}\big\|\Xkp-\Xk\big\|^2.
\end{equation}
\end{enumerate}
Therefore, if $0<\eta\leq 1$, $0<\psi\leq 1$, $\xi>0$, $\zeta>0$, it can be shown from cases \ref{cases::prop6::1}) to \ref{cases::prop6::3}) above that there exists a constant scalar $\sigma>0$ such that
\begin{equation}\label{eq::sFF0}
	\sF^{(\sk)}-F(\Xkp) \geq  \frac{\sigma}{2}\|\Xkp-\Xk\|^2
\end{equation}
holds for any $\sk\geq 0$. In addition, note that \cref{eq::lFkp} is equivalent to
\begin{equation}\label{eq::lFklFkp}
	\lF^{(\sk)}-\lF^{(\skp)}= \eta\cdot\Big(\sF^{(\sk)}-F(\Xkp)\Big).
\end{equation} 
From \cref{eq::sFF0,eq::lFklFkp}, we conclude that there exists $\delta > 0$ such that
\begin{multline}\label{eq::sFF01}
\sF^{(\sk)}-\sF^{(\skp)}=\eta\cdot\Big(\sF^{(\sk)}-F(\Xkp)\Big) \geq \\
\frac{\eta\sigma}{2}\|\Xkp-\Xk\|^2 \geq \frac{\delta}{2}\|\Xkp-\Xk\|^2
\end{multline}
for any $\sk\geq 0$. 

As a result of \cref{eq::lF01,eq::sFF01}, we further obtain
\begin{equation}\label{eq::sFF1}
\sF^{(\sk)}-\sF^{(\skp)}\geq \frac{\delta}{2}\|\Xkp-\Xk\|^2
\end{equation}
for any $\sk\geq -1$. Recall $\lF^{(\sk)}\rightarrow F^\infty$ from \cref{prop::ammc}\ref{prop::ammc2}, which suggests
\begin{equation}\label{eq::sFF2}
	\sF^{(\sk)} - \sF^{(\skp)}\rightarrow 0.
\end{equation}
Therefore, \cref{eq::sFF1,eq::sFF2} indicate that
\begin{equation}
	\nonumber
	\|X^{(\sk+1)}-X^{(\sk)}\| \rightarrow 0.
\end{equation}
The proof is completed.

\vspace{0.8em}
\noindent\textbf{Proof of \ref{prop::ammc4}.\;} Note that \cref{eq::FlF3,eq::FlF4} suggest that there exists $\sigma'>0$ such that
\begin{equation}\label{eq::FlF6}
	\sF^{(\sk)}-F(\Xkh)\geq \frac{\sigma'}{2}\big\|\Xkh-\Xk\big\|^2
\end{equation}
always holds. From lines~\ref{line::alg4::restart_s3} to \ref{line::alg4::restart_e3} of \cref{algorithm::amm_c_x}, we obtain 
\begin{multline}\label{eq::lFkFkp0}
\lF^{(\sk)}-F(\Xkp) \geq \phi\cdot\Big(\lF^{(\sk)} - F(\Xkh)\Big)\geq\\
\frac{\phi\sigma'}{2}\big\|\Xkh-\Xk\big\|^2,
\end{multline}
where the last inequality is due to \cref{eq::FlF6}.
From \cref{eq::lFklFkp,eq::lFkFkp0}, it can be shown that
\begin{multline}\label{eq::FlF5}
\lF^{(\sk)}-\lF^{(\skp)}= \eta\cdot\Big(\sF^{(\sk)}-F(\Xkp)\Big)\geq\\
\frac{\eta\phi\sigma'}{2}\big\|\Xkh-\Xk\big\|^2.
\end{multline} 
The equation above suggests that there exists a constant scalar $\delta'>0$ such that
\begin{equation}\label{eq::sFF3}
\lF^{(\sk)}-\lF^{(\skp)}\geq \frac{\delta'}{2}\|\Xkh-\Xk\|^2.
\end{equation}
Note that \cref{prop::ammc}\ref{prop::ammc2} results in $\sF^{(\sk)} - \sF^{(\skp)}\rightarrow 0$. Thus, similar to the proof of \cref{prop::ammc}\ref{prop::ammc3}, it can be concluded from \cref{eq::sFF3} that
\begin{equation}
	\nonumber
	\|\Xkh-\Xk\| \rightarrow 0
\end{equation}
if $\zeta>\xi>0$. The proof is completed.

\vspace{0.8em}
\noindent\textbf{Proof of \ref{prop::ammc5}.\;} For any $\sk\geq 0$,  there are two possible cases about $\Xakh\in\XXa$:
\begin{enumerate}[leftmargin=0.45cm]%[wide, labelwidth=!, labelindent=0pt]
\item If $\Xakh\in\XXa$ is from line~\ref{line::alg4::xlG1} of \cref{algorithm::amm_c_x}, we obtain
\begin{equation}\label{eq::minH1}
	\Xakh\gets\arg\min\limits_{\Xa\in\XX^\alpha }\lG^\alpha(\Xa|\Yk).
\end{equation}
From \cref{eq::lGMa}, it can be shown that
\begin{equation}
	\nonumber
	\begin{aligned}
		&\nabla \lG^\alpha(\Xakh|\Yk)\\
		=&\nabla_{\Xa} F(\Yk)+(\Xakh-\Yak)\lnGamma^{\ak}\\
		=&\nabla_{\Xa} F(\Xkh)+(\Xakh-\Yak)\lnGamma^{\ak}\, +\\
		&\big(\nabla_{\Xa} F(\Yk) -\nabla_{\Xa} F(\Xkh)\big).
	\end{aligned}
\end{equation}
The equation above suggests
\begin{equation}\label{eq::gradHa}
\begin{aligned}
&\grad\, \lG^\alpha(\Xakh|\Yk)\\
=&\grad_{\alpha} F(\Xkh)+\\
&\QQ_{\Xakh}^\alpha\big((\Xakh-\Yak)\lnGamma^{\ak}\big) +\\
&\QQ_{\Xakh}^\alpha\big(\nabla_{\Xa} F(\Yk) -\nabla_{\Xa} F(\Xkh)\big),
\end{aligned}
\end{equation}
where $\grad_\alpha F(X)$ is the Riemannian gradient of $F(X)$ with respect to $\Xa \in\XX^\alpha$, and $\QQ_{\Xa}^\alpha: \R^{d\times dn_\alpha}\rightarrow \R^{d\times dn_\alpha}$ is a linear operator that extracts the $\alpha$-th block of $\QQ_{X}(\cdot)$ in \cref{eq::QQx}. Since $\Xakh$ is an optimal solution to \cref{eq::minH1}, we obtain
\begin{equation}\label{eq::gradHa0}
\grad\, H^\alpha(\Xakh|\Yk)=\0.
\end{equation}
From \cref{eq::gradHa,eq::gradHa0}, a straightforward mathematical manipulation indicates
\begin{equation}\label{eq::gradnFh1}
	\begin{aligned}
		&\grad_{\alpha} F(\Xkh) \\
		=&\QQ_{\Xakh}^\alpha\big((\Yak-\Xakh)\lnGamma^{\ak}\big)+\\
		&\QQ_{\Xakh}^\alpha\big(\nabla_{\Xa} F(\Xkh)-\nabla_{\Xa} F(\Yk)\big).
	\end{aligned}
\end{equation}
From \cref{eq::gradnFh1}, we further obtain
\begin{equation}
\nonumber
\begin{aligned}
&\|\grad_{\alpha} F(\Xkh)\|\\
\leq& \|\QQ_{\Xakh}^\alpha\|_2\cdot\|(\Yak-\Xakh)\lnGamma^{\ak}\|+\\
& \|\QQ_{\Xakh}^\alpha\|_2\cdot\|\nabla_{\Xa} F(\Xkh)-\nabla_{\Xa} F(\Yk)\|\\
\leq&\|\QQ_{\Xakh}^\alpha\|_2\cdot\|(\Yk-\Xkh)\lnGammak\|+\\
&\|\QQ_{\Xakh}^\alpha\|_2\cdot\|\nabla F(\Xkh)-\nabla F(\Yk)\|\\
\leq&\|\QQ_{\Xakh}^\alpha\|_2\cdot\|\lnGammak\|_2\cdot\|\Xkh-\Yk\|+\\
&\|\QQ_{\Xakh}^\alpha\|_2\cdot\|\nabla F(\Xkh)-\nabla F(\Yk)\|,
\end{aligned}
\end{equation}
where $\|\cdot\|_2$ denotes the induced 2-norm of linear operators. Recall from Lemmas \ref{lemma::lG}\ref{lemma::lGc} and \ref{prop::DF} that $\lnGamma\succeq\lnGammak$ and $\|\nabla F(\Xkh)-\nabla F(\Yk)\|\leq \mu\cdot\|\Xkh-\Yk\|$. Therefore, the right-hand side of the equation above can be further upper-bounded as
\begin{equation}\label{eq::gradak}
\begin{aligned}
	&\|\grad_\alpha F(\Xkh)\|\\
\leq&\|\QQ_{\Xakh}^\alpha\|_2\cdot \|\lnGamma\|_2\cdot\|\Xkh-\Yk\|+\\
	&\|\QQ_{\Xakh}^\alpha\|_2\cdot\mu\cdot\|\Xkh-\Yk\|.
\end{aligned}
\end{equation}
Similar to \cref{eq::gradFk,eq::gradFh}, $\|\QQ_{\Xakh}^\alpha\|_2$ in \cref{eq::gradak} is bounded as well. Therefore, there exists a constant scalar $\nu^\alpha >0$ such that the right-hand side of \cref{eq::gradak} can be upper-bounded:
\begin{equation}\label{eq::gradFa1}
	\|\grad_{\alpha} F(\Xkh)\|\leq \nu^\alpha \|\Xkh-\Yk\|.
\end{equation}
Recall that $\Yk\in\R^{d\times(d+1)n}$ results from line~\ref{line::alg3::Yk} of \cref{algorithm::ammc}:
\begin{equation}\label{eq::Y}
	\Yk = X^{(\sk)}+\big(X^{(\sk)}-X^{(\sk-1)}\big) \lambda^{(\sk)}.
\end{equation}
In \cref{eq::Y}, $\lambda^{(\sk)}\in\R^{(d+1)n\times(d+1)n}$ is a diagonal matrix
\begin{equation} 
	\nonumber
	\lambda^{(\sk)}\triangleq\diag\{\lambda^{1(\sk)}\cdot\I^1,\,\cdots,\,\lambda^{|\AA|(\sk)}\cdot\I^{|\AA|}\}\in \R^{(d+1)n\times(d+1)n},
\end{equation}
where $\lambda^{\ak}\in\R$ is given by line~\ref{line::alg3::sk} of \cref{algorithm::ammc} and $\I^\alpha\in \R^{(d+1)n_\alpha\times (d+1)n_\alpha}$ is the identity matrix. From \cref{eq::Y,eq::gradFa1}, it can be shown that
\begin{equation}\label{eq::gradnFbnd}
\begin{aligned}
	&\|\grad_\alpha F(\Xkh)\|\\
\leq&\nu^\alpha \|\Xkh-\Xk-\big(\Xk-\Xkm\big) \lambda^{(\sk)}\|\\
\leq&\nu^\alpha\|\big(\Xk\!-\Xkm\big) \lambda^{(\sk)}\|+\nu^\alpha \|\Xkh\!-\!\Xk\|\\
\leq&\nu^\alpha\|\lambda^{(\sk)}\|_2\cdot\|\Xk-\Xkm\|+\\
	&\nu^\alpha \|\Xkh-\Xk\|.
\end{aligned}
\end{equation} 
From line~\ref{line::alg3::sk} of \cref{algorithm::ammc}, we obtain $s^{\alpha(\sk)}\geq1$, and thus,
\begin{equation}
	\nonumber
	\lambda^{\ak}=\frac{\sqrt{4{s^{\alpha(\sk)}}^2+1}-1}{2s^{\alpha(\sk)}}=\frac{2s^{\alpha(\sk)}}{\sqrt{4{s^{\alpha(\sk)}}^2+1}+1}\in(0,\,1),
\end{equation}
which suggests $\|\lambda^{(\sk)}\|_2\in(0,\,1)$. Then, we upper-bound the right-hand side of \cref{eq::gradnFbnd} using $\|\lambda^{(\sk)}\|_2\in(0,\,1)$:
\begin{multline}\label{eq::gradanFF1}
	\|\grad_{\alpha} F(\Xkh)\|\leq \nu^\alpha\|\Xk-\Xkm\|+\\\nu^\alpha\|\Xkh-\Xk\|.
\end{multline}
\item If $\Xakh\in\XXa$ is from line~\ref{line::alg4::xlG2} of \cref{algorithm::amm_c_x}, we obtain
\begin{equation}\label{eq::minH2}
	\Xakh\gets\arg\min\limits_{\Xa\in\XX^\alpha }\lG^\alpha(\Xa|\Xk).
\end{equation}
A similar procedure to the derivation of \cref{eq::gradFa1} yields
\begin{equation}
	\nonumber
	\|\grad_{\alpha} F(\Xkh)\|\leq \nu^\alpha \|\Xkh-\Xk\|,
\end{equation}
where $\nu^\alpha>0$ is the same as that in \cref{eq::gradFa1}. Thus, we obtain
\begin{multline}\label{eq::gradanFF2}
	\|\grad_{\alpha} F(\Xkh)\leq \nu^\alpha\|\Xkh-\Xk\|\leq
	\\  \nu^\alpha\|\Xk-\Xkm\|+\nu^\alpha\|\Xkh-\Xk\|.
\end{multline}
\end{enumerate}
Therefore, as long as $\sk\geq 0$, it can be concluded from \cref{eq::gradanFF1,eq::gradanFF2} that
\begin{multline}\label{eq::gradanF2}
\|\grad_{\alpha} F(\Xkh)\|\leq \nu^\alpha\|\Xk-\Xkm\|+\\ \nu^\alpha\|\Xkh-\Xk\|
\end{multline}
holds for any node $\alpha\in\AA$. 

If $\sk\geq 0$, as a result of \cref{eq::gradanF2}, there exists a constant scalar $\nu\triangleq\sum_{\alpha\in\AA}\nu^\alpha>0$ such that
\begin{equation}
\nonumber
\begin{aligned}
	&\|\grad\, F(\Xkh)\|\\
\leq&\,\sum_{\alpha\in\AA}\|\grad_\alpha F(\Xkh)\|\\
\leq&\,\sum_{\alpha\in\AA}\nu^\alpha\cdot\big(\|\Xk-\Xkm\| + \|\Xkh-\Xk\|\big)\\
=&\, \nu\|\Xk-\Xkm\| + \nu\|\Xkh-\Xk\|\\
\leq&\sqrt{2}\nu\sqrt{ \|\Xk-\Xkm\|^2 + \|\Xkh-\Xk\|^2},
\end{aligned}
\end{equation}
which is equivalent to
\begin{multline}\label{eq::gradF2}
\|\grad\,F(\Xkh)\|^2\leq 2\nu^2\cdot\|\Xk-\Xkm\|^2 + \\
2\nu^2\cdot\|\Xkh-\Xk\|^2.
\end{multline}
Note that \cref{eq::sFF1,eq::sFF3} hold as long as $\zeta>\xi>0$, from which we might upper-bound $\|\Xk-\Xkm\|^2$ and $\|\Xkh-\Xk\|^2$ in \cref{eq::gradF2} and obtain
\begin{multline}\label{eq::gradFF2}
	\|\grad\,F(\Xkh)\|^2\leq\frac{4\nu^2}{\delta}\big(\lF^{(\skm)}-\lF^{(\sk)}\big) +\\
	 \frac{4\nu^2}{\delta'}\big(\lF^{(\sk)}-\lF^{(\skp)}\big).
\end{multline}
Recall from \cref{prop::ammc}\ref{prop::amm1} that 
\begin{equation}\label{eq::sFkmk}
\sF^{(\skm)}-\sF^{(\sk)}\geq 0
\end{equation}
and 
\begin{equation}\label{eq::sFkkp}
\sF^{(\sk)}-\sF^{(\skp)}\geq 0.
\end{equation}
Then, if we let $\epsilon \triangleq \min\{\frac{\delta}{2\nu^2},\,\frac{\delta'}{2\nu^2}\}>0$, \cref{eq::gradFF2,eq::sFkmk,eq::sFkkp} lead to
\begin{equation}\label{eq::gradF3}
	\lF^{(\skm)}-\lF^{(\skp)} \geq \frac{\epsilon}{2}\|\grad\,F(\Xkh)\|^2.
\end{equation}
A telescoping sum of \cref{eq::gradF3} over $\sk$ from $0$ to $\sK$ yields
\begin{equation}
	\nonumber
	\lF^{(-1)} + \lF^{(0)} - \lF^{(\sk)}-\lF^{(\skp)}\geq \\
	\frac{\epsilon}{2} \sum_{\sk=0}^{\sK}\|\grad\,F(\Xkh)\|^2,
\end{equation}
and thus,
\begin{multline}\label{eq::lFbnd2}
	\lF^{(-1)} + \lF^{(0)} - \lF^{(\sk)}-\lF^{(\skp)}\geq \\
	\frac{\epsilon(\sK+1)}{2}\min_{0\leq\sk\leq\sK}\|\grad\, F(\Xkh)\|^2.
\end{multline}
From lines~\ref{line::alg3::lF0}, \ref{line::alg3::lFk} of \cref{algorithm::ammc}, we obtain
\begin{equation}\label{eq::lF0}
	\lF^{(-1)}=\lF^{(0)}=F(X^{(0)}),
\end{equation} 
and Propositions \ref{prop::ammc}\ref{prop::ammc1} and \ref{prop::ammc}\ref{prop::ammc2} indicate
\begin{equation}\label{eq::Finf}
	\lF^{(\sk)}\geq \lF^{(\skp)}\geq F^{\infty}.
\end{equation}
As a result of \cref{eq::lF0,eq::Finf,eq::lFbnd2}, it can be concluded that
\begin{multline}
	\nonumber
	F(X^{(0)})-F^\infty \geq \frac{\epsilon(\sK+1)}{4}\min_{0\leq\sk\leq\sK}\|\grad\, F(\Xkh)\|^2,
\end{multline}
which is equivalent to
\begin{equation}
	\min\limits_{0\leq\sk< \mathsf{K}}\|\grad\, F(\Xkh)\|\leq 2\sqrt{\frac{1}{\epsilon}\cdot\dfrac{F(X^{(0)})-F^\infty}{{\sK+1}}}.
\end{equation}
The proof is completed.

\vspace{0.5em}
\noindent\textbf{Proof of \ref{prop::ammc6}.\;} From Propositions \ref{prop::ammc}\ref{prop::ammc3} and \ref{prop::ammc}\ref{prop::ammc4}, we obtain
\begin{equation}
	\nonumber
	\|\Xkp-\Xk\|\rightarrow 0
\end{equation}
and
\begin{equation}
	\nonumber
	\|\Xkh-\Xk\|\rightarrow 0
\end{equation}
as long as $\zeta>\xi>0$, from which and \cref{eq::gradF2}, it is trivial to show that
\begin{equation}\label{eq::gradFkh}
	\grad\, F(\Xkh)\rightarrow \0.
\end{equation}
In addition, note that $\grad\,F(X)$ is continuous by \cref{assumption::loss}\ref{assumption::loss_cont}, which suggests
\begin{equation}\label{eq::gradFkkh}
	\grad\,F(\Xk)\rightarrow\grad\,F(\Xkh).
\end{equation}
From \cref{eq::gradFkh,eq::gradFkkh}, it can be concluded that
\begin{equation}
	\nonumber
	\grad\, F(\Xk) \rightarrow \0.
\end{equation}
The proof is completed.

%If the adaptive restart scheme is not triggered for node $\alpha\in\AA$, then line~\ref{line::alg4::xlG1} of \cref{algorithm::amm_c_x}
%
%for all $\alpha\in\AA$, which is equivalent to
%\begin{equation}\label{eq::minH}
%	\Xkh\gets\arg\min\limits_{X\in\XX}\lG(X|Y^{(\sk)}).
%\end{equation}
%From \cref{eq::lGM}, we obtain
%\begin{equation}
%	\nonumber
%	\begin{aligned}
%		&\nabla \lG(\Xkh|\Yk)\\
%		=&\nabla_{X} F(\Yk)+(\Xkh-\Yk)\lnGammak\\
%		=&\nabla_{X} F(\Xkh)+(\Xkh-\Yk)\lnGammak\, +\\
%		&\big(\nabla_{\Xa} F(\Yk) -\nabla_{\Xa} F(\Xkh)\big),
%	\end{aligned}
%\end{equation}
%which suggests
%\begin{equation}\label{eq::gradHY1}
%\begin{aligned}
%&\grad\,H(\Xkh|\Yk)\\
%=&\grad\, F(\Xkh) + \QQ_{\Xkh}\big((\Xkh-\Yk)\lnGammak\big)+\\
%&\QQ_{\Xkh}\big(\nabla F(\Yk)-\nabla F(\Xkh)\big).
%\end{aligned}
%\end{equation}
%In addition, \cref{eq::minHY} results in
%\begin{equation}\label{eq::gradHY2}
%\grad\,H(\Xkh|\Yk)=\0.
%\end{equation} 
%As a result of \cref{eq::gradHY1,eq::gradHY2}, it can be concluded that
%\begin{multline}
%	\nonumber
%	\grad\, F(\Xkh)=\QQ_{\Xkh}\big((\Yk-\Xkh)\lnGammak\big)+\\
%	\QQ_{\Xkh}\big(\nabla F(\Xkh)-\nabla F(\Yk)\big).
%\end{multline}
%From the equation above, there exists a constant scalar $\nu > 0$ such that
%\begin{equation}\label{eq::gradnFkh}
%	\|\grad\, F(\Xkh)\|\leq \nu \|\Xkh-\Yk\|,
%\end{equation} 
%whose derivation is almost the same as that of \cref{eq::gradFh}. 

\section*{Apendix G.\;\;Proof of \cref{prop::FsF}}\label{appendix::G}
\noindent\textbf{Proof of \ref{prop::FsF1}.\;} We will prove $F(\Xk)=\sum_{\alpha\in\AA} F^{\ak}$ by induction.
\begin{enumerate}[leftmargin=0.45cm]
\item From \cref{eq::F}, it can be shown that
\begin{align}
&F(X^{(0)})\nonumber\\
=&\sum_{\alpha\in\AA}\sum_{(i,j)\in \aEE^{\alpha\alpha}}F_{ij}^{\aa}(X^{(0)})+
\sum_{\substack{\alpha,\beta\in\AA,\\\alpha\neq \beta}}\sum_{(i,j)\in \aEE^{\alpha\beta}} F_{ij}^{\ab}(X^{(0)})\nonumber\\
=&\sum_{\alpha\in\AA}\bigg(\sum_{(i,j)\in \aEE^{\alpha\alpha}}F_{ij}^{\aa}(X^{(0)}) +\nonumber\\
 &\quad\quad\quad\frac{1}{2}\sum_{\beta\in\NN_-^{\alpha}}\sum_{(i,j)\in \aEE^{\alpha\beta}} F_{ij}^{\ab}(X^{(0)})+\label{eq::FF0}\\
 &\quad\quad\quad\frac{1}{2}\sum_{\beta\in\NN_+^{\alpha}}\sum_{(i,j)\in \aEE^{\beta\alpha}} F_{ij}^{\ba}(X^{(0)})\bigg)\nonumber\\
=&\sum_{\alpha\in\AA} F^{\alpha(0)},\nonumber
\end{align}
where the last equality results from \cref{eq::Fa0}.
\vspace{0.25em}
\item Suppose $F(\Xk)=\sum_{\alpha\in\AA} F^{\ak}$ holds at $\sk$-th iteration. As a result of \cref{eq::Gakp}, we obtain
\begin{equation}\label{eq::GGa}
\begin{aligned}
&\sum_{\alpha\in\AA}G^{\akp}\\
=&\sum_{\alpha\in\AA} G^\alpha(\Xakp|\Xk) + \sum_{\alpha\in\AA}F^{\ak}\\
=&\sum_{\alpha\in\AA} G^\alpha(\Xakp|\Xk) + F(\Xk)\\
=& G(\Xkp|\Xk),
\end{aligned}
\end{equation}
where the second and third equality are due to $F(\Xk)=\sum_{\alpha\in\AA} F^{\ak}$ and \cref{eq::G2}, respectively. From \cref{eq::DGa} and
\begin{equation}
\nonumber
\|\Xkp-\Xk\|^2=\sum_{\alpha\in\AA}\|\Xakp-\Xak\|^2,
\end{equation}
it is straightforward to show
\begin{multline}\label{eq::DG}
\sum_{\alpha\in\AA} \Delta G^{\alpha}(\Xakp|\Xk)=\\
\sum_{\substack{\alpha,\beta\in\AA,\\\alpha\neq \beta}}\sum_{(i,j)\in \aEE^{\ab}} \left(F_{ij}^{\ab}(X)-E_{ij}^{\ab}(X|\Xk)\right)-\\
\frac{\xi}{2} \big\|\Xkp-\Xk\big\|^2.
\end{multline}
In addition, \cref{eq::Fakp,eq::GGa} suggest
\begin{equation}\label{eq::FGG}
\begin{aligned}
&\sum_{\alpha\in\AA}F^{\akp}\\
=&\sum_{\alpha\in\AA}G^{\akp}+\sum_{\alpha\in\AA} \Delta G^{\alpha}(\Xakp|\Xk)\\
=&G(\Xkp|\Xk) + \sum_{\alpha\in\AA} \Delta G^{\alpha}(\Xakp|\Xk).
\end{aligned}
\end{equation}
Substituting \cref{eq::G,eq::DG} into \cref{eq::FGG} yields.
\begin{multline}
\nonumber
\sum_{\alpha\in\AA}F^{\akp}= \sum_{\alpha\in\AA}\sum_{(i,j)\in \aEE^{\alpha\alpha}}F_{ij}^{\aa}(\Xkp)+\\
\sum_{\substack{\alpha,\beta\in\AA,\\\alpha\neq \beta}}\sum_{(i,j)\in \aEE^{\alpha\beta}} F_{ij}^{\ab}(\Xkp).
\end{multline}
We simplify the equation above with \cref{eq::F} and obtain
\begin{equation}
F(\Xkp)=\sum_{\alpha\in\AA}F^{\akp}.
\end{equation}
\item Therefore, it can be concluded that $F(\Xk)=\sum_{\alpha\in\AA} F^{\ak}$ holds for any $\sk\geq 0$.
\end{enumerate}
\vspace{0.5em}

\noindent\textbf{Proof of \ref{prop::FsF2}.\;} We will prove $\sF^{(\sk)} = \sum_{\alpha\in\AA}\sF^{\ak}$ by induction.
\begin{enumerate}[leftmargin=0.45cm]
\item Recall from \cref{eq::sFa0,eq::FF0,eq::lFk} that $\sF^{(0)}=F(X^{(0)})$, $\sF^{\alpha(0)}=F^{\alpha(0)}$ and $F(X^{(0)})=\sum_{\alpha\in\AA} F^{\alpha(0)}$, which immediately yields
\begin{equation}
\sF^{(0)}=F(X^{(0)}).
\end{equation}
\item Suppose $\sF^{(\sk)}=\sum_{\alpha\in\AA} \sF^{\ak}$ holds at $\sk$-th iteration. As a result of \cref{eq::lFk}, we obtain
\begin{equation}
\nonumber
\sF^{(\skp)} = (1-\eta)\cdot\sF^{(\sk)}+\eta\cdot F(\Xkp).
\end{equation}
Note that \cref{prop::FsF}\ref{prop::FsF1} suggests $F(\Xkp)=\sum_{\alpha\in\AA}F^{\akp}$. Apply $\sF^{(\sk)}=\sum_{\alpha\in\AA} \sF^{\ak}$ and $F(\Xkp)=\sum_{\alpha\in\AA}F^{\akp}$ on the right-hand side of the equation above results in
\begin{multline}
 \sF^{(\skp)}=(1-\eta)\cdot\sum_{\alpha\in\AA}\sF^{\ak} +\\
  \eta\cdot\sum_{\alpha\in\AA}F^{\akp}=\sum_{\alpha\in\AA}\sF^{\akp},
\end{multline}
where the last equality is due to \cref{eq::lFakp}.
\item Therefore, it can be concluded that $\sF^{(\sk)}=\sum_{\alpha\in\AA} \sF^{\ak}$ holds for any $\sk\geq 0$.  The proof is completed.
\end{enumerate}

\vspace{0.5em}

\noindent\textbf{Proof of \ref{prop::FsF4}.\;} \cref{prop::upper} indicates $E_{ij}^{\ab}(X|\Xkm)\geq F_{ij}^{\ab}(X)$ and $E_{ij}^{\ba}(X|\Xkm)\geq F_{ij}^{\ba}(X)$, from which and \cref{eq::DGa} we obtain 
\begin{equation}\label{eq::DGal}
	\Delta G^\alpha(\Xa|\Xkm)\leq 0
\end{equation}
as long as $\xi\geq 0$. From \cref{eq::DGal,eq::Fakp}, it is immediate to conclude 
\begin{equation}
F^{\akp}\leq G^{\akp}
\end{equation}
for any $\sk\geq 0$. If $G^{\akp}\leq \sF^{\ak}$, the equation above further suggests
\begin{equation}\label{eq::FGlF}
F^{\akp}\leq G^{\akp}\leq \sF^{\ak}.
\end{equation}
From \cref{eq::lFakp}, we obtain
\begin{equation}\label{eq::sFakp}
\sF^{\akp} = (1-\eta)\cdot \sF^{\ak} + \eta\cdot F^{\akp},
\end{equation}
where note that $\eta\in(0,1]$. Thus, we conclude that $\sF^{\akp}$ is a convex combination of $\sF^{\ak}$ and $F^{\akp}$, which and \cref{eq::FGlF} lead to
\begin{equation}
F^{\akp}\leq \sF^{\akp}\leq \sF^{\ak}.
\end{equation}
The proof is completed.

\section*{Apendix H.\;\;Proof of \cref{prop::amm}}\label{appendix::H}
\noindent\textbf{Proof of \ref{prop::amm1}.\;}  We will prove $F^{\ak}\leq \lF^{\ak}$ and $\lF^{\akp}\leq \lF^{\ak}$ by induction, from which it can be shown that $\sF^{(\skp)}\leq \sF^{(\sk)}$.
\begin{enumerate}[leftmargin=0.5cm]
	\item From lines~\ref{line::alg5::Fakp2}, \ref{line::alg5::lFa0}, \ref{line::alg5::Fakp1}, \ref{line::alg5::lFakp} of \cref{algorithm::amm}, it can be shown that 
	\begin{equation}
		\nonumber
		F^{\alpha(-1)}=F^{\alpha(0)}=\lF^{\alpha(-1)}=\lF^{\alpha(0)}.
	\end{equation}
	\item Suppose $\sk\geq 0$ and $F^{\ak}\leq \lF^{\ak}$ holds at $\sk$-th iteration. In terms of $\Xakh$, if the adaptive restart scheme  for $\Xakh$ is not triggered, it is immediate to show from line~\ref{line::alg6::restart1} of \cref{algorithm::amm_x} that
	\begin{equation}\label{eq::GGHFk2}
		\sG^{\akh}  \leq \lF^{\ak}.
	\end{equation}
	On the other hand, if the adaptive restart scheme for $\Xakh$ is triggered, line~\ref{line::alg6::Xakh2} of \cref{algorithm::amm_x} results in
	\begin{multline}\label{eq::GHHk}
	 G^{\alpha}(\Xakh|\Xk) \leq	H^{\alpha}(\Xakh|\Xk)\leq\\  H^{\alpha}(\Xak|\Xk) = 0,
	\end{multline}
	where the first inequality and the last equality are due to \cref{lemma::lG}\ref{lemma::lGd} and \cref{eq::lGMa}, respectively. From \cref{eq::Gakp,eq::GHHk}, we obtain
	\begin{multline}\label{eq::GGHFk}
	\!\!\!G^{\akh} = G^\alpha(\Xakh|\Xk) + F^{\ak}\leq\\
	 H^\alpha(\Xakh|\Xk) + F^{\ak}\leq F^{\ak}\leq \sF^{\ak}.
	\end{multline}
	Therefore, no matter whether the adaptive restart scheme is triggered or not, we conclude from \cref{eq::GGHFk,eq::GGHFk2} that
	\begin{equation}\label{eq::GHlF}
		G^{\akh}\leq  \lF^{\ak}
	\end{equation}
	always holds. Furthermore, if $\Xakp$ and $G^{\akp}$ result from line~\ref{line::alg6::Xakp3} of \cref{algorithm::amm_x}, we obtain $\Xakp=\Xakh$ and $G^{\akp}=G^{\akh}$, which and \cref{eq::GHlF} yield
	\begin{equation}\label{eq::GlFbnd2}
		G^{\akp}=G^{\akh}\leq \lF^{\ak}.
	\end{equation}
	Otherwise, line~\ref{line::alg6::check1} of \cref{algorithm::amm_x} and \cref{eq::GHlF} suggest
	\begin{equation}\label{eq::GlFbnd}
		G^{\akp} - \lF^{\ak} \leq \phi\cdot \big(\sG^{\akh}-\lF^{\ak}\big)\leq 0.
	\end{equation}
	Then, \cref{eq::GlFbnd,eq::GlFbnd2} suggest
	\begin{equation}\label{eq::GkpFk}
		G^{\akp} \leq \lF^{\ak}
	\end{equation}
	always holds, from which and \cref{prop::FsF}\ref{prop::FsF4} we conclude
	\begin{equation}\label{eq::FlFak}
		F^{\akp}\leq \lF^{\akp}\leq \lF^{\ak}.
	\end{equation}
	\item Therefore, it can be concluded that $F^{\ak}\leq \lF^{\ak}$ and  $\lF^{\akp}\leq \lF^{\ak}$.
\end{enumerate}
Summing both sides of  $\sF^{\akp}\leq \sF^{\ak}$ over all the nodes $\alpha$ and implementing Propositions \ref{prop::FsF}\ref{prop::FsF1} and \ref{prop::FsF}\ref{prop::FsF2} yields
\begin{equation}
	\nonumber
	\lF^{(\skp)}=\sum_{\alpha\in\AA}\lF^{\akp} \leq \sum_{\alpha\in\AA}\lF^{\ak}=\lF^{(\sk)},
\end{equation}
which suggests that $\lF^{(\sk)}$ is nonincreasing. The proof is completed.

\vspace{0.8em}
\noindent\textbf{Proof of \ref{prop::amm2}.\;} Recalling that $F(\Xk)\geq 0$ holds  by definition for any $\sk\geq 0$ and $\lF^{(\sk)}$ is the exponential moving average of $F(X^{(0)}),\,F(X^{(1)}),\,\cdots,\, F(\Xk)$, we obtain $\sF^{(\sk)}\geq 0$, i.e., $\sF^{(\sk)}$ is bounded below. In addition, \cref{prop::amm}\ref{prop::amm1} indicates that $\lF^{(\sk)}$ is nonincreasing, and thus, there exists $F^{\infty}$ such that $\lF^{(\sk)}\rightarrow F^{\infty}$, from which and \cref{eq::lFk} it can be concluded that $F(\Xk)\rightarrow F^{\infty}$ as well. The proof is completed.

\vspace{0.8em} 
\noindent\textbf{Proof of \ref{prop::amm3}.\;} From \cref{eq::G2}, it can be shown that $G(\Xkp|\Xk)$ takes the form as
\begin{equation}
	\nonumber
	\begin{aligned}
	G(\Xkp|\Xk) =& \sum_{\alpha\in\AA}G^\alpha(\Xak|\Xk) + F(\Xk)\\
	=&\sum_{\alpha\in\AA}G^\alpha(\Xak|\Xk) +\!\sum_{\alpha\in\AA} F^{\ak},
	\end{aligned}
\end{equation}
where the last equality is due to \cref{prop::FsF}\ref{prop::FsF1}. Applying \cref{eq::Gakp} on the equation above results in
\begin{equation}\label{eq::Gk}
G(\Xkp|\Xk)=\sum_{\alpha\in\AA} G^{\akp}.
\end{equation}
Recalling $G^{\akp}\leq \lF^{\ak}$ from \cref{eq::GkpFk} and $\sum_{\alpha\in\AA}\lF^{\ak} = \lF^{(\sk)}$ from \cref{prop::FsF}\ref{prop::FsF2}, we obtain
\begin{equation}\label{eq::GlFk}
	G(\Xkp|\Xk) = \sum_{\alpha\in\AA} G^{\akp} \leq \sum_{\alpha\in\AA} \lF^{\ak} = \lF^{(\sk)}.
\end{equation}
From \cref{eq::GlFk}, it can be shown that
\begin{equation}\label{eq::lFFG}
	\lF^{(\sk)} - F(\Xkp)\geq G(\Xkp|\Xk) - F(\Xkp).
\end{equation}
Substitute \cref{eq::Gxkp,eq::Fxkp} into the right-hand side of \cref{eq::lFFG} and simplify the resulting equation:
\begin{multline}\label{eq::lFkFkp2}
	\lF^{(\sk)} - F(\Xkp)\geq \frac{1}{2}\|\Xkp-\Xk\|_{\nGammak}^2-\\
	\frac{1}{2}\|\Xkp-\Xk\|_{\nMk}^2.
\end{multline}
From \cref{eq::lFkFkp2,eq::lFklFkp}, we obtain
\begin{multline}
	\nonumber
	\lF^{(\sk)}-\lF^{(\skp)}= \eta\cdot\big(\lF^{(\sk)}-F^{(\skp)}\big)\geq\\
	\frac{\eta}{2}\|\Xkp-\Xk\|_{\nGammak}^2-\frac{\eta}{2}\|\Xkp-\Xk\|_{\nMk}^2.
\end{multline}
From \cref{eq::GMk}, note that $\nGammak-\nMk\geq \xi\cdot\I$, and thus, there exists $\delta > 0$ such that the equation above is reduced to
\begin{equation}\label{eq::lFkFkp}
	\lF^{(\sk)} - \lF^{(\skp)} \geq \frac{\delta}{2}\|\Xkp-\Xk\|^2
\end{equation}
as long as $\xi>0$. Furthermore, \cref{prop::amm}\ref{prop::amm2} yields $$\lF^{(\sk)}-\lF^{(\skp)}\rightarrow 0,$$ from which and \cref{eq::lFkFkp}, we obtain
\begin{equation}\label{eq::XkpXk}
	\|\Xkp-\Xk\|\rightarrow 0.
\end{equation}
The proof is completed.

\vspace{0.8em} 
\noindent\textbf{Proof of \ref{prop::amm4}.\;} In terms of $\Xakh$, there are two possible cases:
\begin{enumerate}[leftmargin=0.45cm]
\item If $\Xakh$ is from line~\ref{line::alg6::Xakh1} of \cref{algorithm::amm_x}, then line~\ref{line::alg6::restart1} of \cref{algorithm::amm_x} indicates
\begin{equation}\label{eq::lFG1}
\sF^{\ak}-G^{\akh}\geq  \psi\cdot\|\Xakh-\Xak\|^2.
\end{equation}
\item If $\Xakh$ is from line~\ref{line::alg6::Xakh2} of \cref{algorithm::amm_x}, then note that \cref{eq::GGHFk} holds for any $\sk\geq 0$, which suggests
\begin{multline}
\nonumber
\sF^{\ak}-G^{\akh}\geq H^\alpha(\Xakh|\Xk)-\\
G^\alpha(\Xakh|\Xk).
\end{multline}
Recalling the definitions of $G^\alpha(\Xakh|\Xk)$ and $H^\alpha(\Xakh|\Xk)$ in \cref{eq::GMa,eq::lGMa}, we rewrite the equation above as
\begin{multline}\label{eq::lFG}
\sF^{\ak}-G^{\akh}\geq \frac{1}{2}\|\Xakh-\Xak\|_{\lnGammaak}^2-\\
\frac{1}{2}\|\Xakh-\Xak\|_{\nGammaak}^2.
\end{multline}
Similar to $\nGammak$ and $\lnGammak$ in \cref{eq::lGGk}, we obtain
\begin{equation}\label{eq::lGGak}
\lnGammaak \succeq \nGammaak + (\zeta-\xi)\cdot\I.
\end{equation}
Applying \cref{eq::lGGak} on the right-hand side of \cref{eq::lFG} indicates
\begin{multline}\label{eq::lFG2}
	\sF^{\ak}-G^{\akh}\geq \frac{\zeta-\xi}{2}\|\Xakh-\Xak\|^2.
\end{multline}
\end{enumerate}
Then, as a result of \cref{eq::lFG1,eq::lFG2}, there exists a constant scalar $\sigma'>0$ such that
\begin{equation}\label{eq::lFG3}
	\sF^{\ak}-G^{\akh}\geq \frac{\sigma'}{2}\|\Xakh-\Xak\|^2
\end{equation}
if $\zeta>\xi>0$. In addition, lines~\ref{line::alg6::check1} to \ref{line::alg6::check2} of \cref{algorithm::amm_x} results in
\begin{multline}\label{eq::lFGak2}
\sF^{\ak} - \sG^{\akp} \geq \phi\cdot \Big(\sF^{\ak} - \sG^{\akh}\Big)\geq\\
\frac{\phi\sigma'}{2}\|\Xakh-\Xak\|^2,
\end{multline}
where the last inequality is from \cref{eq::lFG3}. Summing both sides of \cref{eq::lFGak2} over all the nodes $\alpha\in\AA$ and simplifying the resulting equation with \cref{prop::FsF}\ref{prop::FsF2} and \cref{eq::Gk}, we obtain
\begin{equation}
\nonumber
\sF^{(\sk)} - G(\Xkp|\Xk) \geq \frac{\phi\sigma'}{2}\|\Xkh-\Xk\|^2.
\end{equation}
Recalling $G(\Xkp|\Xk)\geq F(\Xkp)$  from \cref{prop::G}\ref{prop::G2}, the equation above indicates
\begin{multline}\label{eq::lFF}
\sF^{(\sk)} - F(\Xkp) \geq \sF^{(\sk)} - G(\Xkp|\Xk) \geq\\ \frac{\phi\sigma'}{2}\|\Xkh-\Xk\|^2.
\end{multline}
From \cref{eq::lFF,eq::lFklFkp}, it is immediate to show
\begin{multline}
\nonumber
\sF^{(\sk)} - \sF^{(\skp)} \geq \eta\cdot\Big(\sF^{(\sk)}-F(\Xkp)\Big)\geq \\
\frac{\eta\phi\sigma'}{2}\|\Xkh-\Xk\|^2.
\end{multline}
Therefore, there exists a constant scalar $\delta'>0$ such that
\begin{equation}
\sF^{(\sk)} - \sF^{(\skp)} \geq \frac{\delta'}{2}\|\Xkh-\Xk\|^2.
\end{equation}
Since $\sF^{(\sk)} \rightarrow F^{\infty}$, it can be concluded that
\begin{equation}
	\|\Xkh-\Xk\|\rightarrow 0
\end{equation}
if $\zeta>\xi>0$. The proof is completed.
\vspace{0.8em} 

\noindent\textbf{Proof of \ref{prop::amm5} and \ref{prop::amm6}.\;} The proofs of Propositions \ref{prop::amm}\ref{prop::amm5} and \ref{prop::amm}\ref{prop::amm6} are almost the same as these of Propositions \ref{prop::ammc}\ref{prop::ammc5} and \ref{prop::ammc}\ref{prop::ammc6}, which are thus omitted due to space limitation.

\section*{Apendix I.\;\; The Formulation of $\nGamma^{\ak}$ in \cref{eq::GMa}}\label{appendix::I}
From \cref{eq::GMa,eq::Eij3,eq::Fij3,eq::G}, it can be concluded that
\begingroup
\allowdisplaybreaks
\begin{equation}\label{eq::FaH}
	\begin{aligned}
	&\frac{1}{2}\|X^\alpha-X^{\ak}\|_{\nGamma^{\ak}}^2\\
=&\sum_{i=1}^{n_\alpha}\Big[\frac{1}{2}\xi\|R_i^\alpha-R_i^{\ak}\|^2 + \frac{1}{2}\xi\|t_i^\alpha-t_i^{\ak}\|^2\Big]+\hspace{5em}\\
&\sum_{(i,j)\in \aEE^{\aa}}\Big[\frac{1}{2}\kappa_{ij}^{\aa}\|(R_i^\alpha-R_i^{\ak})\nR_{ij}^{\aa} -(R_j^\alpha-R_j^{\alpha(\sk)})\|^2 +\hspace{17em}\\ 
&\;\,\frac{1}{2}\tau_{ij}^{\aa}\|(R_i^\alpha-R_i^{\ak}) \nt_{ij}^{\aa}+t_i^\alpha - t_i^{\ak} - (t_j^\alpha-t_j^{\alpha(\sk)})\|^2\Big]+\\
&\sum_{\substack{\beta\in \NN_-^\alpha}}\sum_{(i,j)\in \aEE^{\ab}}\wabijk\Big[\kappa_{ij}^{\ab}\|R_i^\alpha-R_i^{\ak}\|^2 +\\[-0.75em]
&\quad\quad\quad\quad\quad\quad\quad\;\;\tau_{ij}^{\ab}\|\big(R_i^\alpha-R_i^{\ak}\big) \nt_{ij}^{\ab}+t_i^\alpha-t_i^{\ak}\|^2\Big]+\\[0.3em]
&\sum_{\substack{\beta\in \NN_+^\alpha}}\sum_{(j,i)\in \aEE^{\beta\alpha}}\wbajik\Big[\kappa_{ji}^{\beta\alpha}\|R_i^\alpha -R_{i}^{\ak}\|^2+\\
&\quad\quad\quad\quad\hspace{10em}\tau_{ji}^{\beta\alpha}\|t_i^\alpha - t_{i}^{\ak}\|^2\Big].
	\end{aligned} 
\end{equation}
\endgroup
For notational clarity, we introduce
$$\aEE_{i-}^{\ab}\triangleq \{(i,\,j)|(i,\,j)\in \aEE^{\ab}\},$$
$$\aEE_{i+}^{\ab}\triangleq \{(i,\,j)|(j,\,i)\in \aEE^{\beta\alpha}\},$$
$$\NN_{i-}^{\alpha}\triangleq \{\beta\in \AA|\exists(i,\,j)\in \aEE^{\ab}\text{ and }\beta\neq\alpha\},$$
$$\NN_{i+}^{\alpha}\triangleq \{\beta\in \AA|\exists(j,\,i)\in \aEE^{\beta\alpha}\text{ and }\beta\neq\alpha\},$$ $$\EE_i^{\ab}\triangleq\aEE_{i-}^{\ab}\cup\aEE_{i+}^{\ab},$$
$$\NN_i^\alpha\triangleq\NN_{i-}^\alpha\cup\NN_{i+}^\alpha.$$ 
In addition, we define $\kappa_{ji}^{\beta\alpha}\triangleq\kappa_{ij}^{\ab}$, $\tau_{ji}^{\beta\alpha}\triangleq\tau_{ij}^{\ab}$, $\wabijk\triangleq\wbajik$, and
\begin{equation}\label{eq::kappak}
\kappa_{ij}^{\ab(k)}\triangleq \wabijk\cdot\kappa_{ij}^{\ab},
\end{equation}
\begin{equation}\label{eq::tauk}
\tau_{ij}^{\ab(k)}\triangleq \wabijk\cdot\tau_{ij}^{\ab}
\end{equation}
for any $(i,\,j)\in\aEE^{\ab}$. 

Then, \cref{eq::FaH} indicates that $\nGamma^{\ak}\in \R^{(d+1)n_\alpha\times (d+1)n_\alpha}$ in \cref{eq::GMa} takes the form as
\begin{equation}\label{eq::nGamma}
\nGamma^{\ak}=\begin{bmatrix}
\nGamma^{\tau,\ak}\hphantom{{}^\transpose} & \nGamma^{\nu,\ak}\\
\nGamma{\vphantom{H}^{\nu,\ak}}^\transpose & \nGamma^{\kappa,\ak}
\end{bmatrix},
\end{equation}
in which $\nGamma^{\tau,\ak}\in \R^{n_\alpha\times n_\alpha}$, $\nGamma^{\nu,\ak}\in\R^{n_\alpha\times dn_\alpha}$ and $\nGamma^{\kappa,\ak}\in\R^{dn_\alpha\times dn_\alpha}$ are sparse matrices that are defined as
\begingroup
\allowdisplaybreaks
\begin{align*}
&\Big[\nGamma^{\tau,\ak}\Big]_{ij}=\begin{cases}
\xi + \sum\limits_{e\in \EE_i^{\aa}}\tau_{e}^{\aa}+\\
\quad\quad\;\;\sum\limits_{\substack{\beta\in \NN_i^\alpha}}\sum\limits_{e\in \EE_i^{\ab}}2\tau_e^{\abk}, & i=j,\\
-\tau_{ij}^{\aa}, & (i,j)\in \aEE^{\aa},\\
-\tau_{ji}^{\aa}, & (j,i)\in \aEE^{\aa},\\
0,& \text{otherwise,}
\end{cases}\\[0.8em]
&\Big[\nGamma^{\nu,\ak}\Big]_{ij}=\begin{cases}
\sum\limits_{e\in \EE_{i-}^{\aa}}\tau_{e}^{\aa}\nt{_{e}^{\aa}}^\transpose+\\
\;\sum\limits_{\substack{\beta\in \NN_{i-}^\alpha}}\sum\limits_{e\in \EE_{i-}^{\ab}}2\tau_e^{\abk}\nt{\vphantom{t}_{e}^{\ab}}^\transpose, & i=j,\\
-\tau_{ji}^{\aa}\nt{_{ji}^{\aa}}^\transpose, & (j,i)\in \aEE^{\aa},\\
0,& \text{otherwise,}
\end{cases}\\[0.8em]
&\Big[\nGamma^{\kappa,\ak}\Big]_{ij}=\begin{cases}
\xi\cdot\I+\sum\limits_{e\in \EE_i^{\aa}}\kappa_{e}^{\aa}\cdot\I+\\
\quad\sum\limits_{e\in \EE_{i-}^{\aa}}\tau_{e}^{\aa}\cdot\nt_{e}^{\aa}\nt{\vphantom{t}_{e}^{\aa}}^\transpose+\\
\sum\limits_{\substack{\beta\in \NN_i^\alpha}}\big(\sum\limits_{e\in \EE_i^{\ab}}2\kappa_e^{\abk}\cdot\I+\\
\quad\sum\limits_{e\in \EE_{i-}^{\ab}}\;2\tau_{e}^{\abk}\cdot\nt_{e}^{\ab}\nt{\vphantom{t}_{e}^{\ab}}^\transpose\big), & i=j,\\
-\kappa_{ij}^{\aa}\cdot\nR_{ij}^{\aa}, & (i,j)\in \aEE^{\aa},\\
-\kappa_{ji}^{\aa}\cdot\nR{\vphantom{R}_{ij}^{\aa}}^\transpose, & (j,i)\in \aEE^{\aa},\\
0,& \text{otherwise.}
\end{cases}
\end{align*}
\endgroup

\section*{Apendix J.\;\; The Formulation of $\lnGamma^{\ak}$ and $\lnGamma_i^{\ak}$ in \cref{eq::lGMa,eq::lGMai}}\label{appendix::J}
From \cref{eq::Eij3,eq::lG}, it can be concluded that
\begingroup
\allowdisplaybreaks
\begin{align}\label{eq::lnGammai}
&\frac{1}{2}\|X-\Xk\|_{\lnGamma_i^\ak}^2\nonumber\\
=&\frac{1}{2}\zeta\|R_i^\alpha-R_i^{\ak}\|^2 + \frac{1}{2}\zeta\|t_i^\alpha-t_i^{\ak}\|^2+\nonumber\\
&\sum_{(i,j)\in\aEE_{i-}^{\aa}}\Big[\kappa_{ij}^{\aa}\|R_i^\alpha-R_i^{\ak}\|^2+\nonumber\\[-0.75em]
&\quad\quad\quad\quad\;\;\tau_{ij}^{\aa}\|\big(R_i^\alpha-R_i^{\ak}\big) \nt_{ij}^{\aa}+t_i^\alpha-t_i^{\ak}\|^2\Big]+\nonumber\\
&\sum_{(i,j)\in \aEE_{i+}^{\aa}}\Big[\kappa_{ji}^{\aa}\|R_i^\alpha -R_{i}^{\ak}\|^2+\tau_{ji}^{\aa}\|t_i^\alpha - t_{i}^{\ak}\|^2\Big]+\nonumber\\
&\sum_{\substack{\beta\in \NN_{i-}^\alpha}}\sum_{(i,j)\in \aEE_{i-}^{\ab}}\wabijk\Big[\kappa_{ij}^{\ab}\|R_i^\alpha-R_i^{\ak}\|^2 +\nonumber\\[-0.75em]
&\quad\quad\quad\quad\quad\quad\quad\;\;\tau_{ij}^{\ab}\|\big(R_i^\alpha-R_i^{\ak}\big) \nt_{ij}^{\ab}+t_i^\alpha-t_i^{\ak}\|^2\Big]+\nonumber\\
&\sum_{\substack{\beta\in \NN_{i+}^\alpha}}\sum_{(j,i)\in \aEE_{i+}^{\beta\alpha}}\wbajik\Big[\kappa_{ji}^{\beta\alpha}\|R_i^\alpha -R_{i}^{\ak}\|^2+\nonumber\\
&\quad\quad\quad\quad\hspace{10em}\tau_{ji}^{\beta\alpha}\|t_i^\alpha - t_{i}^{\ak}\|^2\Big].
\end{align}
\endgroup

From \cref{eq::lnGammai}, it can be shown that $\lnGamma_i^{\ak}\in\R^{(d+1)\times(d+1)}$ in \cref{eq::lGMai} can be written as
\begin{equation}\label{eq::lnGammaki}
\lnGamma_i^{\ak} = \begin{bmatrix}
\lnGamma_i^{\tau,\ak}\hphantom{{}^\transpose} & \lnGamma_i^{\nu,\ak}\\
\lnGamma{\vphantom{H}_i^{\nu,\ak}}^\transpose & \lnGamma_i^{\kappa,\ak}
\end{bmatrix},
\end{equation}
in which $\lnGamma_i^{\tau,\ak}\in\R$, $\lnGamma_i^{\nu,\ak}\in\R^{1\times d}$ and $\lnGamma_i^{\kappa,\ak}\in\R^{d\times d}$ are defined as
\begin{equation}\label{eq::lnGamma_ti}
\lnGamma_i^{\tau,\ak}=\zeta + \sum\limits_{e\in \EE_i^{\aa}}2\tau_{e}^{\aa}+\sum\limits_{\substack{\beta\in \NN_i^\alpha}}\sum\limits_{e\in \EE_i^{\ab}}2\tau_e^{\abk},
\end{equation}
\begin{equation}\label{eq::lnGamma_vi}
\lnGamma_i^{\nu,\ak}=\sum\limits_{e\in \EE_{i-}^{\aa}}2\tau_{e}^{\aa}\nt{_{e}^{\aa}}^\transpose+\sum\limits_{\substack{\beta\in \NN_{i-}^\alpha}}\sum\limits_{e\in \EE_{i-}^{\ab}}2\tau_e^{\abk}\nt{\vphantom{t}_{e}^{\ab}}^\transpose,
\end{equation}
\begin{multline}\label{eq::lnGamma_ri}
\lnGamma_i^{\kappa,\ak}=\zeta\cdot\I+\sum\limits_{e\in \EE_i^{\aa}}2\kappa_{e}^{\aa}\cdot\I+\sum\limits_{e\in \EE_{i-}^{\aa}}2\tau_{e}^{\aa}\cdot\nt_{e}^{\aa}\nt{\vphantom{t}_{e}^{\aa}}^\transpose+\\
\sum\limits_{\substack{\beta\in \NN_i^\alpha}}\big(\sum\limits_{e\in \EE_i^{\ab}}2\kappa_e^{\abk}\cdot\I+
\sum\limits_{e\in \EE_{i-}^{\ab}}\;2\tau_{e}^{\abk}\cdot\nt_{e}^{\ab}\nt{\vphantom{t}_{e}^{\ab}}^\transpose\big),
\end{multline}
in which $\kappa_{ij}^{\abk}$ and $\tau_{ij}^{\abk}$ are given in \cref{eq::kappak,eq::tauk}, respectively. 

Similar to $\nGamma^{\ak}\in \R^{(d+1)n_\alpha\times (d+1)n_\alpha}$ in \cref{eq::nGamma},  $\lnGamma^{\ak}\in \R^{(d+1)n_\alpha\times (d+1)n_\alpha}$ in \cref{eq::lGMa} also takes the form as
\begin{equation}\label{eq::lnGamma}
\lnGamma^{\ak}=\begin{bmatrix}
	\lnGamma^{\tau,\ak}\hphantom{{}^\transpose} & \lnGamma^{\nu,\ak}\\
	\lnGamma{\vphantom{H}^{\nu,\ak}}^\transpose & \lnGamma^{\kappa,\ak}
\end{bmatrix},
\end{equation}
in which $\lnGamma^{\tau,\ak}\in \R^{n_\alpha\times n_\alpha}$, $\lnGamma^{\nu,\ak}\in\R^{n_\alpha\times dn_\alpha}$ and $\lnGamma^{\kappa,\ak}\in\R^{dn_\alpha\times dn_\alpha}$ are sparse matrices.
Following \cref{eq::lGMa,eq::lGMsum,eq::lGMai}, it is straightforward to show that
\begin{equation}
	\nonumber
	\frac{1}{2}\|\Xa-\Xak\|_{\lnGamma^{\ak}}^2 = \sum_{i=1}^{n_\alpha} \frac{1}{2}\|\Xa_i-\Xak_i\|_{\lnGamma_i^{\ak}}^2.
\end{equation}
From the equation above, $\lnGamma^{\tau,\ak}\in \R^{n_\alpha\times n_\alpha}$, $\lnGamma^{\nu,\ak}\in\R^{n_\alpha\times dn_\alpha}$ and $\lnGamma^{\kappa,\ak}\in\R^{dn_\alpha\times dn_\alpha}$ in \cref{eq::lnGamma} are defined as
\begin{equation}
\nonumber
\Big[\lnGamma^{\tau,\ak}\Big]_{ij}=\begin{cases}
	\lnGamma_i^{\tau,\ak}, & i=j,\\
	0,& \text{otherwise,}
\end{cases}
\end{equation}
\begin{equation}
\nonumber
\Big[\nGamma^{\nu,\ak}\Big]_{ij}=\begin{cases}
	\lnGamma_i^{\nu,\ak}, & i=j,\\
	0,& \text{otherwise,}
\end{cases}
\end{equation}
\begin{equation}
\nonumber
\Big[\nGamma^{\kappa,\ak}\Big]_{ij}=\begin{cases}
	\lnGamma_i^{\kappa,\ak}, & i=j,\\
	0,& \text{otherwise},
\end{cases}
\end{equation}
in which $\lnGamma_i^{\tau,\ak}\in\R$, $\lnGamma_i^{\nu,\ak}\in\R^{1\times d}$ and $\lnGamma_i^{\kappa,\ak}\in\R^{d\times d}$ are given in \cref{eq::lnGamma_ti,eq::lnGamma_vi,eq::lnGamma_ri}.
%\begingroup
%\allowdisplaybreaks
%\begin{align*}
%	&\big[\lnGamma^{\tau,\alpha}\big]_{ij}=\begin{cases}
%		\lnGamma_i^{\tau,\alpha}, & i=j,\\
%		0,& \text{otherwise,}
%	\end{cases}\\[0.8em]
%	&\big[\nGamma^{\nu,\alpha}\big]_{ij}=\begin{cases}
%		\lnGamma_i^{\nu,\alpha}, & i=j,\\
%		0,& \text{otherwise,}
%	\end{cases}\\[0.8em]
%	&\big[\nGamma^{\kappa,\alpha}\big]_{ij}=\begin{cases}
%		\lnGamma_i^{\kappa,\alpha}, & i=j,\\
%		0,& \text{otherwise.}
%	\end{cases}
%\end{align*}
%\endgroup

\section*{Appendix K.\;\; The Closed-Form Solution to \cref{eq::xlGi}}\label{appendix::K}
Substituting \cref{eq::lnGammaki} into \cref{eq::xlGi} and rewriting the resulting equation in terms of $\Xa_i=\begin{bmatrix}
t_i^\alpha & R_i^\alpha
\end{bmatrix}\in\R^{d\times(d+1)}$ leads to
\begin{multline}
\nonumber
\big(t_i^{\akh},\,R_i^{\akh}\big)=\\
\arg\min_{\substack{t_i^\alpha\in\R^d,\, R_i^\alpha\in SO(d)}}\; \frac{1}{2} \|\tai-\taik\|_{\lnGammataik}^2 +\\
\lnGammanaik(\Rai-\Raik)^\transpose(\tai-\taik) +\\
\frac{1}{2}\|\Rai-\Raik\|_{\lnGammaRaik}^2+\\
\innprod{\nabla_{\tai}F(\Xk)}{\tai-\taik} +\\ \innprod{\nabla_{\Rai}F(\Xk)}{\Rai-\Raik}.
\end{multline}
For notational simplicity, the equation above is simplified to
\begin{multline}\label{eq::opti}
\min_{\substack{t_i^\alpha\in\R^d,\, R_i^\alpha\in SO(d)}}\; \frac{1}{2} \|\tai\|_{\lnGammataik}^2 +
\lnGammanaik{\Rai}^\transpose\tai +\\
\frac{1}{2}\|\Rai\|_{\lnGammaRaik}^2+  \innprod{\gamma_{i}^{\tau,\ak}}{\tai} + \innprod{\gamma_{i}^{\kappa,\ak}}{\Rai},
\end{multline}
in which
\begin{equation}
\gamma_{i}^{\tau,\ak}=\nabla_{\tai}F(\Xk)-\taik\lnGammataik\in\R^d
\end{equation}
and
\begin{equation}
\gamma_{i}^{\kappa,\ak}=\nabla_{\Rai}F(\Xk)-\Raik\lnGammaRaik\in\R^{\d\times d}.
\end{equation}
Recalling from \cref{eq::lnGammaki} that $\lnGammataik\in\R$ and $\lnGammataik>0$,  we obtain that
\begin{equation}\label{eq::tsol}
\tai = -\Rai{\lnGammanaik}^\transpose{\lnGammataik}^{-1}-\gamma_{i}^{\tau,\ak}{\lnGammataik}^{-1}
\end{equation}
minimizes \cref{eq::opti} if $\Rai\in SO(d)$ is given. Then, substituting \cref{eq::tsol} into \cref{eq::opti} yields
\begin{equation}\label{eq::opti2}
R_i^{\akh}=\arg\min_{\Rai\in SO(d)} \frac{1}{2}\|\Rai\|_{\Xi_i^{\ak}}^2 - \innprod{\upsilon_i^{\ak}}{\Rai},
\end{equation}
in which
\begin{equation}
\Xi_i^{\ak}=\lnGammaRaik-{\lnGammanaik}^\transpose{\lnGammataik}^{-1}\lnGammanaik\in\R^{d\times d}
\end{equation}
and
\begin{equation}
\upsilon_i^{\ak} = \gamma_{i}^{\tau,\ak}{\lnGammataik}^{-1}\lnGammanaik - \gamma_{i}^{\kappa,\ak} \in\R^{d\times d}.
\end{equation}
If we apply ${\Rai}^\transpose\Rai=\I$ on \cref{eq::opti2}, then \cref{eq::xlGi} is equivalent to
\begin{equation}\label{eq::opti4}
R_i^{\akh} = \arg\max_{\Rai\in SO(d)} \innprod{\upsilon_i^{\ak}}{\Rai}.
\end{equation} 
Thus, \cref{eq::xlGi} is reduced to an optimization problem on $SO(d)$, which has a closed-form solution as follows.

Following \cite{umeyama1991least}, if $\upsilon_i^{\ak}\in \R^{d\times d}$ admits a singular value decomposition $\upsilon_i^{\ak}=U_i^{\ak}\mathrm{\Sigma}_i^{\ak} {V_i^{\ak}}^\transpose$ in which $U_i^{\ak}$ and $V_i^{\ak}\in O(d)$ are orthogonal (not necessarily special orthogonal) matrices, and $\mathrm{\Sigma}_i^{\ak}=\diag\{\sigma_1^{\ak},\,\sigma_2^{\ak},\,\cdots,\,\sigma_d^{\ak}\}\in \R^{d\times d}$ is a diagonal matrix, and $\sigma_1^{\ak}\geq\sigma_2^{\ak}\geq\cdots\geq \sigma_d^{\ak}\geq 0$ are singular values of $\upsilon_i^{\ak}$, then the optimal solution to \cref{eq::opti4} is
\begin{equation}\label{eq::Rsol}
	R_i^{\akh}=\begin{cases}
		U_i^{\ak}\mathrm{\Lambda}^+ {V_i^{\ak}}^\transpose, & \det\big(U_i^{\ak}{V_i^{\ak}}^\transpose\big) > 0,\\[0.3em]
		U_i^{\ak}\mathrm{\Lambda}^- {V_i^{\ak}}^\transpose, & \text{otherwise},
	\end{cases}
	\vspace{-0.25em}
\end{equation}
in which $\mathrm{\Lambda}^+=\diag\{1,\,1,\,\cdots,\,1\}\in\R^{d\times d}$ and $\mathrm{\Lambda}^-=\diag\{1,\,1,\,\cdots,\,-1\}\in\R^{d\times d}$. If $d = 2$, the equation above is equivalent to the polar decomposition of $2\times 2$ matrices, and if $d=3$, there are fast algorithms for singular value decomposition of $3\times 3$ matrices \cite{mcadams2011computing}. As a result, \cref{eq::opti4} can be efficiently solved in the case of $SO(2)$ and $SO(3)$, both of which are commonly used in SLAM.

As long as $R_i^{\akh}\in SO(d)$ is known, $t_i^{\akh}\in\R^d$ can be exactly recovered using \cref{eq::tsol}:
\begin{multline}
t_i^{\akh} = -R_i^{\akh}{\lnGammanaik}^\transpose{\lnGammataik}^{-1}-\\ \gamma_{i}^{\tau,\ak}{\lnGammataik}^{-1}.
\end{multline}
Therefore, $X_i^{\akh}=\begin{bmatrix}
t_i^{\akh} & R_i^{\akh}
\end{bmatrix}\in \R^{d\times(d+1)}$ is exactly solved, whose computation only involves matrix multiplication and singular value decomposition.

\section*{Appendix L.\;\; Pose Graph Optimization  Datasets}\label{appendix::L}
A variety of 2D and 3D SLAM benchmark datasets \cite{rosen2016se} are used to evaluate distributed PGO methods. As is shown in \cref{table::dataset}, there are seven real-world datasets ({\sf ais2klinik}, 	{\sf CSAIL}, {\sf intel}, {\sf MITb}, {\sf garage}, {\sf cubicle} and {\sf rim}) and five synthetic datasets ({\sf city}, {\sf M3500}, {\sf sphere}, {\sf torus}, and {\sf grid}).
\begin{table}[H]
	\centering
	\setlength{\tabcolsep}{0.3em}
	\renewcommand{\arraystretch}{1.2}
	\caption{2D and 3D SLAM benchmark datasets.}\label{table::dataset}
	\begin{tabular}{|P{0.07\textwidth}|P{0.055\textwidth}|P{0.065\textwidth}|P{0.11\textwidth}|P{0.1\textwidth}|}
		\hline			
		\multirow{2}{*}{Dataset} & \multirow{2}{*}{2D/3D} & \multirow{2}{*}{\# Poses} & \multirow{2}{*}{\# Measurements}  & Real-World Dataset \\
		\hline
		\hline
		{\sf ais2klinik} & 2D &15115 & 16727 & \cmark\\
		\hline
		{\sf city} & 2D & 10000 & 20687 & \xmark\\
		\hline
		{\sf CSAIL} & 2D & 1045 & 1172 & \cmark \\
		\hline
		{\sf M3500} & 2D & 3500 & 5453 & \xmark \\
		\hline
		{\sf intel} & 2D & 1728 & 2512 & \cmark\\
		\hline
		{\sf MITb} & 2D & 808 & 827 & \cmark \\
		\hline
		{\sf sphere} & 3D & 2500 & 4949 & \xmark\\
		\hline
		{\sf torus} & 3D & 5000 & 9048 & \xmark\\
		\hline
		{\sf grid} & 3D & 8000 & 22236 & \xmark\\
		\hline
		{\sf garage} & 3D & 1661 & 6275 & \cmark\\
		\hline
		{\sf cubicle} & 3D & 5750 & 16869 & \cmark\\
		\hline
		{\sf rim} & 3D & 10195 & 29743 & \cmark\\
		\hline
	\end{tabular}
\vspace{-1em}
\end{table}
}
\def\highlight{}
\title{\LARGE \bf Majorization Minimization Methods for Distributed Pose Graph Optimization}
\author{Taosha Fan and Todd D. Murphey
	\thanks{T. Fan is with Meta AI, Pittsburgh, PA 15213, USA and T. D. Murphey is with the Department of Mechanical Engineering, Northwestern University, Evanston, IL 60201, USA. E-mail: {\tt taoshaf@meta.com, t-murphey@northwestern.edu}
		
		This material is partially based upon work supported by the National Science Foundation under awards 1662233 and 1837515. 
		
		The authors thank Yulun Tian for sharing the code of Riemannian block coordinate descent method ($\rbcd$) for distributed PGO.
}
}
\begin{document}
\maketitle
\thispagestyle{empty}
\pagestyle{empty}

%\linenumbers

%%%%%%%%%%%%%%%%%%%%%%%%%%%%%%%%%%%%%%%%%%%%%%%%%%%%%%%%%%%%%%%%%%%%%%%%%%%%%%%%
\begin{abstract}
We consider the problem of distributed pose graph optimization (PGO) that has important applications in multi-robot simultaneous localization and mapping (SLAM). We propose the majorization minimization (MM) method for distributed PGO ($\mm$) that applies to a broad class of robust loss kernels. The $\mm$ method is guaranteed to converge to first-order critical points under mild conditions. Furthermore, noting that the $\mm$ method is reminiscent of {\highlight proximal methods}, we leverage Nesterov's method and adopt adaptive restarts to accelerate convergence. The resulting accelerated MM methods for distributed PGO---both with a master node in the network ($\ammc$) and without ($\ammd$)---have faster convergence in contrast to the $\mm$ method without sacrificing theoretical guarantees. In particular, the $\ammd$ method, which needs no master node and is fully decentralized, features a novel adaptive restart scheme and has a rate of convergence comparable to that of the $\ammc$ method using a master node to aggregate information from all the nodes. The efficacy of this work is validated through extensive applications to 2D and 3D SLAM benchmark datasets and comprehensive comparisons against existing state-of-the-art methods, indicating that our MM methods converge faster and result in better solutions to distributed PGO. 
\end{abstract}

%%%%%%%%%%%%%%%%%%%%%%%%%%%%%%%%%%%%%%%%%%%%%%%%%%%%%%%%%%%%%%%%%%%%%%%%%%%%%%%%
\vspace{-0.25em}
\section{Introduction}\label{section::intro}
Pose graph optimization (PGO) is a nonlinear and nonconvex optimization problem estimating unknown poses from noisy relative pose measurements. PGO associates each pose  with a vertex and each relative pose measurement with an edge such that the optimization problem is well represented through a graph. PGO has important applications in a number of areas, including but not limited to robotics \cite{cadena2016past,rosen4advances,thrun2005probabilistic}, autonomous driving \cite{geiger2012we}, and computational biology \cite{singer2011angular,singer2011three}. Recent advances \cite{rosen2014rise,rosen2016se,rosen2020scalable,dellaert2012factor,fan2019cpl,fan2020cpl,carlone2016planar,carlone2015lagrangian,grisetti2009nonlinear,briales2017cartan} suggest that PGO can be well solved using iterative optimization. Nevertheless, the aforementioned techniques \cite{rosen2014rise,rosen2016se,rosen2020scalable,dellaert2012factor,carlone2016planar,carlone2015lagrangian,grisetti2009nonlinear,fan2019cpl,briales2017cartan,fan2020cpl}  are difficult to distribute across a network due to communication and computational limitations, and  are only applicable to small- and medium-sized problems with at most tens of thousands poses.  In addition, their centralized pipelines are equivalent to using a master node to aggregate information from the entire network, and thus, fail to meet  privacy requirements one may wish to impose \cite{li2019coordinated,zhang2019complete}.

In multi-robot simultaneous localization and mapping (SLAM)  \cite{cunningham2010ddf,aragues2011multi,cunningham2013ddf,saeedi2016multiple,dong2015distributed,lajoie2020door,cieslewski2018data,tchuiev2020distributed,chang2020kimera,tian2021kimera}, each robot estimates not only its own poses but those of the others as well to build an environment map. Even though such a problem can be solved by PGO,  communication between  robots is restricted and multi-robot SLAM has more unknown poses than single-robot SLAM. Thus, instead of using centralized PGO \cite{rosen2014rise,rosen2016se,rosen2020scalable,dellaert2012factor,carlone2016planar,fan2020cpl,fan2019cpl,carlone2015lagrangian,grisetti2009nonlinear,briales2017cartan}, it is more reasonable to formulate this large-sized estimation problem involving multiple robots as distributed PGO---each robot in multi-robot SLAM is represented as a node and two nodes (robots) are said to be neighbors if there exists a noisy relative pose measurement between them (a more detailed description of distributed PGO can be found in \cref{section::problem}). In most cases, it is assumed that inter-node communication only occurs between neighboring nodes and most of these iterative optimization methods are infeasible  due to the expensive communication cost of solving linear system and performing line search \cite{rosen2014rise,rosen2016se,rosen2020scalable,dellaert2012factor,carlone2016planar,carlone2015lagrangian,grisetti2009nonlinear,fan2019cpl,briales2017cartan,fan2020cpl}, which renders distributed PGO  more challenging than centralized PGO.

In this paper, we propose majorization minimization (MM) methods \cite{hunter2004tutorial,sun2016majorization} for distributed PGO. As the name would suggest, MM methods have two steps. First, in the majorization step, we construct a surrogate function that majorizes the objective function, i.e., the surrogate function is greater to the objective function except for the current iterate where both of them attain the same value. Then, in the minimization step, we minimize the surrogate function instead of the original objective function to improve the iterate. MM methods remain difficult, albeit straightforward, in practical use, e.g., the surrogate function, whose construction and minimization can not be more difficult than solving the optimization problem itself, is usually unknown, and MM methods might fail to converge and suffer from slow convergence. Thus, the implementation of MM methods on large-scale, complicated and nonconvex optimization problems like distributed PGO is nontrivial, and inter-node communication requirements impose extra restrictions making it more so. All of these issues  are  addressed both theoretically and empirically in the rest of this paper.

This paper extends the preliminary results in \cite{fan2019proximal,fan2020mm}, where we developed MM methods for centralized and distributed PGO that are guaranteed to converge to first-order critical points. In \cite{fan2019proximal,fan2020mm}, we also introduced and elaborated on the use of Nesterov's method \cite{nesterov1983method,nesterov2013introductory} and adaptive restart \cite{o2015adaptive} for the first time to accelerate the convergence of PGO. Beyond the initial results in \cite{fan2019proximal,fan2020mm}, this paper presents completely redesigned MM methods for distributed PGO and provides more comprehensive theoretical and  empirical results. In particular, our MM methods in this paper are capable of handling a broad class of robust loss kernels, no longer require each iteration to attain a local optimal solution to the surrogate function for the convergence guarantees, and adopt a novel adaptive restart scheme for distributed PGO without a master node to make full use of Nesterov's acceleration.

In summary, the contributions of this paper are the follows:
\begin{enumerate}
\item We derive a class of surrogate functions that suit well with MM methods for distributed PGO. These surrogate functions apply to a broad class of robust loss kernels in robotics and computer vision.
\item We develop MM methods for distributed PGO that are guaranteed to converge to first-order critical points under mild conditions. Our MM methods for distributed PGO  implement a novel update rule such that each iteration does not have to minimize the surrogate function to a local optimal solution.
\item We leverage Nesterov's methods and adaptive restart to accelerate MM methods for distributed PGO and achieve significant improvement in convergence without any compromise of theoretical guarantees.
\item We present a decentralized adaptive restart scheme to make full use of Nesterov's acceleration such that accelerated MM methods for distributed PGO without a master node are almost as fast as those requiring a master node.
\end{enumerate}

The rest of this paper is organized as follows. \cref{section::work} reviews the state-of-the-art methods for distributed PGO. \cref{section::notation} introduces mathematical notation and preliminaries that are used in this paper. \cref{section::problem} formulates the problem of distributed PGO. \cref{section::loss,section::major_pgo} present surrogate functions for individual loss terms and the overall distributed PGO, respectively, which are fundamental to our MM methods. \cref{section::mm,section::amm,section::ammc} present unaccelerated and accelerated MM methods for distributed PGO that are guaranteed to converge to first-order critical points, which are the major contributions of this paper. \cref{section::experiemnt} implements our MM methods for distributed PGO on a number of simulated and real-world SLAM datasets and make extensive comparisons against existing state-of-the-art methods \cite{choudhary2017distributed,tian2019distributed}. \cref{section::conclusion} concludes this paper and discusses future work.

\vspace{-0.25em}
\section{Related Work}\label{section::work}
In the last decade, multi-robot SLAM has been becoming increasingly popular, which promotes the development of distributed PGO \cite{choudhary2017distributed,tian2019distributed,tron2014distributed,eric2020geod}.

Choudhary \ea \cite{choudhary2017distributed} present a two-stage algorithm that implements either Jacobi Over-Relaxation or  Successive Over-Relaxation as distributed linear system solvers. Similar to centralized methods, \cite{choudhary2017distributed} first evaluates the chordal initialization \cite{carlone2015initialization} and then improves the initial guess with a single Gauss-Newton step. However, one step of Gauss-Newton method in most cases can not lead to sufficient convergence for distributed PGO. In addition, no line search is performed in \cite{choudhary2017distributed} due to the communication limitation, and thus, the behaviors of the single Gauss-Newton step is totally unpredictable and might result in bad solutions.

Tian \ea \cite{tian2019distributed} present the distributed certifiably correct PGO using Riemannian block coordinate descent method, which is later generalized to asynchronous and parallel distributed PGO \cite{tian2020asynchronous}. Specially, their method makes use of Riemannian staircase optimization to solve the semidefinite relaxation of distributed PGO and is guaranteed to converge to global optimal solutions under moderate measurement noise. Following our previous works \cite{fan2020mm,fan2019proximal}, they implement Nesterov's method for acceleration as well. Contrary to our MM methods, a major drawback of \cite{tian2019distributed} is that their method has to precompute red-black coloring assignment for block aggregation and keep part of the blocks in idle for estimate updates. In addition, although several strategies for block selection (e.g., greedy/importance sampling) and Nesterov's acceleration (e.g., adaptive/fixed restarts) are adopted in \cite{tian2019distributed} to improve the convergence, most of them are either inapplicable without a master node or at the sacrifice of computational efficiency and theoretical guarantees. In contrast, our MM methods are much faster (see Section~\ref{section::experiemnt}) but have no such restrictions for acceleration. More recently, Tian \ea  further apply Riemannian block coordinate descent method to distributed PGO with robust loss kernels \cite{tian2021kimera}. However, they solve robust distributed PGO by trivially updating the weights using graduated nonconvexity \cite{yang2020graduated} and no formal proofs of convergence are provided. Again, { this is  in contrast to the work} presented here that has provable convergence to first-order critical points for a broad class of robust loss kernels.

Tron and Vidal \cite{tron2014distributed} present a consensus-based method for distributed PGO using Riemannian gradient. The authors derive a condition for convergence guarantees related with the stepsize of the method and the degree of the pose graph. Nonetheless, their method estimates rotation and translation separately, fails to handle robust loss kernels, and needs extra computation to find the convergence-guaranteed stepsize. 

Cristofalo \ea \cite{eric2020geod} present a novel distributed PGO method using  Lyapunov theory and multi-agent consensus. Their method is guaranteed to converge if the pose graph has certain topological structures. However, \cite{eric2020geod} updates rotations without exploiting the translational measurements and only applies to pairwise consistent PGO with nonrobust loss kernels.

In comparison to these aforementioned techniques, our MM methods have the mildest conditions (not requiring any specific pose graph structures, any extra computation for preprocessing, any master nodes for information aggregation, etc.) to converge to first-order critical points, apply to a broad class of robust loss kernels in robotics and computer vision, and manage to implement decentralized acceleration with convergence guarantees. Most importantly, as is shown in \cref{section::experiemnt}, our MM methods outperform existing state-of-the-art methods in terms of both efficiency and accuracy on a variety of SLAM benchmark datasets.

\vspace{-0.2em}
\section{Notation\protect\footnote{A more complete summary of the notation  is given in \hyperapp{appendix::A}{A}. }}\label{section::notation}

\textbf{Miscellaneous Sets.}\; $\R$ denotes the sets of real numbers; $\R^+$ denotes the sets of nonnegative real numbers; $\R^{m\times n}$ and $\R^n$ denote the sets of $m\times n$ matrices and $n\times 1$ vectors, respectively. $SO(d)$ denotes the set of special orthogonal groups and  $SE(d)$ denotes the set of special Euclidean groups.  $|\cdot|$ denotes the cardinality of a set.

\textbf{Matrices.}\; For a matrix $X\in \R^{m\times n}$, $[X]_{ij}$ denotes the $(i,\,j)$-th entry or $(i,\,j)$-th block of $X$, and $[X]_i$ denotes the $i$-th entry or $i$-th block of $X$. For symmetric matrices $X,\, Y\in \R^{n\times n}$, $X\succeq Y$ (or $Y\preceq X$) and $X\succ Y$ (or $Y\prec X$) mean that $X-Y$ is positive (or negative) semidefinite and definite, respectively.

\textbf{Inner Products and Norms.}\; For a matrix $M\in\R^{n\times n}$, $\innprod{\cdot}{\cdot}_{M}:\R^{m\times n}\times \R^{m\times n}\rightarrow \R$ denotes the function
\vspace{-0.25em}
\begin{equation}\label{eq::innerM}
	\innprod{X}{Y}_M\triangleq \trace(XMY^\transpose)
\end{equation}
where $X,\,Y\in \R^{m\times n}$. If $M$ is the identity matrix, $\innprod{\cdot}{\cdot}_M$ is also represented as $\innprod{\cdot}{\cdot}:\R^{m\times n}\times\R^{m\times n}\rightarrow \R$ such that
\vspace{-0.15em}
\begin{equation}\label{eq::inner}
\innprod{X}{Y}\triangleq\trace(XY^\transpose).
\vspace{-0.1em}
\end{equation}
For a positive semidefinite matrix $M\in\R^{n\times n}$, $\|\cdot\|_M:\R^{m\times n}\rightarrow\R^+$ denotes the function
\vspace{-0.15em}
\begin{equation}\label{eq::normM}
\|X\|_M\triangleq\sqrt{\trace(X M X^\transpose)}
\vspace{-0.1em}
\end{equation}
where $X\in \R^{m\times n}$.  Also, $\|\cdot\|$ denotes the Frobenius norm of matrices and vectors, and $\|\cdot\|_2$ denotes the induced $2$-norms of matrices and linear operators.  

\textbf{Riemannian Geometry.}\; If $F(\cdot):\R^{m\times n}\rightarrow\R $ is a function, $\mathcal{M}\subset \R^{m\times n}$ is a Riemannian manifold and $X\in \mathcal{M}$, then {\highlight $\nabla F(X)$ and $\mathrm{grad}\, F(X)$ denote the Euclidean and Riemannian gradients, respectively. }
\vspace{0.15em}

\text{\textbf{Graph Theory.}}\quad  PGO is represented as a directed graph $\aGG=(\VV,\,\aEE)$ where $\VV$ and $\EE$ are the sets of vertices and edges, respectively \cite{rosen2016se}. In distributed PGO, each vertex is described as  an ordered pair $(\alpha,\,i)\in\VV$ where $\alpha$ is the node index and $i$ the local index of the vertex within node $\alpha$. For any  nodes $\alpha$ and $\beta$ in distributed PGO, $\aEE^{\ab}$ denotes the set of edges between nodes $\alpha$ and $\beta$: 
\vspace{-0.15em}
\begin{equation}\label{eq::aEE}
	\aEE^{\ab}\triangleq\{(i,\,j)|((\alpha,\,i),\,(\beta,\,j))\in\aEE\};
\vspace{-0.15em}
\end{equation} 
and $\NN_-^\alpha$  denotes the set of nodes with edges from node $\alpha$:
\vspace{-0.15em}
\begin{equation}\label{eq::NN-}
	\NN_-^\alpha\triangleq\{\beta|\aEE^{\ab}\neq \emptyset\text{ and }\alpha\neq\beta\};
\vspace{-0.15em}
\end{equation} 
and $\NN_+^\alpha$ denotes  the set of nodes with edges to node $\alpha$:
\vspace{-0.15em}
\begin{equation}\label{eq::NN+}
	\NN_+^\alpha\triangleq\{\beta|\aEE^{\ba}\neq \emptyset\text{ and }\alpha\neq\beta\};
\vspace{-0.15em}
\end{equation} 
and $\NN^\alpha$ denotes the set of nodes with edges  from or to node $\alpha$:
\vspace{-0.15em}
\begin{equation}\label{eq::NN}
\!	\NN^\alpha\!\triangleq \NN_-^\alpha \cup \NN_+^\alpha \triangleq \{\beta|\aEE^{\ab}\!\neq\! \emptyset\text{ or }\! \aEE^{\ba}\!\neq\! \emptyset\text{ and }\alpha \!\neq\! \beta\}. 
\vspace{-0.1em}
\end{equation}

\textbf{Optimization.}\quad For optimization variables $X$, $X^\alpha$, $R^\alpha$, $t^\alpha$, etc., the notation $\Xk$, $X^{\ak}$, $R^{\ak}$, $t^{\ak}$, etc. denotes the $\sk$-th iterate of corresponding optimization variables.

\section{Problem Formulation}\label{section::problem}
\subsection{Distributed Pose Graph Optimization}
In distributed PGO \cite{tron2014distributed,choudhary2017distributed,tian2019distributed}, we are given $|\AA|$ nodes $\AA\triangleq\{1,\,2,\,\cdots,\, |\AA|\}$ and each node $\alpha\in\AA$ has $n_\alpha$ poses $g_{1}^\alpha$, $g_{2}^\alpha$, $\cdots$, $g_{n_\alpha}^\alpha\in SE(d)$. Let $g_{(\cdot)}^\alpha\triangleq(t_{(\cdot)}^\alpha,\,R_{(\cdot)}^\alpha)$ where $t_{(\cdot)}^\alpha\in\R^d$ is the translation and $R_{(\cdot)}^\alpha\in SO(d)$ the rotation.  We consider the problem of estimating unknown poses $g_{1}^\alpha$, $g_{2}^\alpha$, $\cdots$, $g_{n_\alpha}^\alpha\in SE(d)$ for all the nodes $\alpha\in\AA$ given intra-node noisy measurements $\tilde{g}_{ij}^{\alpha\alpha}\triangleq(\nt_{ij}^{\alpha\alpha},\,\nR_{ij}^{\alpha\alpha})\in SE(d)$ of the relative pose
\vspace{-0.25em}
\begin{equation}\label{eq::gijaa}
g_{ij}^{\alpha\alpha} \triangleq \big({g_i^\alpha}\big)^{-1} g_j^\alpha\in SE(d)
\end{equation}
within a single node $\alpha$, and inter-node noisy measurements $\tilde{g}_{ij}^{\alpha\beta}\triangleq(\nt_{ij}^{\alpha\beta},\,\nR_{ij}^{\alpha\beta})\in SE(d)$ of the relative pose
\begin{equation}\label{eq::gijab}
g_{ij}^{\alpha\beta} \triangleq \big({g_i^\alpha}\big)^{-1} g_j^\beta\in SE(d)
\end{equation}
between different nodes $\alpha\neq\beta$.  In \cref{eq::gijaa,eq::gijab}, note that $\nt_{ij}^{\alpha\alpha}$ and $\nt_{ij}^{\alpha\beta}\in\R^d$ are translational measurements, and $\nR_{ij}^{\alpha\alpha}$ and $\nR_{ij}^{\alpha\beta}\in SO(d)$ are rotational measurements.

%Following \cite{rosen2016se}, we model distributed PGO as a directed graph $\aGG\triangleq(\VV,\aEE)$ whose vertices are ordered pairs consisting of node index, e.g., $\alpha$ and $\beta$  and pose index, e.g., $i$ and $j$. In the directed graph $\aGG$, the vertex $(\alpha,\,i)\in\VV$ is in one-to-one correspondence with the unknown pose $g_i^\alpha\in SE(d)$ and the directed edge $((\alpha,\,i),\,(\beta,\,j))\in \aEE$ is in one-to-one correspondence with the noisy measurement $\tilde{g}_{ij}^{\ab}\in SE(d)$. Note that $\aEE^{\ab}$, $\NN_-^\alpha$, $\NN_+^\alpha$ and $\NN^\alpha$ in \cref{eq::aEE,eq::NN,eq::NN+,eq::NN-} are well defined for distributed PGO.

Following \cref{eq::aEE,eq::NN-,eq::NN+,eq::NN}, we represent distributed PGO as  a directed graph  $\aGG=(\VV,\,\aEE)$ such that  unknown pose $g_i^\alpha\in SE(d)$ and noisy measurement $\tilde{g}_{ij}^{\ab}\in SE(d)$ have one-to-one correspondence to vertex $(\alpha,\,i)\in\VV$   and  directed edge $((\alpha,\,i),\,(\beta,\,j))\in \aEE$, respectively.  We refer nodes $\alpha$ and $\beta\in\AA$ as neighbors as long as either $\aEE^{\ab}\neq\emptyset$ or $\aEE^{\ba}\neq\emptyset$. Then,  $\NN_-^\alpha$ and $\NN_+^\alpha$ are the sets of neighbors with a directed edge from and to node $\alpha$, respectively, and $\NN^\alpha$ is the set of neighbors with a directed edge connected to node $\alpha$.

In the rest of this paper, we make the following assumption that each node can communicate with its neighbors and the network topology is unchanged during optimization. These assumptions  are common in distributed PGO \cite{choudhary2017distributed,tian2019distributed,tron2014distributed,eric2020geod}.
\begin{assumption}\label{assumption::neighbor}
	Each node $\alpha$ can communicate with its neighbors $\beta\in\NN^\alpha$ and the network topology is fixed.
\end{assumption}
\vspace{-1.25em}

\subsection{Loss Kernels}
In practice, it is inevitable that there exist inter-node measurements that are outliers resulting from false loop closures. These outliers adversely affect the overall  performance of distributed PGO. To address this issue, it is popular to use non-trivial loss kernels---e.g., Huber and Welsch losses---to enhance the robustness of distributed PGO \cite{agarwal2013robust,carlone2018convex,barron2019cvpr}.

In this paper, we make the following assumption that applies to a broad class of loss kernels $\rho(\cdot):\R^+\rightarrow\R$ in robotics and computer vision.
\vspace{-0.25em}
\begin{assumption}\label{assumption::loss}
	The loss kernel $\rho(\cdot):\R^+\rightarrow\R$ satisfies the following properties:
	\begin{enumerate}[(a)]
		\item $\rho(s)\geq 0$ for any $s\in\R^+$ and the equality ``$=$'' holds if and only if $s=0$;
		\item $\rho(\cdot):\R^+\rightarrow\R$ is continuously differentiable;\label{assumption::loss_cont}
		\item $\rho(\cdot):\R^+\rightarrow \R$ is a concave function;\label{assumption::loss_mono}
		\item $0\leq\nabla\rho(s)\leq 1$ for any $s\in\R^+$ and $\nabla\rho(0)=1$;\label{assumption::loss_drho}
		\item $\varphi(\cdot):\R^{m\times n}\rightarrow \R$ with $\varphi(X)\triangleq\rho(\|X\|^2)$ has Lipschitz continuous gradient, i.e., there exists $\mu>0$ such that $\|\nabla\varphi(X)-\nabla\varphi(X')\|\leq \mu\cdot\|X-X'\|$ for any $X,\,X'\in\R^{m\times n}$. \label{assumption::loss_L}
	\end{enumerate}
\end{assumption}

In the following, we present some examples of loss kernels (see \cref{fig::robust_loss}) satisfying \cref{assumption::loss}.

\begin{example}[Trivial Loss]\label{example::trivial}
	\begin{equation}\label{eq::trivial}
		\rho(s) =s.
	\end{equation}
\end{example}
\begin{example}[Huber Loss]\label{example::huber}
	\begin{equation}\label{eq::huber}
		\rho(s) =\begin{cases}
			s, & |s|\leq a,\\
			2\sqrt{a|s|}-a, & |s|\geq a
		\end{cases}
	\end{equation}
	where $a>0$.
\end{example}
\begin{example}[Welsch Loss]\label{example::GM}
	\begin{equation}\label{eq::geman}
		\rho(s) = a - a\exp\left(-\frac{s}{a}\right)
	\end{equation}
	where $a>0$.
\end{example}

\begin{figure}
	\centering
	\includegraphics[width=0.3\textwidth]{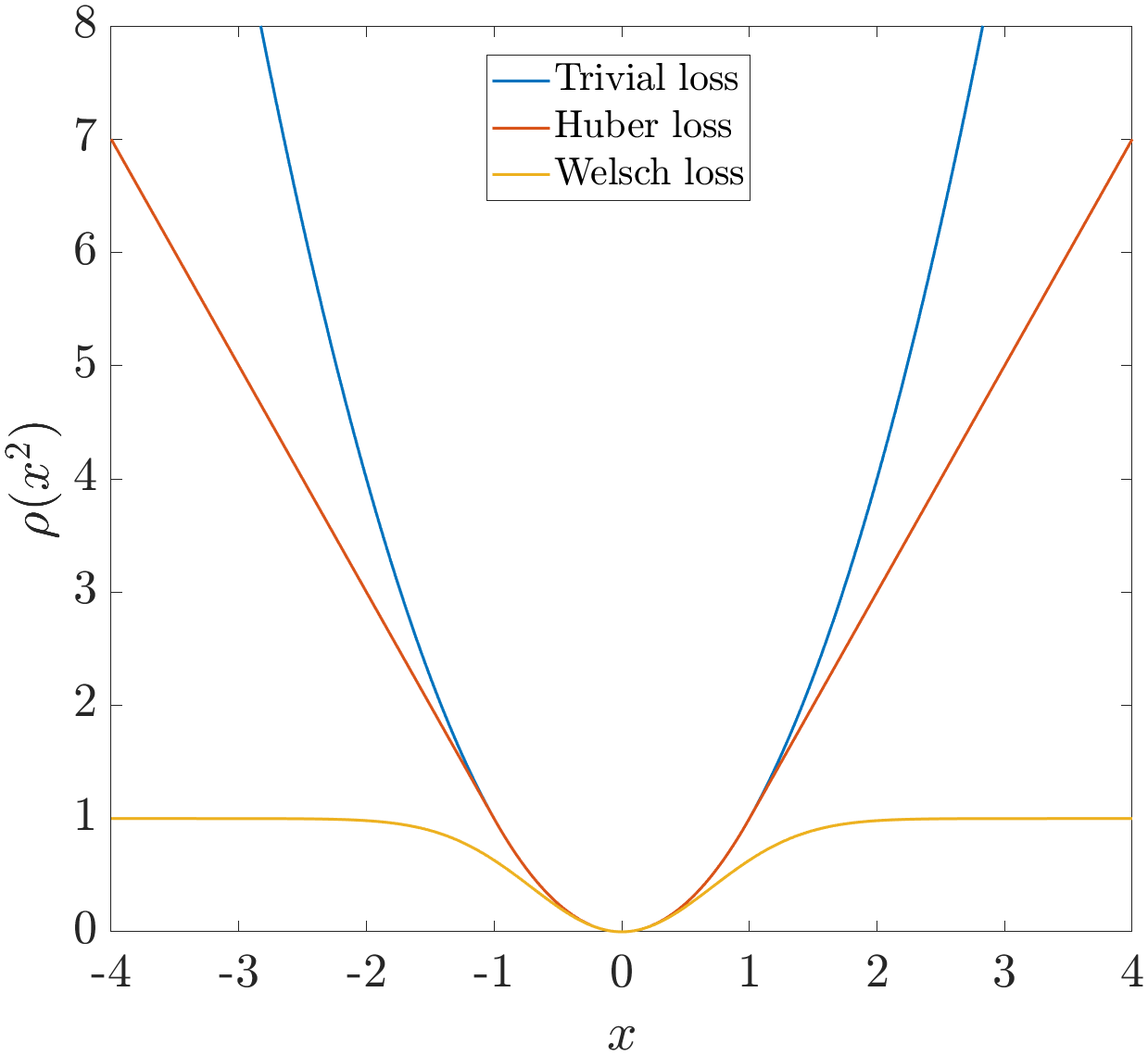}
	\caption{$\rho(x^2)$ for the trivial, Huber, Welsch loss kernels.}\label{fig::robust_loss}
	\vspace{-1em}
\end{figure}

\vspace{-0.5em}
\subsection{Objective Function}
Recall that each node $\alpha\in\AA$ has $n_\alpha$ unknown poses $g_{1}^\alpha$, $g_{2}^\alpha$, $\cdots$, $g_{n_\alpha}^\alpha\in SE(d)$. For notational simplicity, we define $\XXa$ and $\XX$ as
\begin{equation}
\nonumber
\XXa\triangleq \R^{d\times n_\alpha}\times SO(d)^{n_\alpha}
\end{equation}
and
\begin{equation}
	\nonumber
	\XX \triangleq  \XX^1\times\cdots\times \XX^{|\AA|} \subset \R^{d\times (d+1)n},
\end{equation}
respectively, where $n\triangleq\sum_{\alpha\in\AA} n_\alpha$. Furthermore, we represent $g_i^\alpha\in SE(d)$, i.e., the $i$-th pose of node $\alpha\in\AA$, as a $d\times(d+1)$ matrix
\begin{equation}\label{eq::Xai}
	X_i^\alpha \triangleq \begin{bmatrix}
		t_i^\alpha & R_i^\alpha
	\end{bmatrix}\in SE(d)\subset \R^{d\times(d+1)},
\end{equation}
represent $(g_{1}^\alpha$, $g_{2}^\alpha$, $\cdots$, $g_{n_\alpha}^\alpha)\in SE(d)^{n_\alpha}$, i.e., all the poses of node $\alpha\in\AA$, as an element of $\XXa$ as well as a $d\times (d+1)n_\alpha$ matrix
\begin{equation}\label{eq::Xa}
	X^\alpha\triangleq \begin{bmatrix}
		t^\alpha & R^\alpha
	\end{bmatrix}\in \XXa \subset \R^{d\times (d+1)n_\alpha},
\end{equation}
where
$$
\vphantom{\Big\{}t^\alpha\triangleq\begin{bmatrix}
	t_1^\alpha & \cdots & t_{n_\alpha}^\alpha
\end{bmatrix}\in \R^{d\times n_\alpha}
$$
and
$$
R^\alpha\triangleq\begin{bmatrix}
	R_1^\alpha & \cdots & R_{n_\alpha}^\alpha
\end{bmatrix}\in SO(d)^{n_\alpha}\subset\R^{d\times dn_\alpha},
$$
and represent $\{(g_{1}^\alpha$, $g_{2}^\alpha$, $\cdots$, $g_{n_\alpha}^\alpha)\}_{\alpha\in\AA}\in SE(d)^{n}$, i.e., all the poses of distributed PGO, as an element of $\XX$ as well as a $d\times(d+1)n$ matrix
\begin{equation}\label{eq::Xall}
	X\triangleq\begin{bmatrix}
		X^1 & \cdots & X^{|\AA|}
	\end{bmatrix}\in \XX\subset \R^{d\times(d+1)n}.
\end{equation}

\begin{remark}
	$\XXa$ and $\XX$ are by definition homeomorphic to $SE(d)^{n_\alpha}$ and $SE(d)^n$, respectively. Thus, $\Xa\in\XXa$ and $X\in\XX$ are sufficient to represent elements of $SE(d)^{n_\alpha}$ and $SE(d)^n$.
\end{remark}

Following \cite{rosen2016se,fan2020mm,fan2019proximal}, distributed PGO can be formulated as an optimization problem on $X=\begin{bmatrix}
	X^1 & \cdots & X^{|\AA|}
\end{bmatrix}\in \XX$:
\begin{problem}[Distributed Pose Graph Optimization]
\begin{equation}\label{eq::pgo}
	\min_{X\in \XX} F(X).
\end{equation}
The objective function $F(X)$ in \cref{eq::pgo} is defined as
\begin{multline}\label{eq::obj}
F(X)\triangleq \sum_{\alpha\in\AA}\sum_{(i,j)\in \aEE^{\alpha\alpha}}\frac{1}{2}\Big[\kappa_{ij}^{\alpha\alpha}\|R_i^\alpha \nR_{ij}^{\alpha\alpha} -R_j^\alpha\|^2 +\\ \tau_{ij}^{\alpha\alpha}\|R_i^\alpha \nt_{ij}^{\alpha\alpha}+t_i^\alpha - t_j^\alpha\|^2\Big]+\\
\sum_{\substack{\alpha,\beta\in\AA,\\
		\alpha\neq \beta}}\sum_{(i,j)\in \aEE^{\alpha\beta}}\frac{1}{2}\Big[\rho\left(\kappa_{ij}^{\alpha\beta}\|R_i^\alpha \nR_{ij}^{\alpha\beta} -R_j^\beta\|^2\right. +\\ 
\left.\tau_{ij}^{\alpha\beta}\|R_i^\alpha \nt_{ij}^{\alpha\beta}+t_i^\alpha - t_j^\beta\|^2\right)\Big],
\end{multline}
where $\kappa_{ij}^{\alpha\alpha}$, $\tau_{ij}^{\alpha\alpha}$, $\kappa_{ij}^{\alpha\beta}$, $\tau_{ij}^{\alpha\beta}$ are the weights and $\rho(\cdot):\R^+\rightarrow\R$ is the loss kernel. 
\end{problem}

For notational simplicity, $F(X)$ in \cref{eq::obj} can be also rewritten as  
\begin{multline}\label{eq::F}
	F(X)= \sum_{\alpha\in\AA}\sum_{(i,j)\in \aEE^{\alpha\alpha}}F_{ij}^{\aa}(X)+\\
	\sum_{\substack{\alpha,\beta\in\AA,\\\alpha\neq \beta}}\sum_{(i,j)\in \aEE^{\alpha\beta}} F_{ij}^{\ab}(X)
	\vspace{-0.5em}
\end{multline}
where
\vspace{-0.5em}
\begin{subequations}\label{eq::Faaabij}
\begin{multline}\label{eq::Fij2}
	F_{ij}^{\aa}(X)\triangleq\frac{1}{2}\kappa_{ij}^{\aa}\|R_i^\alpha\nR_{ij}^{\aa} -R_j^\alpha\|^2 +\\ 
	\frac{1}{2}\tau_{ij}^{\aa}\|R_i^\alpha \nt_{ij}^{\aa}+t_i^\alpha - t_j^\alpha\|^2,
\vspace{-0.5em}
\end{multline}
\begin{multline}\label{eq::Fab2}
	F_{ij}^{\ab}(X)\triangleq\frac{1}{2}\rho\Big(\kappa_{ij}^{\alpha\beta}\|R_i^\alpha\nR_{ij}^{\alpha\beta} -R_j^\beta\|^2 +\\ 
	\frac{1}{2}\tau_{ij}^{\alpha\beta}\|R_i^\alpha \nt_{ij}^{\alpha\beta}+t_i^\alpha - t_j^\beta\|^2\Big).
\end{multline}
\end{subequations}
Note that $F_{ij}^{\aa}(X)$ and $F_{ij}^{\ab}$ corresponds to intra- and inter-node measurements, respectively.

In the next sections, we will present MM methods for distributed PGO, which is the major contribution of this paper.

\section{The Majorization of Loss Kernels}\label{section::loss}
In this section, we  present surrogate functions majorizing the loss kernels $\rho(\cdot)$. The resulting surrogate functions lead to an intermediate upper bound of distributed PGO while attaining the same value as the original objective function at each iterate.

It is straightforward to show that there exists sparse and positive semidefinite matrices $\nM_{ij}^{\alpha\beta}\in\R^{(d+1)n\times(d+1)n}$ for either $\alpha=\beta$ or $\alpha\neq\beta$ such that
\begin{multline}\label{eq::Mab}
\frac{1}{2}\|X\|_{\nM_{ij}^{\ab}}^2=\frac{1}{2}\kappa_{ij}^{\alpha\beta}\|R_i^\alpha\nR_{ij}^{\alpha\beta} -R_j^\beta\|^2 +\\ 
\frac{1}{2}\tau_{ij}^{\alpha\beta}\|R_i^\alpha \nt_{ij}^{\alpha\beta}+t_i^\alpha - t_j^\beta\|^2.
\end{multline}
Then, in terms of intra-node measurements with $\alpha=\beta$ and inter-node measurements with $\alpha\neq\beta$, $F_{ij}^{\aa}(X)$ and $F_{ij}^{\ab}$ take the form of
\begin{subequations}\label{eq::FaaFab}
\begin{equation}\label{eq::Faa}
	F_{ij}^{\aa}(X)\triangleq\frac{1}{2}\|X\|_{\nM_{ij}^{\aa}}^2,
\end{equation}
\begin{equation}\label{eq::Fab}
	F_{ij}^{\ab}(X)\triangleq\frac{1}{2}\rho\big(\|X\|_{\nM_{ij}^{\ab}}^2\big).
\end{equation}
\end{subequations}
From \cref{eq::Fij2,eq::Fab2}, we obtain an upper bound of $F_{ij}^{\aa}(X)$ and $F_{ij}^{\ab}(X)$ as the following proposition states.

\begin{prop}\label{prop::upperM}
	Let $\Xk=\begin{bmatrix}
	X^{1(\sk)} & \cdots & X^{|\AA|(\sk)}
	\end{bmatrix}\in \XX$ with $\Xak\in \XX^\alpha$ be an iterate of \cref{eq::pgo}. If $\rho(\cdot):\R^+\rightarrow \R$ is a loss kernel that satisfies \cref{assumption::loss}, then we obtain
	\begin{multline}\label{eq::proxFij}
	\frac{1}{2}\wabijk\|X-\Xk\|_{\nM_{ij}^{\ab}}^2+\innprod{\nabla F_{ij}^{\ab}(\Xk)}{X-\Xk}+\\
	F_{ij}^{\ab}(\Xk)\geq F_{ij}^{\ab}(X)
	\end{multline}
	for any $X$ and $\Xk\in \R^{d\times(d+1)n}$, in which $\wabijk\in\R$ is defined as
	\begin{equation}\label{eq::wabij}
	\wabijk\triangleq\begin{cases}
		1, & \alpha =\beta,\\
		\nabla\rho\big(\|\Xk\|_{\nM_{ij}^{\ab}}^2\big), & \alpha\neq\beta.
	\end{cases}
	\end{equation} 
	In \cref{eq::proxFij}, the equality ``$=$'' holds as long as $X=\Xk$.
\end{prop}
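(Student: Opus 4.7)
The plan is to split into two cases according to whether $\alpha = \beta$ or $\alpha \neq \beta$, with the main workhorse being the quadratic expansion
$$\frac{1}{2}\|X\|_{\nM_{ij}^{\ab}}^2 = \frac{1}{2}\|X - \Xk\|_{\nM_{ij}^{\ab}}^2 + \innprod{\Xk \nM_{ij}^{\ab}}{X - \Xk} + \frac{1}{2}\|\Xk\|_{\nM_{ij}^{\ab}}^2,$$
which is valid since $\nM_{ij}^{\ab}$ is symmetric positive semidefinite. This identity will supply both the quadratic term and the linear (inner-product) term appearing in \cref{eq::proxFij}.

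For the case $\alpha = \beta$, no loss kernel is applied: by \cref{eq::Faa} the function $F_{ij}^{\aa}(X)$ equals $\frac{1}{2}\|X\|_{\nM_{ij}^{\aa}}^2$ exactly, and differentiating gives $\nabla F_{ij}^{\aa}(\Xk) = \Xk \nM_{ij}^{\aa}$. Since \cref{eq::wabij} sets $\wabijk = 1$ here, substituting these two identifications into the quadratic expansion above immediately reproduces \cref{eq::proxFij} with equality for every $X$, so nothing more is required.

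For the case $\alpha \neq \beta$ the key ingredient is the concavity of $\rho(\cdot)$ from \cref{assumption::loss}\ref{assumption::loss_mono}, which yields the first-order upper bound $\rho(s') \leq \rho(s) + \nabla\rho(s)(s' - s)$ for any $s,\,s' \in \R^+$. I would specialize this to $s = \|\Xk\|_{\nM_{ij}^{\ab}}^2$ and $s' = \|X\|_{\nM_{ij}^{\ab}}^2$, then use the quadratic expansion to rewrite $s' - s$ as $\|X - \Xk\|_{\nM_{ij}^{\ab}}^2 + 2\innprod{\Xk \nM_{ij}^{\ab}}{X - \Xk}$. Multiplying through by $\frac{1}{2}$ produces the quadratic and linear pieces already appearing in \cref{eq::proxFij}; the remaining task is to identify the coefficients.

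The coefficient identification is where the chain rule enters: from $F_{ij}^{\ab}(X) = \frac{1}{2}\rho(\|X\|_{\nM_{ij}^{\ab}}^2)$ in \cref{eq::Fab}, one gets $\nabla F_{ij}^{\ab}(\Xk) = \nabla\rho(\|\Xk\|_{\nM_{ij}^{\ab}}^2)\cdot \Xk \nM_{ij}^{\ab}$, while \cref{eq::wabij} defines $\wabijk = \nabla\rho(\|\Xk\|_{\nM_{ij}^{\ab}}^2)$ in this regime; together these convert the concavity bound into exactly \cref{eq::proxFij}. The equality at $X = \Xk$ is automatic since both the quadratic identity and the concavity inequality are tight there. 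I do not expect any real obstacle beyond careful bookkeeping of the factor $\frac{1}{2}$ and verification that the chain-rule factor lines up with the definition of $\wabijk$; crucially, this proof uses only concavity of $\rho$ and does not invoke \cref{assumption::loss}\ref{assumption::loss_drho} or \ref{assumption::loss_L}, which are reserved for later propositions.
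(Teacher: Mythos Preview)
Your proposal is correct and follows essentially the same argument as the paper's proof: the same quadratic expansion of $\frac{1}{2}\|X\|_{\nM_{ij}^{\ab}}^2$, the same case split on $\alpha=\beta$ versus $\alpha\neq\beta$, the concavity bound $\rho(s')\leq\rho(s)+\nabla\rho(s)(s'-s)$ specialized to $s=\|\Xk\|_{\nM_{ij}^{\ab}}^2$ and $s'=\|X\|_{\nM_{ij}^{\ab}}^2$, and the same chain-rule identification of $\nabla F_{ij}^{\ab}(\Xk)$ and $\wabijk$. Your remark that only \cref{assumption::loss}\ref{assumption::loss_mono} is used here (not \ref{assumption::loss_drho} or \ref{assumption::loss_L}) is also accurate.
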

\begin{proof}
	See \hyperapp{appendix::B}{B}.
\end{proof}

Note that $F(X)$, as is shown in \cref{eq::F}, is equivalent to the sum of all $F_{ij}^{\aa}(X)$ and $F_{ij}^{\ab}(X)$. Then, an immediate upper bound of $F(X)$ resulting from \cref{prop::upperM} is
\vspace{-0.25em}
\begin{multline}\label{eq::proxF}
\frac{1}{2}\big\|X-\Xk\big\|_{\nMk}^2+\innprod{\nabla F(\Xk)}{X-\Xk}+\\
F(\Xk)\geq F(X)
\end{multline} 
in which $\nMk\in\R^{(d+1)n\times(d+1)n}$ is a positive semidefinite matrix that is defined as
\begin{multline}\label{eq::M}
\nMk\triangleq \sum_{\alpha\in\AA}\sum_{(i,j)\in \aEE^{\alpha\alpha}}\nM_{ij}^{\aa}+\\
\sum_{\substack{\alpha,\beta\in\AA,\\\alpha\neq \beta}}\sum_{(i,j)\in \aEE^{\alpha\beta}}\wabijk\cdot \nM_{ij}^{\ab}\in\R^{(d+1)n\times(d+1)n}.
\end{multline}
In addition, the equality ``$=$'' in \cref{eq::proxF} holds as long as $X=\Xk$. 

\begin{remark}
If the loss kernel  $\rho(\cdot)$ is non-trivial, $\wabijk$ is a function of $\Xk$ as defined in \cref{eq::wabij}, and $\nMk$ is a positive semidefinite matrix depending on $\Xk$ as well.
\end{remark}

It is obvious that \cref{eq::proxF} has $X^\alpha\in\XXa$ of different nodes coupled with each other, and as a result, is difficult to be used for distributed PGO. In spite of that, as is shown in the next sections, \cref{eq::proxF} is still useful for the development and analysis of our MM methods for distributed PGO.

\section{The Majorization of Distributed Pose Graph Optimization}\label{section::major_pgo}
In this section, following a similar procedure to our previous works \cite{fan2019proximal,fan2020mm}, we  present surrogate functions $G(X|\Xk)$ and $\lG(X|\Xk)$ that majorize the objective function $F(X)$. The surrogate functions $G(X|\Xk)$ and $\lG(X|\Xk)$ decouple unknown poses of different nodes, and thus, are critical to our MM methods for distributed PGO.

%Instead of using \cref{eq::proxF}, we need to seek for other surrogate functions that not only majorize $F(X)$ but are convenient for distributed PGO as well. 

\subsection{The Majorization of $F_{ij}^{\ab}(X)$}
For any matrices $B,\,C$ and $P\in \R^{m\times n}$, it can be shown that
\begin{equation}\label{eq::inequality}
\frac{1}{2}\|B-C\|_{\nM_{ij}^{\ab}}^2 \leq \|B-P\|_{\nM_{ij}^{\ab}}^2 + \|C-P\|_{\nM_{ij}^{\ab}}^2
\end{equation}
as long as $\nM_{ij}^{\ab}\in\R^{n\times n}$ is positive semidefinite, where ``$=$'' holds if
$$P=\frac{1}{2}B+\frac{1}{2}C.$$
If we let $P=\0$, \cref{eq::inequality} becomes
\begin{equation}\label{eq::inequality2}
\frac{1}{2}\|B-C\|_{\nM_{ij}^{\ab}}^2 \leq \|B\|_{\nM_{ij}^{\ab}}^2 + \|C\|_{\nM_{ij}^{\ab}}^2,
\end{equation}
{\highlight which holds for any $B$ and $C\in \R^{m\times n}$.} Applying \cref{eq::inequality2} on the right-hand side of \cref{eq::Mab}, we obtain
\begin{equation}\label{eq::inequality3}
\begin{aligned}
\frac{1}{2}\|X\|_{\nM_{ij}^{\ab}}^2\leq&\;\kappa_{ij}^{\alpha\beta}\|R_i^\alpha\nR_{ij}^{\alpha\beta}\|^2+\kappa_{ij}^{\alpha\beta}\|R_j^\beta\|^2+\\
&\;\tau_{ij}^{\alpha\beta}\|R_i^\alpha \nt_{ij}^{\alpha\beta}+t_i^\alpha\|^2+\tau_{ij}^{\alpha\beta}\|t_j^\beta\|^2\\
=&\kappa_{ij}^{\alpha\beta}\|R_i^\alpha\|^2+\kappa_{ij}^{\alpha\beta}\|R_j^\beta\|^2+\\
&\tau_{ij}^{\alpha\beta}\|R_i^\alpha \nt_{ij}^{\alpha\beta}+t_i^\alpha\|^2+\tau_{ij}^{\alpha\beta}\|t_j^\beta\|^2,
\end{aligned}
\end{equation}
where the last equality is due to $\big(\nR_{ij}^{\ab}\big)^\transpose\nR_{ij}^{\ab}=\nR_{ij}^{\ab}\big(\nR_{ij}^{\ab}\big)^\transpose=\I$. Furthermore, there exists a positive semidefinite matrix $\nH_{ij}^{\alpha\beta}\in \R^{(d+1)n\times (d+1)n}$ such that the right-hand side of \cref{eq::inequality3} can be rewritten as
\begin{equation}\label{eq::Hab}
\begin{aligned}
\frac{1}{2}\|X\|_{\nH_{ij}^{\ab}}^2
=&\kappa_{ij}^{\alpha\beta}\|R_i^\alpha\|^2+\kappa_{ij}^{\alpha\beta}\|R_j^\beta\|^2+\\
 &\tau_{ij}^{\alpha\beta}\|R_i^\alpha \nt_{ij}^{\alpha\beta}+t_i^\alpha\|^2+\tau_{ij}^{\alpha\beta}\|t_j^\beta\|^2,
\end{aligned}
\end{equation}
where $\nH_{ij}^{\ab}$ is a block diagonal matrix decoupling unknown poses of different nodes. Replacing the right-hand side of \cref{eq::inequality3} with \cref{eq::Hab} results in
\begin{equation}
\nonumber
 \frac{1}{2}\|X\|_{\nM_{ij}^{\ab}}^2 \leq \frac{1}{2}\|X\|_{\nH_{ij}^{\ab}}^2
\end{equation}
for any $X\in\R^{d\times(d+1)n}$, which suggests 
\begin{equation}\label{eq::MH}
\nH_{ij}^{\ab}\succeq\nM_{ij}^{\ab}.
\end{equation}
With $\nH_{ij}^{\ab}\in\R^{(d+1)n\times(d+1)n}$ in \cref{eq::MH,eq::Hab}, we define $E_{ij}^{\ab}(\cdot|\Xk):\R^{d\times(d+1)n}\rightarrow \R$:
\begin{multline}\label{eq::Eab}
E_{ij}^{\ab}(X|\Xk)\triangleq \frac{1}{2}\wabijk\|X-\Xk\|_{\nH_{ij}^{\ab}}^2+\\
\innprod{\nabla F_{ij}^{\ab}(\Xk)}{X-\Xk}+ F_{ij}^{\ab}(\Xk),
\end{multline}
where $\wabijk$ is given in \cref{eq::wabij}. From the equation above, it can be concluded that $E_{ij}^{\ab}(X|\Xk)$ majorizes $F_{ij}^{\ab}(X)$  as the following proposition states, which is important for the construction of surrogate functions for distributed PGO.

\begin{prop}\label{prop::upper}
	Given any nodes $\alpha,\,\beta\in \AA$ with either $\alpha = \beta$ or $\alpha\neq\beta$, if $\rho(\cdot):\R^+\rightarrow \R$ is a loss kernel that satisfies \cref{assumption::loss}, then we obtain
	\begin{equation}\label{eq::EF}
	E_{ij}^{\ab}(X|\Xk)\geq F_{ij}^{\ab}(X).
	\end{equation} 
	for any $X\in \R^{d\times(d+1)n}$. In the equation above, the equality ``$=$'' holds if $X=\Xk$.
\end{prop}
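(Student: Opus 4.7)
The plan is to chain two observations already established in the excerpt to sandwich $F_{ij}^{\ab}(X)$ below $E_{ij}^{\ab}(X|\Xk)$. First, I would replace the $\nH_{ij}^{\ab}$-weighted quadratic term appearing in the definition of $E_{ij}^{\ab}(X|\Xk)$ (see \cref{eq::Eab}) with the corresponding $\nM_{ij}^{\ab}$-weighted quadratic, using the positive semidefinite ordering $\nH_{ij}^{\ab}\succeq \nM_{ij}^{\ab}$ from \cref{eq::MH}. For this step I need the coefficient multiplying the quadratic to be nonnegative: when $\alpha=\beta$ we have $\wabijk=1$ by definition in \cref{eq::wabij}, and when $\alpha\neq\beta$ we have $\wabijk=\nabla\rho(\|\Xk\|_{\nM_{ij}^{\ab}}^2)\in[0,1]$ by \cref{assumption::loss}\ref{assumption::loss_drho}. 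Hence in both cases
\begin{equation*}
E_{ij}^{\ab}(X|\Xk)\;\geq\; \tfrac{1}{2}\wabijk\|X-\Xk\|_{\nM_{ij}^{\ab}}^2+\innprod{\nabla F_{ij}^{\ab}(\Xk)}{X-\Xk}+F_{ij}^{\ab}(\Xk).
\end{equation*}

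Next, I would invoke \cref{prop::upperM}, which is precisely the statement that the right-hand side above upper-bounds $F_{ij}^{\ab}(X)$ for any $X,\Xk\in\R^{d\times(d+1)n}$. Composing the two inequalities gives $E_{ij}^{\ab}(X|\Xk)\geq F_{ij}^{\ab}(X)$, which is the claim in \cref{eq::EF}. Note that this argument treats the cases $\alpha=\beta$ and $\alpha\neq\beta$ uniformly because \cref{prop::upperM} already handled them separately, and because the weight $\wabijk$ is nonnegative in both regimes.

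Finally, to verify the tightness claim, I would substitute $X=\Xk$ on both sides. In $E_{ij}^{\ab}(X|\Xk)$ the quadratic term $\tfrac{1}{2}\wabijk\|X-\Xk\|_{\nH_{ij}^{\ab}}^2$ vanishes and the inner product $\innprod{\nabla F_{ij}^{\ab}(\Xk)}{X-\Xk}$ also vanishes, leaving $E_{ij}^{\ab}(\Xk|\Xk)=F_{ij}^{\ab}(\Xk)$, as required.

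Honestly, there is no serious obstacle here; the proposition is almost a corollary of \cref{prop::upperM} combined with $\nH_{ij}^{\ab}\succeq\nM_{ij}^{\ab}$. The only subtle point worth flagging is ensuring that $\wabijk\geq 0$ so that enlarging the quadratic weight from $\nM_{ij}^{\ab}$ to $\nH_{ij}^{\ab}$ preserves the direction of the inequality; this is exactly where \cref{assumption::loss}\ref{assumption::loss_drho} enters and is why the concavity/monotonicity hypotheses on $\rho$ are needed in the majorization step even though they were already used inside \cref{prop::upperM}.
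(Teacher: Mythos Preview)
Your proposal is correct and follows essentially the same approach as the paper's proof: use $\nH_{ij}^{\ab}\succeq\nM_{ij}^{\ab}$ together with $\wabijk\geq 0$ to pass from $E_{ij}^{\ab}(X|\Xk)$ down to the quadratic majorant in \cref{prop::upperM}, then invoke \cref{prop::upperM} to reach $F_{ij}^{\ab}(X)$, and finally check equality at $X=\Xk$ by direct substitution. If anything, you are slightly more explicit than the paper in flagging that the nonnegativity of $\wabijk$ (via \cref{assumption::loss}\ref{assumption::loss_drho}) is what makes the first inequality go the right way.
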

\begin{proof}
	See \hyperapp{appendix::C}{C}.
\end{proof}

\subsection{The Majorization of $F(X)$}
From \cref{prop::upper}, it is straightforward to construct surrogate functions majorizing $F(X)$ in \cref{eq::obj} as the following proposition states.

\begin{prop}\label{prop::G}

	Let $X^{(\sk)}=\begin{bmatrix}
	X^{1(\sk)} & \cdots & X^{|\AA|(\sk)}
	\end{bmatrix}\in \XX$ with $X^{\alpha(\sk)}\in \XX^\alpha$ be an iterate of $X\in \XX$ in \cref{eq::pgo}. Suppose $G(\cdot|\Xk): \R^{d\times(d+1)n}\rightarrow\R$ is a function:
	\begin{multline}\label{eq::G}
		G(X|\Xk)\triangleq \sum_{\alpha\in\AA}\sum_{(i,j)\in \aEE^{\alpha\alpha}}F_{ij}^{\aa}(X)+\\
		\sum_{\substack{\alpha,\beta\in\AA,\\
				\alpha\neq \beta}}\sum_{(i,j)\in \aEE^{\alpha\beta}} E_{ij}^{\ab}(X|\Xk)+\frac{\xi}{2}\big\|X-\Xk\big\|^2
	\end{multline}
	where $\xi\in\R$ and $\xi\geq 0$. Then, we have the following results:
{\highlight
	\begin{enumerate}[(a)]
	\item \label{prop::G1}  For any $X\in\R^{d\times(d+1)}$ and $X^{(\sk)}\in\XX$, 	
	\begin{equation}\label{eq::GF}
		G(X|X^{(\sk)})\geq F(X)
	\end{equation}
	where the equality ``$=$'' holds if  $X=X^{(\sk)}$.
	\item \label{prop::G2} $G(X|\Xk)$ is equivalent to
	\begin{equation}\label{eq::G2}
		G(X|\Xk)=\sum_{\alpha\in\AA} G^{\alpha}(\Xa|\Xk)  + F(\Xk),
	\end{equation}
	where $G^\alpha(\Xa|X^{(\sk)})$ is a function of $\Xa\in\XXa$ within a single node $\alpha$. 
	\item\label{prop::G3}  For any node $\alpha\in\AA$, there exists positive-semidefinite matrices $\nGamma^{\ak}\in \R^{(d+1)n_\alpha\times (d+1)n_\alpha}$ such that 
	\begin{multline}\label{eq::GMa}
	G^\alpha(\Xa|X^{(\sk)}) \triangleq \frac{1}{2}\|\Xa-\Xak\|_{\nGamma^{\ak}}^2+\\
	\big\langle\nabla_{\Xa} F(X^{(\sk)}),\,{\Xa-\Xak}\big\rangle
	\end{multline}
	where $\nabla_{\Xa} F(X^{(\sk)})$ is the Euclidean gradient of $F(X)$ with respect to $\Xa\in \XXa$ at $\Xk\in\XX$.
	\end{enumerate}
}
\end{prop}
\begin{proof}
	See \hyperapp{appendix::D}{D}.
\end{proof}

Following \cref{prop::G}, we might further majorize $F(X)$ as well as $G(X|\Xk)$ by applying \cref{eq::EF} to \cref{eq::G} and replacing $F_{ij}^{\aa}(X|\Xk)$ with $E_{ij}^{\aa}(X|\Xk)$, which results in the following proposition.

\begin{prop}\label{prop::lG}
Let $X^{(\sk)}=\begin{bmatrix}
	X^{1(\sk)} & \cdots & X^{|\AA|(\sk)}
\end{bmatrix}\in \XX$ with $X^{\alpha(\sk)}\in \XX^\alpha$ be an iterate of $X\in \XX$ in \cref{eq::pgo}, and $X_i^{\ak}=\begin{bmatrix}
t^{\alpha(\sk)} & R^{\alpha(\sk)}
\end{bmatrix}\in SE(d)$ the corresponding iterate of $X_i^\alpha \in  SE(d)$. Suppose $\lG(\cdot|\Xk): \R^{d\times(d+1)n}\rightarrow\R$ is a function:
\vspace{-0.25em}
\begin{multline}\label{eq::lG}
\lG(X|\Xk)= \sum_{\alpha\in\AA}\sum_{(i,j)\in \aEE^{\alpha\alpha}}E_{ij}^{\aa}(X|\Xk)+\\
\sum_{\substack{\alpha,\beta\in\AA,\\
		\alpha\neq \beta}}\sum_{(i,j)\in \aEE^{\alpha\beta}} E_{ij}^{\ab}(X|\Xk) + \frac{\zeta}{2} \big\|X-\Xk\big\|^2
\end{multline}
where $\zeta\in\R$ and $\zeta\geq\xi\geq 0$. Note that $\xi\in \R$ is given in \cref{eq::G}. Then,  we have the following results:
{\highlight
\begin{enumerate}[(a)]
	\item\label{prop::lG1}  For any $X\in\R^{d\times(d+1)}$ and $X^{(\sk)}\in\XX$, 	
	\begin{equation}\label{eq::lGF}
		\lG (X|X^{(\sk)})\geq G(X|X^{(\sk)})\geq F(X)
	\end{equation}
	where the equality ``$=$'' holds if  $X=X^{(\sk)}$.
	\item\label{prop::lG2} $\lG(X|\Xk)$ is equivalent to
	\begin{equation}\label{eq::lG2}
		\begin{aligned}
			\lG(X|\Xk)&=\sum_{\alpha\in\AA} H^{\alpha}(\Xa|\Xk)  + F(\Xk)\\
							&=\sum_{\alpha\in\AA}\sum_{i=1}^{n_\alpha} \lG_i^{\alpha}(\Xa_i|\Xk) + F(\Xk)
		\end{aligned}
	\end{equation}
	where $\lG^\alpha(\Xa|X^{(\sk)})$ is a function of $\Xa\in\XXa$ within a single node $\alpha$, and  $\lG_i^\alpha(\Xa_i|X^{(\sk)})$ is a  function of a single pose $\Xa_i\in SE(d)\subset \R^{d\times (d+1)}$. 
	
	\item\label{prop::lG3} For any node $\alpha\in\AA$ and $i\!\in\!\{1,\,\cdots,\, n_\alpha\}$, there exists positive-semidefinite matrices $\lnGamma^{\ak}\in \R^{(d+1)n_\alpha\times (d+1)n_\alpha}$ and $\lnGamma_i^{\ak}\in \R^{(d+1)\times (d+1)}$ such that
	\vspace{-0.25em}
	\begin{multline}\label{eq::lGMa}
		\lG^\alpha(\Xa|X^{(\sk)}) =\frac{1}{2}\|\Xa-\Xak\|_{\lnGamma^{\ak}}^2+\\
		\big\langle\nabla_{\Xa} F(X^{(\sk)}),\,{\Xa-\Xak}\big\rangle,
	\end{multline}
	\vspace{-1.25em}
	\begin{multline}\label{eq::lGMai}
		\lG_i^\alpha(X_i^\alpha|X^{(\sk)}) =\frac{1}{2}\|X_i^\alpha-X_i^{\ak}\|_{\lnGamma_i^{\ak}}^2+\\
		\big\langle\nabla_{X_i^{\alpha}} F(X^{(\sk)}),\,{X_i^{\alpha}-X_i^{\ak}}\big\rangle
	\end{multline}
	 where $\nabla_{\Xa} F(X^{(\sk)})$ and $\nabla_{\Xa_i} F(X^{(\sk)})$ are the Euclidean gradients of $F(X)$ with respect to $\Xa\in\XXa$ and $\Xa_i\in SE(d)$ at $\Xk\in\XX$, respectively. 
\end{enumerate}
}
\end{prop}
\begin{proof}
The proof is similar to that of \cref{prop::G}.
\end{proof}

\begin{remark}
As a result of \cref{eq::lG2}, $\lG^\alpha(\Xa|\Xk)$ can be rewritten as the sum of $\lG_i^\alpha(\Xa_i|\Xk)$, i.e.,
\begin{equation}\label{eq::lGMsum}
	\lG^\alpha(\Xa|X^{(\sk)})=\sum_{i=1}^{n_\alpha} \lG_i^{\alpha}(\Xa_i|\Xk).
\end{equation}
Note that $\lG_i^\alpha(\Xa_i|\Xk)$ in \cref{eq::lG2,eq::lGMsum} relies on a single pose $X_i^\alpha\in SE(d)\subset \R^{d\times (d+1)}$. This will be later exploited  in \cref{section::mm,section::amm,section::ammc} to improve the computational efficiency of distributed PGO.
\end{remark}

\begin{remark}
Propositions \cref{prop::G,prop::lG}  indicate that $G(X|\Xk)$ and $H(X|\Xk)$ not only majorize $F(X)$ but also decouple  poses from different nodes through  $G^\alpha(\Xa|\Xk)$ and $H^\alpha(\Xa|\Xk)$, making it possible to implement majorization minimization methods for distributed PGO.
\end{remark}

\begin{remark}
	\highlight
	$\zeta$ and $\xi$  in \cref{eq::G,eq::lG} are important for the convergence analysis of MM methods for distributed PGO in \cref{section::amm,section::ammc,section::mm} . It is  recommended to be set $\zeta>\xi>0$ but close to zero such that $G(X|\Xk)$ and $\lG(X|\Xk)$ are tighter upper bounds of $F(X)$ and yield faster convergence.
\end{remark}

Recall that $G^\alpha(\Xa|\Xk)$ and $\lG^\alpha(\Xa|\Xk)$ in \cref{eq::GMa,eq::lGMa} rely on $\nGammaak$, $\lnGammaak$,
$\nabla_{\Xa} F(\Xk)$, which---according to \cref{eq::Faaabij,eq::F}---are only related to $F_{ij}^{\aa}(X)$, $F_{ij}^{\ab}(X)$, $F_{ji}^{\beta\alpha}(X)$.
Also, \cref{eq::Faaabij} indicates that $F_{ij}^{\aa}(X)$ depends on $X_i^\alpha$ and $X_j^\alpha$, while $F_{ij}^{\ab}(X)$ and $F_{ji}^{\beta\alpha}(X)$ on $X_i^\alpha$ and $X_j^\beta$. Therefore, $\nGammaak$, $\lnGammaak$,
$\nabla_{\Xa} F(\Xk)$ can be evaluated  as long as node $\alpha$ have access to $\Xb$ from its neighbor $\beta$. Furthermore, $G^\alpha(\Xa|\Xk)$ and  $\lG^\alpha(\Xa|X^{(\sk)})$  can be constructed in a distributed setting with one inter-node communication round between neighboring nodes $\alpha$ and $\beta$.

In the next sections, we will present MM methods for distributed PGO using $G(X|\Xk)$ and $\lG(X|\Xk)$ that are guaranteed to converge to first-order critical points.

\section{The Majorization Minimization Method for Distributed Pose Graph Optimization}\label{section::mm}
{\highlight In distributed optimization, MM methods are one of the most popular first-order optimization methods  \cite{hunter2004tutorial,sun2016majorization}. As mentioned before, MM methods solve an optimization problem by iteratively minimizing an upper bound of the objective function such that the objective value is nonincreasing. Recall that $G(X|\Xk)$ and $H(X|\Xk)$ majorize $F(X)$ and decouple poses from different nodes; see \cref{prop::G,prop::lG}, respectively. Therefore, we might make use of MM methods where distributed PGO is reduced to independent optimization problems that can be solved in parallel.  In \cref{section::mm::update}, we propose MM methods for distributed PGO using $G(X|\Xk)$ and $H(X|\Xk)$. Then, in \cref{section::mm::algorithm}, we present the algorithm and prove that the proposed method is guaranteed to converge to first-order critical points.}

\vspace{-0.75em}

\subsection{Update Rule}\label{section::mm::update}
According to \cref{prop::G,prop::lG}, $G(X|\Xk)$ and $\lG(X|\Xk)$ are surrogate functions majorizing $F(X)$:
\begin{equation}\label{eq::lGGFeq}
	\lG(X|\Xk) \geq G(X|\Xk) \geq F(X),
\end{equation}
\begin{equation}\label{eq::lGGFneq}
\lG(\Xk|\Xk) = G(\Xk|\Xk) = F(\Xk).
\end{equation}
Following the notion of MM methods \cite{hunter2004tutorial}, we propose the following update rule:
\begin{equation}\label{eq::minlG}
	\Xkh\gets\arg\min_{X\in \XX} \lG(X|\Xk),
\end{equation}
\begin{equation}\label{eq::minG}
	\Xkp\gets\arg\min_{X\in \XX} G(X|\Xk).
\end{equation}
Here, $\Xkh$  in \cref{eq::minlG} is first solved and used to initialize $\Xkp$ in \cref{eq::minG}. Also, \cref{eq::lG2,eq::G2} indicate that \cref{eq::minlG,eq::minG} are equivalent to  $|\AA|$ independent optimization problems of $\Xa\in\XX^\alpha$ within a single node $\alpha$:
\begin{equation}\label{eq::xlG}
	\Xakh\gets\arg\min_{\Xa\in \XXa}\lG^\alpha(\Xa|\Xk),\quad \forall\alpha\in\AA,
\end{equation}
\begin{equation}\label{eq::xG}
	\Xakp\gets\arg\min_{\Xa\in \XXa}G^\alpha(\Xa|\Xk),\quad \forall\alpha\in\AA,
\end{equation}
where  $\Xakh$ in \cref{eq::xlG} is the initial guess to solve $\Xakp$ in \cref{eq::xG}. We remark that \cref{eq::xlG,eq::xG} can be  solved within a single node $\alpha\in\AA$. Recalling from \cref{eq::lGMsum} that $H^\alpha(\Xa|\Xk)=\sum_{i=1}^{n_\alpha}H_i^{\alpha}(X_i^{\alpha}|\Xk)$, we further reduce \cref{eq::xlG} to $n\triangleq\sum_{\alpha\in\AA} n_\alpha$ independent optimization problems on a single pose $\Xa_i\in SE(d)$:
\vspace{-0.5em}
\begin{multline}\label{eq::xlGi}
	\Xakh_i\gets\arg\min_{X_i^\alpha\in SE(d)}\lG_i^\alpha(\Xa_i|\Xk),\\
	\quad\quad\forall\alpha\in\AA\text{ and } i\in \{1,\,\cdots,\,n_\alpha\}.
\end{multline}
In \hyperapp{appendix::K}{K}, we have shown that \cref{eq::xlGi} admits an efficient closed-form solution involving only matrix multiplication and singular value decomposition \cite{umeyama1991least}.

From \cref{eq::lGGFeq,eq::lGGFneq}, we conclude that \cref{eq::minlG,eq::minG,eq::xlG,eq::xG} result in
\begin{subequations}
	\begin{equation}\label{eq::FlG}
		F(\Xk)=\lG(\Xk|\Xk)\geq\\ \lG(\Xkh|\Xk)\geq F(\Xkh),
	\end{equation}
	\begin{equation}\label{eq::FG}
		F(\Xk)=G(\Xk|\Xk)\geq\\ G(\Xkp|\Xk)\geq F(\Xkp)
	\end{equation}
\end{subequations}
which indicate  $F(\Xkh)\leq F(\Xk)$ and $F(\Xkp)\leq F(\Xk)$. Therefore,  \cref{eq::minlG,eq::minG,eq::xlG,eq::xG} are a reasonable update rule for distributed PGO.  In particular, we remark that \cref{eq::minlG,eq::minG,eq::xlG,eq::xG} combine the strengths of our previous work \cite{fan2019proximal,fan2020mm}. Even though \cref{eq::minlG,eq::xlG}  are motivated by \cite{fan2019proximal},  \cref{eq::minG,eq::xG} make better use of the  information within a single node, and thus, take fewer iterations. In contrast to \cite{fan2020mm}, since \cref{eq::minlG,eq::xlG} have an efficient closed-form solution to \cref{eq::xlGi} that yields sufficient improvement,  the time-consuming local minimization of \cref{eq::minG,eq::xG} is avoided  as long as $\Xkp$ and $\Xakp$ are initialized with $\Xkh$ and $\Xakh$. Most importantly, as is shown later in \cref{prop::mm}, the proposed  updated rule of \cref{eq::minlG,eq::minG,eq::xlG,eq::xG} has provable convergence to first-order critical points.

\subsection{Algorithm}\label{section::mm::algorithm}

\setlength{\textfloatsep}{7pt}

\begin{algorithm}[t]
	\caption{The $\mm$ Method}
	\label{algorithm::mm}
	\begin{algorithmic}[1]
		\State\textbf{Input}: An initial iterate $X^{(0)}\in \XX$ and $\zeta \geq \xi \geq 0$.
		\State\textbf{Output}: A sequence of iterates $\{\Xk\}$ and $\{\Xkh\}$.\vspace{0.2em} 
		\vspace{0.1em}
		\For{$\sk\gets 0,\,1,\,2,\,\cdots$}
		\vspace{0.1em}
		\For{node $\alpha\gets 1,\,\cdots,\, |\AA|$}
		\vspace{0.1em}
		\State  retrieve $\Xbk$ from $\beta\in\NN_{\alpha}$ \label{line::alg1::comm}
		\vspace{0.1em}
		\State evaluate $\nGammaak$, $\lnGammaak$, $\nabla_{\Xa} F(\Xk)$  \label{line::grad_F}
		\vspace{0.1em}
		\State $\Xakh\gets \arg\min\limits_{\Xa\in\XXa }\lG^\alpha(\Xa|\Xk)$ using \cref{algorithm::lG}\label{line::xlG}
		\vspace{0.1em}
		\State $\Xakp\gets$ improve $\arg\min\limits_{\Xa\in\XXa }G^\alpha(\Xa|\Xk)$\label{line::mm_opt} with $\Xakh$ as the initial guess \label{line::xG}
		\EndFor
		\EndFor
	\end{algorithmic}
\end{algorithm}

\begin{algorithm}[t]
	\caption{Solve $\Xakh\gets \arg\min\limits_{\Xa\in \XXa}\lG^\alpha(\Xa|\Xk)$}
	\label{algorithm::lG}
	\begin{algorithmic}[1]
		\State\textbf{Input}: $\Xak$, $\lnGammaak$, $\nabla_{\Xa} F(\Xk)$.
		\vspace{0.1em}
		\State\textbf{Output}: $\Xakh$.\vspace{0.2em} 
		\For{$i\gets 1,\,\cdots,\, n_\alpha$}\label{line::xlG1}
		\vspace{0.1em}
		\State $\Xakh_i \gets \arg\min\limits_{\Xa_i\in SE(d)} \lG_i^\alpha(\Xa_i|\Xk)$ using \hyperapp{appendix::K}{K}\label{line::alg2::xak}
		\vspace{0.1em}
		\EndFor
		\vspace{0.1em}
		\State retrieve $\Xakh$ from $\Xakh_i$ in which $i = 1,\,\cdots,\,n_\alpha$\label{line::xlG2}
	\end{algorithmic}
\end{algorithm}

The proposed update rule results in the $\mm$ method for distributed PGO (\cref{algorithm::mm}). The outline of  $\mm$  is as follows:
\begin{enumerate}[leftmargin=0.45cm]
%\item In practice, $\xi > 0$ and $\zeta > 0$ in \cref{eq::G} and \cref{eq::lG} are recommended to be set close to zero such that $G(X|\Xk)$ and $\lG(X|\Xk)$ are tighter upper bounds of $F(X)$.
\item In line~\ref{line::alg1::comm} of \cref{algorithm::mm}, each node $\alpha$ performs one inter-node communication round to retrieve $\Xbk$ from its neighbors $\beta\in\NN_{\alpha}$. Note that that no other inter-node communication is required.
\item In lines~\ref{line::grad_F} of \cref{algorithm::mm}, each node $\alpha$ evaluates $\nGamma$, $\lnGamma$, $\nabla_{\Xa} F(\Xk)$ with $\Xak$ and $\Xbk$ where $\beta\in\NN^\alpha$ are neighbors of node $\alpha$. 
\item In line~\ref{line::xlG} of \cref{algorithm::mm}, we obtain the intermediate solution $\Xakh$ using \cref{algorithm::lG}. We have proved that the resulting $\Xakh$ is already sufficient to guarantee the convergence to first-order critical points.
\item In line~\ref{line::alg2::xak} of \cref{algorithm::lG}, there exists an exact and efficient closed-form solution to $\Xakh$ using \hyperapp{appendix::K}{K}.
%\item In line~\ref{line::xlG} of \cref{algorithm::mm}, $\min\limits_{\Xa\in\XXa }\lG^\alpha(\Xa|\Xk)$ has an efficient closed-form solution that only involves singular value decomposition and matrix multiplication.

\item In line~\ref{line::xG} of \cref{algorithm::mm}, we use $X^{\alpha(\sk+\shalf)}$ to initialize \cref{eq::xG}, and improve the final solution $\Xakp$ through iterative optimization such that $G^{\alpha}(\Xakp|\Xk)\leq G^{\alpha}(X^{\alpha(\sk+\shalf)}|\Xk)$. Note that $\Xakp$ does not have to be a local optimal solution to \cref{eq::xG}, nevertheless, $\Xakp$ is still expected to have a faster convergence than $X^{\alpha(\sk+\shalf)}$.
\end{enumerate}

\begin{remark}
	\highlight
	In line~\ref{line::alg1::comm} of \cref{algorithm::mm}, node $\alpha$ does not retrieve all the poses in $\Xbk$---only poses related to inter-node measurements are needed and exchanged. This also applies to line~\ref{line::alg4::comm} of \cref{algorithm::amm_c_x}, and lines~\ref{line::alg5::comm1} and \ref{line::alg5::comm} of \cref{algorithm::amm} in the following sections.
\end{remark}

\begin{remark}
	$\mm$  (\cref{algorithm::mm}) requires no inter-node communication except for  line~\ref{line::alg1::comm} of \cref{algorithm::mm} that is used to evaluate $\nGammaak$, $\lnGammaak$, $\nabla_{\Xa} F(\Xk)$, which, as mentioned before, can be distributed with one inter-node communication round between neighboring nodes $\alpha$ and $\beta$ without introducing any additional computation.
\end{remark}

Since  $\Xakh$ in \cref{eq::xlG} has a closed-form solution that can be efficiently computed, and \cref{eq::xG} does not require $\Xakp$ to be a local optimal solution, the overall computational efficiency of the $\mm$ method is significantly improved in contrast to \cite{fan2019proximal,fan2020mm}. More importantly, the $\mm$ method still converges to first-order critical points as long as the following assumption holds.

\begin{assumption}\label{assumption::mm}
For $\Xakp$ and $X^{\alpha(\sk+\shalf)}$, it is assumed that
\begin{equation}
G^{\alpha}(\Xakp|\Xk)\leq G^{\alpha}(X^{\alpha(\sk+\shalf)}|\Xk)
\end{equation}
for each node $\alpha=1,\,2\,\cdots,\, |\AA|$.
\end{assumption}

Note that \cref{assumption::mm} can be satisfied with ease as long as line~\ref{line::xG} of \cref{algorithm::mm} is initialized with $\Xakh$. Then, we have the following proposition about the convergence of $\mm$  (\cref{algorithm::mm}).

\begin{prop}\label{prop::mm}
If \cref{assumption::loss,assumption::neighbor,assumption::mm} hold, then for a sequence of $\{\Xk\}$ and $\{\Xkh\}$ generated by  \cref{algorithm::mm},  we have
\begin{enumerate}[(a)]
\item\label{prop::mm1} $F(\Xk)$ is nonincreasing as $\sk\rightarrow\infty$;
\item\label{prop::mm2}  $F(\Xk)\rightarrow F^\infty$ as $\sk\rightarrow\infty$;
\item\label{prop::mm3} $\|\Xkp-\Xk\|\rightarrow 0$ as $\sk\rightarrow\infty$ if $\xi>0$;
\item\label{prop::mm4} $\|\Xkh-\Xk\|\rightarrow 0$ as $\sk\rightarrow\infty$ if $\zeta > \xi > 0$;
\item\label{prop::mm5}  if $\zeta > \xi > 0$, then there exists $\epsilon > 0$ such that 
\begin{equation}
	\nonumber
	\min\limits_{0\leq\sk< \mathsf{K}}\|\grad\, F(\Xkh)\|\leq \sqrt{\frac{2}{\epsilon}\cdot\dfrac{F(X^{(0)})-F^\infty}{{\sK+1}}}
\end{equation}
for any $\mathsf{K}\geq 0$;
\item\label{prop::mm6}  if $\zeta > \xi> 0$, then $\grad\, F(\Xk)\rightarrow \0$ and $\grad\, F(\Xkh)\rightarrow \0$ as $\sk\rightarrow \infty$.
\end{enumerate}
\end{prop}
\begin{proof}
See \hyperapp{appendix::E}{E}.
\end{proof}

\begin{remark}
In contrast to other distributed PGO algorithms \cite{tian2019distributed,tron2014distributed,choudhary2017distributed,eric2020geod},  $\mm$  has the mildest conditions for convergence and apply to a broad class of loss kernels.
\end{remark}

\section{The Accelerated Majorization Minimization Method for Distributed Pose Graph Optimization with Master Node}\label{section::ammc}
{\highlight
	In the last several decades, a number of accelerated first-order optimization methods have been proposed \cite{nesterov1983method,nesterov2013introductory}. Even though most of them were originally developed for convex optimization, it has been recently found that these accelerated methods also work well for nonconvex optimization \cite{ghadimi2016accelerated,jin2018accelerated,li2015accelerated}. In our previous works \cite{fan2019proximal,fan2020mm}, we proposed to use Nesterov's method to accelerate  distributed PGO, which yield much faster convergence. Since  $\mm$  is a first-order optimization method, it is possible to exploit Nesterov's method for acceleration. In \cref{section::ammc::nesterov}, we implement Nesterov's method to accelerate MM methods for distributed PGO. Then, in \cref{section::ammc::adaptive}, we introduce the adaptive restart scheme \cite{o2015adaptive}  to guarantee the convergence if a master node exists. Lastly, in \cref{section::ammc::algorithm}, we propose the accelerated MM method for distributed PGO with master and  prove that such a method converges to first-order critical points.
}  

\vspace{-0.5em}
\subsection{Nesterov's Method}\label{section::ammc::nesterov}
According to \cref{prop::G,prop::lG}, \cref{eq::minG,eq::minlG} are {\highlight proximal operators} of  $F(X)$, making it possible to implement Nesterov's method \cite{nesterov1983method,nesterov2013introductory} for acceleration and resulting in the following update rule for $\Xakh$ and $\Xakp$:
\begin{equation}\label{eq::sk}
	s^{\akp}=\tfrac{\sqrt{4{s^{\ak}}^2+1}+1}{2},
\end{equation}
\begin{equation}\label{eq::gammak}
	\lambda^{\ak}= \tfrac{s^{\ak}-1}{s^{\akp}},
\end{equation}
\begin{equation}\label{eq::Yak}
	Y^{\ak}= X^{\ak}+\lambda^{\ak}\cdot\big(\Xak-\Xakm\big),
\end{equation}
\begin{equation}\label{eq::HYak}
	\Xakh=\arg\min\limits_{\Xa\in\XX^\alpha }\lG^\alpha(\Xa|\Yk),
\end{equation}
\begin{equation}\label{eq::GYak}
	\Xakp=\arg\min\limits_{\Xa\in\XX^\alpha }G^\alpha(\Xa|\Yk).
\end{equation}
In \cref{eq::HYak,eq::GYak}, $G^\alpha(\cdot|\Yk):\XX^\alpha\rightarrow\R$ and $\lG^\alpha(\cdot|\Yk):\XX^\alpha\rightarrow\R$ are surrogate functions at $\Yak$:
\begin{multline}\label{eq::GXY}
	G^\alpha(X^\alpha|Y^{(\sk)}) =\frac{1}{2}\|\Xa-\Yak\|_{\nGamma^{\ak}}^2+\\
	\big\langle\nabla_{\Xa} F(\Yk),\,{\Xa-\Yak}\big\rangle,
\end{multline}
\vspace{-2em}
\begin{multline}\label{eq::lGXY}
	\lG^\alpha(X^\alpha|Y^{(\sk)}) =\frac{1}{2}\|\Xa-\Yak\|_{\lnGamma^{\ak}}^2+\\
	\big\langle\nabla_{\Xa} F(\Yk),\,{\Xa-\Yak}\big\rangle,
\end{multline}
where $\nGamma^{\ak}$ and $\lnGamma^{\ak}$  are the same as these in $G^\alpha(\cdot|\Xk)$ and $\lG^\alpha(\cdot|\Xk)$ in \cref{eq::GMa,eq::lGMa}. 

The key insight of Nesterov's method is to exploit the momentum $\Xak-\Xakm$ for acceleration, which is essentially governed by \cref{eq::sk,eq::gammak,eq::Yak}. Note that Nesterov's method using \cref{eq::sk,eq::gammak,eq::Yak,eq::HYak,eq::GYak} is  equivalent to \cref{eq::xG,eq::xlG} when $s^{\ak}=1$ and $\lambda^{\ak}=0$, and then increasingly affected by the momentum as $s^{\ak}$ and $\lambda^{\ak}$ increase.

Nesterov's method is known to converge quadratically for convex optimization while the unaccelerated MM method only has linear convergence  \cite{nesterov1983method,nesterov2013introductory}. Even though distributed PGO is a nonconvex optimization problem, similar to \cite{fan2019proximal,fan2020mm},  \cref{eq::gammak,eq::Yak,eq::GYak,eq::HYak,eq::sk} using  Nesterov's method for acceleration  empirically have significant speedup while introducing  almost no extra computation or communication compared to the $\mm$ method. Thus, it is preferable to adopt Nesterov's method to accelerate distributed PGO.

\vspace{-0.5em}
\subsection{Adaptive Restart}\label{section::ammc::adaptive}
In spite of faster convergence, Nesterov's accelerated distributed PGO using \cref{eq::sk,eq::gammak,eq::Yak,eq::GYak,eq::HYak} is no longer nonincreasing, and might fail to converge due to the nonconvexity of PGO. Fortunately, such a problem can be remedied with an adaptive restart scheme \cite{o2015adaptive,fan2020mm,fan2019proximal} as the following.

Let $\sF^{(\sk)}$ be an exponential moving averaging of $F(X^{(0)})$, $F(X^{(1)})$, $\cdots$, $F(\Xk)$ :
\begin{equation}\label{eq::lFk}
	\sF^{(\sk)}\triangleq \begin{cases}
		F(X^{(0)}),& \sk=0,\\
		(1-\eta)\cdot\sF^{(\skm)}+\eta\cdot F(\Xk),&\text{otherwise}
	\end{cases}
\end{equation}
where $\eta\in(0,\,1]$.  {\highlight Following \cite{fan2019proximal,li2015accelerated,zhang2004nonmonotone}, the adaptive restart scheme guarantees the convergence by keeping $F(\Xkp)\leq \lF^{(\sk)}$. Even though it is not obvious, $F(\Xkp)\leq \lF^{(\sk)}$  can be achieved with the following steps}:
\begin{enumerate}[leftmargin=0.45cm]
\item  Update $\Xkh$ and $\Xkp$ by solving \cref{eq::GYak,eq::HYak} for each node $\alpha\in\AA$;
\item If $F(\Xkh)>\sF^{(\sk)}$, update $\Xkh$ again by solving  \cref{eq::xlG} for each node $\alpha\in\AA$;
\item If $F(\Xkp)>\sF^{(\sk)}$, update $\Xkp$ again by solving  \cref{eq::xG}  and reduce $s^{\akp)}$  for each node $\alpha\in\AA$.  
\end{enumerate}
{\highlight Due to space limitation, the complete analysis of the adaptive restart scheme  is left in  \hyperapp{appendix::F}{F} where more details are presented.}

\begin{remark}
Since $\eta\in(0,\,1]$ in \cref{eq::lFk}, then $\sF^{(\skp)}\leq \sF^{(\sk)}$ as long as $F(\Xkp)\leq \sF^{(\sk)}$.  In \hyperapp{appendix::F}{F}, we have proved that $F(\Xkp)\leq \sF^{(\sk)}$ hods if  $\Xkh$ and $\Xkp$ are updated with  \cref{eq::xG,eq::xlG} for each node $\alpha\in\AA$. Therefore,  the adaptive restart scheme above results in a nonincreasing sequence of $\sF^{(\sk)}$.  Furthermore,  \hyperapp{appendix::F}{F} indicates that such an adaptive restart scheme is sufficient to guarantee the convergence to first-order critical points under mild conditions.
\end{remark}

Note that one has to aggregate information across the network to evaluate and compare $\sF^{(\sk)}$, $F(\Xkh)$, $F(\Xkp)$ using \cref{eq::lFk,eq::F}. Thus, a master node capable of communicating with each node $\alpha\in\AA$ is required. In the rest of this section, we make the following assumption about the existence of such a master node.

\begin{assumption}\label{assumption::master}
	There is a master node to retrieve $\Xak$ and $\Xakh$ from each node $\alpha\in\AA$ and evaluate $\sF^{(\sk)}$, $F(\Xkh)$, $F(\Xkp)$.  
\end{assumption}

\subsection{Algorithm}\label{section::ammc::algorithm}
\begin{algorithm}[t]
	\caption{The $\ammc$ Method}
	\label{algorithm::ammc}
	\begin{algorithmic}[1]
		\State\textbf{Input}: An initial iterate $X^{(0)}\in \XX$, and $\zeta \geq \xi \geq 0$, and $\eta\in(0,\,1]$, and $\psi>0$, and $\phi>0$.
		\State\textbf{Output}: A sequence of iterates $\{X^{(\sk)}\}$ and $\{\Xkh\}$. 	
		\vspace{0.1em}
		\For{node $\alpha\leftarrow 1,\,\cdots,\, |\AA|$}
		\vspace{0.1em}
		\State $X^{\alpha(-1)}\gets X^{\alpha(0)}$ and $s^{\alpha(0)}\gets 1$\label{line::alg3::s0}
		\vspace{0.1em}
		\State send $X^{\alpha(0)}$ to the master node \label{line::alg3::mcomm1}
		\EndFor
		\State evaluate $F(X^{(0)})$ using \cref{eq::obj} at the master node \label{line::alg3::F0} 
		\vspace{0.1em}
		\State $\sF^{(-1)}\leftarrow F(X^{(0)})$ at the master node  \label{line::alg3::lF0}
		\vspace{0.1em}
		\For{$\sk\gets0,\,1,\,2,\,\cdots$}
				\vspace{0.2em}
		\For{node $\alpha\gets 1,\,\cdots,\, |\AA|$}
		\vspace{0.1em}
		\State $s^{\akp}\gets\tfrac{\sqrt{4{s^{\ak}}^2+1}+1}{2}$,\; $\lambda^{\ak}\gets \tfrac{s^{\ak}-1}{s^{\akp}}$\label{line::alg3::sk}
		\vspace{0.15em}
		\State $Y^{\ak}\gets X^{\ak}+\lambda^{\ak}\cdot\big(\Xak-\Xakm\big)$\label{line::alg3::Yk}
		\vspace{0.1em}
		\EndFor
		\vspace{0.1em}
		\State $\sF^{(\sk)}\! \leftarrow \!(1\!-\eta)\cdot\sF^{(\skm)}\!+\eta\cdot F(\Xk)$ at the master node \label{line::alg3::lFk}
		\vspace{-0.75em}
		\State update $\Xkh$ and $\Xkp$ using \cref{algorithm::amm_c_x}
		\EndFor
	\end{algorithmic}
\end{algorithm}
\begin{algorithm}[t]
	\caption{Updates for the $\ammc$ Method}
	\label{algorithm::amm_c_x}
	\begin{algorithmic}[1]
		\For{node $\alpha\gets 1,\,\cdots,\, |\AA|$}
		\vspace{0.25em}
		\State  retrieve $\Xbk$ and $Y^{\bk}$ from $\beta\in\NN_{\alpha}$\label{line::alg4::comm}
		\vspace{0.1em}
		\State  evaluate $\nGammaak$, $\lnGammaak$, $\nabla_{\Xa} F(\Xk)$, $\nabla_{\Xa} F(\Yk)$ \label{line::alg4::DFaYk}
		\vspace{0.1em}
		\State $\Xakh\gets \arg\min\limits_{X^\alpha\in\XX^\alpha }\lG^\alpha(X^\alpha|\Yk)$ using \cref{algorithm::lG} \label{line::alg4::xlG1}
		\vspace{0.1em}
		\State $\Xakp\gets$ improve $\arg\min\limits_{X^\alpha\in\XX^\alpha }G^\alpha(X^\alpha|\Yk)$ with $\Xakh$ as the initial guess \label{line::alg4::xG1}
		\vspace{0.1em}
		\State  send $\Xakh$ and $\Xakp$ to the master node \label{line::alg4::mcomm1}
		\EndFor
		\vspace{0.1em}
		\State evaluate $F(\Xkh)$ and $F(\Xkp)$ using \cref{eq::obj} \label{line::alg4::Fxk} at the master node
		\vspace{0.1em}
		\If{$F(\Xkh)>\sF^{(\sk)}-\psi\cdot\|\Xkh-\Xk\|^2$}\label{line::alg4::restart_s1}
		\For{node $\alpha\gets1,\,\cdots,\, |\AA|$}
		\vspace{0.1em}
		\State $\Xakh\gets \arg\min\limits_{X^\alpha\in\XX^\alpha }\lG^\alpha(X^\alpha|\Xk)$ using \cref{algorithm::lG} \label{line::alg4::xlG2}
		\vspace{0.1em}
		\State  send $\Xakh$ to the master node \label{line::alg4::mcomm2}
		\EndFor
		\vspace{0.1em}
		\State evaluate $F(\Xkh)$ using \cref{eq::obj}\label{line::alg4::Fxkh} at the master node
		\EndIf\label{line::alg4::restart_e1}
		\vspace{0.1em}
		\If{$F(\Xkp)>\sF^{(\sk)}-\psi\cdot\|\Xkp-\Xk\|^2$}\label{line::alg4::restart_s2}
		\For{node $\alpha\gets1,\,\cdots,\, |\AA|$}
		\vspace{0.1em}
		\State $\Xakp\gets$ improve $\arg\min\limits_{X^\alpha\in\XX^\alpha }\!G^\alpha(X^\alpha|\Xk)$ with $\Xakh$ as the initial guess \label{line::alg4::xG2}
		\vspace{0.1em}
		\State $s^{\akp}\gets \max\{\tfrac{1}{2}s^{\akp},\,1\}$
		\vspace{0.1em}
		\State  send $\Xakp$ to the master node \label{line::alg4::mcomm3}
		\EndFor
		\vspace{0.1em}
		\State evaluate $F(\Xkp)$ using \cref{eq::obj}\label{line::alg4::Fxkp} at the master node
		\EndIf\label{line::alg4::restart_e2}
		\vspace{0.1em}
		\If{$\lF^{(\sk)}-F(\Xkp) < \phi\cdot\Big(\lF^{(\sk)}-F(\Xkh)\Big)$}\label{line::alg4::restart_s3}
		\vspace{0.1em}
		\State $\Xkp\gets \Xkh$ and $F(\Xkp)\gets F(\Xkh)$\label{line::alg4::xG3}
		\vspace{0.1em}
		\EndIf\label{line::alg4::restart_e3}
	\end{algorithmic}
\end{algorithm}

Implementing Nesterov's method  and the adaptive restart scheme, we obtain the $\ammc$ method for distributed PGO (\cref{algorithm::ammc}), where ``$*$'' indicates  the existence of a master node. 

The outline of  $\ammc$  is as follows:
\begin{enumerate}[leftmargin=0.45cm]
\item In lines~\ref{line::alg3::sk}, \ref{line::alg3::Yk} of \cref{algorithm::ammc}, each node $\alpha$ computes $\Yk$ for Nesterov's acceleration that is related with $s^{\ak}\in[1,\,\infty)$ and $\lambda^{\ak}\in[0,\,1)$.
\vspace{0.25em}
\item  In line~\ref{line::alg4::comm} of \cref{algorithm::amm_c_x}, each node $\alpha$ performs one inter-node communication round to retrieve $\Xbk$ and $Y^{\bk}$ from its neighbors $\beta\in\NN^\alpha$.
\item  In line~\ref{line::alg3::mcomm1} of \cref{algorithm::ammc} and lines~\ref{line::alg4::mcomm1}, \ref{line::alg4::mcomm2}, \ref{line::alg4::mcomm3} of \cref{algorithm::amm_c_x}, each node $\alpha$ performs one inter-node communication round to send $\Xakh$ and $\Xakp$ to the master node.
\vspace{0.2em}
\item In lines~\ref{line::alg4::DFaYk} of \cref{algorithm::amm_c_x}, each node $\alpha$ evaluates $\nGammak$, $\lnGammaak$, $\nabla_{X^{\alpha}} F(\Xk)$, $\nabla_{X^{\alpha}} F(\Yk)$ using $\Xak$, $\Yak$, $\Xbk$, $Y^{\bk}$ where $\beta\in\NN^\alpha$ are neighbors of node $\alpha$.
\item In lines~\ref{line::alg3::lF0}, \ref{line::alg3::lFk} of \cref{algorithm::ammc} and lines~\ref{line::alg4::Fxk}, \ref{line::alg4::Fxkh}, \ref{line::alg4::Fxkp} of \cref{algorithm::amm_c_x},  the master node evaluates  $\sF^{(\sk)}$, $F(\Xkh)$, $F(\Xkp)$ that are used for adaptive restart.
\vspace{0.2em}
\item In lines~\ref{line::alg4::restart_s1} to  \ref{line::alg4::restart_e2} of \cref{algorithm::amm_c_x}, the master node performs adaptive restart  to keep $F(\Xkh)\leq \sF^{(\sk)}$ and $F(\Xkp)\leq \sF^{(\sk)}$, which yields a nonincreasing sequence of $\lF^{(\sk)}$ to guarantee the convergence. 
\vspace{0.25em}
\item In lines~\ref{line::alg4::xG1}, \ref{line::alg4::xG2} of \cref{algorithm::amm_c_x}, note that $\Xakp$ does not have to be a local optimal solution to \cref{eq::xG}.
%\item In lines~\ref{line::alg4::restart_s1} to  \ref{line::alg4::restart_e2} of \cref{algorithm::amm_c_x}, $\psi>0$ guarantees that $F(\Xkh)$ and $F(\Xkp)$ yield sufficient improvement over $\sF^{(\sk)}$ in terms of $\|\Xkh-\Xk|$ and $\|\Xkp-\Xk\|$.
\vspace{0.25em}
\item In lines~\ref{line::alg4::restart_s3} to \ref{line::alg4::restart_e3} of \cref{algorithm::amm_c_x}, $F(\Xkp)$ is guaranteed to yield sufficient improvement over $\sF^{(\sk)}$ compared to $F(\Xkh)$. %Note that $\phi>0$ is recommended to  set close to zero.
\end{enumerate}
In spite of acceleration, $\ammc$  converges to first-order critical points  as the following proposition states.

\begin{prop}
{\highlight If \cref{assumption::mm,assumption::loss,assumption::neighbor,assumption::master} hold, $\psi>0$ and $\phi>0$,}  then for a sequence of $\{\Xk\}$ and $\{\Xkh\}$ generated by \cref{algorithm::ammc}, we have
\begin{enumerate}[(a)]
	\item\label{prop::ammc1}  $\lF^{(\sk)}$  is nonincreasing;
	\item\label{prop::ammc2}  $F(X^{(\sk)})\rightarrow F^\infty$ and $\lF^{(k)}\rightarrow F^\infty$  as $\sk\rightarrow\infty$;
	\item\label{prop::ammc3} $\|\Xkp-\Xk\|\rightarrow 0$ as $\sk\rightarrow\infty$ if $\xi>0$ and $\zeta>0$;
	\item\label{prop::ammc4} $\|\Xkh-\Xk\|\rightarrow 0$ as $\sk\rightarrow\infty$ if $\zeta\geq\xi>0$;
	\item\label{prop::ammc5} if $\zeta \geq\xi>0$, then there exists $\epsilon > 0$ such that 
	\begin{equation}
		\nonumber
		\min\limits_{0\leq\sk< \mathsf{K}}\|\grad\, F(\Xkh)\|\leq 2\sqrt{\frac{1}{\epsilon}\cdot\dfrac{F(X^{(0)})-F^\infty}{{\sK+1}}}
	\end{equation}
	for any $\mathsf{K}\geq 0$;
	\item\label{prop::ammc6} if $\zeta > \xi> 0$, then 
	$\grad\, F(\Xk)\rightarrow \0$ and
	$\grad\, F(\Xkh)\rightarrow \0$ as $\sk\rightarrow \infty$.
\end{enumerate}
\label{prop::ammc}
\end{prop}
\begin{proof}
	See \hyperapp{appendix::F}{F}.
\end{proof}

\begin{remark}
If $\eta=1$  in \cref{eq::lFk}, $F(\Xk)=\sF^{(\sk)}$, and $F(\Xk)$ is also nonincreasing according to \cref{prop::ammc}\ref{prop::ammc1}. While $F(\Xk)$  might fail to be nonincreasing, we still recommend to choose $\eta\ll 1$  that empirically yields fewer adaptive restarts and faster convergence for distributed PGO.
\end{remark}

\begin{remark}
$\psi>0$ and $\phi>0$ in  \cref{algorithm::amm_c_x} guarantee that $F(\Xkh)$ and $F(\Xkp)$ yield sufficient improvement over $\sF^{(\sk)}$ in terms of $\|\Xkh-\Xk\|$ and $\|\Xkp-\Xk\|$, and are recommended to set close to zero to avoid unnecessary adaptive restarts and make full use of Nesterov's  acceleration.
\end{remark}

%\begin{remark}
%We obtain from \cref{eq::lFk} that $\sF^{(\sk)}=F(\Xk)$ if $\eta=1$, which and  \cref{prop::ammc}\ref{prop::ammc1} suggest $F(\Xk)$ is nonincreasing. On the other hand, even though the monotonicity of $F(\Xk)$ no longer holds if $\eta<1$, \cref{prop::ammc} indicates that $F(\Xk)$ still converges to a local optimal solution.
%\end{remark}

\section{The Accelerated Majorization Minimization Method for Distributed Pose Graph Optimization without Master Node}\label{section::amm}
{
	\highlight
	The adaptive restart is essential for the convergence of accelerated MM methods. In  $\ammc$  (\cref{algorithm::ammc}), the adaptive restart scheme needs a master node to evaluate $F(\Xkp)$ and $\sF^{(\sk)}$ and guarantee the convergence. On the other hand, if there is no master node, the adaptive restart scheme requires substantial amount of inter-node communication, making  $\ammc$  unscalable for large-scale distributed PGO. Recently, we developed an adaptive restart scheme for distributed PGO that does not require a master node while generating convergent iterates \cite{fan2020mm}.  Nevertheless, the adaptive restart scheme in \cite{fan2020mm}  is conservative and suffers from unnecessary restarts that hinder acceleration and yield slower convergence. Thus, we need to redesign the adaptive restart scheme for distributed PGO without master node to maximize the performance of accelerated MM methods. To address this issue, in \cref{subsection::amm::restart}, we develop a novel adaptive restart scheme that requires no master node and is fully decentralized. Then, in \cref{section::amm::algorithm}, we propose the accelerated MM method for distributed PGO without master that has provable convergence to first-order critical points. In particular, the resulting accelerated MM method, which needs no master node and is fully decentralized, empirically has no loss of computational efficiency in contrast to  $\ammc$  with master node; see \cref{section::experiemnt} for more details. 
}

\vspace{-0.75em}
\subsection{Adaptive Restart}\label{subsection::amm::restart}
{ 
Recall that $\ammc$'s adaptive restart scheme guarantees the convergence by keeping $F(\Xkp)\leq \sF^{(\sk)}$, where the master node only evaluates and compares $F(\Xkp)$ and  $\sF^{(\sk)}$. This suggests that if we could achieve $F(\Xkp)\leq \sF^{(\sk)}$ without evaluating and comparing $F(\Xkp)$ and  $\sF^{(\sk)}$, no master node will be needed. We also  note that if there is a sequence of $\{F^{\ak}\}$ and $\{\lF^{\ak}\}$  for each node $\alpha$ such that
\begin{equation}\label{eq::Fsum}
F(\Xk) = \sum_{\alpha\in\AA} F^{\ak},
\end{equation}
\begin{equation}\label{eq::lFsum}
	\lF^{(\sk)} = \sum_{\alpha\in\AA} \lF^{\ak},
\end{equation}
\begin{equation}\label{eq::FlF}
	F^{\akp} \leq \lF^{\ak},
\end{equation}
then $F(\Xkp)=\sum_{\alpha\in\AA}F^{\akp}\leq \sum_{\alpha\in\AA}\sF^{\ak}=\sF^{(\sk)}$. Therefore,  the sequence above of $\{F^{\ak}\}$ and $\{\lF^{\ak}\}$  is sufficient to  keep $F(\Xkp)\leq \sF^{(\sk)}$  despite that $F(\Xkp)$ and $\lF^{(\sk)}$ are not explicitly evaluated and compared. More importantly, an adaptive restart scheme without master node can be developed with the sequence.  In rest of this section, we will construct  $\{F^{\ak}\}$ and $\{\lF^{\ak}\}$  satisfying \cref{eq::FlF,eq::Fsum,eq::lFsum},  which further results in the adaptive restart scheme for distributed PGO without a master node. }

{ For notational simplicity, we define $\Delta G^\alpha(X|\Xk):\XX\rightarrow\R$ related to the majorization gap of  $G(X|\Xk)$ over $F(X)$:
	\vspace{-0.5em}
\begin{multline}\label{eq::DGa}
	\Delta G^\alpha(X|\Xk)\triangleq -
	\frac{\xi}{2} \big\|\Xa-\Xak\big\|^2 + \\
	\frac{1}{2}\sum_{\beta\in\NN_-^{\alpha}}\sum_{(i,j)\in \aEE^{\ab}}\left(F_{ij}^{\ab}(X)-E_{ij}^{\ab}(X|\Xk)\right)+\\
	\frac{1}{2}\sum_{\beta\in\NN_+^{\alpha}}\sum_{(i,j)\in \aEE^{\ba}}\left(F_{ij}^{\ba}(X)-E_{ij}^{\ba}(X|\Xk)\right)
\end{multline}
where $F_{ij}^{\ab}(X)$, $F_{ij}^{\ba}(X)$, $E_{ij}^{\ab}(X|\Xk)$, $E_{ij}^{\ba}(X|\Xk)$  are given in \cref{eq::Faaabij,eq::Eab}. From $\Delta G^\alpha(X|\Xk)$ in \cref{eq::DGa}, we recursively define $F^{\ak}$, $\sF^{\ak}$, $G^{\ak}$ according to:
\begin{enumerate}[leftmargin=0.45cm]
\item If $\sk=-1$, each node $\alpha$ initializes $F^{\alpha(-1)}$ and $\sF^{\alpha(-1)}$ with
\vspace{-0.75em}
\begin{multline}\label{eq::Fa0}
\!\!\!\!\!\!\!\!\!\!\!\!\!	F^{\alpha(-1)} \!\triangleq \!\!\!\!\sum_{(i,j)\in \aEE^{\alpha\alpha}}\!\! F_{ij}^{\aa}(X^{(0)})+	\frac{1}{2}
\!\sum_{\beta\in\NN_-^{\alpha}}\sum_{(i,j)\in \aEE^{\ab}}\!\! F_{ij}^{\ab}(X^{(0)}) +\\
	\frac{1}{2}\sum_{\beta\in\NN_+^{\alpha}}\sum_{(i,j)\in \aEE^{\ba}}\!\! F_{ij}^{\ba}(X^{(0)}),
\end{multline}
\begin{equation}\label{eq::sFa0}
	\sF^{\alpha(-1)}\triangleq F^{\alpha(-1)}.
\end{equation}
\item If $\sk\geq 0$, each node $\alpha$ recursively updates $G^{\ak}$, $F^{\ak}$ and $\sF^{\ak}$  according to
\begin{equation}\label{eq::Gakp}
	G^{\ak} \triangleq G^\alpha(\Xak|\Xkm) + F^{\akm},
\end{equation}
\begin{equation}\label{eq::Fakp}
	F^{\ak} \triangleq G^{\ak} +  \Delta G^\alpha(\Xk|\Xkm),
\end{equation}
\begin{equation}\label{eq::lFakp}
	\sF^{\ak}  \triangleq (1-\eta)\cdot\sF^{\akm} + \eta\cdot F^{\ak}
\end{equation}
where $\eta\in(0,\,1]$.\\[-1em]  
\end{enumerate}
In \hyperapp{appendix::G}{G}, we have proved that such a sequence of $\{F^{\ak}\}$ and $\{\lF^{\ak}\}$ satisfies \cref{eq::FlF,eq::Fsum,eq::lFsum} as long as $\sG^{\akp} \leq \sF^{\ak}$, which yields the following proposition.}

{
\begin{prop}\label{prop::FsF}
For $G^{\ak}$, $F^{\ak}$, $\sF^{\ak}$ in \cref{eq::sFa0,eq::Fa0,eq::Gakp,eq::Fakp,eq::lFakp}, we have
\begin{enumerate}[(a)]
\item\label{prop::FsF1} $F(\Xk)=\sum_{\alpha\in\AA} F^{\ak}$ where $F(\Xk)$ is given in \cref{eq::obj};\\[-0.8em]
\item\label{prop::FsF2} $\sF^{(\sk)}=\sum_{\alpha\in\AA}\sF^{\ak}$  where $\sF^{(\sk)}$ is given in \cref{eq::lFk};\\[-0.8em]
\item\label{prop::FsF4} $F^{\akp}\leq\sF^{\akp} \leq \sF^{\ak}$ if $\sG^{\akp} \leq \sF^{\ak}$.
%\item\label{prop::FsF3} $F^{\ak}\leq G^{\ak}$.
\end{enumerate}
\end{prop}
\begin{proof}
	See \hyperapp{appendix::G}{G}.
\end{proof}

It can be concluded from Propositions \ref{prop::FsF}\ref{prop::FsF1} and \ref{prop::FsF}\ref{prop::FsF2} that the resulting $\{F^{\ak}\}$ and $\{\lF^{\ak}\}$ satisfies \cref{eq::Fsum,eq::lFsum}, and \cref{prop::FsF}\ref{prop::FsF4} indicates that \cref{eq::FlF} holds if $\sG^{\akp} \leq \sF^{\ak}$. In \hyperapp{appendix::H}{H}, we have also proved that the following steps  are sufficient to lead to $\sG^{\akp} \leq \sF^{\ak}$:
\begin{enumerate}[leftmargin=0.45cm]
	\item  Update $\Xakp$ by solving \cref{eq::GYak} at node $\alpha$;
	\item Compute $G^{\akp}$ with \cref{eq::Gakp} at node $\alpha$;
	\item If $G^{\akp}>\sF^{\akp}$, update $\Xakp$ again by solving  \cref{eq::xG}  and reduce $s^{\akp)}$  at node $\alpha\in\AA$. 
\end{enumerate}
Then, we not only obtain a sequence of $\{F^{\ak}\}$ and $\{\lF^{\ak}\}$  satisfying \cref{eq::FlF,eq::Fsum,eq::lFsum}, but also an adaptive restart scheme using $G^{\ak}$, $F^{\ak}$, $\sF^{\ak}$ to keep $F(\Xkp)\leq \sF^{(\sk)}$. Note that $F(\Xkp)$ and $\lF^{(\sk)}$ are neither evaluated nor compared. Instead, we evaluate and compare $G^{\akp}$ and $\lF^{\ak}$ independently at each node $\alpha$. Moreover, according to  \cref{eq::Faaabij,eq::GMa,eq::DGa},  it is tedious but straightforward to show that $G^{\ak}$, $F^{\ak}$, $\sF^{\ak}$ in \cref{eq::Fa0,eq::sFa0,eq::Fakp,eq::lFakp,eq::Gakp} can be computed with one inter-node communication round between node $\alpha$ and its neighbors $\beta\in\NN_{\alpha}$. We emphasize that such an adaptive restart scheme differs from those in $\ammc$ and \cite{fan2019proximal,li2015accelerated,zhang2004nonmonotone} that have a master node to  evaluate and compare $F(\Xkp)$ and $\sF^{(\sk)}$. In contrast, the resulting adaptive restart scheme keeps $F(\Xkp)\leq \lF^{(\sk)}$ but needs no master node,  and thus, is well-suited for distributed PGO without master node.
}

\begin{remark}
	\highlight
	$F_{ij}^{\ab}(X)-E_{ij}^{\ab}(X|\Xk)$ and $F_{ji}^{\ba}(X)-E_{ji}^{\ba}(X|\Xk)$ are majorization gaps for inter-node measurements related to nodes $\alpha$ and $\beta$.  According to \cref{eq::DGa}, $\Delta G^{\alpha}(\Xa|\Xk)$ takes half of  $F_{ij}^{\ab}(X)-E_{ij}^{\ab}(X|\Xk)$ and $F_{ji}^{\ba}(X)-E_{ji}^{\ba}(X|\Xk)$ for inter-node measurements in node $\alpha$.  Then, \cref{eq::Fakp} uses $\Delta G^\alpha(\Xa|\Xk)$ to compute $F^{\ak}$ where majorization gaps $F_{ij}^{\ab}(X)-E_{ij}^{\ab}(X|\Xk)$ and $F_{ji}^{\ba}(X)-E_{ji}^{\ba}(X|\Xk)$ are evenly redistributed between nodes $\alpha$ and $\beta$. 
\end{remark}

\begin{algorithm}[t]
	\caption{The $\ammd$ Method}
	\label{algorithm::amm}
	\begin{algorithmic}[1]
		\State\textbf{Input}: An initial iterate $X^{(0)}\in \XX$, and $\eta\in(0,\,1]$, and $\zeta \geq \xi \geq 0$, and $\psi>0$, and $\phi>0$.
		 \vspace{0.1em}
		\State\textbf{Output}: A sequence of iterates $\{X^{(\sk)}\}$ and $\{\Xkh\}$.\vspace{0.2em} 	
		\vspace{0.1em}
		\For{node $\alpha\gets1,\,\cdots,\, |\AA|$}
		\State $X^{\alpha(-1)}\gets X^{\alpha(0)}$ and $s^{\alpha(0)}\gets 1$\label{line::alg5::s0}
		\vspace{0.1em}
		\State retrieve $X^{\beta(-1)}$ and $X^{\beta(0)}$ from $\beta\in\NN_{\alpha}$\label{line::alg5::comm1}
		\vspace{0.1em}
		\State evaluate $F^{\alpha(-1)}$ using \cref{eq::Fa0}\label{line::alg5::Fakp2}
		\vspace{0.1em}
		\State evaluate $\sF^{\alpha(-1)}$ using \cref{eq::sFa0}\label{line::alg5::lFa0}
		\vspace{0.1em}
		\State $G^{\alpha(0)} \leftarrow G^\alpha(X^{\alpha(0)}|X^{(-1)}) + F^{\alpha(-1)}$\label{line::alg5::G}
		\vspace{0.1em}
		\EndFor
		\For{$\sk\gets0,\,1,\,2,\,\cdots$}
		\For{node $\alpha\gets 1,\,\cdots,\, |\AA|$}
		\vspace{0.1em}
		\State $s^{\akp}\gets\tfrac{\sqrt{4{s^{\ak}}^2+1}+1}{2}$,\; $\lambda^{\ak}\gets \tfrac{s^{\ak}-1}{s^{\akp}}$\label{line::alg5::sk}
		\vspace{0.1em}
		\State $Y^{\ak}\gets X^{\ak}+\lambda^{\ak}\cdot\big(\Xak-X^{\akm}\big)$\label{line::alg5::Yk}
		\vspace{0.1em}
		\State retrieve $\Xbk$ and $Y^{\bk}$ from $\beta\in\NN_{\alpha}$\label{line::alg5::comm}
		\vspace{0.1em}
		\State $F^{\ak}\leftarrow G^{\ak} + \Delta G^{\alpha}(\Xk|\Xkm)$ using \cref{eq::DGa,eq::Gakp} \label{line::alg5::Fakp1}
		\vspace{0.1em}
		\State $\sF^{\ak}\gets (1-\eta)\cdot \sF^{\akm} + \eta\cdot F^{\ak}$\label{line::alg5::lFakp}
		\vspace{0.1em}
		\State 	update $\Xakh$ and $\Xakp$ using \cref{algorithm::amm_x}
		\EndFor
		\EndFor
	\end{algorithmic}
\end{algorithm}

\vspace{-0.5em}
\subsection{Algorithm}\label{section::amm::algorithm}
With the adaptive restart scheme using $G^{\ak}$, $F^{\ak}$, $\sF^{\ak}$ to keep $F(\Xkp)\leq \sF^{(\sk)}$, we obtain the $\ammd$ method (\cref{algorithm::amm}) for distributed PGO, where ``$\#$'' indicates that no master node is needed.

The outline of  $\ammd$  is similar to  $\ammc$  and the key difference is the adaptive restart scheme:

\begin{enumerate}[leftmargin=0.45cm]
%\item  In lines~\ref{line::alg5::sk}, \ref{line::alg5::Yk} of \cref{algorithm::amm}, each node $\alpha$ computes $\Yak$ used for  Nesterov's acceleration.
	
\item In lines~\ref{line::alg5::comm1}, \ref{line::alg5::comm} of \cref{algorithm::amm}, each node $\alpha$ performs one inter-node communication round to retrieve $\Xbk$ and $Y^{\bk}$ from its neighbors $\beta\in\NN^\alpha$. We remark that no other inter-node communication is required.

\item In lines~\ref{line::alg5::Fakp2}, \ref{line::alg5::lFa0}, \ref{line::alg5::Fakp1}, \ref{line::alg5::lFakp} of \cref{algorithm::amm} and lines~\ref{line::alg6::Gakh1}, \ref{line::alg6::Gakp1}, \ref{line::alg6::Gakh2}, \ref{line::alg6::Gakp2} of \cref{algorithm::amm_x}, each node $\alpha$  evaluates $F^{\ak}$, $\sF^{\ak}$, $G^{\akh}$, $G^{\akp}$ that are used for adaptive restart. Note that $\Xbk$ and $X^{\bkm}$ from node $\alpha$'s neighbors  $\beta\in\NN^\alpha$ are needed.

\item In lines~\ref{line::alg6::restart1} to \ref{line::alg6::restart4} of \cref{algorithm::amm_x}, each node $\alpha$ performs independent adaptive restart such that $G^{\akh}\leq \lF^{\ak}$ and $G^{\akp}\leq \lF^{\ak}$, which also results in $F(\Xkp)\leq \sF^{(\sk)}$ and a nonincreasing sequence of $\sF^{(\sk)}$ for distributed PGO without master node. 
\item In lines~\ref{line::alg6::check1} to \ref{line::alg6::check2} of \cref{algorithm::amm_x}, $G^{\akp}$ is guaranteed to yield sufficient improvement over $\lF^{\ak}$ compared to $G^{\akh}$.
\end{enumerate}
Furthermore, $\ammd$  converges to first-order critical points as the following propositions states. 

\begin{algorithm}[t]
	\caption{Updates for the $\ammd$ Method}
	\label{algorithm::amm_x}
	\begin{algorithmic}[1]
		\State evaluate $\wabijk$ and $\wbajik$ using \cref{eq::wabij} \label{line::alg6::wabij}
		\vspace{0.1em}
				\State  evaluate $\nGammaak$, $\lnGammaak$, $\nabla_{\Xa} F(\Xk)$, $\nabla_{\Xa} F(\Yk)$ in \cref{eq::GMa,eq::lGMa} \label{line::alg6::DFak}
		\vspace{0.1em}
		\State $\Xakh\gets\arg\min\limits_{\Xa\in\XX^\alpha }\lG^\alpha(\Xa|Y^{(\sk)})$ using \cref{algorithm::lG}\label{line::alg6::Xakh1}
		\State $\sG^{\akh}\gets G^\alpha(\Xakh|\Xk) + F^{\ak}$\label{line::alg6::Gakh1}
		\vspace{0.1em}
		\State $\Xakp\gets$ improve $\arg\min\limits_{\Xa\in\XX^\alpha }G^\alpha(\Xa|\Yk)$ with $\Xakh$ as the initial guess\label{line::alg6::Xakp1}
		\vspace{0.1em}
		\State $\sG^{\akp}\gets G^\alpha(\Xakp|\Xk) + F^{\ak} $ \label{line::alg6::Gakp1}
		\vspace{0.1em}
		\If{$\sG^{\akh} > \sF^{\ak}-\psi\cdot\|\Xakh-\Xak\|^2$}\label{line::alg6::lGX1}\label{line::alg6::restart1}
		\vspace{0.1em}
		\State$\Xakh\gets\arg\min\limits_{X^\alpha\in\XX^\alpha }\lG^\alpha(X^\alpha|\Xk)$\label{line::alg6::Xkh} using \cref{algorithm::lG}\label{line::alg6::Xakh2}
		\vspace{0.1em}
		\State $\sG^{\akh}\gets G^\alpha(\Xakh|\Xk) + F^{\ak}$\label{line::alg6::Gakh2}
		\vspace{0.1em}
		\EndIf\label{line::alg6::restart2}
		\vspace{0.1em}
		\If{$\sG^{\akp} > \sF^{\ak}$}\label{line::alg6::restart3}
		\vspace{0.1em}
		\State $X^{\alpha(\sk+1)}\!\gets$ improve $\arg\min\limits_{X^\alpha\in\XX^\alpha }G^\alpha(X^\alpha|\Xk)$ with $\Xakh$ as the initial guess\label{line::alg6::Xakp2}
		\vspace{0.1em}
		\State $\sG^{\akp}\gets G^\alpha(\Xakp|\Xk) + F^{\ak}$\label{line::alg6::Gakp2}
		\vspace{0.1em}
		\State $s^{\akp}\gets \max\{\tfrac{1}{2}s^{\akp},\,1\}$
		\vspace{0.1em}
		\EndIf\label{line::alg6::restart4}
		\vspace{0.1em}
		\If{$\lF^{\ak}-\sG^{\akp} < \phi\cdot\Big(\lF^{\ak}-\sG^{\akh}\Big)$}\label{line::alg6::check1}
		\State $\Xakp \gets \Xakh$ and $\sG^{\akp}\gets\sG^{\akh}$\label{line::alg6::Xakp3}
		\EndIf\label{line::alg6::check2}
	\end{algorithmic}
\end{algorithm}

\begin{prop}\label{prop::amm}
	{\highlight If \cref{assumption::mm,assumption::loss,assumption::neighbor} hold, $\psi>0$ and $\phi>0$,} then for a sequence of  $\{X^{(\sk)}\}$ and $\{X^{(\skh)}\}$  generated by \cref{algorithm::amm}, we have
	\begin{enumerate}[(a)]
		\item\label{prop::amm1}  $\lF^{(\sk)}$  is nonincreasing;
		\item\label{prop::amm2} $F(X^{(\sk)})\rightarrow F^\infty$ and $\lF^{(k)}\rightarrow F^\infty$  as $\sk\rightarrow\infty$;
		\item\label{prop::amm3} $\|\Xkp-\Xk\|\rightarrow 0$ as $\sk\rightarrow\infty$ if $\zeta>\xi>0$;
		\item\label{prop::amm4} $\|\Xkh-\Xk\|\rightarrow 0$ as $\sk\rightarrow\infty$ if $\zeta>\xi>0$;
		\item\label{prop::amm5} if $\zeta \geq\xi>0$, then there exists $\epsilon > 0$ such that 
		\begin{equation}
			\nonumber
			\min\limits_{0\leq\sk< \mathsf{K}}\|\grad\, F(\Xkh)\|\leq 2\sqrt{\frac{1}{\epsilon}\cdot\dfrac{F(X^{(0)})-F^\infty}{{\sK+1}}}
		\end{equation}
		for any $\mathsf{K}\geq 0$;
		\item\label{prop::amm6} if $\zeta \geq\xi>0$, then $\grad\, F(\Xk)\rightarrow \0$ and $\grad\, F(\Xkh)\rightarrow \0$ as $\sk\rightarrow \infty$.
	\end{enumerate}
\end{prop}
\begin{proof}
	See \hyperapp{appendix::H}{H}.
\end{proof}

In spite of no master node, \cref{prop::amm} indicates that  $\ammd$  has provable convergence as long as each node $\alpha\in\AA$ can communicate with its neighbors $\beta\in \NN^\alpha$. Thus,  $\ammd$  eliminates the bottleneck of communication for distributed PGO without master node. In addition,  $\ammd$  also reduces unnecessary adaptive restarts compared to \cite{fan2020mm}, and thus makes better of Nesterov's acceleration and has faster convergence.  
%In addition, note that the $\ammd$ method results in 
%$F(\Xkp)\leq G(\Xkp|\Xk) \leq \sF^{(\sk)}$
%instead of
%$F(\Xkp)\leq G(\Xkp|\Xk) \leq F(\Xk)$ in \cite{fan2020mm}. Recalling $F(\Xk)\leq \sF^{(\sk)}$, we conclude that the $\ammd$ method prevents unnecessary adaptive restarts and mitigates the impacts that $G(\Xkp|\Xk)$ is an upper-bound of $F(\Xkp)$, which is expected to make better use of Nesterov's method for acceleration and have faster convergence than \cite{fan2020mm}. 

\section{Experiments}\label{section::experiemnt}

In this section, we evaluate the performance of our MM methods ($\mm$, $\ammc$ and $\ammd$) for distributed PGO on the simulated \textsf{\small Cube} datasets and a number of 2D and 3D SLAM benchmark datasets \cite{rosen2016se}.  In terms of $\mm$, $\ammc$ and $\ammd$, $\eta$, $\xi$, $\zeta$, $\psi$ and $\phi$ in \cref{algorithm::amm,algorithm::mm,algorithm::ammc} are $5\times10^{-4}$, $1\times10^{-10}$, $1.5\times 10^{-10}$, $1\times 10^{-10}$ and $1\times10^{-6}$, respectively, for all the experiments. In addition, $\mm$, $\ammc$ and $\ammd$ can  take at most one iteration when solving \cref{eq::xG,eq::GYak} to improve the estimates.   All the experiments have been performed on a laptop with an Intel Xeon(R) CPU E3-1535M v6 and 64GB of RAM running Ubuntu 20.04.

\vspace{-0.5em}

\subsection{\textsf{\small Cube} Datasets}
\begin{figure}[b]
	\centering
	\includegraphics[width=0.175\textwidth]{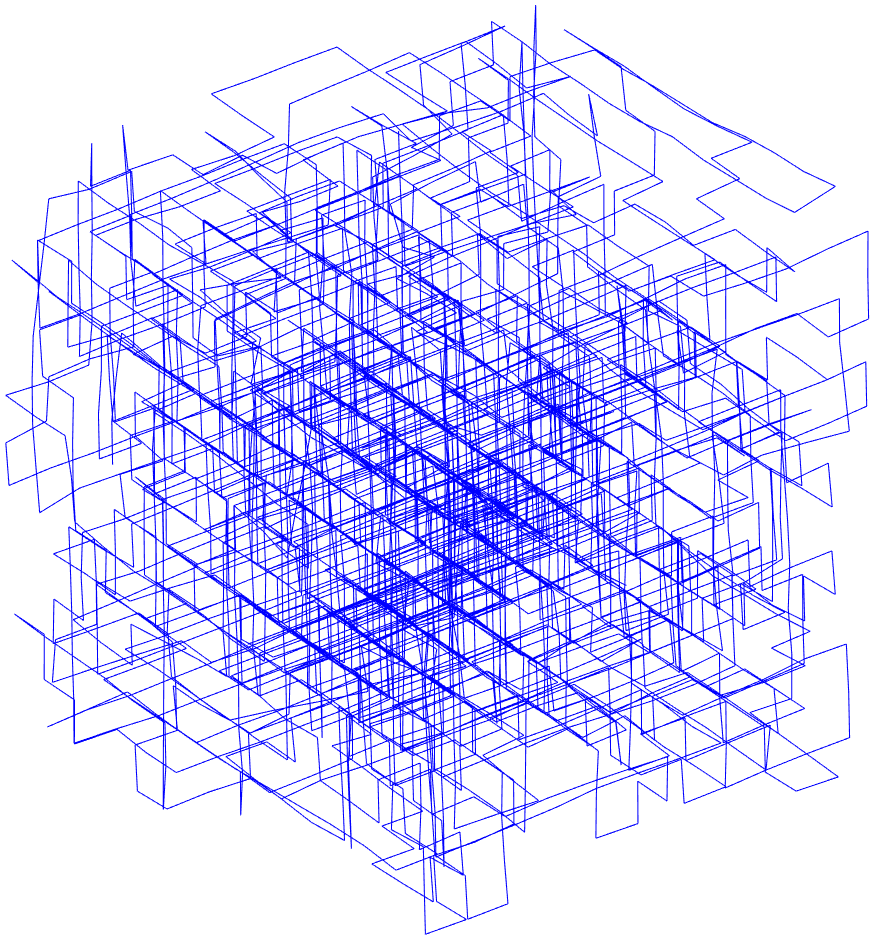}
	\caption{A {\sf\small Cube} dataset has $12\times12\times12$ grids of side length of $1$ m, $3600$ poses, loop closure probability of  $0.1$, an translational RSME of $ 0.02$ m and an angular RSME of $0.02\pi$ rad.}\label{fig::cube}
	\vspace{-0.75em}
\end{figure}

\begin{figure*}[t]
	\centering
	\begin{tabular}{cccc}
		\hspace{-0.5em}\subfloat[][$\ammd$ vs. $\mm$ ]{\includegraphics[trim =0mm 0mm 0mm 0mm,width=0.24\textwidth]{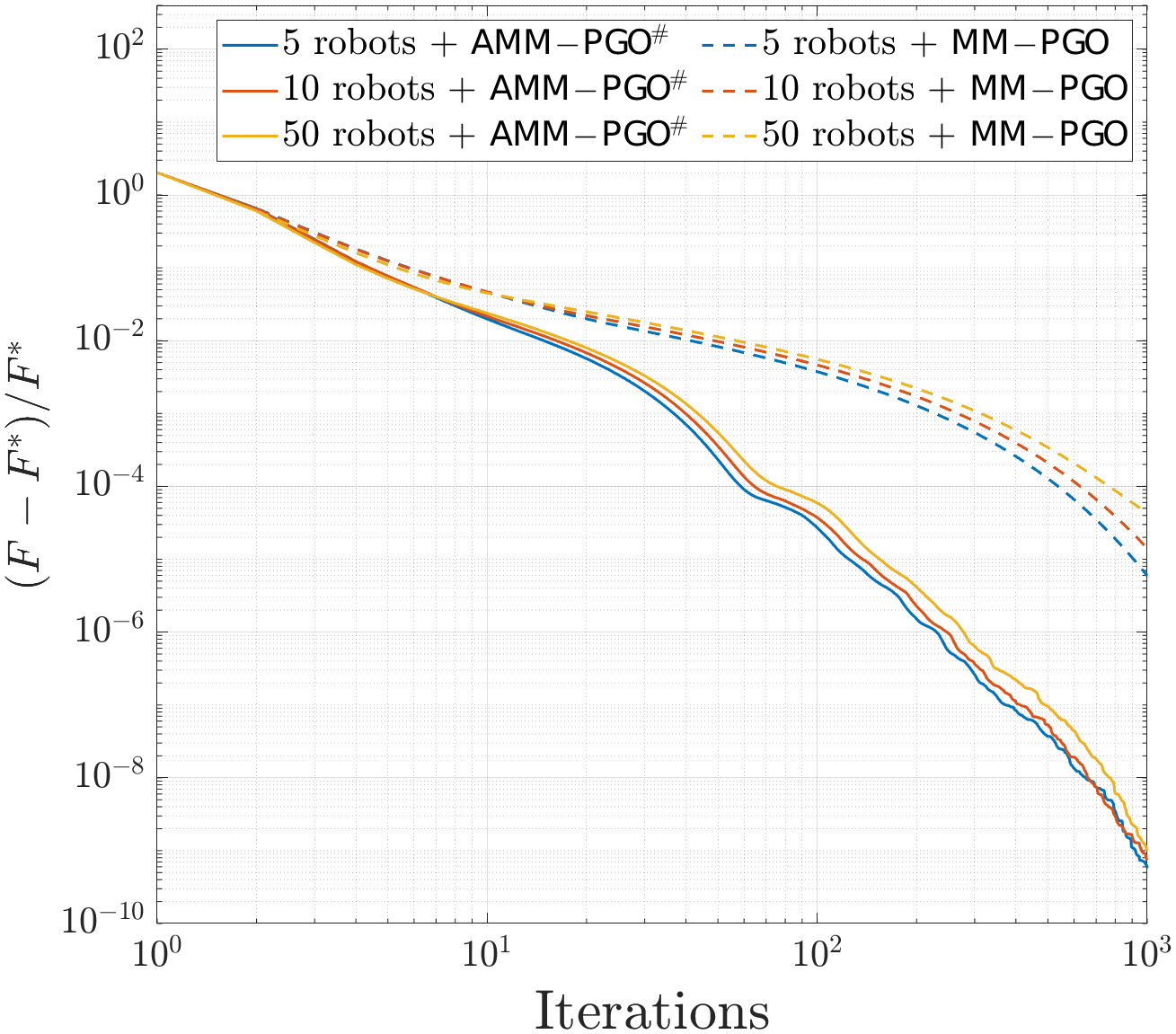}} &
		\hspace{-0.6em}\subfloat[][5 robots]{\includegraphics[trim =0mm 0mm 0mm 0mm,width=0.24\textwidth]{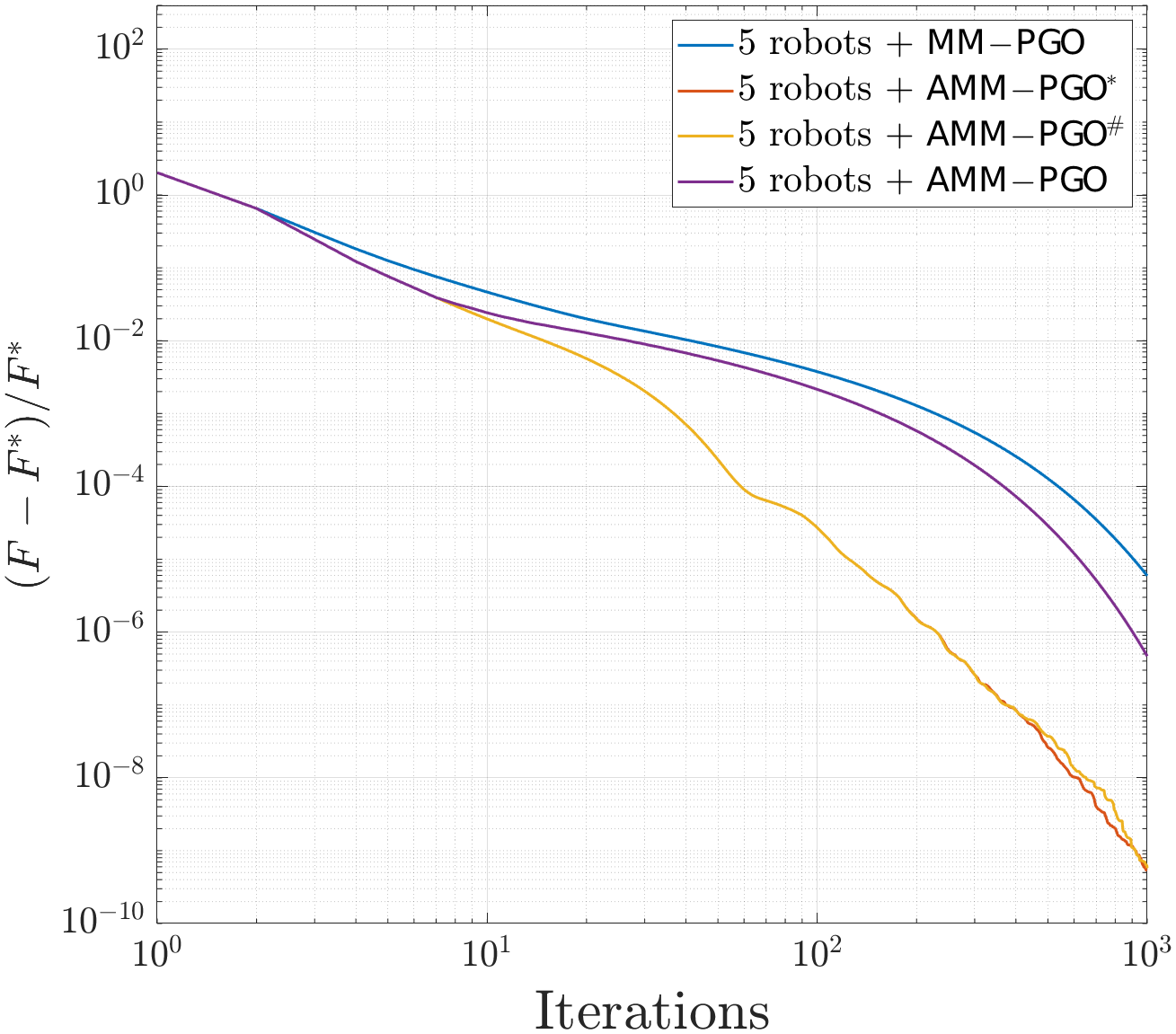}} &
		\hspace{-0.6em}\subfloat[][10 robots]{\includegraphics[trim =0mm 0mm 0mm 0mm,width=0.24\textwidth]{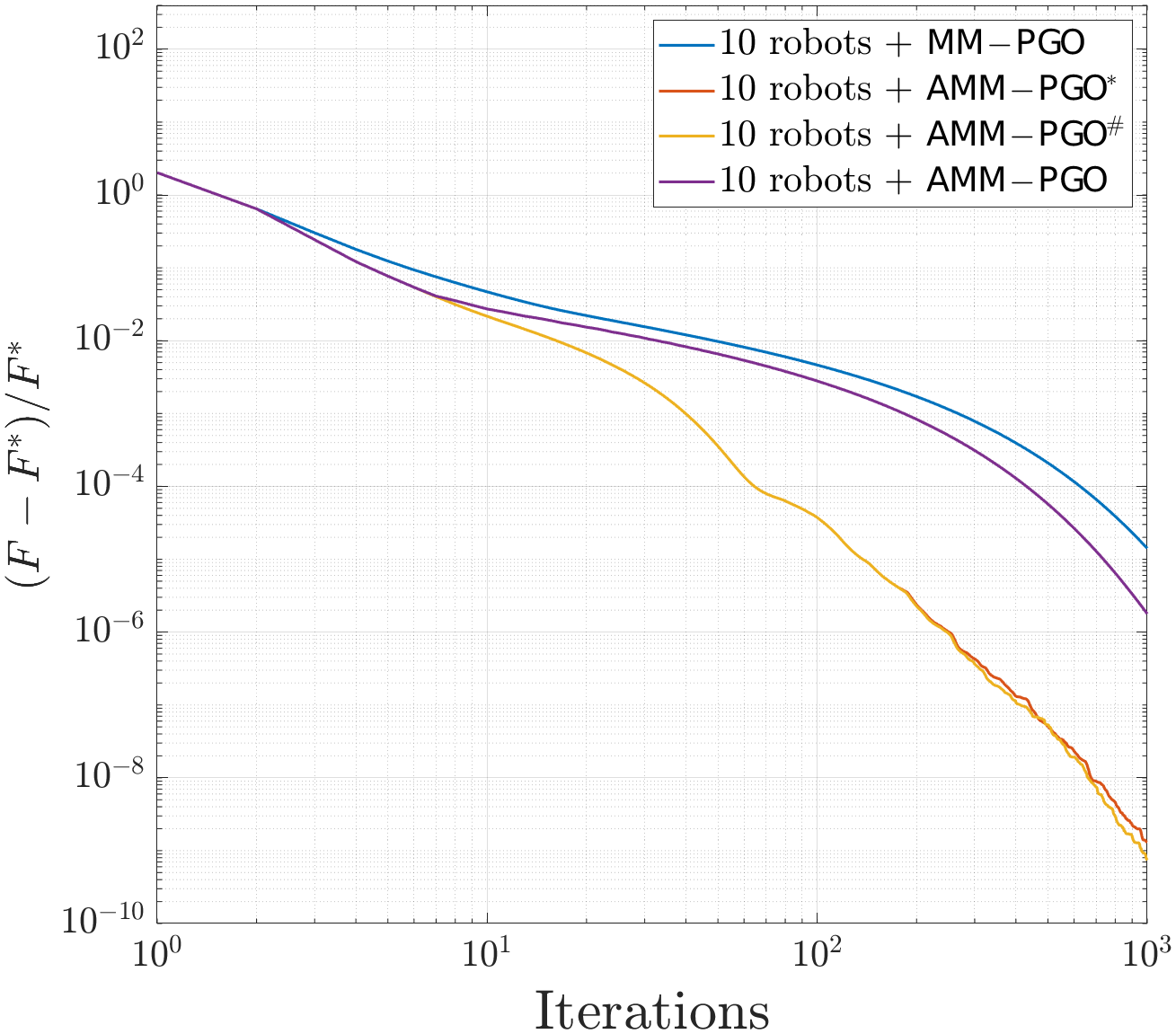}}&
		\hspace{-0.6em}\subfloat[][50 robots]{\includegraphics[trim =0mm 0mm 0mm 0mm,width=0.24\textwidth]{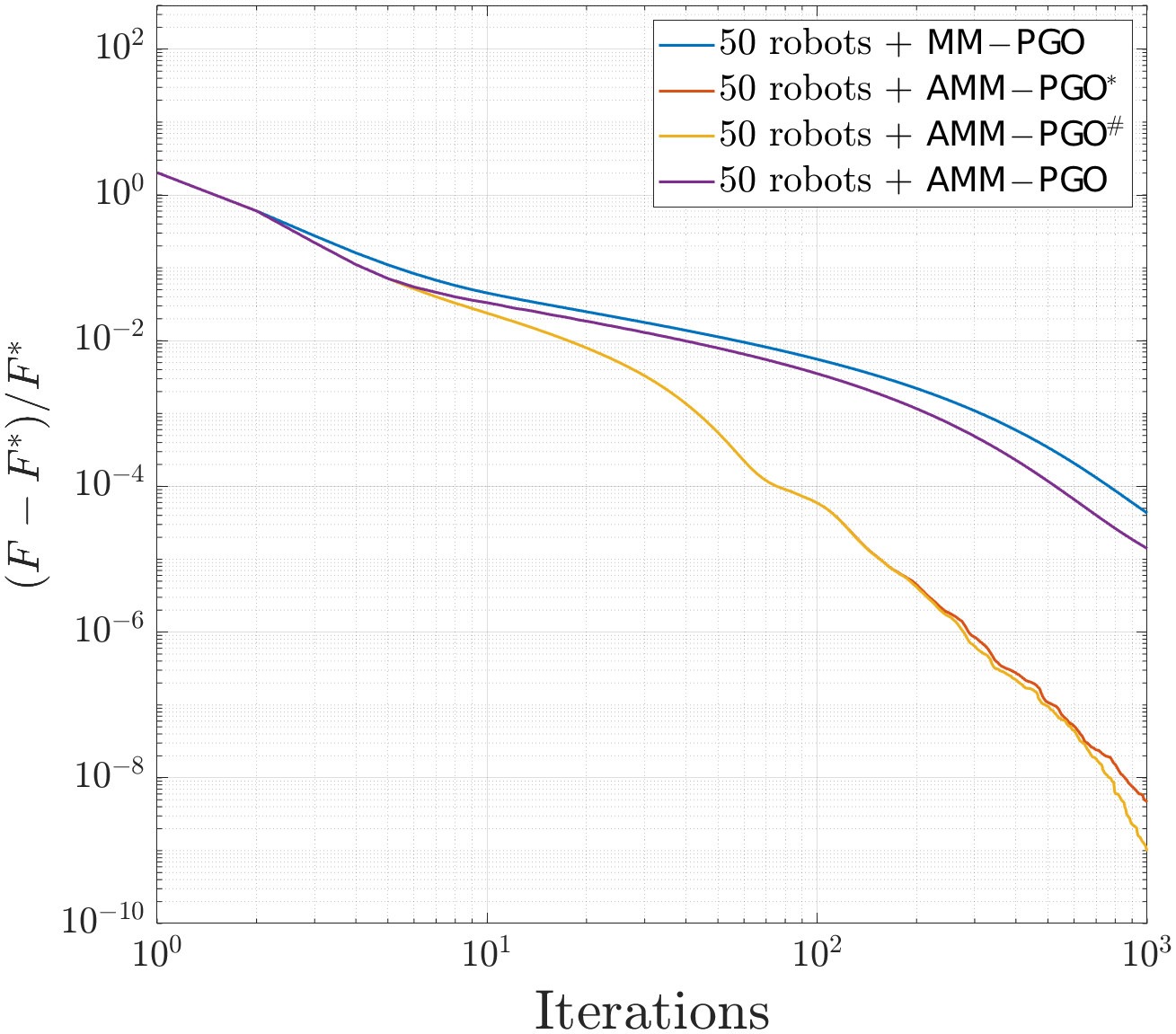}}
	\end{tabular}
	\caption{The relative suboptimality gaps of  $\mm$, $\ammc$, $\ammd$ and $\amm$ \cite{fan2020mm}  for distributed PGO with the \textbf{trivial loss kernel} on $5$, $10$ and $50$ robots. The results are averaged over $20$  Monte Carlo runs.}\label{fig::cube_f} 
	\vspace{-0.2em}
	
		\begin{tabular}{cccc}
		\hspace{-0.5em}\subfloat[][$\ammd$ vs. $\mm$]{\includegraphics[trim =0mm 0mm 0mm 0mm,width=0.24\textwidth]{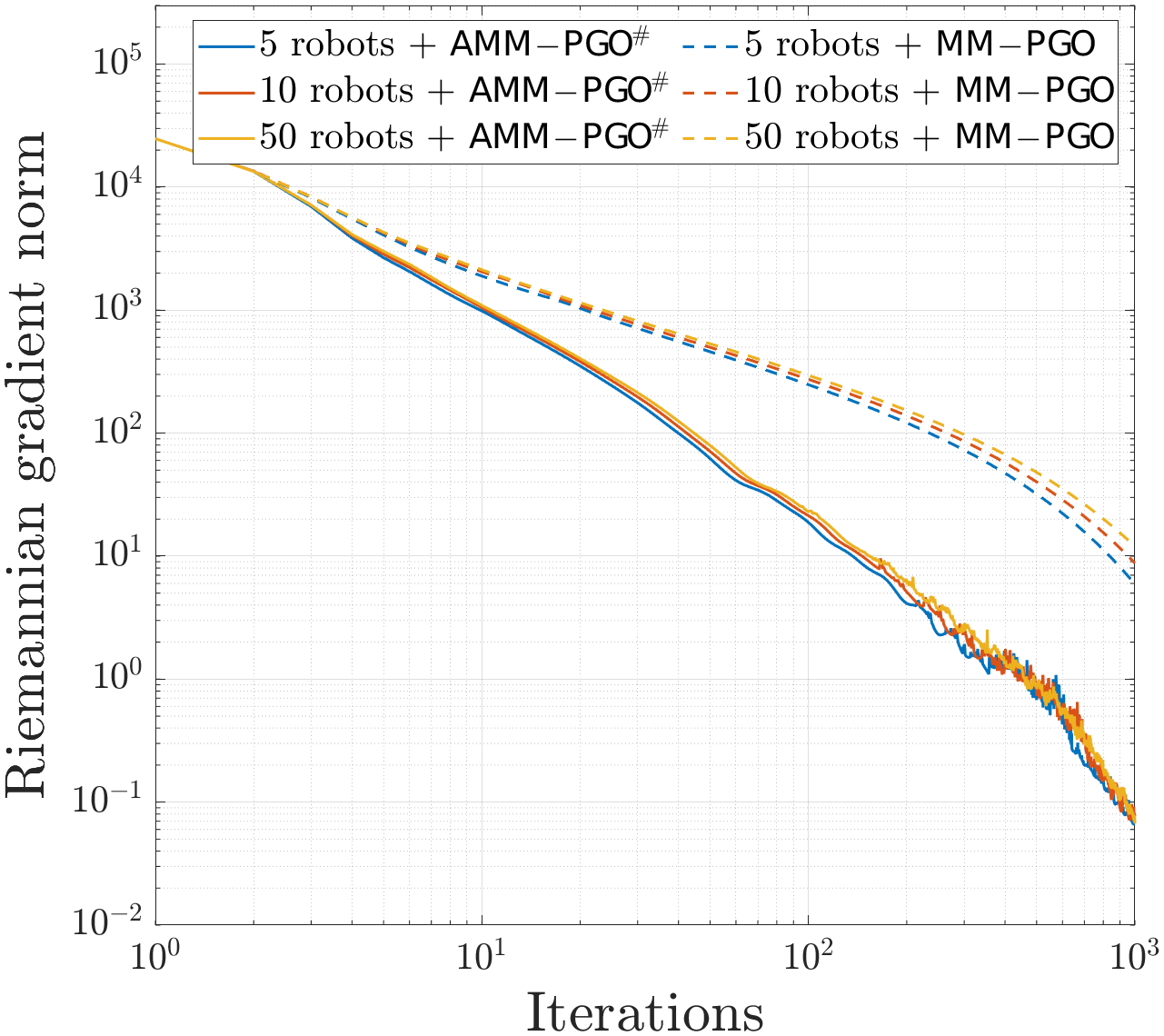}} &
		\hspace{-0.6em}\subfloat[][5 robots]{\includegraphics[trim =0mm 0mm 0mm 0mm,width=0.24\textwidth]{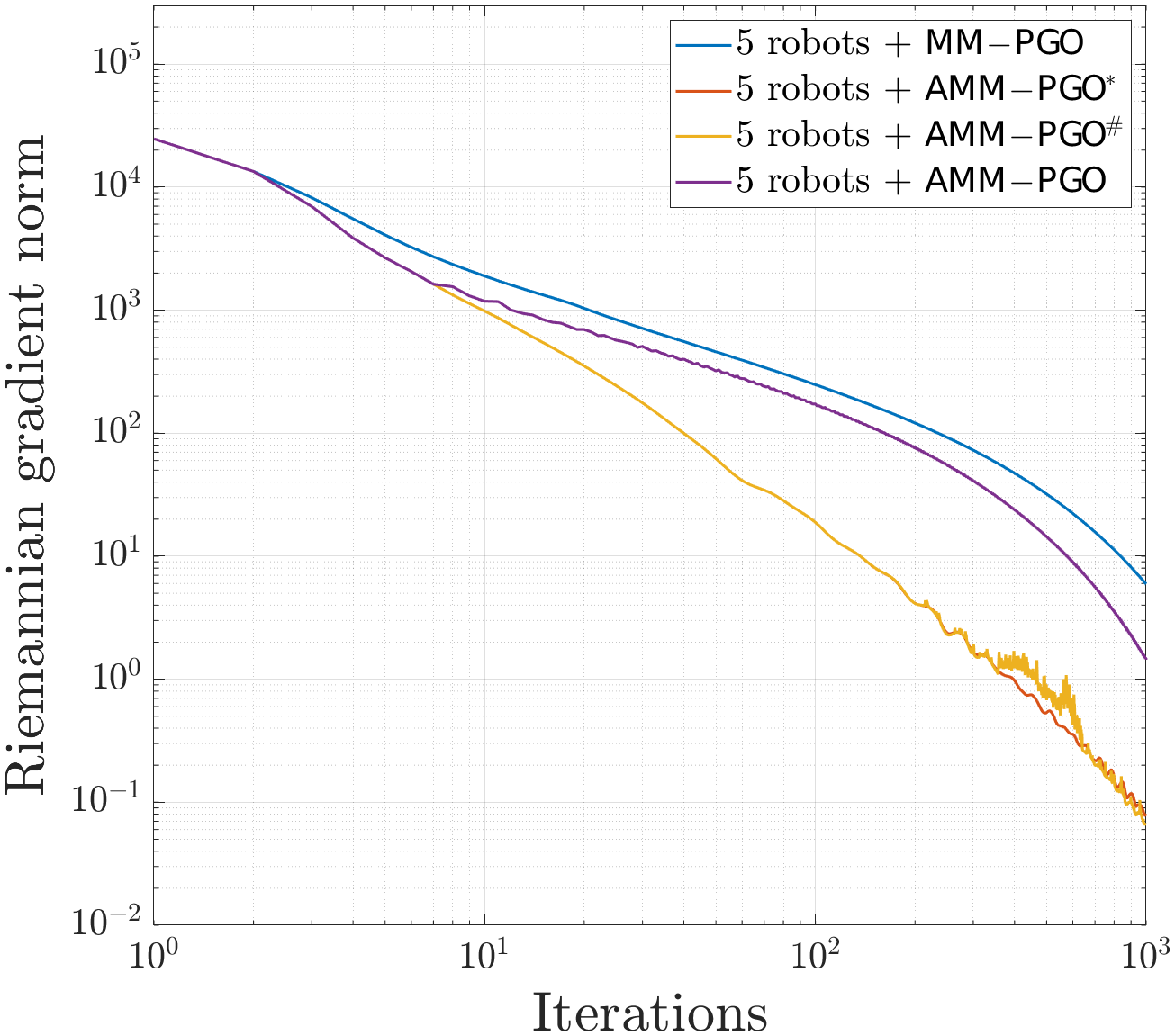}} &
		\hspace{-0.6em}\subfloat[][10 robots]{\includegraphics[trim =0mm 0mm 0mm 0mm,width=0.24\textwidth]{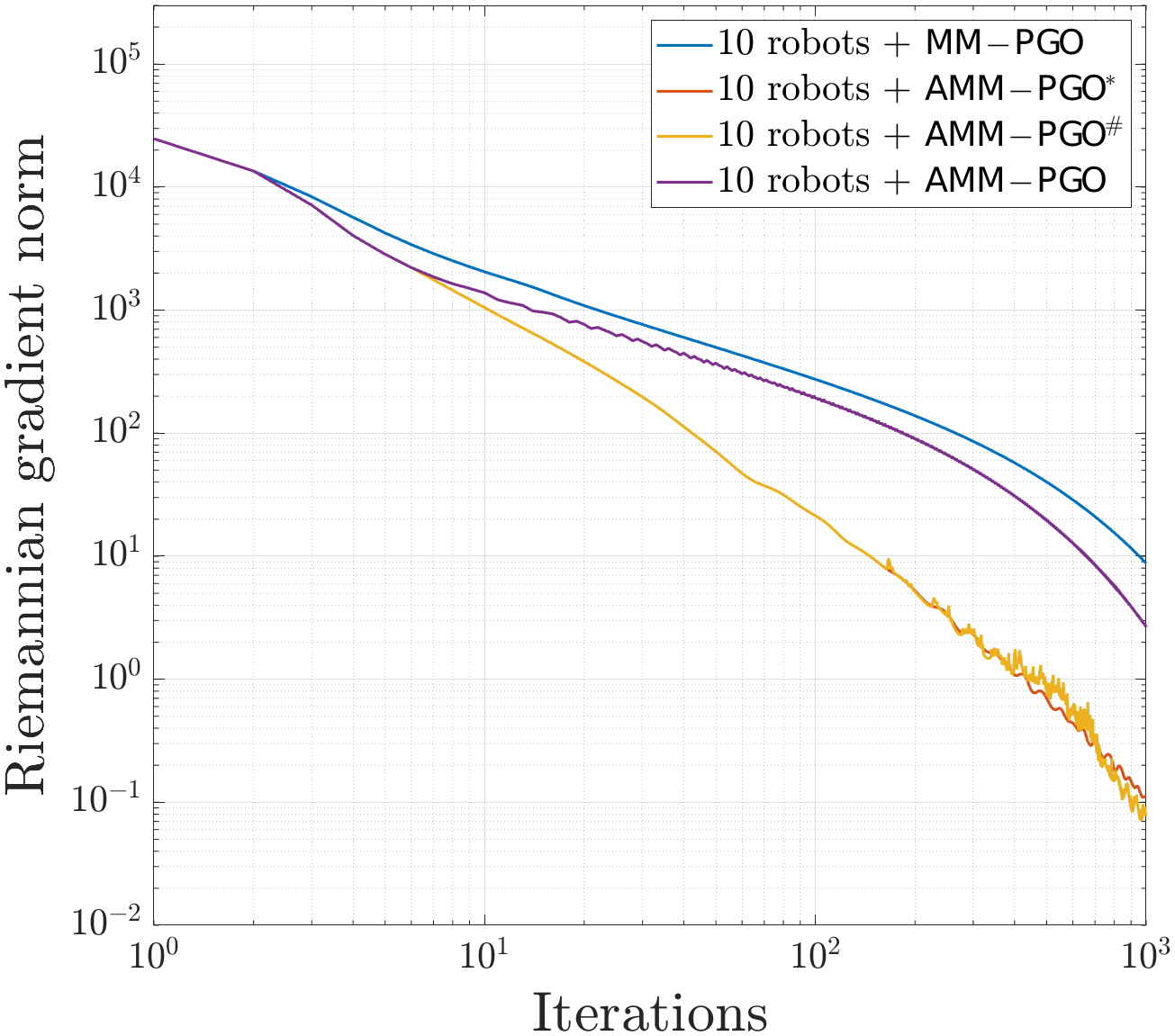}}&
		\hspace{-0.6em}\subfloat[][50 robots]{\includegraphics[trim =0mm 0mm 0mm 0mm,width=0.24\textwidth]{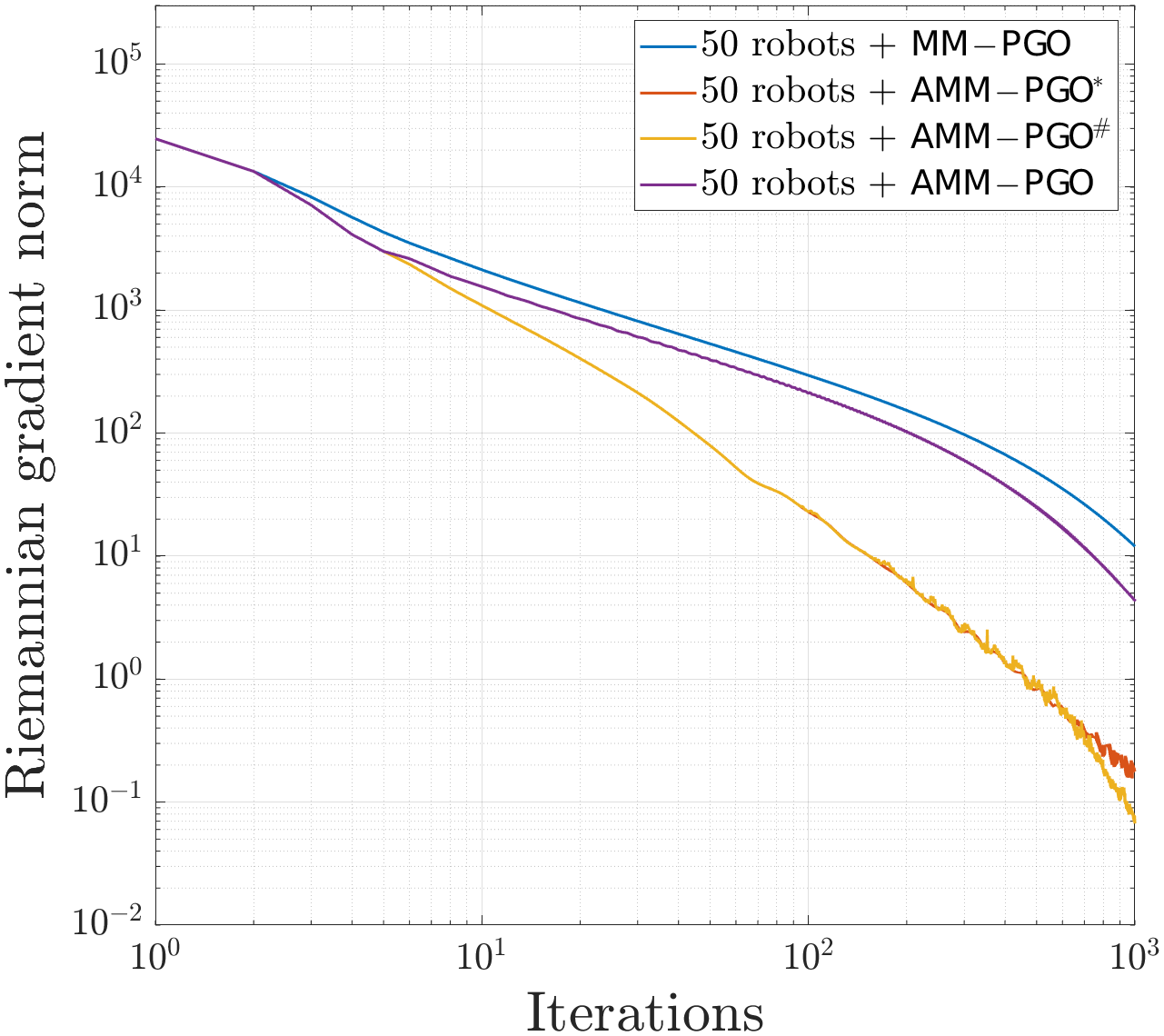}}
	\end{tabular}
	\caption{The Riemannian gradient norms of  $\mm$, $\ammc$, $\ammd$ and $\amm$ \cite{fan2020mm}  for distributed PGO with the \textbf{trivial loss kernel} on $5$, $10$ and $50$ robots. The results are averaged over $20$  Monte Carlo runs.}\label{fig::cube_g_trivial} 
	\vspace{-1em}
\end{figure*}
	
	\begin{figure*}[t]
		\centering
	\begin{tabular}{cccc}
		\hspace{-0.5em}\subfloat[][$\ammd$ vs. $\mm$]{\includegraphics[trim =0mm 0mm 0mm 0mm,width=0.24\textwidth]{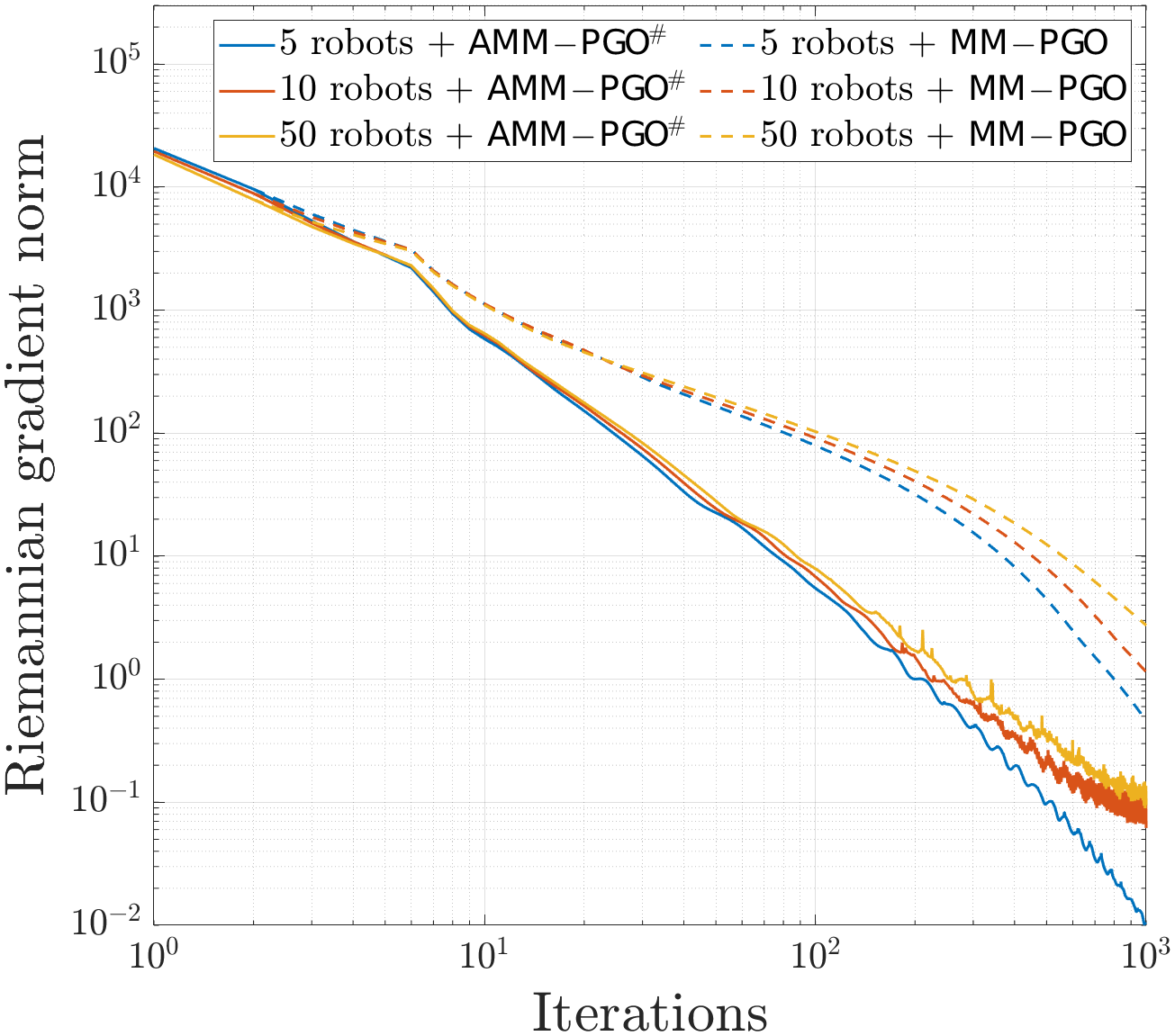}} &
		\hspace{-0.6em}\subfloat[][5 robots]{\includegraphics[trim =0mm 0mm 0mm 0mm,width=0.24\textwidth]{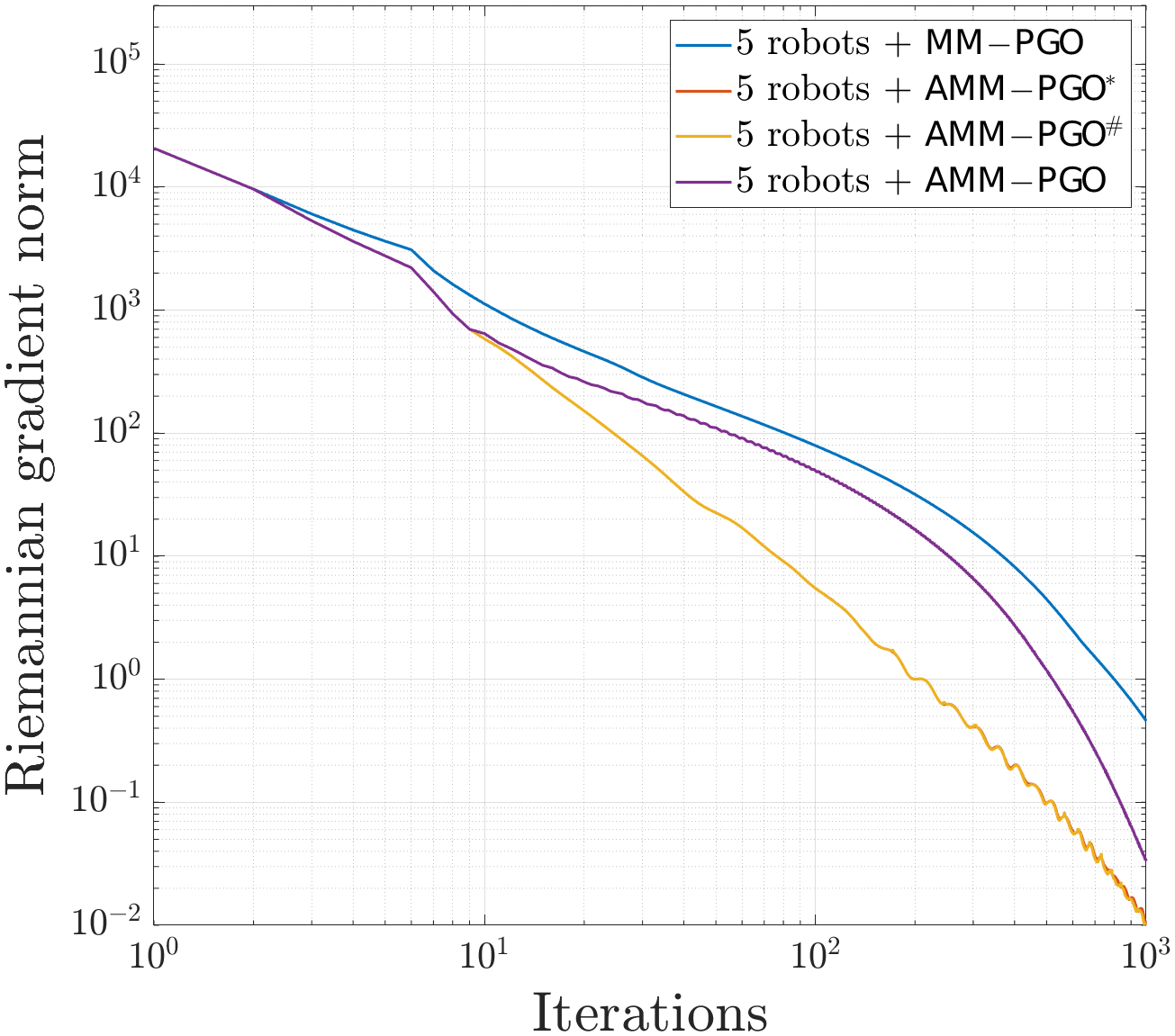}} &
		\hspace{-0.6em}\subfloat[][10 robots]{\includegraphics[trim =0mm 0mm 0mm 0mm,width=0.24\textwidth]{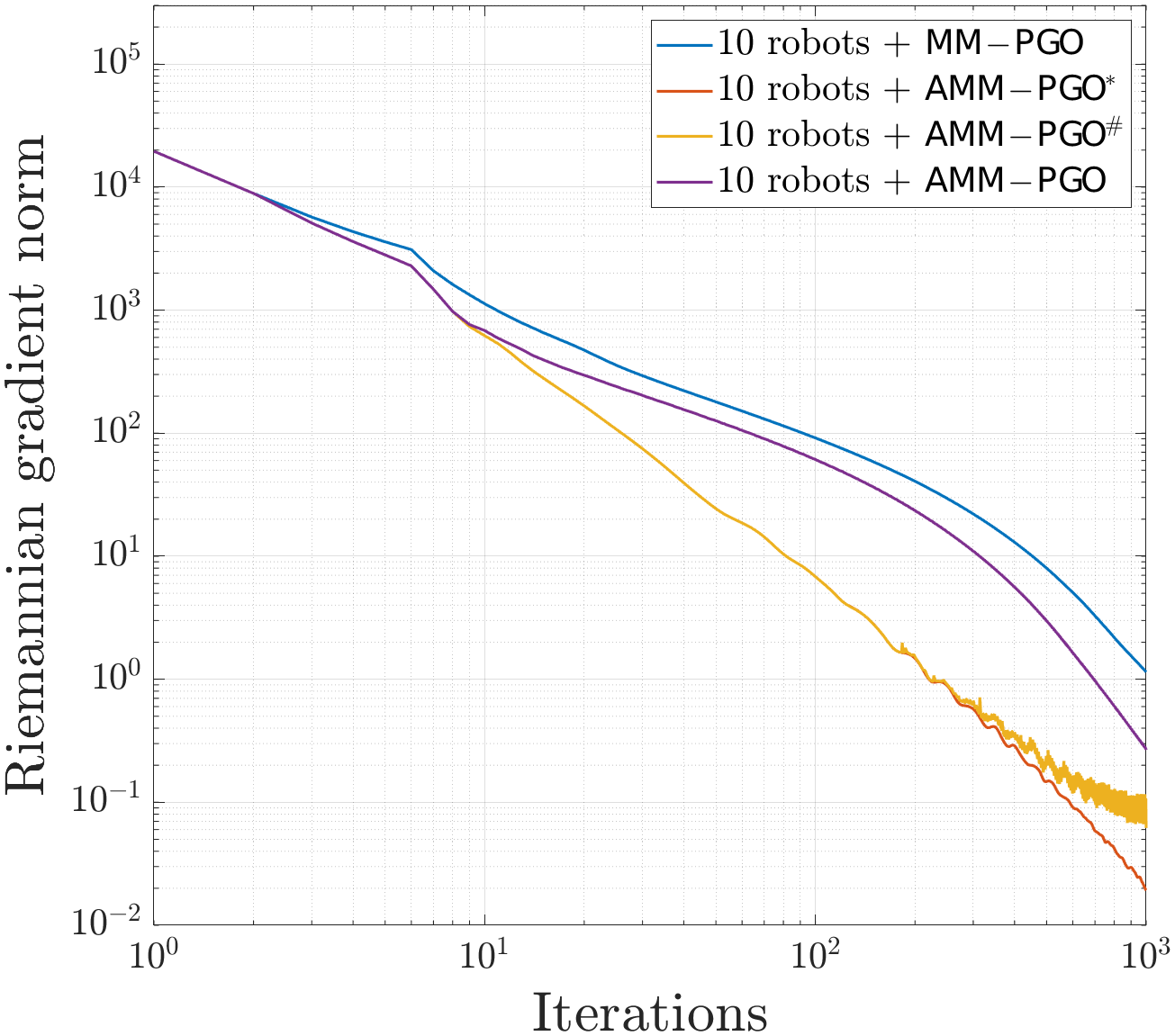}}&
		\hspace{-0.6em}\subfloat[][50 robots]{\includegraphics[trim =0mm 0mm 0mm 0mm,width=0.24\textwidth]{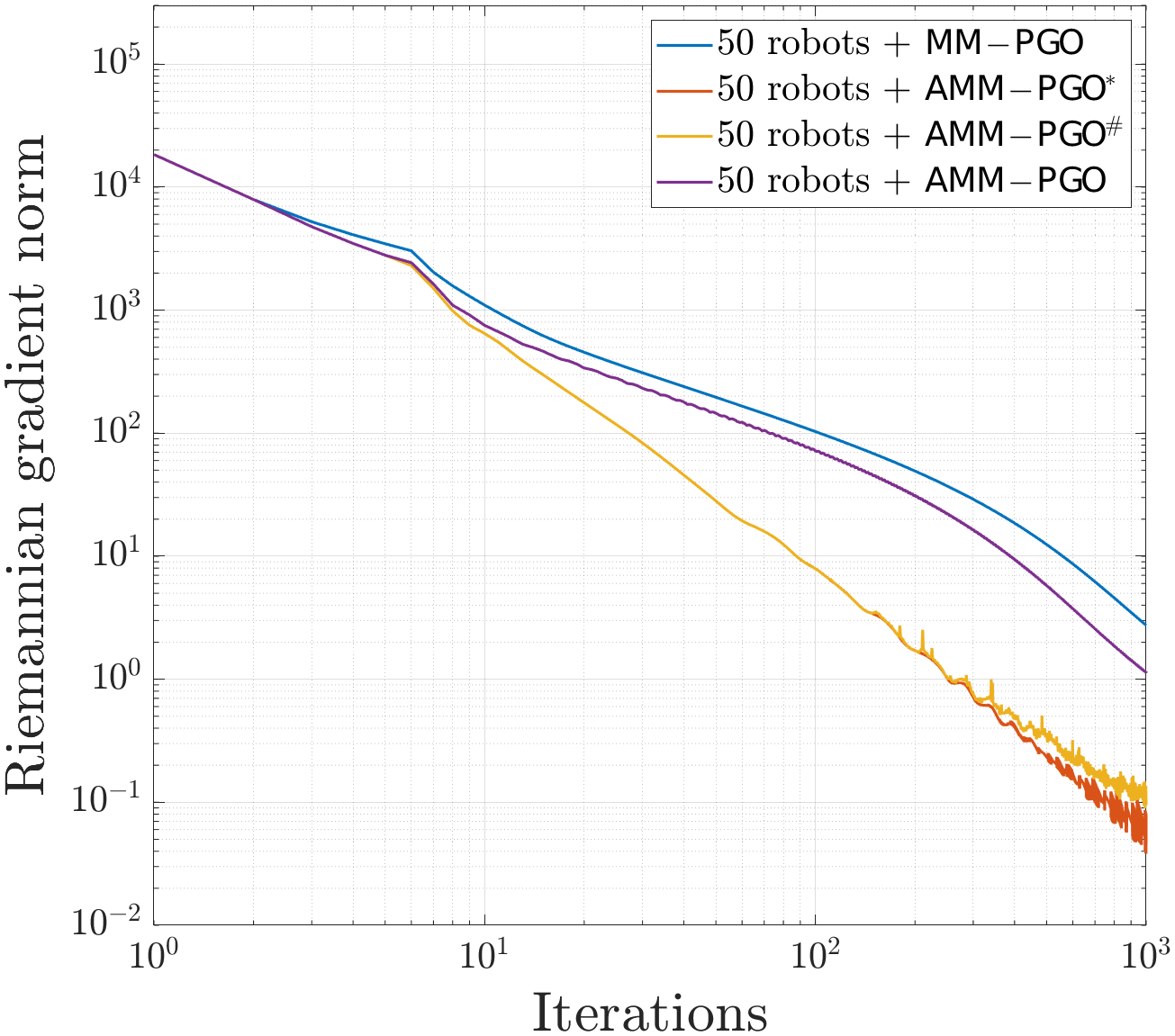}}
	\end{tabular}
	\caption{The Riemannian gradient norms of  $\mm$, $\ammc$, $\ammd$ and $\amm$ \cite{fan2020mm}  for distributed PGO with the \textbf{Huber loss kernel} on $5$, $10$ and $50$ robots. The results are averaged over $20$  Monte Carlo runs.}\label{fig::cube_g_huber} 
	\vspace{-0.2em}

	\begin{tabular}{cccc}
		\hspace{-0.5em}\subfloat[][$\ammd$ vs. $\mm$]{\includegraphics[trim =0mm 0mm 0mm 0mm,width=0.24\textwidth]{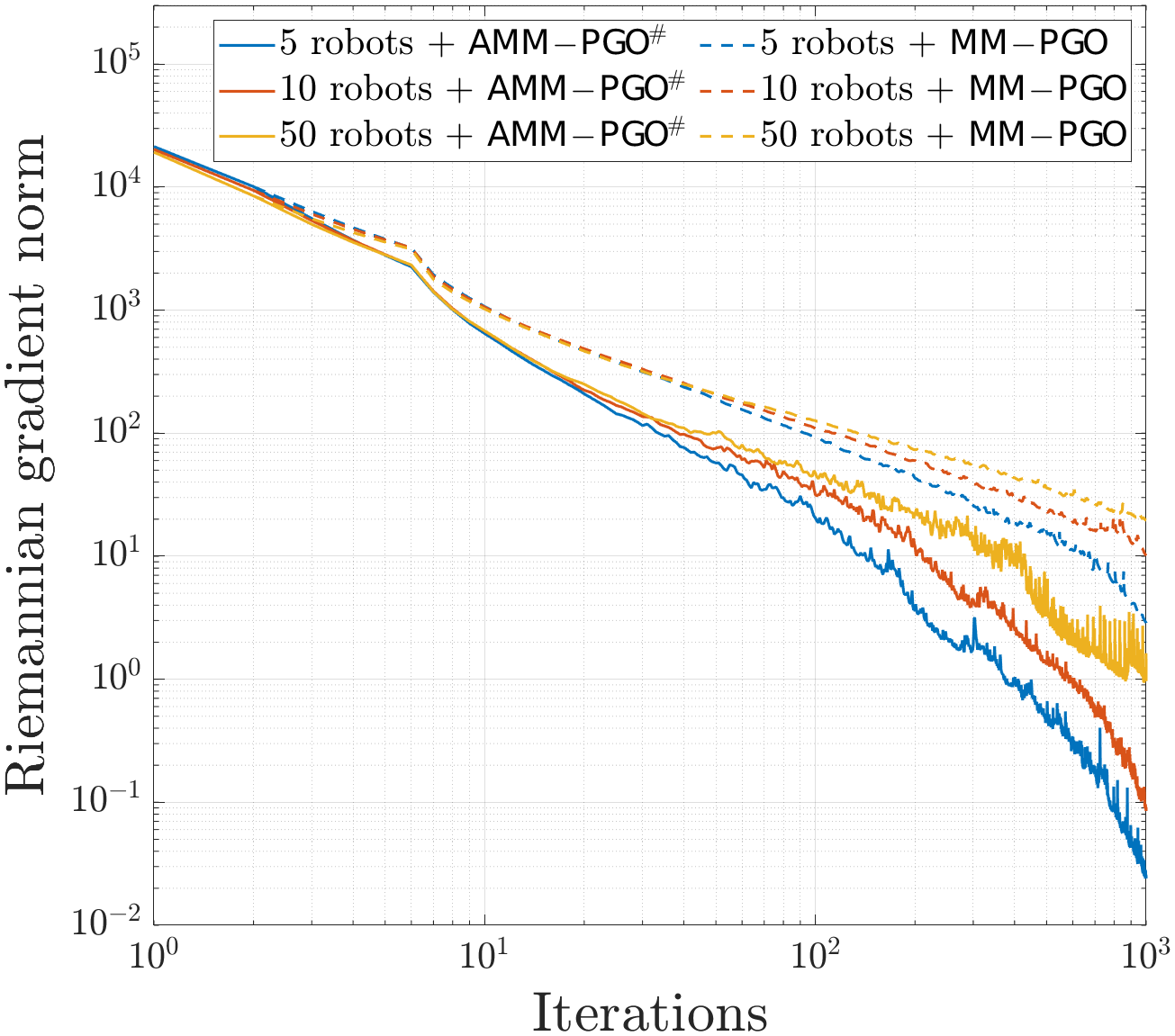}} &
		\hspace{-0.6em}\subfloat[][5 robots]{\includegraphics[trim =0mm 0mm 0mm 0mm,width=0.24\textwidth]{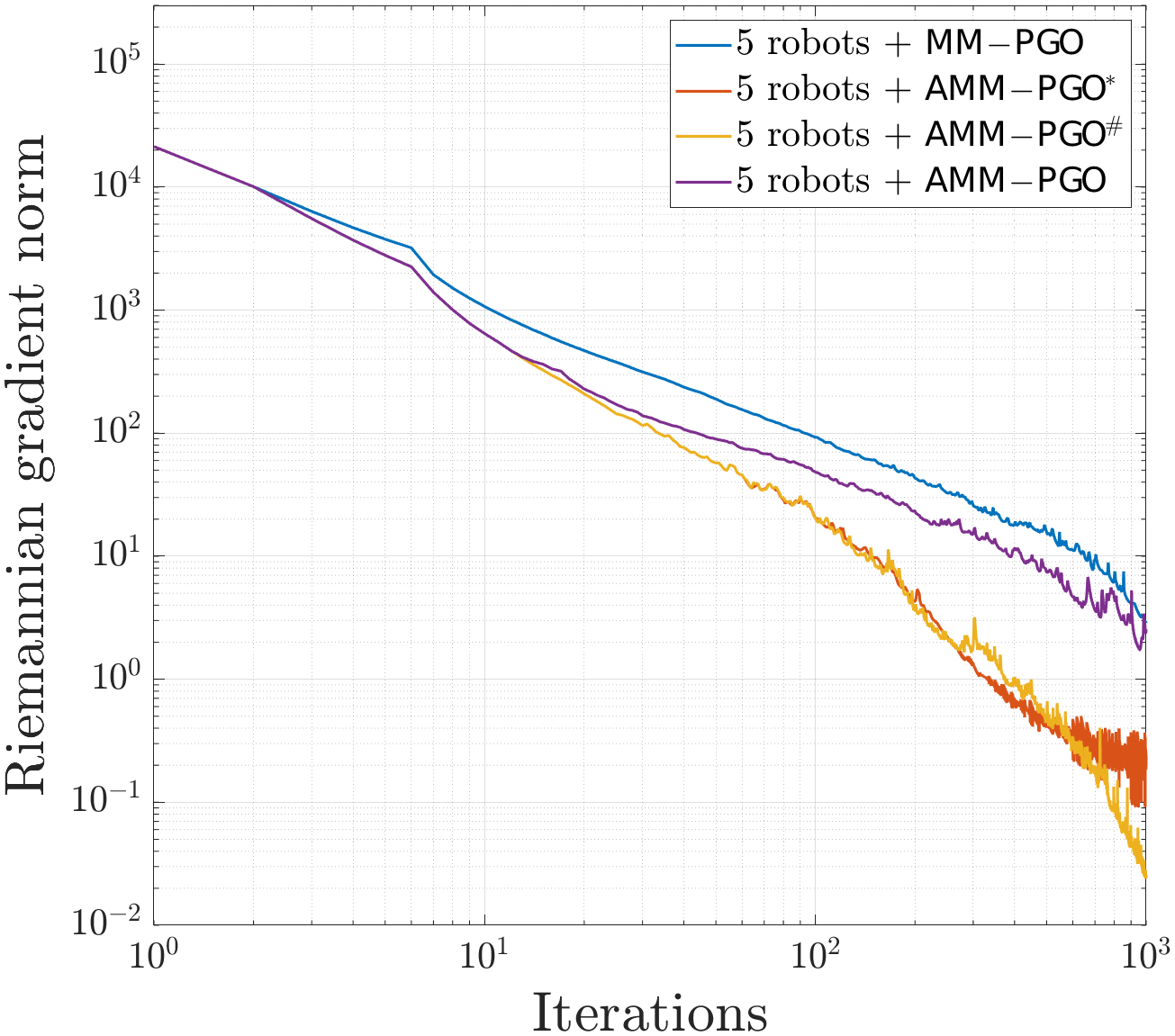}} &
		\hspace{-0.6em}\subfloat[][10 robots]{\includegraphics[trim =0mm 0mm 0mm 0mm,width=0.24\textwidth]{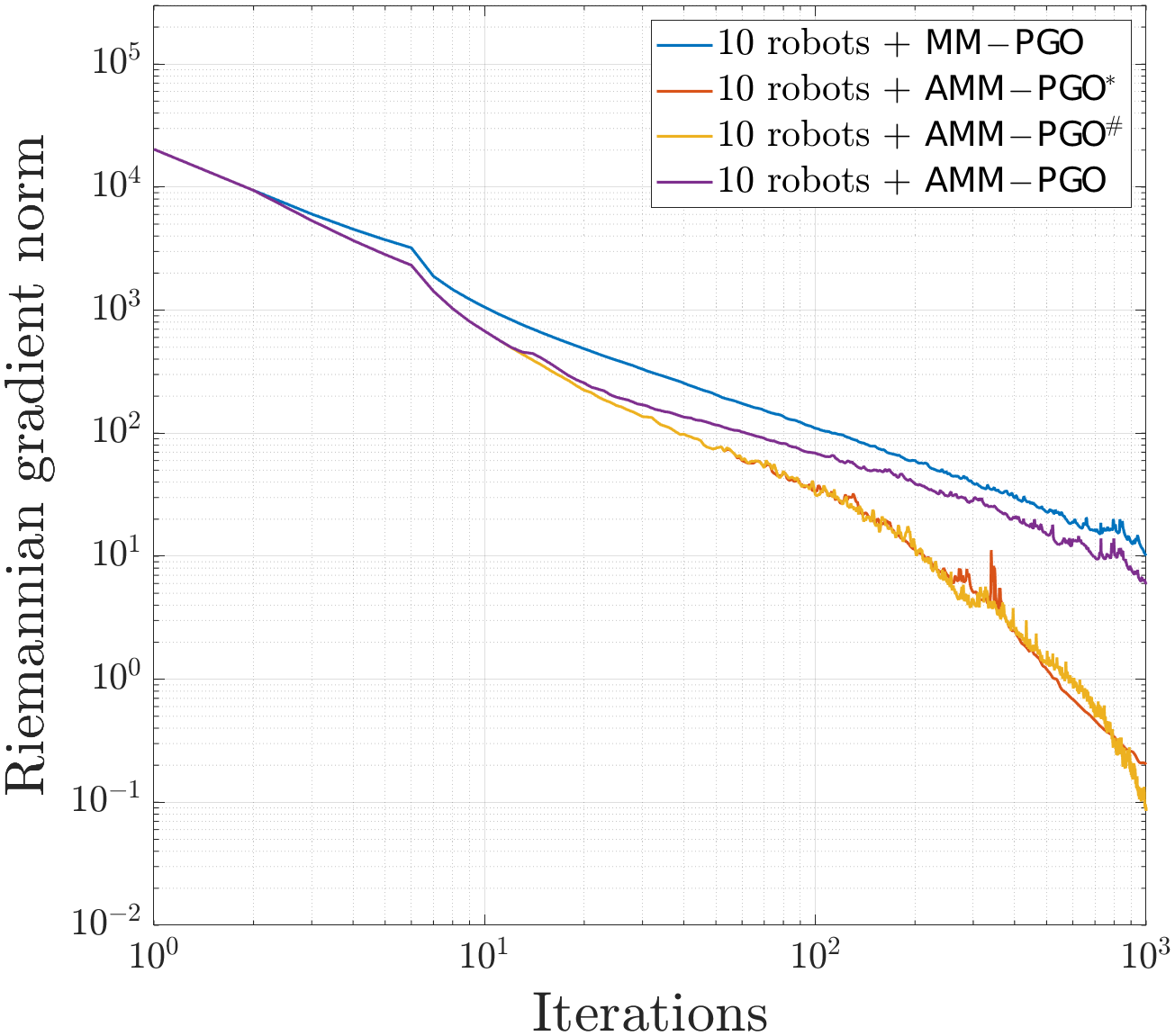}}&
		\hspace{-0.6em}\subfloat[][50 robots]{\includegraphics[trim =0mm 0mm 0mm 0mm,width=0.24\textwidth]{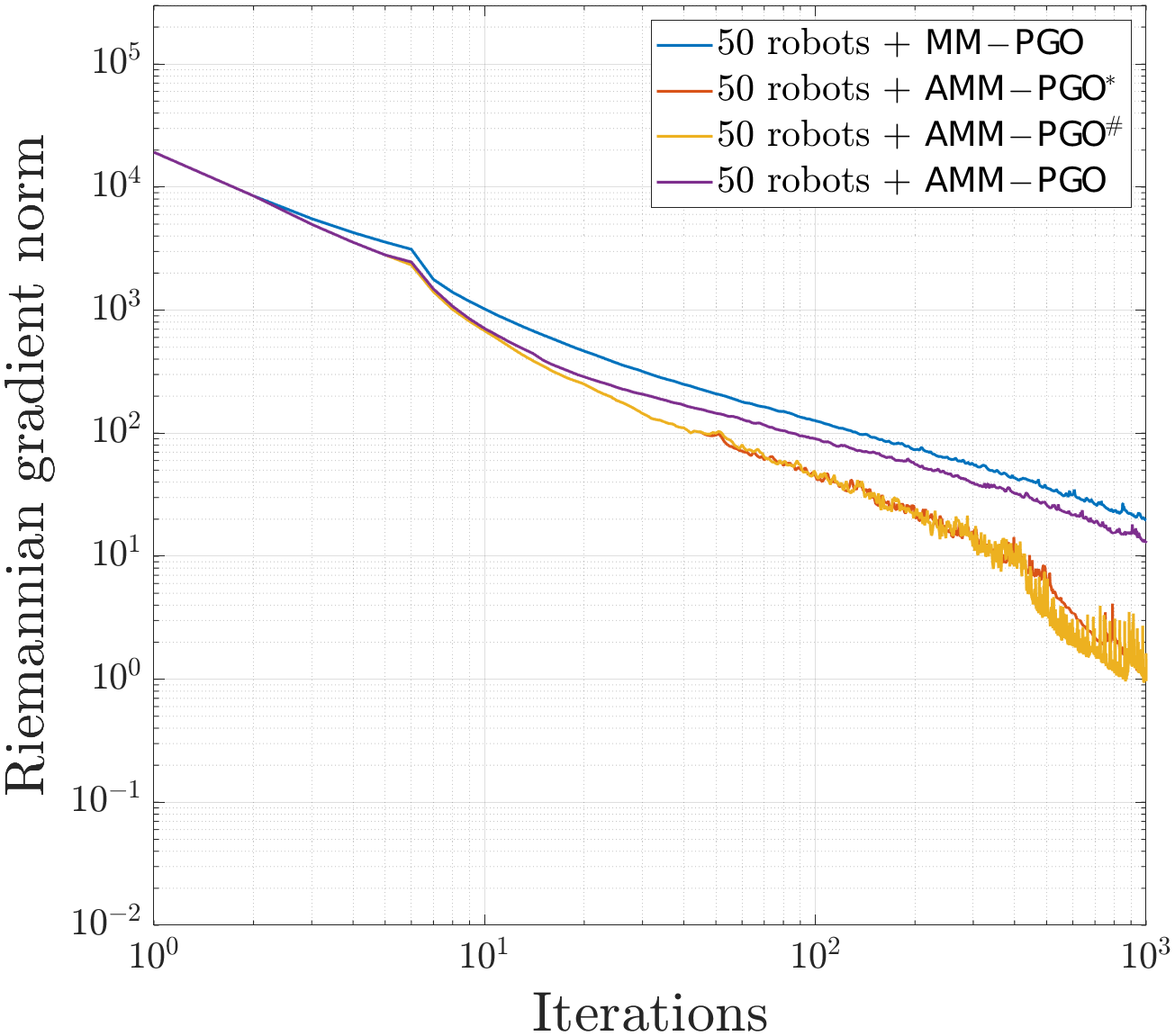}}
	\end{tabular}
	\caption{The Riemannian gradient norms of  $\mm$, $\ammc$, $\ammd$ and $\amm$ \cite{fan2020mm}  for distributed PGO with the \textbf{Welsch loss kernel} on $5$, $10$ and $50$ robots. The results are averaged over $20$  Monte Carlo runs.}\label{fig::cube_g_gm}
	\vspace{-1em}
\end{figure*}

In this section, we test and evaluate our MM methods for distributed PGO on $20$ simulated \textsf{\small Cube} datasets (see \cref{fig::cube})  with $5$, $10$ and $50$ robots. In the experiment, a simulated {\sf\small Cube} dataset has $12 \times 12 \times 12$ cube grids with $1$ m side length, a path of $3600$ poses along the rectilinear edge of the cube grid, odometric measurements between all the pairs of sequential poses, and loop-closure measurements between nearby but non-sequential poses that are randomly available with a probability of $0.1$. We generate the odometric and loop-closure measurements according to the noise models in \cite{rosen2016se} with an expected translational RMSE  of $0.02$ m and an expected angular RMSE of $0.02\pi$ rad. The centralized chordal initialization \cite{carlone2015initialization} is implemented such that distributed PGO with different number of robots have the same initial estimate. The maximum number of iterations is $1000$.

We evaluate the convergence of $\mm$, $\ammc$ and $\ammd$ in terms of the relative suboptimality gap and Riemannian gradient norm. For reference, we also make comparisons against $\amm$ \cite{fan2020mm}. Note that $\amm$ is the original accelerated MM method for distributed PGO whose adaptive restart scheme is conservative and might  prohibit Nesterov's acceleration.

\textbf{Relative Suboptimality Gap.} We implement the certifiably-correct $\sesync$ \cite{rosen2016se} to get the globally optimal objective value $F^*$  of distributed PGO with the trivial loss kernel (\cref{example::trivial}), making it possible to compute the relative suboptimality gap $(F-F^*)/F^*$ where $F$ is the objective value for each iteration. The results are in \cref{fig::cube_f}.

\textbf{Riemannian Gradient Norm.}  We also compute the Riemannian gradient norm of distributed PGO with the trivial, Huber  and Welsch loss kernels in \cref{example::huber,example::GM,example::trivial} for evaluation. Note that it is difficult to find globally optimal solutions to distributed PGO if Huber and Welsch loss kernels are used. The results are in \cref{fig::cube_g_huber,fig::cube_g_gm,fig::cube_g_trivial}.

In \cref{fig::cube_g_gm,fig::cube_g_huber,fig::cube_f,fig::cube_g_trivial}, it can be seen that $\mm$, $\ammc$, $\ammd$ and $\amm$ have a faster convergence if the number of robots (nodes) decreases. This is expected since $G(X|X^{(k)})$ and $H(X|X^{(k)})$ in \cref{eq::G,eq::lG} result in tighter approximations for distributed PGO with fewer robots (nodes). In addition, \cref{fig::cube_g_gm,fig::cube_g_huber,fig::cube_g_trivial} suggest that the convergence rate of $\mm$, $\ammc$, $\ammd$ and $\amm$ also relies on the type of loss kernels. Nevertheless, $\ammc$, $\ammd$ and $\amm$ accelerated by Nesterov's method outperform  unaccelerated $\mm$  by a large margin for any number of robots and any types of loss kernels, which means that Nesterov's method improves the convergence of distributed PGO. In particular, Figs. \ref{fig::cube_f}(a), \ref{fig::cube_g_trivial}(a), \ref{fig::cube_g_huber}(a), \ref{fig::cube_g_gm}(a) indicate that $\ammd$  with $50$ robot still converges faster than $\mm$ with $5$ robots despite that the later has a much smaller number of robots. Therefore, we conclude that Nesterov's method accelerates the convergence of distributed PGO.

We emphasize the convergence comparisons of Nesterov's accelerated $\ammc$, $\ammd$ and $\amm$ that merely differ from each other by the adaptive restart schemes---$\ammc$ has an additional master node to aggregate information from all the robots (nodes), whereas $\ammd$ and $\amm$ are restricted to one inter-node communication round per iteration among neighboring robots (nodes). Notwithstanding limited local communication, as is shown in \cref{fig::cube_f,fig::cube_g_huber,fig::cube_g_gm}, $\ammd$ has a convergence rate comparable to that of $\ammc$ using a master node while being significantly faster than $\amm$.  {\highlight In particular,  $\ammd$ reduces adaptive restarts by $80\%$ to $95\%$ compared to $\amm$ on the \textsf{\small Cube} datasets}, and thus, is expected to make better use of Nesterov's acceleration. Since $\ammd$ and $\amm$ differ in the adaptive restart schemes, we attribute the faster convergence of $\ammd$ to its redesigned adaptive restart scheme. These results suggest that $\ammd$ is advantageous over other methods for very large-scale distributed PGO where computational and communicational efficiency are equally important.

\vspace{-0.5em}

\vspace{-0.5em}
\subsection{Benchmark Datasets}\label{subsection::experiment::benchmark}
In this section, we evaluate our MM methods ($\mm$, $\ammc$ and $\ammd$) for distributed PGO on a number of 2D and 3D SLAM benchmark datasets \cite{rosen2016se} (see \datasetinfo). We use the trivial loss kernel and there are no outliers such that the globally optimal solution can be exactly computed with $\sesync$ \cite{rosen2016se}. For each dataset, we also make comparisons against $\sesync$ \cite{rosen2016se}, distributed Gauss-Seidel ($\dgs$) \cite{choudhary2017distributed} and the Riemannian block coordinate descent ($\rbcd$) \cite{tian2019distributed} method, all of which are the state-of-the-art algorithms for centralized and distributed PGO. The $\sesync$ and $\dgs$ methods use the recommended settings in \cite{choudhary2017distributed,rosen2016se}. We implement two Nesterov's accelerated variants of $\rbcd$  \cite{tian2019distributed}, i.e., one with greedy selection rule and adaptive restart ($\rbcdc$) and the other with uniform selection rule and fixed restart ($\rbcdd$)\footnote{In the experiments, we run $\rbcdd$ \cite{tian2019distributed} with fixed restart frequencies of 30, 50 and 100 iterations for each dataset and report the best results.}. As mentioned before, $\ammc$ and $\ammd$ can take at most one iteration when updating $\Xakp$ using \cref{eq::GYak,eq::xG}, which is similar to $\rbcdc$ and $\rbcdd$. An overview of the aforementioned methods is given in \cref{table::cmp_method}.

\begin{table}
	\centering
		\renewcommand{\arraystretch}{1.2}
	\setlength{\tabcolsep}{0.3em}
	\caption{An overview of the state-of-the-art algorithms for distributed and centralized PGO. Note that $\ammc$ and $\rbcdc$ require a master node for distributed PGO, and $\ammd$  is the only accelerated method with provable convergence for distributed PGO without master node.
	}\label{table::cmp_method}
	\begin{tabular}{|P{0.125\textwidth}|P{0.07\textwidth}|P{0.075\textwidth}|P{0.085\textwidth}|P{0.075\textwidth}|}
		\hline			
%		\multirow{2}{*}{Method} &\multirow{2}{*}{Distributed} & \multirow{2}{*}{Accelerated}  &Master Node Required & \multirow{2}{*}{Converged}\\
		{Method} &{Distributed} & {Accelerated}  &Masterless & {Converged}\\
		\hline
		\hline
		$\sesync$\cite{rosen2016se}& \xmark & N/A & N/A & \cmark \\
		\hline
		$\dgs$\cite{choudhary2017distributed} &\cmark & \xmark & \cmark & \xmark \\
		\hline
		$\rbcdc$\cite{tian2019distributed}&\cmark& \cmark & \xmark & \cmark\\
		\hline
		$\rbcdd$\cite{tian2019distributed}&\cmark & \cmark & \cmark & \xmark\\
		\hline
		$\mm$ & \cmark &\xmark & \cmark & \cmark \\
		\hline
		$\ammc$&\cmark & \cmark & \xmark & \cmark\\
		\hline
		$\ammd$&\cmark & \cmark & \cmark & \cmark \\
		\hline
	\end{tabular}
\end{table}

\begin{table*}[t]
	\renewcommand{\arraystretch}{1.22}
		\setlength{\tabcolsep}{0.25em}
	\centering
	\caption{Results of distributed PGO on 2D and 3D SLAM benchmark datasets (see \datasetinfo). The distributed PGO has 10 robots  and is initialized with distributed Nesterov's accelerated chordal initialization \cite{fan2020mm}. We report the objective values of each method with 100, 250 and 1000 iterations. $F^{(\sk)}$ and $F^*$ are the objective value at iteration $\sk$ and globally optimal objective value, respectively. The best results are colored in {\color{red}red} and the second best in {\color{blue}blue} if no methods tie for the best.}	\label{table::comp}
	\begin{tabular}{|c||c|c||c|c|c||c|c|c|c|}
		\hline
		\multirow{3}{*}{Dataset}&\multirow{3}{*}{$F^{(0)}$} &\multirow{3}{*}{$F^{*}$} &\multirow{3}{*}{\;$\sk$\;}& \multicolumn{6}{c|}{$F^{(\sk)}$}\\
		\cline{5-10}
		& & & & \multicolumn{2}{c||}{Methods w/ Master Node} & \multicolumn{4}{c|}{Methods w/o Master Node}\\
		\cline{5-10}
		& & & & \;$\ammc$\;  & $\rbcdc$ \cite{tian2019distributed} & $\;\;\;\mm\;\;\;$  & \;$\ammd$\;  & $\;\;\;\dgs$ \cite{choudhary2017distributed} $\;$ & {$\rbcdd$ \cite{tian2019distributed}}  \\
		\hline
    \multicolumn{10}{|c|}{{2D SLAM Benchmark Datasets}}\\
		\hline
		\multirow{3}{*}{\sf ais2klinik} &\multirow{3}{*}{$3.8375\times 10^2$} &\multirow{3}{*}{$1.8850\times 10^2$}
    &100 & {\color{blue}$2.0372\times 10^{2}$} &$2.1079\times 10^{2}$ & $2.1914\times 10^{2}$ & {\color{red}$2.0371\times 10^{2}$} & $8.4701\times 10^{2}$ & $2.1715\times 10^{2}$  \\
		\cline{4-10}
		& &
    &250 & {\color{blue}$1.9447\times 10^{2}$} &$2.0077\times 10^{2}$ & $2.1371\times 10^{2}$ & {\color{red}$1.9446\times 10^{2}$} & $9.1623\times 10^{1}$ & $2.1084\times 10^{2}$  \\
		\cline{4-10}
		& &
    &1000& {\color{blue}$1.8973\times 10^{2}$} &$1.9074\times 10^{2}$ & $2.0585\times 10^{2}$ & {\color{red}$1.8936\times 10^{2}$} & $3.8968\times 10^{2}$ & $2.0253\times 10^{2}$  \\
		\hline
    \multirow{3}{*}{\sf city} &\multirow{3}{*}{$7.0404\times 10^2$} &\multirow{3}{*}{$6.3862\times 10^2$}
    &100 &\color{ao}$6.4327\times 10^2$ & $6.5138\times 10^2$ &$6.5061\times 10^2$ &\color{ao}$6.4327\times10^{2}$ &$7.7745\times10^{2}$ & $6.5396\times 10^2$ \\
		\cline{4-10}
    & &
    &250 &\color{ao}$6.3899\times 10^2$ & $6.4732\times 10^2$ &$6.4850\times 10^2$ &\color{ao}$6.3899\times 10^2$ &$7.0063\times 10^2$ & $6.5122\times 10^2$\\
		\cline{4-10}
    & & & 1000 & \color{ao}$6.3862\times 10^2$ & $6.3935\times 10^2$ &$6.4461\times 10^2$ &\color{ao}$6.3863\times 10^2$ &$6.5583\times 10^2$ & $6.4768\times 10^2$\\
		\hline
		\multirow{3}{*}{\sf CSAIL} &\multirow{3}{*}{$3.1719\times 10^1$} &\multirow{3}{*}{$3.1704\times 10^1$}
    &100 & \color{ao}$3.1704\times10^1$ & \color{ao}$3.1704\times10^1$ & $3.1706\times10^1$ & \color{ao}$3.1704\times10^1$ & $3.2479\times10^1$ & $3.1705\times10^1$ \\
		\cline{4-10}
		& &
    &250 & \color{ao}$3.1704\times10^1$ & \color{ao}$3.1704\times10^1$ & $3.1706\times10^1$ & \color{ao}$3.1704\times10^1$ & $3.1792\times10^1$ & \color{ao}$3.1704\times10^1$ \\
		\cline{4-10}
		& &
    &1000 & \color{ao}$3.1704\times10^1$ & \color{ao}$3.1704\times10^1$ & $3.1705\times10^1$ & \color{ao}$3.1704\times10^1$ & $3.1712\times10^1$ & \color{ao}$3.1704\times10^1$ \\
		\hline
		\multirow{3}{*}{\sf M3500} &\multirow{3}{*}{$2.2311\times 10^2$} &\multirow{3}{*}{$1.9386\times 10^2$}
    &100 & \color{red}$1.9446\times10^2$ & $1.9511\times10^2$ & $1.9560\times10^2$ & \color{blue}$1.9447\times10^2$ & $1.9557\times10^2$ & $1.9551\times10^2$ \\
		\cline{4-10}
		& &
    &250 & \color{red}$1.9414\times10^2$ & $1.9443\times10^2$ & $1.9516\times10^2$ & \color{red}$1.9414\times10^2$ & $1.9445\times10^2$ & $1.9511\times10^2$ \\
		\cline{4-10}
		& &
    &1000 & \color{red}$1.9388\times10^2$ & $1.9392\times10^2$ & $1.9461\times10^2$ & \color{red}$1.9388\times10^2$ & $1.9415\times10^2$ & $1.9455\times10^2$ \\
		\hline
		\multirow{3}{*}{\sf intel} &\multirow{3}{*}{$5.3269\times 10^1$} &\multirow{3}{*}{$5.2348\times 10^1$}
    &100 & \color{red}$5.2397\times10^1$ & $5.2496\times10^1$ & $5.2517\times 10^1$ & \color{red}$5.2397\times 10^1$ & $5.2541\times 10^1$ & $5.2526\times10^1$ \\
		\cline{4-10}
		& &
    &250 & \color{blue}$5.2352\times10^1$ & $5.2415\times10^1$ & $5.2483\times 10^1$ & \color{red}$5.2351\times 10^1$ & $5.2441\times 10^1$ & $5.2489\times10^1$ \\
		\cline{4-10}
		& &
    &1000 & \color{red}$5.2348\times10^1$ & $5.2349\times10^1$ & $5.2421\times 10^1$ & \color{red}$5.2348\times 10^1$ & $5.2381\times 10^1$ & $5.2425\times10^1$ \\
		\hline
		\multirow{3}{*}{\sf MITb} &\multirow{3}{*}{$8.8430\times 10^1$} &\multirow{3}{*}{$6.1154\times 10^1$}
    &100 & \color{blue}$6.1331\times10^1$ & $6.1518\times10^1$ & $6.3657\times10^1$ & \color{red}$6.1330\times10^1$ & $9.5460\times 10^1$ & $6.1997\times10^1$ \\
		\cline{4-10}
		& &
    &250 & \color{red}$6.1157\times10^1$ & $6.1187\times10^1$ & $6.2335\times10^1$ & \color{blue}$6.1165\times10^1$ & $7.8273\times 10^1$ & $6.1599\times10^1$ \\
		\cline{4-10}
		& &
    &1000 & \color{ao}$6.1154\times10^1$ & \color{ao}$6.1154\times10^1$ & $6.1454\times10^1$ & \color{ao}$6.1154\times10^1$ & $7.2450\times 10^1$ & $6.1209\times10^1$ \\
		\hline
    \multicolumn{10}{|c|}{{3D SLAM Benchmark Datasets}}\\
		\hline
    \multirow{3}{*}{\sf sphere} &\multirow{3}{*}{$1.9704\times 10^3$} &\multirow{3}{*}{$1.6870\times 10^3$}
    &100 & \color{ao}$1.6870\times 10^{3}$ & \color{ao}$1.6870\times 10^3$ & $1.6901\times 10^3$ & \color{ao}$1.6870\times 10^{3}$ & $1.6875\times 10^{3}$ & \color{ao}$1.6870\times 10^{3}$  \\
		\cline{4-10}
		& &
    &250 & \color{ao}$1.6870\times 10^{3}$ & \color{ao}$1.6870\times 10^3$ & $1.6874\times 10^3$ & \color{ao}$1.6870\times 10^{3}$ & $1.6872\times 10^{3}$ & \color{ao}$1.6870\times 10^{3}$  \\
		\cline{4-10}
		& &
    &1000 & \color{ao}$1.6870\times 10^{3}$ & \color{ao}$1.6870\times 10^3$ & \color{ao}$1.6870\times 10^3$ & \color{ao}$1.6870\times 10^{3}$ & $1.6872\times 10^{3}$ & \color{ao}$1.6870\times 10^{3}$  \\
		\hline
    \multirow{3}{*}{\sf torus} &\multirow{3}{*}{$2.4654\times 10^4$} &\multirow{3}{*}{$2.4227\times 10^4$}
    &100 & \color{ao}$2.4227\times 10^{4}$ & \color{ao}$2.4227\times 10^{4}$ & $2.4234\times 10^4$ & \color{ao}$2.4227\times 10^4$ & $2.4248\times 10^4$ & \color{ao}$2.4227\times 10^4$  \\
		\cline{4-10}
		& &
    &250 & \color{ao}$2.4227\times 10^{4}$ & \color{ao}$2.4227\times 10^{4}$ & \color{ao}$2.4227\times 10^4$ & \color{ao}$2.4227\times 10^4$ & $2.4243\times 10^4$ & \color{ao}$2.4227\times 10^4$  \\
		\cline{4-10}
		& &
    &1000 & \color{ao}$2.4227\times 10^{4}$ & \color{ao}$2.4227\times 10^{4}$ & \color{ao}$2.4227\times 10^4$ & \color{ao}$2.4227\times 10^4$ & $2.4236\times 10^4$ & \color{ao}$2.4227\times 10^4$  \\
		\hline
    \multirow{3}{*}{\sf grid} &\multirow{3}{*}{$2.8218\times 10^5$} &\multirow{3}{*}{$8.4319\times 10^4$}
    &100 & {\color{blue}$8.4323\times 10^{4}$} & \color{red}$8.4320\times 10^{4}$ & $1.0830\times 10^5$ & $8.4399\times 10^4$ & $1.4847\times 10^5$ & $8.4920\times 10^4$  \\
		\cline{4-10}
		& &
    &250 & {\color{ao}$8.4319\times 10^{4}$} & \color{ao}$8.4319\times 10^{4}$ & $8.6054\times 10^4$ & $8.4321\times 10^4$ & $1.4066\times 10^5$ & \color{ao}$8.4319\times 10^4$  \\
		\cline{4-10}
		& &
    &1000 & \color{ao}$8.4319\times 10^{4}$ & \color{ao}$8.4319\times 10^{4}$ & \color{ao}$8.4319\times 10^4$ & \color{ao}$8.4319\times 10^4$ & $1.4654\times 10^5$ & \color{ao}$8.4319\times 10^4$  \\
		\hline
    \multirow{3}{*}{\sf garage} &\multirow{3}{*}{$1.5470\times 10^0$} &\multirow{3}{*}{$1.2625\times 10^0$}
    &100 & \color{red}$1.3105\times 10^0$ & $1.3282\times 10^0$ & $1.3396\times 10^0$ & \color{red}$1.3105\times 10^0$ & $1.3170\times 10^0$ & $1.3364\times 10^0$  \\
		\cline{4-10}
		& &
    &250 & \color{blue}$1.2872\times 10^0$ & $1.3094\times 10^0$ & $1.3288\times 10^0$ & \color{blue}$1.2872\times 10^0$ & \color{red}$1.2867\times 10^0$ & $1.3276\times 10^0$  \\
		\cline{4-10}
		& &
    &1000 & \color{red}$1.2636\times 10^0$ & $1.2681\times 10^0$ & $1.3145\times 10^0$ & \color{red}$1.2636\times 10^0$ & $1.2722\times 10^0$ & $1.3124\times 10^0$  \\
		\hline
    \multirow{3}{*}{\sf cubicle} &\multirow{3}{*}{$8.3514\times 10^2$} &\multirow{3}{*}{$7.1713\times 10^2$}
    &100 & \color{ao}$7.1812\times 10^2$ & $7.2048\times 10^2$ & $7.2300\times 10^2$ & \color{ao}$7.1812\times 10^2$ & $7.3185\times 10^2$ & $7.2210\times 10^2$  \\
		\cline{4-10}
		& &
    &250 & \color{red}$7.1714\times 10^2$ & $7.1794\times 10^2$ & $7.2082\times 10^2$ & \color{blue}$7.1715\times 10^2$ & $7.2308\times 10^2$ & $7.2081\times 10^2$  \\
		\cline{4-10}
		& &
    &1000 & \color{ao}$7.1713\times 10^2$ & \color{ao}$7.1713\times 10^2$ & $7.2082\times 10^2$ & \color{ao}$7.1713\times 10^2$ & $7.2044\times 10^2$ & $7.1845\times 10^2$  \\
		\hline
    \multirow{3}{*}{\sf rim} &\multirow{3}{*}{$8.1406\times 10^4$} &\multirow{3}{*}{$5.4609\times 10^3$}
    &100 & \color{ao}$5.5044\times 10^3$ & $5.7184\times 10^3$ & $5.8138\times 10^3$ & \color{ao}$5.5044\times 10^3$ & $6.1840\times 10^3$ & $5.7810\times 10^3$  \\
		\cline{4-10}
		& &
    &250 & \color{ao}$5.4648\times 10^3$ & $5.5050\times 10^3$ & $5.7197\times 10^3$ & \color{ao}$5.4648\times 10^3$ & $6.1184\times 10^3$ & $5.7195\times 10^3$  \\
		\cline{4-10}
		& &
    &1000 & \color{ao}$5.4609\times 10^3$ & $5.4617\times 10^3$ & $5.5509\times 10^3$ & \color{ao}$5.4609\times 10^3$ & $6.0258\times 10^3$ & $5.5373\times 10^3$  \\
		\hline
	\end{tabular}
\vspace{-0.75em}
\end{table*}

\textbf{Number of Iterations.} First, we examine the convergence of $\mm$, $\ammc$, $\ammd$, $\dgs$ \cite{choudhary2017distributed}, $\rbcdc$ \cite{tian2019distributed}  and $\rbcdd$ \cite{tian2019distributed} w.r.t. the number of iterations. The distributed PGO has 10 robots and all the methods are initialized with distributed Nesterov's accelerated chordal initialization \cite{fan2020mm}.

The objective values of each method with 100, 250 and 1000 iterations are reported in \cref{table::comp} and the reconstruction results using $\ammd$ are shown in \refbenchmark{1}{2}. For almost all the benchmark datasets, $\ammc$ and $\ammd$ outperform the other methods ($\mm$, $\dgs$, $\rbcdc$ and $\rbcdd$). While $\rbcdc$ and $\rbcdd$ have similar performances in four relatively  easy datasets---{\sf CSAIL}, {\sf sphere}, {\sf torus} and {\sf grid}---$\ammc$ and $\ammd$ achieve much better results in the other more challenging datasets in particular if there are no more than 250 iterations. As discussed later, $\ammc$ and $\ammd$ have faster convergence to more accurate estimates without any extra computation and communication in contrast to $\rbcdc$ and $\rbcdd$. Last but not the least, \cref{table::comp} demonstrates that the accelerated $\ammc$ and $\ammd$ converge significantly faster than the unaccelerated $\mm$, which further validates the usefulness of Nesterov's method.

\begin{figure*}[t]
	\centering
	\begin{tabular}{cccc}
		\hspace{-0.5em}\subfloat[][$\Delta=1\times10^{-2}$]{\includegraphics[trim =0mm 0mm 0mm 0mm,width=0.2425\textwidth]{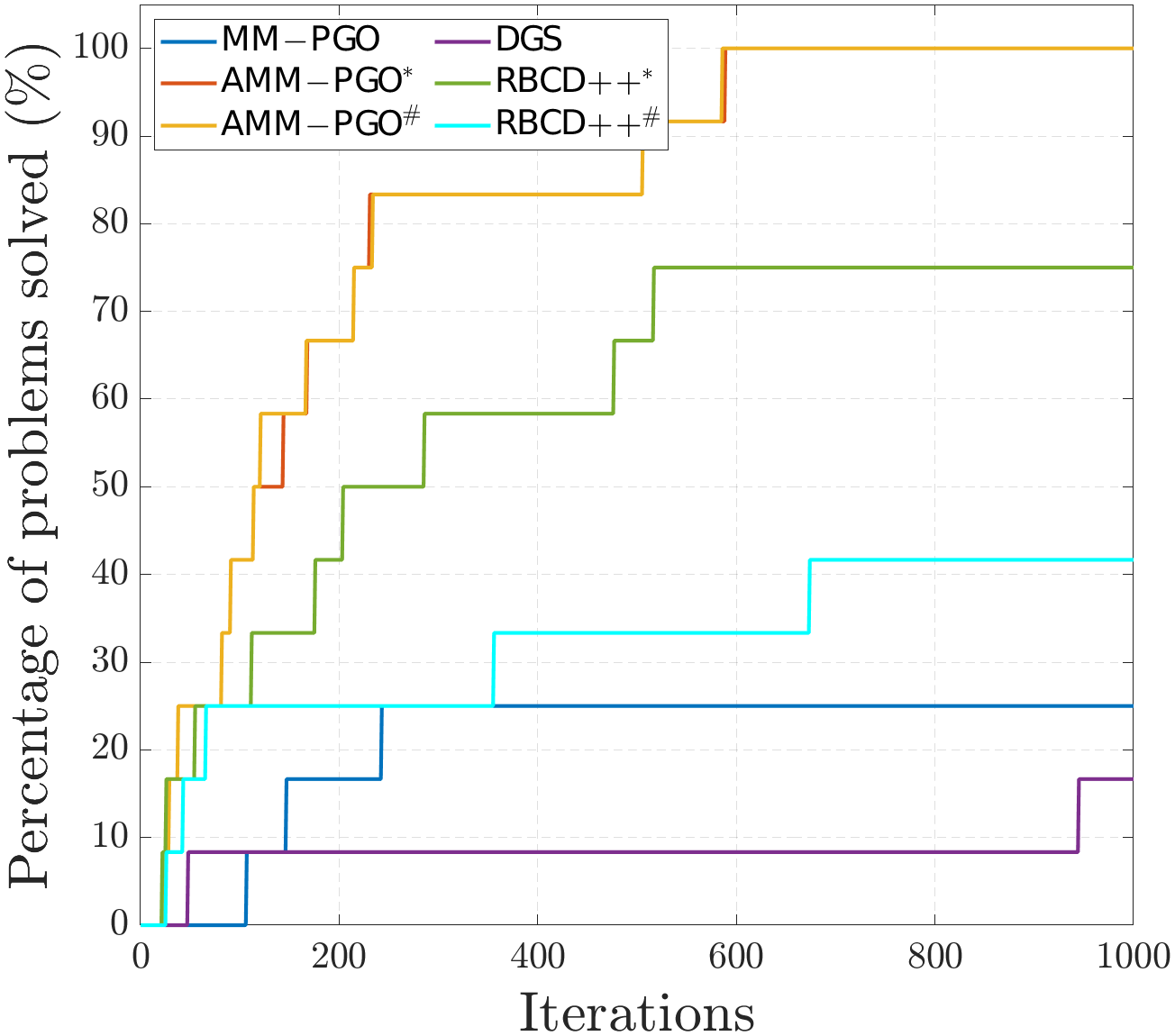}} &
		\hspace{-0.6em}\subfloat[][$\Delta=5\times10^{-3}$]{\includegraphics[trim =0mm 0mm 0mm 0mm,width=0.2425\textwidth]{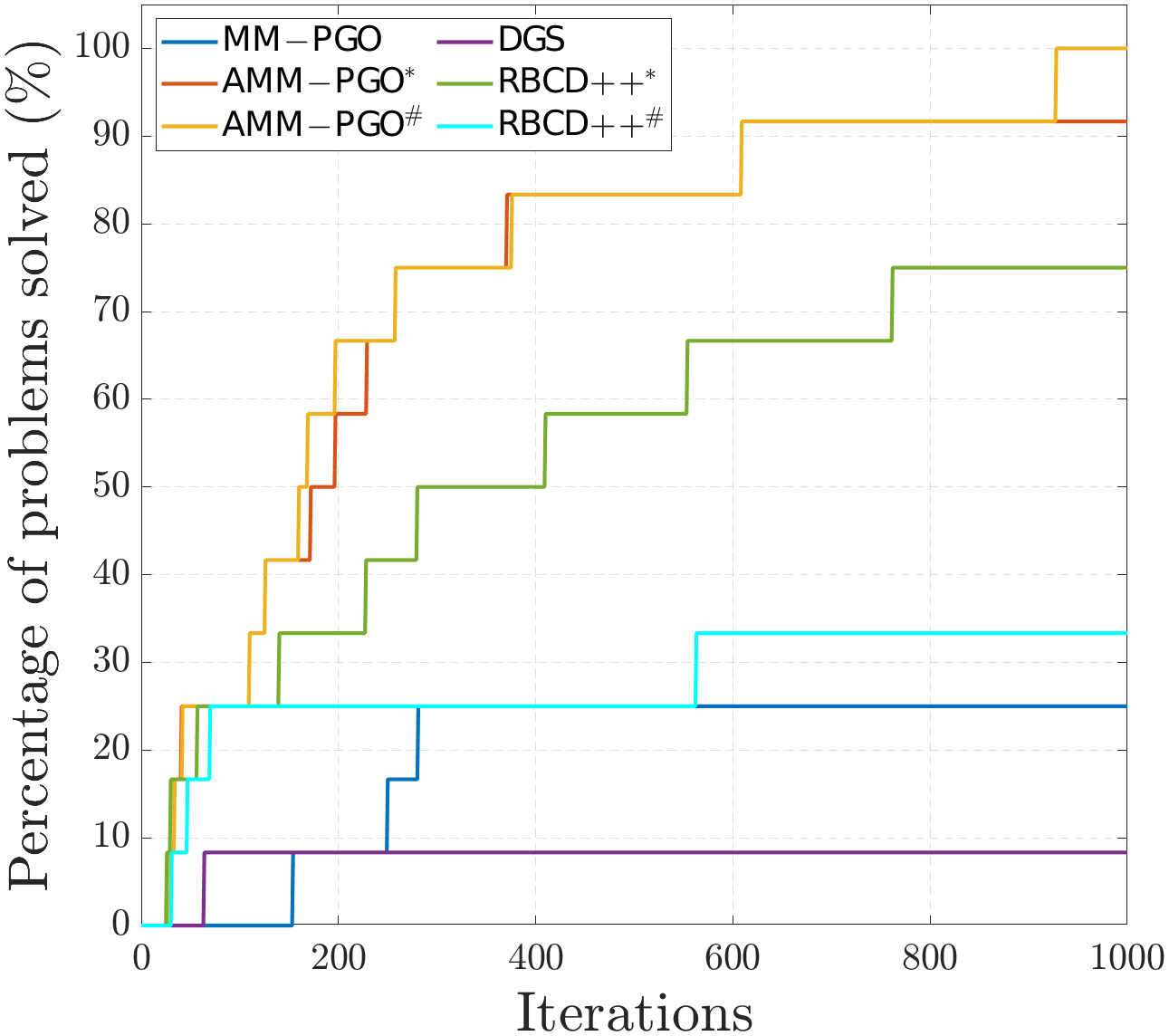}} &
		\hspace{-0.6em}\subfloat[][$\Delta=1\times10^{-3}$]{\includegraphics[trim =0mm 0mm 0mm 0mm,width=0.2425\textwidth]{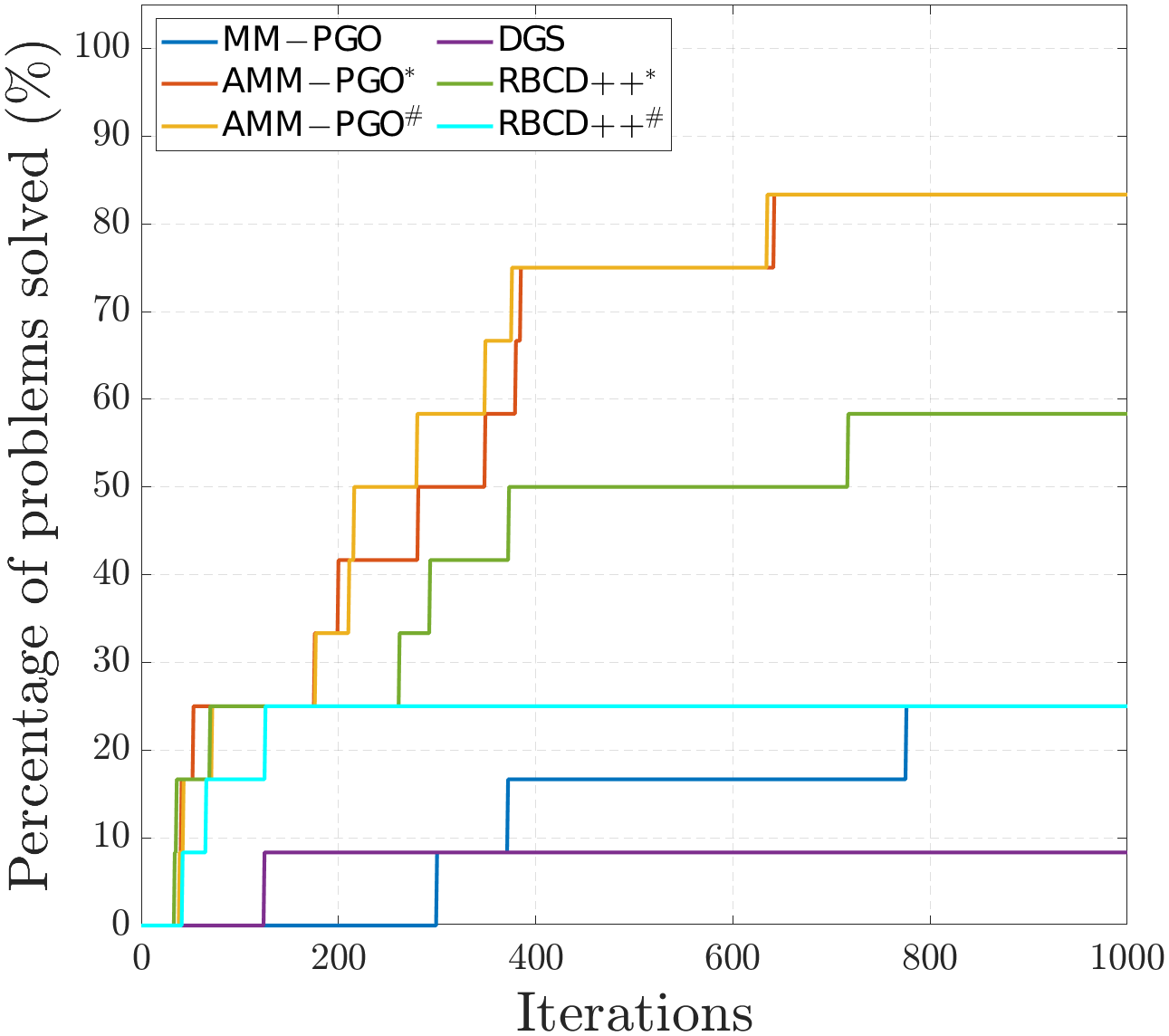}}&
		\hspace{-0.6em}\subfloat[][$\Delta=1\times10^{-4}$]{\includegraphics[trim =0mm 0mm 0mm 0mm,width=0.2425\textwidth]{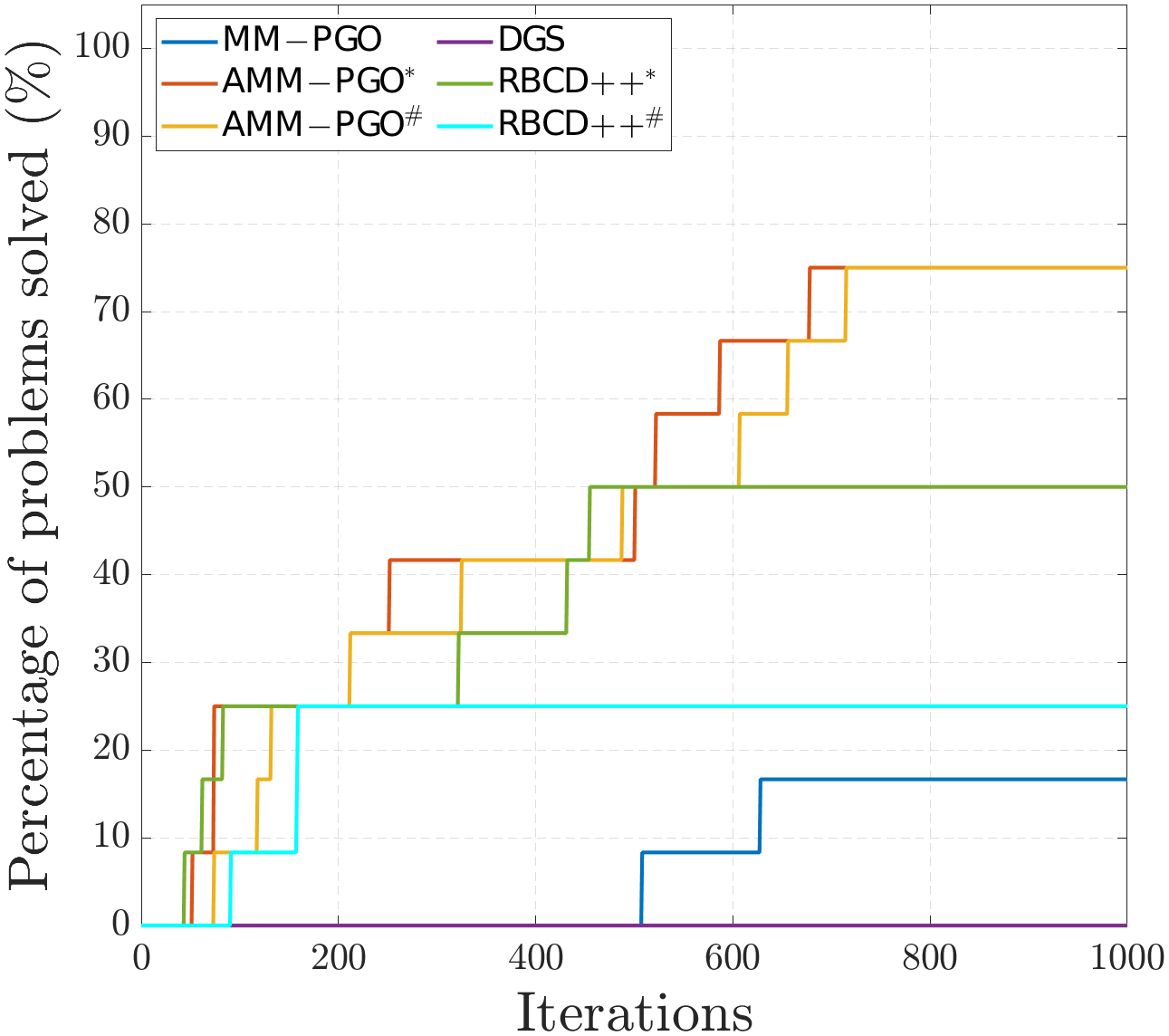}}
	\end{tabular}
	\caption{Performance profiles for $\mm$, $\ammc$, $\ammd$, $\dgs$ \cite{choudhary2017distributed}, $\rbcdc$ \cite{tian2019distributed} and $\rbcdd$ \cite{tian2019distributed} on 2D and 3D SLAM Benchmark datasets (see \datasetinfo). The performance is based on the number of iterations $\sk$ and the evaluation tolerances are $\Delta=1\times10^{-2}$, $5\times10^{-3}$, $1\times10^{-3}$, $1\times10^{-4}$. The distributed PGO has 10 robots (nodes) and is initialized with distributed Nesterov's accelerated chordal initialization \cite{fan2020mm}. Note that $\ammc$ and $\rbcdc$  require a master node, whereas $\mm$, $\ammd$, $\dgs$  and $\rbcdd$ do not.}\label{fig::succ_iter}
	\vspace{-1em}
\end{figure*}

We also compute the performance profiles \cite{dolan2002benchmarking} based on the number of iterations. Given a  tolerance $\Delta\in(0,\,1]$, the objective value threshold $F_{\Delta}(p)$ of a PGO problem $p$
is 
\vspace{-0.15em}
\begin{equation}\label{eq::Fdel}
F_{\Delta}(p) = F^* + \Delta\cdot\big(F^{(0)}-F^*\big)
\vspace{-0.15em}
\end{equation}
where  $F^{(0)}$ and $F^*$ are the initial and globally optimal objective values, respectively. Let $I_{\Delta}(p)$ denote the minimum number of iterations that  a PGO method takes to reduce the objective value to $F_\Delta(p)$, i.e.,
\vspace{-0.15em}
\begin{equation}
\nonumber
I_{\Delta}(p)\triangleq\min_{\sk}\big\{\sk\geq 0| F^{(\sk)}\leq F_{\Delta}(p)\big\}
\vspace{-0.15em}
\end{equation}
where $F^{(\sk)}$ is the objective value at iteration $\sk$. Then, for a problem set $\mathcal{P}$, the performance profiles of a PGO method is the percentage of problems solved w.r.t. the number of iterations $\sk$:
\vspace{-0.25em}
\begin{equation}
\nonumber
\substack{\text{\normalsize percentage of problems solved}\\ \text{\normalsize at iteration $\sk$}} \triangleq \frac{\big|\{p\in\mathcal{P}|I_{\Delta}(p)\leq \sk\}\big|}{|\mathcal{P}|}.
\end{equation}

The performance profiles based on the number of iterations over a variety of 2D and 3D SLAM benchmark datasets (see \datasetinfo) are shown in \cref{fig::succ_iter}. The tolerances evaluated are $\Delta=1\times10^{-2}$, $5\times10^{-3}$, $1\times10^{-3}$ and $1\times10^{-4}$. We report the performance of $\mm$, $\ammc$, $\ammd$, $\dgs$ \cite{choudhary2017distributed}, $\rbcdc$ \cite{tian2019distributed} and $\rbcdd$ \cite{tian2019distributed} for distributed PGO with 10 robots (nodes). As expected, $\ammc$ and $\ammc$ dominates the other methods ($\mm$, $\dgs$, $\rbcdc$ and $\rbcdd$) in terms of the convergence for all the tolerances $\Delta$, which means that  both of them are better choices for distributed PGO.

In \cref{table::comp} and \cref{fig::succ_iter}, we emphasize that $\ammd$ requiring no master node achieves comparable performance to that of $\ammc$ using a master node, and is a lot better than all the other methods with a master node ($\rbcdc$) and without ($\mm$, $\dgs$ and $\rbcdd$). Even though $\rbcdc$ and $\rbcdd$ are similarly accelerated with Nesterov's method, we remark that $\rbcdd$ without a master node suffers a great performance drop compared to $\rbcdc$, and more importantly, $\rbcdd$ has no convergence guarantees to first-order critical points. These results reverify that $\ammd$ is more suitable for very large-scale distributed PGO with limited local communication. 

 Note that all of $\mm$, $\ammc$, $\ammd$, $\dgs$ \cite{choudhary2017distributed}, $\rbcdc$ \cite{tian2019distributed} and $\rbcdd$ \cite{tian2019distributed} have to exchange poses of inter-node measurements with the neighbors, and thus, need almost the same amount of communication per iteration. However, \cref{fig::succ_iter} indicates that $\ammc$ and $\ammd$ have much faster convergence in terms of the number of iterations, which also means less communication for the same level of accuracy. In addition, $\rbcdc$ and $\rbcdd$ have to keep part of the nodes in idle during optimization and rely on red-black coloring for block aggregation and random sampling for block selection, which induce additional computation and communication. In contrast, neither $\ammc$ nor $\ammd$ has any extra practical restrictions except \cref{assumption::loss,assumption::mm,assumption::neighbor,assumption::master}.

\begin{figure*}[t]
	\centering
	\begin{tabular}{cccc}
		\subfloat[][$\Delta=1\times10^{-2}$]{\includegraphics[trim =0mm 0mm 0mm 0mm,width=0.2425\textwidth]{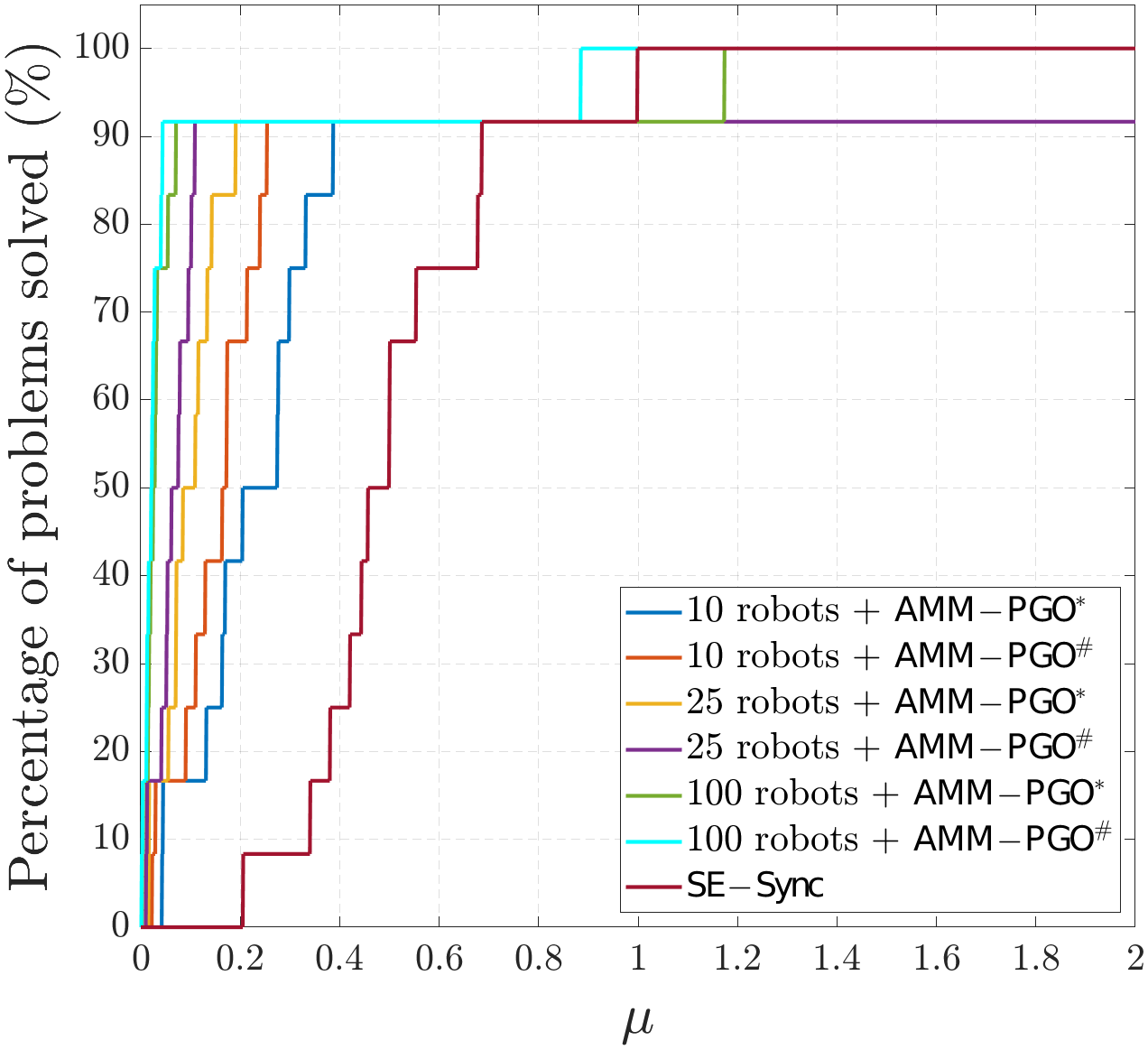}} &
		\subfloat[][$\Delta=1\times10^{-3}$]{\includegraphics[trim =0mm 0mm 0mm 0mm,width=0.2425\textwidth]{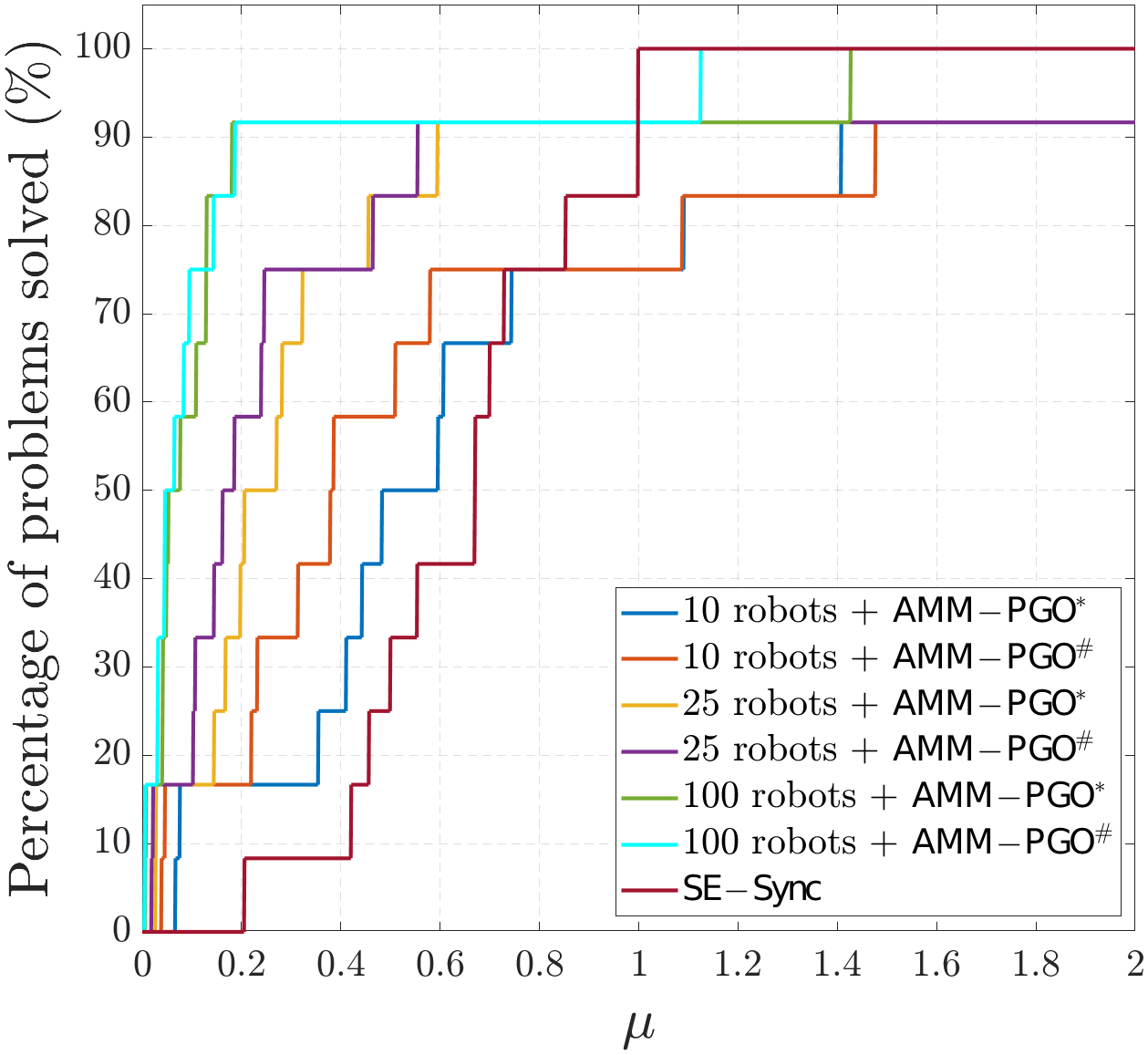}} &
		\subfloat[][$\Delta=1\times10^{-4}$]{\includegraphics[trim =0mm 0mm 0mm 0mm,width=0.2425\textwidth]{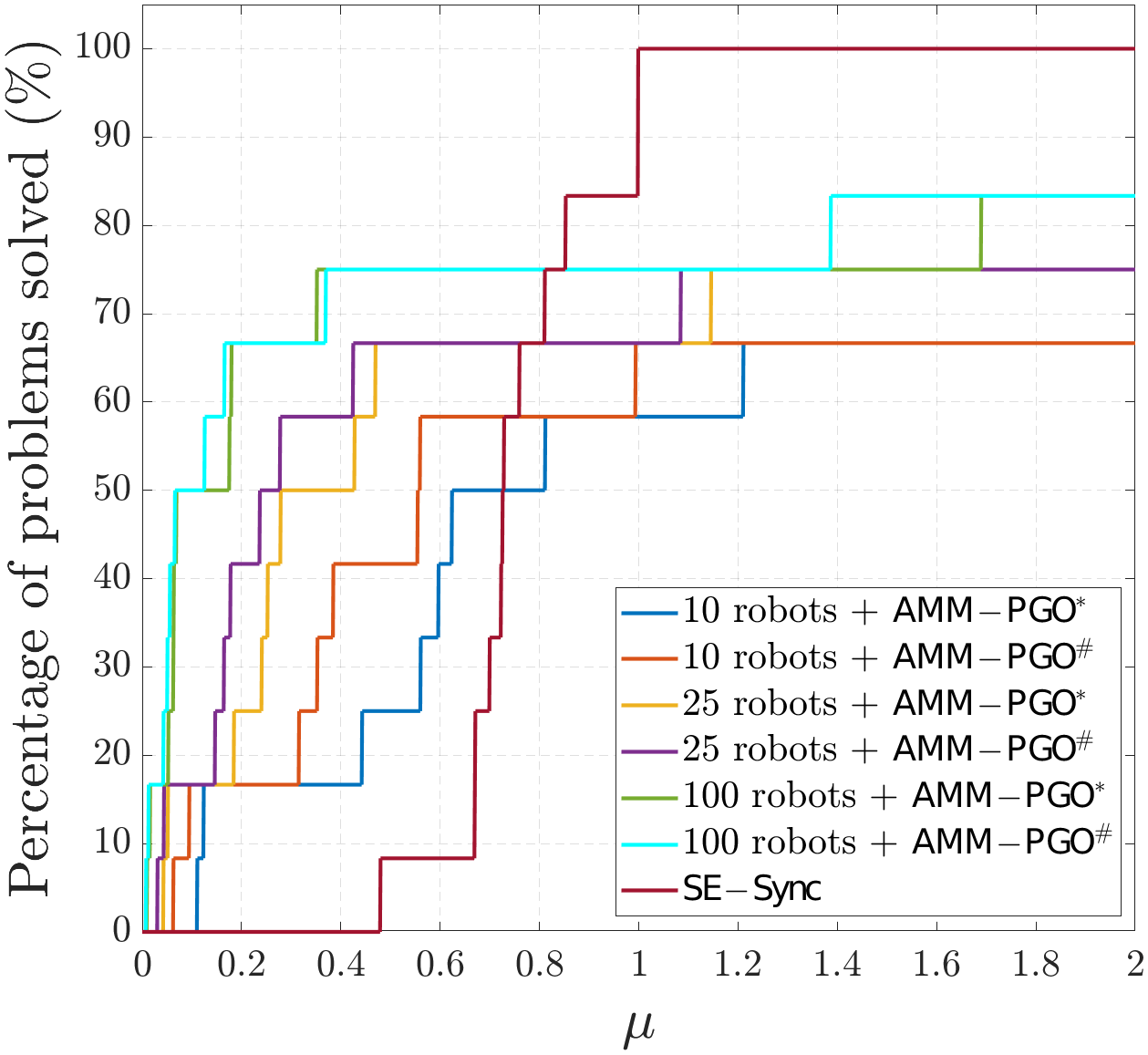}}&
		\subfloat[][$\Delta=1\times10^{-5}$]{\includegraphics[trim =0mm 0mm 0mm 0mm,width=0.2425\textwidth]{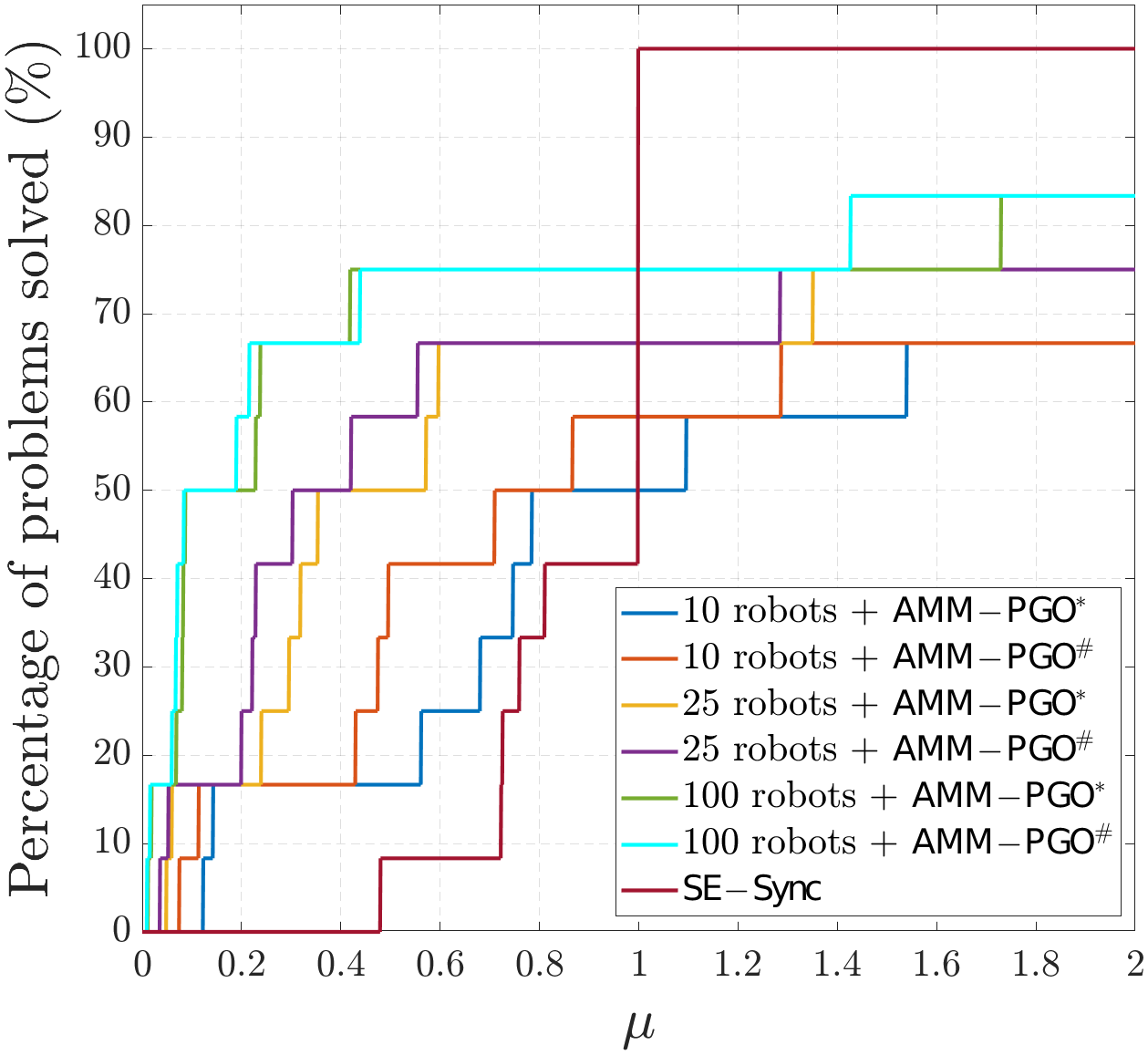}}
	\end{tabular}
	\caption{Performance profiles for $\ammc$, $\ammd$ and $\sesync$ \cite{rosen2016se} on 2D and 3D SLAM benchmark datasets (see \datasetinfo). The performance is based on the scaled average optimization time per node $\ratio\in[0,\,+\infty)$ with  tolerances $\Delta=1\times10^{-2}$, $1\times10^{-3}$, $1\times10^{-4}$, $1\times10^{-5}$. The distributed PGO has 10, 25 and 100 robots (nodes) and is initialized with the centralized  chordal initialization \cite{carlone2015initialization}. Note that $\sesync$  solves all the PGO problems globally at $\ratio =1$. }\label{fig::succ_time}
	\vspace{-1.em}
\end{figure*}

\textbf{Optimization Time.} We evaluate the optimization time of $\ammc$ and $\ammd$ with different numbers of robots (nodes) against the centralized baseline $\sesync$ \cite{rosen2016se}. To improve the  time efficiency of our methods, $\Xakp$ in \cref{eq::xG,eq::GYak}  uses the same rotation as $\Xakh$ in \cref{eq::xlG,eq::HYak} and merely updates the translation. Due to the different numbers of robots (nodes), the centralized chordal initialization \cite{carlone2015initialization} is used for all the runs.

Similar to the number of iterations, we use the performance profiles to evaluate $\ammc$ and $\ammd$ in terms of the optimization time. Recall from \cref{eq::Fdel} the objective value threshold $F_{\Delta}(p)$ where $p$ is the PGO problem and $\Delta\in(0,\,1]$ is the tolerance. Since the average optimization time per node is directly related with the speedup, we measure the efficiency of a distributed PGO method with $N$ nodes by computing the average optimization time $T_{\Delta}(p,N)$ that each node takes to reduce the objective value to $F_{\Delta}(p)$:
\vspace{-0.25em}
\begin{equation}
\nonumber
T_{\Delta}(p,N)=\tfrac{T_\Delta(p)}{N},
\vspace{-0.25em}
\end{equation}
where $T_\Delta(p)$ denotes the total optimization time of all the $N$ nodes. We remark that the centralized optimization method has $N=1$ node and $T_{\Delta}(p,N)=T_{\Delta}(p)$. Let $T_{\sesync}$ denote the optimization time that $\sesync$ needs to find the globally optimal solution. The performance profiles assume a distributed PGO method solves problem $p$  for some $\ratio\in[0,\,+\infty)$ if $T_{\Delta}(p,N)\leq \ratio\cdot T_{\sesync}$. Note that $\ratio$ is the scaled average optimization time per node and $\sesync$ solves problem $p$ globally at $\ratio=1$. Then, as a result of \cite{dolan2002benchmarking}, the performance profiles evaluate the speedup of distributed PGO methods for a given optimization problem set $\mathcal{P}$ using  the percentage of problems solved w.r.t. the scaled average optimization time per node $\ratio\in[0,\,+\infty)$: 
\vspace{-0.25em}
\begin{equation}
	\nonumber
	\substack{\text{\normalsize percentage of problems}\\ \text{\normalsize  solved at $\ratio$}} \triangleq\! \frac{\big|\{p\in\mathcal{P}|T_{\Delta}(p,\,N)\!\leq\! \ratio\cdot T_{\sesync}\}\big|}{|\mathcal{P}|}.
\end{equation}

\cref{fig::succ_time} shows the performance profiles based on the scaled average optimization time per node. The tolerances evaluated are $\Delta=1\times10^{-2}$, $1\times10^{-3}$, $1\times10^{-4}$ and $1\times10^{-5}$. We report the performance of $\ammc$ and $\ammd$ with $10$, $25$ and $100$ robots (nodes). For reference, we also evaluate the performance profile of the centralized PGO baseline $\sesync$ \cite{rosen2016se}.  As the results demonstrate, $\ammc$ and $\ammd$ are significantly faster than $\sesync$ \cite{rosen2016se} in most cases for modest accuracies of $\Delta=1\times 10^{-2}$ and $\Delta=1\times 10^{-3}$, for which the only challenging case is the {\sf CSAIL} dataset, whose chordal initialization is already very close to the globally optimal solution.  In spite of the performance decline for smaller tolerances of $\Delta=1\times 10^{-4}$ and $\Delta=1\times 10^{-5}$, $\ammc$ and $\ammd$ with 100 robots (nodes) still achieve a $2.5\sim 20\mathrm{x}$ speedup of optimization time over $\sesync$ for more than $70\%$ of the benchmark datasets, not to mention that the average optimization time per node of $\ammc$ and $\ammd$ decreases with the number of robots (nodes). {\highlight Note that the communication overhead is not considered in the experiments.} Nevertheless  \cref{fig::succ_time} indicates that $\ammc$ and $\ammd$ are promising  as fast parallel backends for very large-scale PGO and real-time multi-robot SLAM.

In summary, $\ammc$ and $\ammd$ achieve the state-of-the-art performance for distributed PGO and enjoy a significant multi-node speedup compared to the centralized baseline \cite{rosen2016se} for modestly but sufficiently accurate estimates.

\subsection{Robust Distributed PGO} 

In this section, we evaluate the robustness of $\ammd$ against the outlier inter-node loop closures. Similar to \cite{chang2020kimera,lajoie2020door}, we first use the distributed pairwise consistent measurement set maximization algorithm ($\pcm$) \cite{mangelson2018pairwise} to reject spurious inter-node loop closures and then solve the resulting distributed PGO using $\ammd$ with the trivial, Huber  and Welsch loss kernels in \cref{example::GM,example::trivial,example::huber} . 

We implement $\ammd$ on the 2D {\sf intel} and 3D {\sf garage} datasets (see \datasetinfo) with 10 robots (nodes). For each dataset, we add false inter-node loop closures with uniformly random rotation and translation errors in the range of $[0,\,\pi]$ rad and $[0,\,5]$ m, respectively. In addition, after the initial outlier rejection using the $\pcm$ algorithm \cite{mangelson2018pairwise}, we initialize $\ammd$ with distributed Nesterov's accelerated chordal initialization \cite{fan2020mm} for all the loss kernels.

The absolute trajectory errors (ATE)  of $\ammd$ w.r.t. different {\highlight outlier ratios} of inter-node loop closures are  in \cref{fig::outlier}. The ATEs are computed against the outlier-free results of $\sesync$ \cite{rosen2016se}  and  averaged over 10 Monte Carlo runs. 

\begin{figure}[t]
	\vspace{-0.75em}
	\centering
	\begin{tabular}{cc}
		\hspace{-0.95em}\subfloat[][{\sf intel}]{\includegraphics[trim =0mm 0mm 0mm 0mm,width=0.24\textwidth]{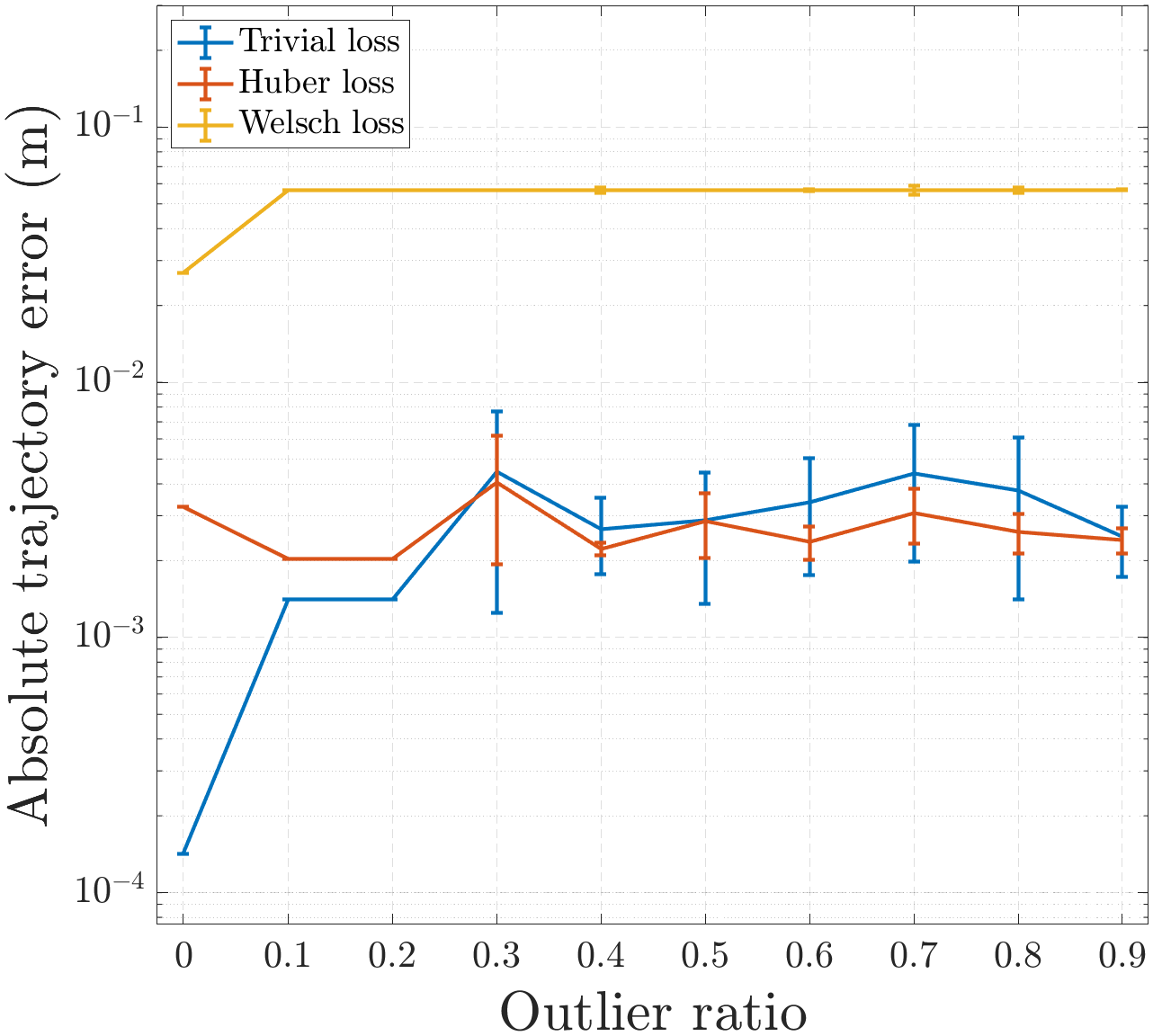}} &
		\hspace{-0.55em}\subfloat[][{\sf garage}]{\includegraphics[trim =0mm 0mm 0mm 0mm,width=0.2425\textwidth]{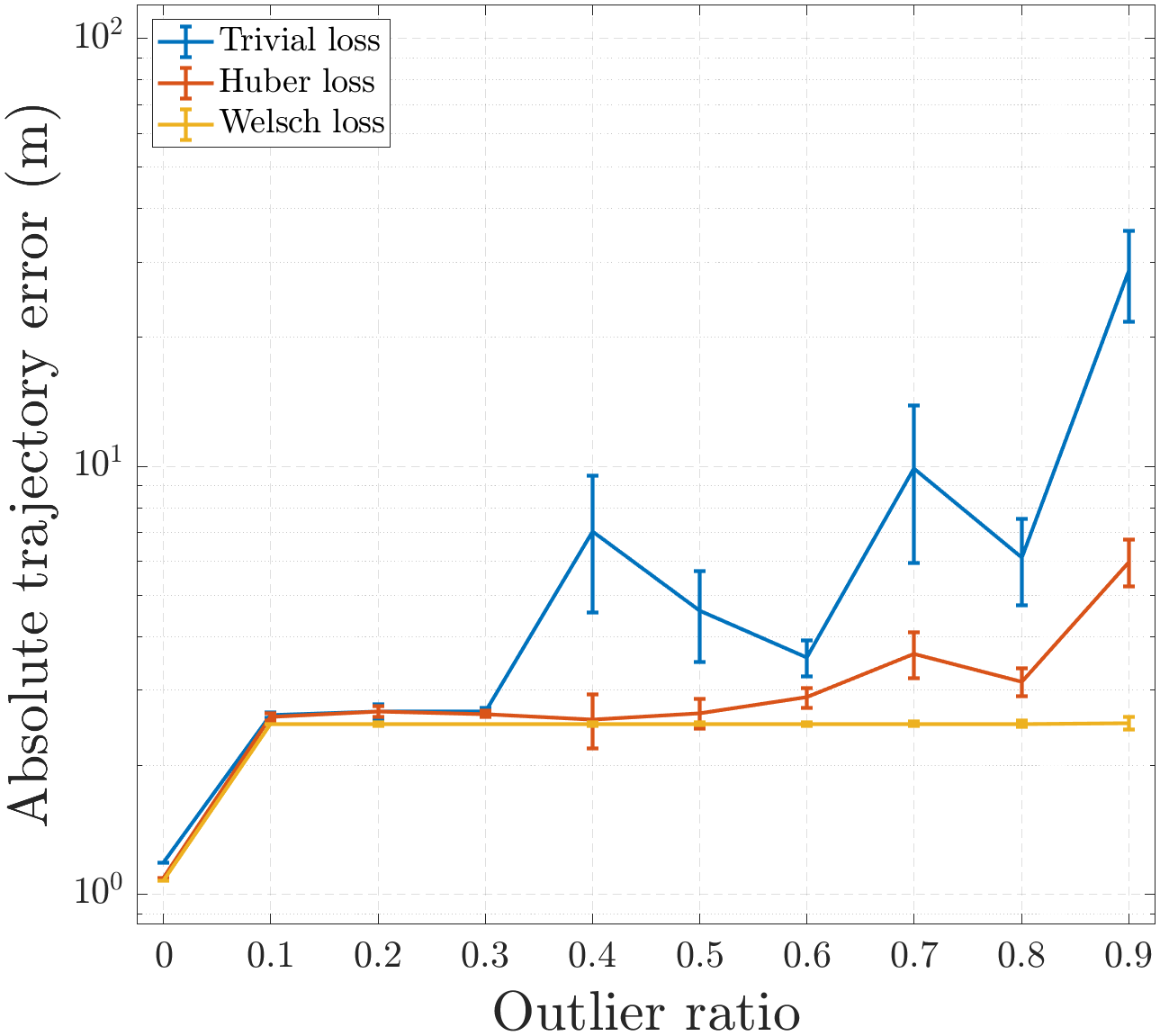}}
	\end{tabular}
	\caption{Absolute trajectory errors (ATE) of distributed PGO using $\ammd$ with the trivial, Huber and Welsch loss kernels on the 2D {\sf intel} and 3D {\sf garage} datasets.  The {\highlight outlier ratios} of inter-node loop closures are $0\sim 0.9$. The ATEs are computed against the outlier-free results of $\sesync$ \cite{rosen2016se} and  are averaged over $10$ Monte Carlo runs.   $\pcm$  \cite{mangelson2018pairwise} is used to initially reject spurious loop closures.   }\label{fig::outlier}
	\vspace{-.5em}
\end{figure}

In \cref{fig::outlier}(a), $\pcm$ \cite{mangelson2018pairwise} rejects most of the outlier inter-node loop closure for the {\sf intel} dataset and $\ammd$ solves the distributed PGO problems regardless of the loss kernel types and {\highlight outlier ratios}. Note that $\ammd$ with the Welsch loss kernel has larger ATEs (avg. $0.057$ m) against $\sesync$ \cite{rosen2016se} than those with the trivial and Huber loss kernels (avg. $0.003$ m), and we argue that this is related to the loss kernel types. The ATEs are evaluated based on $\sesync$ using the trivial loss kernel, which is in fact identical/similar to distributed PGO with the trivial and Huber loss kernels but different from that with the Welsch loss kernel. Thus, the estimates from the trivial and Huber loss kernels are expected to be more close to those of $\sesync$, which result in smaller ATEs compared to the Welsch loss kernel  if no outliers.

For the more challenging {\sf garage} dataset, as is shown in \cref{fig::outlier}(b), $\pcm$ fails for {\highlight outlier ratios} over $0.4$, and further, distributed PGO with the trivial and Huber loss kernels results in ATEs as large as $65$ m. In contrast, distributed PGO with the Welsch loss kernel still successfully estimates the poses with an average ATE of $2.5$ m despite the existence of  outliers---note that the {\sf garage} dataset has a trajectory over $7$ km. For the {\sf garage} dataset, a qualitative comparison of distributed PGO with different loss kernels is also presented in \refgarage, where the Welsch loss kernel still has the best performance. The results are not surprising since the Welsch loss kernel is known to be more robust against outliers than the other two loss kernels \cite{barron2019cvpr}.

The results above indicate that our MM methods can be applied to distributed PGO in the presence of outlier inter-node loop closures when combined with robust loss kernels like Welsch and other outlier rejection techniques like $\pcm$ \cite{mangelson2018pairwise}. In addition, we emphasize again that our MM methods have provable convergence to first-order critical points for a broad class of robust loss kernels, whereas the convergence guarantees of existing distributed PGO methods \cite{choudhary2017distributed,tian2019distributed,tron2014distributed,eric2020geod} are restricted to the trivial loss kernel.

\section{Conclusion and Future Work}\label{section::conclusion}
We presented majorization minimization (MM) methods for distributed PGO that has important applications in multi-robot SLAM. Our MM methods had provable convergence for a broad class of robust loss kernels in robotics and computer vision. Furthermore, we elaborated on the use of Nesterov's method and adaptive restart for acceleration and developed accelerated MM methods $\ammc$ and $\ammd$ without sacrifice of convergence guarantees. In particular, we designed a novel adaptive restart scheme making  $\ammd$  without a master node comparable to  $\ammc$  using a master node for information aggregation. The extensive experiments on numerous 2D and 3D SLAM datasets indicated that our MM methods outperformed existing state-of-the-art methods and  robustly handled distributed PGO with outlier inter-node loop closures.

Our MM methods for distributed PGO can be improved as  follows.  A more tractable and robust initialization technique is definitely beneficial to the accuracy and efficiency of distributed PGO. Even though our MM methods have  reliable performances against outliers, a more complete theoretical analysis for robust distributed PGO is still necessary. We might also extend our MM methods for differentiable distributed PGO \cite{pineda2022theseus}.  In addition, our MM methods can be implemented as local solvers for distributed certifiably correct PGO \cite{tian2019distributed}  to handle poor or random initialization. Since all the nodes are now assumed to be synchronized, it is necessary and useful to extend our MM methods for asynchronous distributed PGO. {\highlight Lastly, real multi-robot tests might make the results of our MM methods more convincing where not only the optimization time but also the communication overhead can be validated.}

\vspace{-0.25em}

\bibliographystyle{IEEEtran}
\bibliography{mybib}

\vspace{-1.5em}

\begin{IEEEbiography}[{\includegraphics[width=1in,height=1.25in,clip,keepaspectratio]{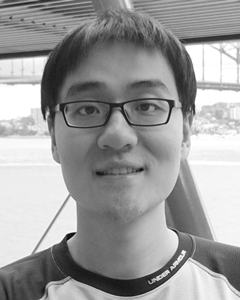}}]{Taosha Fan}
	received his B.E. degree in automotive engineering from Tongji University, Shanghai, China, in 2013, and M.S. degrees in mechanical engineering and mathematics from Johns Hopkins University, Baltimore, MD, USA, in 2015, and Ph.D. degree in mechanical engineering at Northwestern University, Evanston, IL, USA in 2022. His research interests lie at the intersection of robotic control, simulation and estimation. He is currently a research engineer in Meta AI, Pittsburgh, PA, USA.
\end{IEEEbiography}

\begin{IEEEbiography}[{\includegraphics[width=1in,height=1.25in,clip,keepaspectratio]{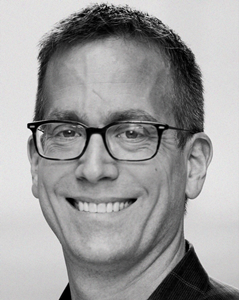}}]{Todd Murphey}
	received his B.S. degree in mathematics from the University of Arizona and the Ph.D. degree in Control and Dynamical Systems from the California Institute of Technology.
	
	He is currently a Professor of mechanical engineering with Northwestern University, Evanston, IL, USA. His laboratory is part of the Center for Robotics and Biosystems. His research interests include robotics, control, machine learning in physical systems, and computational neuroscience
\end{IEEEbiography}

\end{document}